\providecommand{\tabularnewline}{\\}
\providecommand{\algorithmname}{Algorithm}
\setlist[itemize]{leftmargin=2em}
\setlist[enumerate]{leftmargin=2em}
\DeclareMathOperator{\ind}{\mathds{1}}  
\numberwithin{equation}{section}
\definecolor{yxc}{RGB}{255,0,0}
\definecolor{yjc}{RGB}{125,0,0}
\definecolor{cm}{RGB}{0,0,200}
\definecolor{kzw}{RGB}{0,150,0}
\newcommand{\cm}[1]{\textcolor{cm}{[CM: #1]}}
\begin{document}
\theoremstyle{plain} \newtheorem{lemma}{\textbf{Lemma}} \newtheorem{prop}{\textbf{Proposition}}\newtheorem{theorem}{\textbf{Theorem}}\setcounter{theorem}{0}
\newtheorem{corollary}{\textbf{Corollary}}\newtheorem{claim}{\textbf{Claim}}
\newtheorem{assumption}{\textbf{Assumption}} \newtheorem{example}{\textbf{Example}}
\newtheorem{definition}{\textbf{Definition}} \newtheorem{fact}{\textbf{Fact}}\newtheorem{remark}{\textbf{Remark}}
\theoremstyle{definition}

\theoremstyle{remark}\newtheorem{condition}{\textbf{Condition}}\newtheorem{conjecture}{\textbf{Conjecture}}
\title{Inference and Uncertainty Quantification for \\
Noisy Matrix Completion\footnotetext{Author names are sorted alphabetically.}}
\author{Yuxin Chen\thanks{Department of Electrical Engineering, Princeton University, Princeton,
NJ 08544, USA; Email: \texttt{yuxin.chen@princeton.edu}.} \and Jianqing Fan\thanks{Department of Operations Research and Financial Engineering, Princeton
University, Princeton, NJ 08544, USA; Email: \texttt{\{jqfan, congm,
yulingy\}@princeton.edu}.} \and Cong Ma\footnotemark[2] \and Yuling Yan\footnotemark[2]}

\date{June 2019; \quad Revised: October 2019}

\maketitle
\begin{abstract}
Noisy matrix completion aims at estimating a low-rank matrix given
only partial and corrupted entries. Despite substantial progress in
designing efficient estimation algorithms, it remains largely unclear
how to assess the uncertainty of the obtained estimates and how to
perform statistical inference on the unknown matrix (e.g.~constructing
a valid and short confidence interval for an unseen entry).

This paper takes a step towards inference and uncertainty quantification
for noisy matrix completion. We develop a simple procedure to compensate
for the bias of the widely used convex and nonconvex estimators. The
resulting de-biased estimators admit nearly precise non-asymptotic distributional characterizations,
which in turn enable optimal construction of confidence intervals\,/\,regions
for, say, the missing entries and the low-rank factors. Our inferential
procedures do not rely on sample splitting, thus avoiding unnecessary
loss of data efficiency.  
As a byproduct, we obtain a sharp characterization of the estimation accuracy of our de-biased estimators, which, to the best of our knowledge, are the first tractable algorithms that provably achieve full statistical efficiency (including the preconstant). 
The analysis herein is built upon the intimate
link between convex and nonconvex optimization --- an appealing feature
recently discovered by \cite{chen2019noisy}.

\end{abstract}

\noindent \textbf{Keywords:} matrix completion, statistical inference,
confidence intervals, uncertainty quantification, convex relaxation,
nonconvex optimization

\tableofcontents{}

\section{Introduction}

\subsection{Motivation: inference and uncertainty quantification?}

Low-rank matrix completion is concerned with recovering a low-rank
matrix, when only a small fraction of its entries are revealed to
us~\cite{srebro2004learning,ExactMC09,KesMonSew2010}. Tackling this
problem in large-scale applications is computationally challenging,
due to the intrinsic nonconvexity incurred by the low-rank structure.
To further complicate matters, another inevitable challenge stems
from the imperfectness of data acquisition mechanisms, wherein the
acquired samples are contaminated by a certain amount of noise.

Fortunately, if the entries of the unknown matrix are sufficiently
de-localized and randomly revealed, this problem may not be as hard
as it seems. Substantial progress has been made over the past several
years in designing computationally tractable algorithms --- including
both convex and nonconvex approaches --- that allow to fill in unseen
entries faithfully given only partial noisy samples~\cite{CanPla10,Negahban2012restricted,MR2906869,Se2010Noisy,chen2015fast,ma2017implicit,chen2019noisy}.
Nevertheless, modern decision making would often require one step
further. It not merely anticipates a faithful estimate, but also seeks
to quantify the uncertainty or ``confidence'' of the provided estimate,
ideally in a reasonably accurate fashion. For instance, given
an estimate returned by the convex approach, how to use it to compute
a short interval that is likely to contain a missing entry?

Conducting effective uncertainty quantification for noisy matrix completion
is, however, far from straightforward. For the most part, the state-of-the-art
matrix completion algorithms require solving highly complex optimization
problems, which often do not admit closed-form solutions. Of necessity,
it is generally very challenging to pin down the distributions of
the estimates returned by these algorithms. The lack of distributional
characterizations presents a major roadblock to performing valid, yet 
efficient, statistical inference on the unknown matrix of interest.

It is worth noting that a number of recent papers have been dedicated
to inference and uncertainty quantification for various high-dimensional problems in
high-dimensional statistics, including Lasso~\cite{zhang2014confidence,van2014asymptotically,javanmard2014confidence},
generalized linear models~\cite{van2014asymptotically,ning2017general,battey2018distributed}, graphical models~\cite{jankova2015confidence,ren2015asymptotic,ma2017inter}),
amongst others. Very little work, however, has looked into noisy matrix
completion along this direction. While non-asymptotic statistical
guarantees for noisy matrix completion have been derived in prior
theory, most, if not all, of the estimation error bounds are supplied
only at an order-wise level. Such order-wise error bounds either lose
a significant factor relative to the optimal guarantees, or come with
an unspecified (but often enormous) pre-constant. Viewed in this light,
a confidence region constructed directly based on such results is
bound to be overly conservative, resulting in substantial over-coverage.

\subsection{A glimpse of our contributions}

This paper takes a substantial step towards efficient inference and uncertainty
quantification for noisy matrix completion. Specifically, we develop
a simple procedure to compensate for the bias of the commonly used
convex and nonconvex estimators. The resulting de-biased estimators
admit nearly accurate non-asymptotic distributional guarantees. Such distributional
characterizations in turn allow us to reason about the uncertainty
of the obtained estimates vis-à-vis the unknown matrix. While details
of our main findings are postponed to Section~\ref{sec:procedures-and-theory},
we would like to immediately single out a few important merits of
the proposed inferential procedures and theory:
\begin{enumerate}
\item Our results enable two types of uncertainty assessment, namely, we
can construct (i) confidence intervals for each entry --- either
observed or missing --- of the unknown matrix; (ii) confidence regions
for the low-rank factors of interest (modulo some unavoidable global
ambiguity).
\item Despite the complicated statistical dependency, our procedure and
theory do not rely on sample splitting, thus avoiding the unnecessary
widening of confidence intervals\,/\,regions due to insufficient
data usage.
\item The confidence intervals\,/\,regions constructed based on the proposed
procedures are, in some sense, optimal.
\item We present a unified approach that accommodates both convex and nonconvex
estimators seamlessly.
\item As a byproduct, we characterize the Euclidean estimation errors of the proposed de-biased estimators. Such error bounds are sharp and match an oracle lower bound precisely (including the pre-constant). To the best of our knowledge, this is the first theory that demonstrates that a computationally feasible algorithm can achieve the statistical limit including the pre-constant. 
\end{enumerate}
All of this is built upon the intimate link between convex and nonconvex
estimators~\cite{chen2019noisy}, as well as the recent advances in
analyzing the stability of nonconvex optimization against random noise
\cite{ma2017implicit}.

\section{Models and notation\label{sec:setup}}

To cast the noisy matrix completion problem in concrete statistical
settings, we adopt a model commonly studied in the literature \cite{ExactMC09}. We also introduce some useful notation.

\paragraph{Ground truth.}
Denote
by $\bm{M}^{\star}\in\mathbb{R}^{n\times n}$ the unknown rank-$r$
matrix of interest,\footnote{We restrict our attention to squared matrices for simplicity of presentation.
Most findings extend immediately to the more general rectangular case
$\bm{M}^{\star}\in\mathbb{R}^{n_{1}\times n_{2}}$ with different
$n_{1}$ and $n_{2}$.} whose (compact) singular value decomposition (SVD) is given by
$\bm{M}^{\star}=\bm{U}^{\star}\bm{\Sigma}^{\star}\bm{V}^{\star\top}$.
We  set
\begin{equation}
\sigma_{\max}\triangleq\sigma_{1}(\bm{M}^{\star}),\quad\sigma_{\min}\triangleq\sigma_{r}(\bm{M}^{\star}),\quad\text{and}\quad\kappa\triangleq\sigma_{\max}/\sigma_{\min}, \label{eq:defn-sigma-max-sigma-min-kappa}
\end{equation}
where $\sigma_{i}(\bm{A})$ denotes
the $i$th largest singular value of a matrix $\bm{A}$. 
Further, we let $\bm{X}^{\star}\triangleq\bm{U}^{\star}\bm{\Sigma}^{\star1/2}\in\mathbb{R}^{n\times r}$
and $\bm{Y}^{\star}\triangleq\bm{V}^{\star}\bm{\Sigma}^{\star1/2}\in\mathbb{R}^{n\times r}$
stand for the \emph{balanced} low-rank factors of $\bm{M}^{\star}$,
which obey
\begin{equation}
\bm{X}^{\star\top}\bm{X}^{\star}=\bm{Y}^{\star\top}\bm{Y}^{\star}=\bm{\Sigma}^{\star}\qquad\text{and}\qquad\bm{M}^{\star}=\bm{X}^{\star}\bm{Y}^{\star\top}.\label{eq:defn-Xstar-Ystar}
\end{equation}

\paragraph{Observation models.} What we observe is a random subset of noisy entries of $\bm{M}^{\star}$;
more specifically, we observe
\begin{equation}
M_{ij}=M_{ij}^{\star}+E_{ij},\qquad E_{ij}\overset{\mathrm{i.i.d.}}{\sim}\mathcal{N}(0,\sigma^{2}),\qquad\text{for all }(i,j)\in\Omega,\label{eq:observation-model}
\end{equation}
where $\Omega\subseteq\{1,\cdots,n\}\times\{1,\cdots,n\}$ is a subset
of indices, and $E_{ij}$ denotes independently generated noise at
the location $(i,j)$. From now on, we assume the \emph{random sampling}
model where each index $(i,j)$ is included in~$\Omega$ independently
with probability $p$ (i.e.~data are missing uniformly at random). We shall use
$\mathcal{P}_{\Omega}(\cdot):\mathbb{R}^{n\times n}\mapsto\mathbb{R}^{n\times n}$
to represent the orthogonal projection onto the subspace of matrices
that vanish outside the index set $\Omega$.

\paragraph{Incoherence conditions.} 
 Clearly, not all  matrices can be reliably estimated from a highly incomplete set of measurements. To address this issue, we impose a standard incoherence
condition~\cite{ExactMC09,chen2015incoherence} on the singular subspaces
of $\bm{M}^{\star}$ (i.e.~$\bm{U}^{\star}$ and $\bm{V}^{\star}$):
\begin{equation}
\max\{\left\Vert \bm{U}^{\star}\right\Vert _{2,\infty},\left\Vert \bm{V}^{\star}\right\Vert _{2,\infty}\}\leq\sqrt{\mu r/n},\label{eq:incoherence-assumption-on-U}
\end{equation}
where $\mu$ is termed the incoherence parameter and $\|\bm{A}\|_{2,\infty}$ denotes the largest $\ell_{2}$ norm
of all rows in $\bm{A}$.  A small $\mu$ implies that the energy of  $\bm{U}^{\star}$ and $\bm{V}^{\star}$ are  reasonably spread out across all of their rows.

\paragraph{Asymptotic notation.} Here, $f(n) \lesssim h(n)$ (or $f(n)= O(h(n))$) means $|f(n)| \leq c_1 |h(n)|$ for some constant $c_1>0$,  $f(n) \gtrsim h(n)$  means $|f(n)| \geq c_2 |h(n)|$ for some constant $c_2>0$,  $f(n) \asymp h(n)$  means $c_2 |h(n)|\leq  |f(n)| \leq c_1 |h(n)|$ for some constants $c_1, c_2>0$, and $f(n) = o(h(n))$ means $\lim_{n\rightarrow \infty} f(n) / h(n) = 0$.  We write $f(n) \ll h(n)$ to indicate that $|f(n)| \leq c_1 |h(n)|$ for some  small constant $c_1>0$ (much smaller than 1), and use $f(n) \gg h(n)$ to indicate that $|f(n)| \geq c_2 |h(n)|$ for some large constant $c_2>0$ (much larger than 1).

\section{Inferential procedures and main results\label{sec:procedures-and-theory}}

The proposed inferential procedure lays its basis on two of the most
popular estimation paradigms --- convex relaxation and nonconvex
optimization --- designed for noisy matrix completion. Recognizing
the complicated bias of these two highly nonlinear estimators, we
shall first illustrate how to perform bias correction, followed by
a distributional theory that establishes the near-Gaussianity and
optimality of the proposed de-biased estimators.

\subsection{Background: convex and nonconvex estimators \label{subsec:Background}}

We first review in passing two tractable estimation algorithms that
are arguably the most widely used in practice. They serve as the starting
point for us to design inferential procedures for noisy low-rank matrix
completion. The readers familiar with this literature can proceed
directly to Section~\ref{subsec:debiasing}.

\paragraph{Convex relaxation.}

Recall that the rank function $\mathsf{rank}(\cdot)$ is highly nonconvex,
which often prevents us from computing a rank-constrained estimator
in polynomial time. For the sake of computational feasibility, prior
works suggest relaxing the rank function into its convex surrogate
\cite{fazel2002matrix,RecFazPar07}; for example, one can consider
the following penalized least-squares convex program
\begin{equation}
\underset{\bm{Z}\in\mathbb{R}^{n\times n}}{\textsf{minimize}}\qquad 
\frac{1}{2}\sum_{(i,j)\in\Omega}\left(Z_{ij}-M_{ij}\right)^{2}+\lambda\|\bm{Z}\|_{*},\label{eq:cvx}
\end{equation}
or using our notation $\mathcal{P}_{\Omega}$,
\begin{equation}
\underset{\bm{Z}\in\mathbb{R}^{n\times n}}{\textsf{minimize}}\qquad 
\frac{1}{2}\big\|\mathcal{P}_{\Omega}\big(\bm{Z}-\bm{M}\big)\big\|_{\mathrm{F}}^{2}+\lambda\|\bm{Z}\|_{*}.
\end{equation}
Here, $\|\cdot\|_{*}$ is the nuclear norm (the sum of singular values,
which is a convex surrogate of the rank function), and $\lambda>0$
is some regularization parameter. Under mild conditions, the solution
to the convex program~(\ref{eq:cvx}) provably attains near-optimal
estimation accuracy (in an order-wise sense), provided that a proper
regularization parameter $\lambda$ is adopted~\cite{chen2019noisy}.
\begin{algorithm}[t]
\caption{Gradient descent for solving the nonconvex problem~(\ref{eq:ncvx}).}

\label{alg:gd-mc-ncvx}\begin{algorithmic}

\STATE \textbf{{Suitable initialization}}: $\bm{X}^{0}$, $\bm{Y}^{0}$

\STATE \textbf{{Gradient updates}}: \textbf{for }$t=0,1,\ldots,t_{0}-1$
\textbf{do}

\STATE \vspace{-1em}
 \begin{subequations}\label{subeq:gradient_update_ncvx}
\begin{align}
\bm{X}^{t+1}= & \bm{X}^{t}-\frac{\eta}{p}\big[\mathcal{P}_{\Omega}(\bm{X}^{t}\bm{Y}^{t\top}-\bm{M})\bm{Y}^{t}+\lambda\bm{X}^{t}\big],\label{eq:gradient_update_ncvx_X}\\
\bm{Y}^{t+1}= & \bm{Y}^{t}-\frac{\eta}{p}\big[[\mathcal{P}_{\Omega}(\bm{X}^{t}\bm{Y}^{t\top}-\bm{M})]^{\top}\bm{X}^{t}+\lambda\bm{Y}^{t}\big],\label{eq:gradient_update_ncvx_Y}
\end{align}
where $\eta >0$ determines the step size or the learning rate.
\end{subequations}

\end{algorithmic}
\end{algorithm}

\paragraph{Nonconvex optimization.}

It is recognized that the convex approach, which typically relies
on solving a semidefinite program, is still computationally expensive
and not scalable to large dimensions. This motivates an alternative
route, which represents the matrix variable via two low-rank factors
$\bm{X},\bm{Y}\in\mathbb{R}^{n\times r}$ and attempts solving the
following nonconvex program directly
\begin{equation}
\underset{\bm{X},\bm{Y}\in\mathbb{R}^{n\times r}}{\textsf{minimize}}\qquad\frac{1}{2}\big\|\mathcal{P}_{\Omega}\big(\bm{X}\bm{Y}^{\top}-\bm{M}\big)\big\|_{\mathrm{F}}^{2}+\frac{\lambda}{2}\|\bm{X}\|_{\mathrm{F}}^{2}+\frac{\lambda}{2}\|\bm{Y}\|_{\mathrm{F}}^{2}.\label{eq:ncvx}
\end{equation}
Here, we choose a regularizer of the form $0.5\lambda(\|\bm{X}\|_{\mathrm{F}}^{2}+\|\bm{Y}\|_{\mathrm{F}}^{2})$
primarily to mimic the nuclear norm $\lambda\|\bm{Z}\|_{*}$ (see~\cite{srebro2005rank,mazumder2010spectral}). A variety of optimization
algorithms have been proposed to tackle the nonconvex program~(\ref{eq:ncvx})
or its variants~\cite{sun2016guaranteed,chen2015fast,ma2017implicit};
the readers are referred to~\cite{chi2018nonconvex} for a recent
overview. As a prominent example, a two-stage algorithm --- gradient
descent following suitable initialization --- provably enjoys fast
convergence and order-wise optimal statistical guarantees for a wide
range of scenarios~\cite{ma2017implicit,chen2019noisy,chen2019nonconvex}.
The current paper focuses on this simple yet powerful algorithm, as
documented in Algorithm~\ref{alg:gd-mc-ncvx} and detailed in Appendix~\ref{subsec:Algorithmic-details-nonconvex}.

\paragraph{Intimate connections between convex and nonconvex estimates.}

Denote by $\bm{Z}^{\mathsf{cvx}}$ any minimizer of the convex program
(\ref{eq:cvx}), and denote by $(\bm{X}^{\mathsf{ncvx}},\bm{Y}^{\mathsf{ncvx}})$
the estimate returned by Algorithm~\ref{alg:gd-mc-ncvx} aimed at
solving~(\ref{eq:ncvx}). As was recently shown in~\cite{chen2019noisy},
when the regularization parameter $\lambda$ is properly chosen, these
two estimates obey (see \eqref{eq:closedness-cvx-ncvx} in Appendix~\ref{subsec:Properties-of-approximate-solution} for a precise statement)
\begin{equation}
\bm{X}^{\mathsf{ncvx}}\bm{Y}^{\mathsf{ncvx}\top}\approx\bm{Z}^{\mathsf{cvx}}\approx\bm{Z}^{\mathsf{cvx},r}.\label{eq:proximity-cvx-ncvx}
\end{equation}
Here, $\bm{Z}^{\mathsf{cvx},r}\triangleq\mathcal{P}_{\text{rank-}r}(\bm{Z}^{\mathsf{cvx}})$
is the best rank-$r$ approximation of the convex estimate $\bm{Z}^{\mathsf{cvx}}$,
where $\mathcal{P}_{\text{rank-}r}(\bm{B})\triangleq\arg\min_{\bm{A}:\text{rank}(\bm{A})\leq r}\|\bm{A}-\bm{B}\|_{\mathrm{F}}$. 
In truth, the three matrices of interest in~(\ref{eq:proximity-cvx-ncvx})
are exceedingly close to, if not identical with, each other. This
salient feature paves the way for a unified treatment of convex and
nonconvex approaches: most inferential procedures and guarantees developed
for the nonconvex estimate can be readily transferred to perform inference
for the convex one, and vice versa.

\subsection{Constructing de-biased estimators \label{subsec:debiasing}}

\begin{table}[b]
{\renewcommand{\arraystretch}{1.3}
\caption{Notation used to unify the convex estimate $\bm{Z}^{\mathsf{cvx}}$
and the nonconvex estimate $(\bm{X}^{\mathsf{ncvx}},\bm{Y}^{\mathsf{ncvx}})$.
Here, $\bm{Z}^{\mathsf{cvx},r}=\mathcal{P}_{\text{rank-}r}(\bm{Z}^{\mathsf{cvx}})$
is the best rank-$r$ approximation of $\bm{Z}^{\mathsf{cvx}}$. See Appendix~\ref{sec:summary-estimators} for a complete summary. \label{tab:Notation-unified}}
\vspace{0.5em}
\begin{tabularx}{\textwidth}{c|X}

\hline

$\bm{Z}\in \mathbb{R}^{n\times n}$ & either $\bm{Z}^{\mathsf{cvx}}$
or $\bm{X}^{\mathsf{ncvx}}\bm{Y}^{\mathsf{ncvx}\top}$.

\tabularnewline \hline

$\bm{X},\bm{Y}\in\mathbb{R}^{n\times r}$ & for the nonconvex case,
we take $\bm{X}=\bm{X}^{\mathsf{ncvx}}$ and $\bm{Y}=\bm{Y}^{\mathsf{ncvx}}$;
for the convex case, let $\bm{X}=\bm{X}^{\mathsf{cvx}}$ and $\bm{Y}=\bm{Y}^{\mathsf{cvx}}$,
which are the \emph{balanced} low-rank factors of $\bm{Z}^{\mathsf{cvx},r}$
obeying $\bm{Z}^{\mathsf{cvx},r}=\bm{X}^{\mathsf{cvx}}\bm{Y}^{\mathsf{cvx}\top}$
and $\bm{X}^{\mathsf{cvx}\top}\bm{X}^{\mathsf{cvx}}=\vphantom{\frac{1}{2}}\bm{Y}^{\mathsf{cvx}\top}\bm{Y}^{\mathsf{cvx}}$.

\tabularnewline \hline

$\bm{M}^{\mathsf{d}}\in \mathbb{R}^{n\times n}$ & the proposed de-biased estimator as in~(\ref{eq:defn-Zd-cvx}).

\tabularnewline \hline

$\bm{X}^{\mathsf{d}}, \bm{Y}^{\mathsf{d}}\in \mathbb{R}^{n\times r}$ & the proposed de-shrunken estimator as in~(\ref{eq:defn-Xd-Yd}).

\tabularnewline \hline  \end{tabularx} }
\end{table}

We are now well equipped to describe how to construct new estimators based
on the convex estimate $\bm{Z}^{\mathsf{cvx}}$ and the nonconvex
estimate $(\bm{X}^{\mathsf{ncvx}},\bm{Y}^{\mathsf{ncvx}})$, so as
to enable statistical inference. Motivated by the proximity of the
convex and nonconvex estimates and for the sake of conciseness, we
shall abuse notation by using the shorthand $\bm{Z},\bm{X},\bm{Y}$ for both convex and nonconvex estimates; 
see Table~\ref{tab:Notation-unified} and Appendix~\ref{sec:summary-estimators} for precise definitions. This allows us to unify the presentation for both convex and nonconvex estimators. 

Given that both~(\ref{eq:cvx}) and~(\ref{eq:ncvx}) are regularized
least-squares problems, they behave effectively like shrinkage estimators,
indicating that the provided estimates  necessarily suffer from non-negligible
bias. In order to enable desired statistical inference, it is natural
to first correct the estimation bias.

\paragraph{A de-biased estimator for the matrix.}

A natural de-biasing strategy that immediately comes to mind is the
following simple linear transformation (recall the notation in Table~\ref{tab:Notation-unified}):
\begin{equation}
\bm{Z}^{0}\triangleq\bm{Z}-\frac{1}{p}\mathcal{P}_{\Omega}\big(\bm{Z}-\bm{M}\big)= \underset{\text{mean:}\,\bm{M}^{\star}}{\underbrace{\frac{1}{p}\mathcal{P}_{\Omega}\big(\bm{M}^{\star}\big)}} + \underset{\text{mean:}\,\bm{0}}{\underbrace{\frac{1}{p}\mathcal{P}_{\Omega}\big(\bm{E}\big) }} + \hspace{-0.5em} \underset{\text{mean:}\,\bm{0}\text{ (heuristically)}}{\underbrace{ \bm{Z}-\frac{1}{p}\mathcal{P}_{\Omega}\big(\bm{Z}\big) }}\hspace{-0.5em},
\end{equation}
where we identify $\mathcal{P}_{\Omega}(\bm{M})$ with $\mathcal{P}_{\Omega}(\bm{M}^{\star})+\mathcal{P}_{\Omega}\left(\bm{E}\right)$.
Heuristically, if $\Omega$ and $\bm{Z}$ are statistically independent,
then $\bm{Z}^{0}$ serves as an unbiased estimator of $\bm{M}^{\star}$,
i.e.~$\mathbb{E}[\bm{Z}^{0}]=\bm{M}^{\star}$; this arises since
the noise $\bm{E}$ has zero mean and $\mathbb{E}[\mathcal{P}_{\Omega}]=p\mathcal{I}$
under the uniform random sampling model, with $\mathcal{I}$ 
the identity operator. Despite its (near) unbiasedness nature at
a heuristic level, however, the matrix $\bm{Z}^{0}$ is typically
full-rank, with non-negligible energy spread across its entire spectrum.
This results in dramatically increased variability in the estimate,
which is undesirable for inferential purposes.

To remedy this issue, we propose to further project $\bm{Z}^{0}$
onto the set of rank-$r$ matrices, leading to the following de-biased estimator

\begin{equation}
\bm{M}^{\mathsf{d}}\triangleq\mathcal{P}_{\text{rank-}r}\Big[\bm{Z}-\frac{1}{p}\mathcal{P}_{\Omega}\left(\bm{Z}-\bm{M}\right)\Big],\label{eq:defn-Zd-cvx}
\end{equation}
where  $\mathcal{P}_{\text{rank-}r}(\bm{B})=\arg\min_{\bm{A}:\text{rank}(\bm{A})\leq r}\|\bm{A}-\bm{B}\|_{\mathrm{F}}$, and  $\bm{Z}$ can again be found in  Table~\ref{tab:Notation-unified}.
This projection step effectively suppresses the variability outside
the $r$-dimensional principal subspace. As we shall see shortly,
the proposed estimator~(\ref{eq:defn-Zd-cvx}) properly de-biases
the provided estimate~$\bm{Z}$, while optimally controlling the
extent of uncertainty.

\begin{remark} The estimator~(\ref{eq:defn-Zd-cvx}) can be viewed as performing one iteration of singular value projection (SVP) ~\cite{MekJaiDhi2009,ding2018leave} on the current estimate $\bm{Z}$.
\end{remark}

\begin{remark}
The estimator~(\ref{eq:defn-Zd-cvx}) also bears a similarity
to the de-biased estimator proposed by~\cite{xia2018confidence}
for low-rank trace regression; the disparity between them
shall be discussed in Section~\ref{sec:Prior-art}.
\end{remark}

\paragraph{An equivalent form: a de-shrunken estimator for the low-rank factors.}

It turns out that the de-biased estimator~(\ref{eq:defn-Zd-cvx})
admits another almost equivalent representation that offers further
insights. Specifically, we consider the following \emph{de-shrunken}
estimator for the low-rank factors
\begin{equation}
\bm{X}^{\mathrm{d}}\triangleq\bm{X}\Big(\bm{I}_{r}+\frac{\lambda}{p}\big(\bm{X}^{\top}\bm{X}\big)^{-1}\Big)^{1/2}\qquad\text{and}\qquad\bm{Y}^{\mathrm{d}}\triangleq\bm{Y}\Big(\bm{I}_{r}+\frac{\lambda}{p}\big(\bm{Y}^{\top}\bm{Y}\big)^{-1}\Big)^{1/2},\label{eq:defn-Xd-Yd}
\end{equation}
where we recall the definition of $\bm{X}$ and $\bm{Y}$ in Table~\ref{tab:Notation-unified}.  
To develop some intuition regarding why this is called a de-shrunken
estimator, let us look at a simple scenario where $\bm{U}\bm{\Sigma}\bm{V}^{\top}$
is the SVD of $\bm{X}\bm{Y}^{\top}$ and $\bm{X}=\bm{U}\bm{\Sigma}^{1/2}$,
$\bm{Y}=\bm{V}\bm{\Sigma}^{1/2}$. It is then self-evident that
\[
\bm{X}^{\mathsf{d}}=\bm{U}\bm{\Sigma}^{1/2}\Big(\bm{I}_{r}+\frac{\lambda}{p}\bm{\Sigma}^{-1}\Big)^{1/2}=\bm{U}\Big(\bm{\Sigma}+\frac{\lambda}{p}\bm{I}_{r}\Big)^{1/2}\qquad\text{and}\qquad\bm{Y}^{\mathsf{d}}=\bm{V}\Big(\bm{\Sigma}+\frac{\lambda}{p}\bm{I}_{r}\Big)^{1/2}.
\]
In words, $\bm{X}^{\mathsf{d}}$ and $\bm{Y}^{\mathsf{d}}$ are obtained
by de-shrinking the spectrum of $\bm{X}$ and $\bm{Y}$ properly.

As we shall formalize in Section~\ref{subsec:Approximate-equivalence},
the de-shrunken estimator~(\ref{eq:defn-Xd-Yd}) for the low-rank
factors is nearly equivalent to the de-biased estimator~(\ref{eq:defn-Zd-cvx})
for the whole matrix, in the sense that
\begin{equation}
\bm{M}^{\mathsf{d}}\approx\bm{X}^{\mathsf{d}}\bm{Y}^{\mathsf{d}\top}.\label{eq:equivalence-Zd-Xd-Yd}
\end{equation}
Therefore, $\bm{M}^{\mathrm{d}}$ can  be viewed as some sort
of de-shrunken estimator as well.

\subsection{Main results: distributional guarantees \label{subsec:Main-results}}

The proposed estimators  admit tractable distributional
characterizations in the large-$n$ regime, which facilitates the
construction of confidence regions for many quantities of interest.
In particular, this paper centers around two types of inferential
problems:
\begin{enumerate}
\item \emph{Each entry of the matrix} $\bm{M}^{\star}$: the entry can be
either missing (i.e.~predicting an unseen entry) or observed (i.e.~de-noising
an observed entry). For example, in the Netflix challenge, one would
like to infer a user's preference about any movie, given partially
revealed ratings~\cite{ExactMC09}. Mathematically, this seeks to
determine the distribution of
\begin{equation}
M_{ij}^{\mathrm{d}}-M_{ij}^{\star},\qquad\text{for all }1\leq i,j\leq n.\label{eq:entry}
\end{equation}
\item \emph{The low-rank factors} $\bm{X}^{\star},\bm{Y}^{\star}\in\mathbb{R}^{n\times r}$:
the low-rank factors often reveal critical information about the applications
of interest (e.g.~community memberships of each individual in the
community detection problem~\cite{abbe2017entrywise}, or angles between
each object and a global reference point in the angular synchronization
problem~\cite{singer2011angular}). Recognizing the global rotational
ambiguity issue,\footnote{For any $r\times r$ rotation matrix $\bm{H}$, we cannot distinguish
$(\bm{X}^{\star},\bm{Y}^{\star})$ from $(\bm{X}^{\star}\bm{H},\bm{Y}^{\star}\bm{H})$,
if only pairwise measurements are available. } we aim to pin down the distributions of $\bm{X}^{\mathrm{d}}$ and
$\bm{Y}^{\mathrm{d}}$ up to global rotational ambiguity. More precisely,
we intend to characterize the distributions of
\begin{equation}
\bm{X}^{\mathsf{d}}\bm{H}^{\mathsf{d}}-\bm{X}^{\star}\qquad\text{and}\qquad\bm{Y}^{\mathsf{d}}\bm{H}^{\mathsf{d}}-\bm{Y}^{\star}\label{eq:dist-low-rank-factors}
\end{equation}
for the global rotation matrix $\bm{H}^{\mathsf{d}}\in\mathbb{R}^{r\times r}$
that best ``aligns'' $(\bm{X}^{\mathsf{d}},\bm{Y}^{\mathsf{d}})$
and $(\bm{X}^{\star},\bm{Y}^{\star})$, i.e.
\begin{equation}
\bm{H}^{\mathsf{d}}\triangleq\arg\min_{\bm{R}\in\mathcal{O}^{r\times r}}\left\Vert \bm{X}^{\mathsf{d}}\bm{R}-\bm{X}^{\star}\right\Vert _{\mathrm{F}}^{2}+\left\Vert \bm{Y}^{\mathsf{d}}\bm{R}-\bm{Y}^{\star}\right\Vert _{\mathrm{F}}^{2}.
	\label{eq:defn-H-d}
\end{equation}
Here and below, $\mathcal{O}^{r\times r}$ denotes the set of orthonormal matrices
in $\mathbb{R}^{r\times r}$.
\end{enumerate}
Clearly, the above two inferential problems are tightly related: an
accurate distributional characterization for the low-rank factors
(\ref{eq:dist-low-rank-factors}) often results in a distributional
guarantee for the entries~(\ref{eq:entry}). As such, we shall begin
by presenting our distributional characterizations of the low-rank
factors. Here and throughout, $\bm{e}_{i}$ represents the $i$th
standard basis vector in $\mathbb{R}^{n}$.

\begin{theorem}[\textsf{{Distributional guarantees w.r.t.~low-rank factors}}]
\label{thm:low-rank-factor-master-bound-simple}
Suppose that the sample size and the noise obey
\begin{equation}
np\gtrsim\kappa^{8}\mu^{3}r^{2}\log^{3}n\qquad\text{and}\qquad\sigma / \sigma_{\min} \lesssim \sqrt{p/ (\kappa^{8}\mu n\log^2 n)} .\label{eq:requirement-low-rank}
\end{equation}
%
Then one has the following
decomposition 
\begin{subequations}\label{subeq:final-decomposition-low-rank}
\begin{align}
\bm{X}^{\mathsf{d}}\bm{H}^{\mathsf{d}}-\bm{X}^{\star} & =\bm{Z}_{\bm{X}}+\bm{\Psi}_{\bm{X}},\label{eq:final-decomposition-low-rank-X}\\
\bm{Y}^{\mathsf{d}}\bm{H}^{\mathsf{d}}-\bm{Y}^{\star} & =\bm{Z}_{\bm{Y}}+\bm{\Psi}_{\bm{Y}}.\label{eq:final-decomposition-low-rank-Y}
\end{align}
\end{subequations}
with $(\bm{X}^{\star},\bm{Y}^{\star})$ defined in \eqref{eq:defn-Xstar-Ystar}, $(\bm{X}^{\mathsf{d}}, \bm{Y}^{\mathsf{d}})$ defined in Table~\ref{tab:Notation-unified}, and $\bm{H}^{\mathrm{d}}$ defined in \eqref{eq:defn-H-d}. 	
Here, the rows of $\bm{Z}_{\bm{X}}\in\mathbb{R}^{n\times r}$
(resp.~$\bm{Z}_{\bm{Y}}\in\mathbb{R}^{n\times r}$) are 
independent and obey \begin{subequations}\label{subeq:normality-low-rank}
\begin{align}
\bm{Z}_{\bm{X}}^{\top}\bm{e}_{j}\overset{\mathrm{i.i.d.}}{\sim}\mathcal{N}\Big(\bm{0},\frac{\sigma^{2}}{p}\left(\bm{\Sigma}^{\star}\right)^{-1}\Big),\qquad & \mathrm{for}\quad1\leq j\leq n;\label{eq:normality-X}\\
\bm{Z}_{\bm{Y}}^{\top}\bm{e}_{j}\overset{\mathrm{i.i.d.}}{\sim}\mathcal{N}\Big(\bm{0},\frac{\sigma^{2}}{p}\left(\bm{\Sigma}^{\star}\right)^{-1}\Big),\qquad & \mathrm{for}\quad1\leq j\leq n.\label{eq:normality-Y}
\end{align}
\end{subequations}In addition, the residual matrices $\bm{\Psi}_{\bm{X}},\bm{\Psi}_{\bm{Y}}\in\mathbb{R}^{n\times r}$
satisfy, with probability at least $1-O(n^{-3})$, that
\begin{align}
	\max\big\{\left\Vert \bm{\Psi}_{\bm{X}}\right\Vert _{2,\infty},\left\Vert \bm{\Psi}_{\bm{Y}}\right\Vert _{2,\infty}\big\} = o\left(\frac{\sigma\sqrt{r}}{\sqrt{p\sigma_{\max}}}\right).\label{eq:thm-low-rank-residual-size-simple}
\end{align}
\end{theorem}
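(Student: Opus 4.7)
The plan is to reduce to the nonconvex estimator --- invoking~\eqref{eq:proximity-cvx-ncvx} to transfer the same decomposition to the convex case --- and then perform a careful first-order perturbation of the nonconvex KKT conditions. With $(\bm{X},\bm{Y})\triangleq(\bm{X}^{\mathsf{ncvx}},\bm{Y}^{\mathsf{ncvx}})$, stationarity of~\eqref{eq:ncvx} gives $\mathcal{P}_{\Omega}(\bm{X}\bm{Y}^{\top}-\bm{M})\bm{Y}+\lambda\bm{X}=\bm{0}$ and $[\mathcal{P}_{\Omega}(\bm{X}\bm{Y}^{\top}-\bm{M})]^{\top}\bm{X}+\lambda\bm{Y}=\bm{0}$. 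Exploiting the rotation invariance of~\eqref{eq:ncvx}, I replace $(\bm{X},\bm{Y})$ by $(\bm{X}\bm{H}^{\mathsf{d}},\bm{Y}\bm{H}^{\mathsf{d}})$ throughout and work with the deviations $\bm{\Delta}_{\bm{X}}\triangleq\bm{X}\bm{H}^{\mathsf{d}}-\bm{X}^{\star}$ and $\bm{\Delta}_{\bm{Y}}\triangleq\bm{Y}\bm{H}^{\mathsf{d}}-\bm{Y}^{\star}$. Plugging $\bm{M}=\bm{M}^{\star}+\bm{E}$ on $\Omega$ and $\bm{M}^{\star}=\bm{X}^{\star}\bm{Y}^{\star\top}$ into the stationarity equations turns them into a pair of fixed-point relations for $(\bm{\Delta}_{\bm{X}},\bm{\Delta}_{\bm{Y}})$.

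The next step is to linearize. The existing $\ell_{2,\infty}$ error bounds on $(\bm{\Delta}_{\bm{X}},\bm{\Delta}_{\bm{Y}})$ from~\cite{ma2017implicit,chen2019noisy} render the quadratic term $\mathcal{P}_{\Omega}(\bm{\Delta}_{\bm{X}}\bm{\Delta}_{\bm{Y}}^{\top})\bm{Y}$ of a strictly lower order than the linear ones, so to leading order $\mathcal{P}_{\Omega}(\bm{\Delta}_{\bm{X}}\bm{Y}^{\star\top}+\bm{X}^{\star}\bm{\Delta}_{\bm{Y}}^{\top})\bm{Y}^{\star}-\mathcal{P}_{\Omega}(\bm{E})\bm{Y}^{\star}+\lambda\bm{X}^{\star}\approx\bm{0}$. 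Replacing $\mathcal{P}_{\Omega}$ by $p\mathcal{I}$ on incoherent directions (valid under the sampling condition in~\eqref{eq:requirement-low-rank}), using $\bm{Y}^{\star\top}\bm{Y}^{\star}=\bm{\Sigma}^{\star}$, and letting the best-alignment choice~\eqref{eq:defn-H-d} eliminate the symmetric ``tangential'' piece along the rotation orbit, I can solve for $\bm{\Delta}_{\bm{X}}$ to obtain $\bm{\Delta}_{\bm{X}}\approx\tfrac{1}{p}\mathcal{P}_{\Omega}(\bm{E})\bm{Y}^{\star}(\bm{\Sigma}^{\star})^{-1}-\tfrac{\lambda}{p}\bm{X}^{\star}(\bm{\Sigma}^{\star})^{-1}$. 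The $\lambda$-linear bias is precisely what the de-shrinkage is engineered to erase: Taylor-expanding $(\bm{I}_{r}+\tfrac{\lambda}{p}(\bm{X}^{\top}\bm{X})^{-1})^{1/2}$ around $\bm{X}^{\top}\bm{X}\approx\bm{\Sigma}^{\star}$ and applying the resulting correction to $\bm{X}$ cancels it to leading order, yielding the target decomposition $\bm{X}^{\mathsf{d}}\bm{H}^{\mathsf{d}}-\bm{X}^{\star}=\bm{Z}_{\bm{X}}+\bm{\Psi}_{\bm{X}}$ with $\bm{Z}_{\bm{X}}\triangleq\tfrac{1}{p}\mathcal{P}_{\Omega}(\bm{E})\bm{Y}^{\star}(\bm{\Sigma}^{\star})^{-1}$ (and symmetrically for $\bm{Y}$). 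Because $\bm{E}$ has independent rows, so does $\bm{Z}_{\bm{X}}$; conditional on $\Omega$, its $j$th row is a zero-mean Gaussian whose covariance concentrates at $\tfrac{\sigma^{2}}{p}(\bm{\Sigma}^{\star})^{-1}$ by matrix Bernstein applied to the incoherent sum $\sum_{k:(j,k)\in\Omega}\bm{Y}^{\star}_{k,\cdot}\bm{Y}^{\star\top}_{k,\cdot}$, with the covariance fluctuation swept into $\bm{\Psi}_{\bm{X}}$.

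The hard part, carrying essentially all of the technical weight, is establishing the sharp bound $\|\bm{\Psi}_{\bm{X}}\|_{2,\infty}=o(\sigma\sqrt{r/(p\sigma_{\max})})$ (and its $\bm{Y}$ analogue). The residual pools together (i) the quadratic Taylor remainder, (ii) the operator-concentration error $(\mathcal{P}_{\Omega}-p\mathcal{I})(\bm{\Delta}_{\bm{X}}\bm{Y}^{\star\top}+\bm{X}^{\star}\bm{\Delta}_{\bm{Y}}^{\top})$, (iii) higher-order terms from inverting the balanced Hessian and expanding the de-shrinkage, and (iv) the statistical coupling of $(\bm{X},\bm{Y})$ to the data $(\bm{E},\Omega)$. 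Item (iv) is the true bottleneck, since rowwise independence fails for the plain estimator. I will handle this with a leave-one-out device in the spirit of~\cite{ma2017implicit,chen2019noisy}: for each row index $l$, construct an auxiliary estimator $(\bm{X}^{(l)},\bm{Y}^{(l)})$ obtained by rerunning Algorithm~\ref{alg:gd-mc-ncvx} on a perturbed dataset in which the $l$th row of $\bm{E}$ and the $l$th row of $\mathbf{1}_{\Omega}$ are replaced by fresh independent copies (or zeroed out), show by induction along the gradient trajectory that $(\bm{X}^{(l)},\bm{Y}^{(l)})$ stays uniformly within $o(\sigma\sqrt{r/(p\sigma_{\max})})$ of $(\bm{X},\bm{Y})$ in $\ell_{2,\infty}$, and exploit its independence from the $l$th row of the data to invoke standard rowwise Gaussian/Bernstein tail inequalities. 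A union bound over $l$, combined with the sampling and noise assumptions~\eqref{eq:requirement-low-rank}, then delivers the required smallness of $\bm{\Psi}_{\bm{X}}$ and $\bm{\Psi}_{\bm{Y}}$, completing the proof.
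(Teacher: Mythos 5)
Your proposal follows essentially the same route as the paper: reduce to the nonconvex estimator via the cvx--ncvx proximity, manipulate the stationarity relations to isolate the leading term $\tfrac{1}{p}\mathcal{P}_{\Omega}(\bm{E})\bm{Y}^{\star}(\bm{\Sigma}^{\star})^{-1}$, use the best-rotation property of $\bm{H}^{\mathsf{d}}$ to neutralize the tangential first-order piece (cf.~Claim~\ref{claim:connection-delta-x-delta-y}), control $\mathcal{P}_{\Omega}-p\mathcal{I}$ concentration and the $\bm{E}$--$\bm{Y}^{\mathsf{d}}$ coupling via leave-one-out, and fold the covariance fluctuation of the conditionally Gaussian leading term into the residual (cf.~Lemma~\ref{lemma:approx-gaussian-leading-term}). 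Two small points of divergence that you should be aware of when you formalize this sketch. First, the returned estimate $(\bm{X}^{\mathsf{ncvx}},\bm{Y}^{\mathsf{ncvx}})$ is only an \emph{approximate} stationary point of~\eqref{eq:ncvx} (the stopping rule~\eqref{eq:stopping_criterion} only guarantees $\|\nabla f\|_{\mathrm{F}}\le n^{-5}\lambda\sqrt{\sigma_{\min}}/p$), so the KKT equalities you write down must carry an extra gradient term and a balancing-error term $\bm{\Delta}_{\mathsf{balancing}}$; these are precisely what $\bm{\Phi}_4$ absorbs in the paper. Second, the paper sidesteps your Taylor expansion of $(\bm{I}_r+\tfrac{\lambda}{p}(\bm{X}^{\top}\bm{X})^{-1})^{1/2}$ by exploiting the exact identity $\bm{Y}^{\mathsf{d}\top}\bm{Y}^{\mathsf{d}}=\bm{Y}^{\top}\bm{Y}+\tfrac{\lambda}{p}\bm{I}_r$, which turns the whole decomposition into an algebraic identity (Lemma~\ref{lemma:decomp-low-rank}) rather than an approximation; this is cleaner but not conceptually different from your linearization.
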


\begin{remark} A more complete version can be found in Theorem \ref{thm:low-rank-factor-master-bound}. \end{remark}

\begin{remark}\label{remark:pairwise-independence}Another interesting
feature --- which we shall make precise in the proof of this theorem --- is
that: for any given $1\leq i,j\leq n$, the two random vectors $\bm{Z}_{\bm{X}}^{\top}\bm{e}_{i}$
and $\bm{Z}_{\bm{Y}}^{\top}\bm{e}_{j}$ are nearly statistically independent.
This is crucial for deriving inferential guarantees for the entries
of the matrix. \end{remark}

Theorem~\ref{thm:low-rank-factor-master-bound-simple} is a non-asymptotic result. 
In words, Theorem~\ref{thm:low-rank-factor-master-bound-simple} decomposes
the estimation error $\bm{X}^{\mathsf{d}}\bm{H}^{\mathsf{d}}-\bm{X}^{\star}$
(resp.~$\bm{Y}^{\mathsf{d}}\bm{H}^{\mathsf{d}}-\bm{Y}^{\star}$)
into a Gaussian component~$\bm{Z}_{\bm{X}}$ (resp.~$\bm{Z}_{\bm{Y}}$) and a residual term~$\bm{\Psi}_{\bm{X}}$ (resp.~$\bm{\Psi}_{\bm{Y}}$).
If the sample size is sufficiently large  and the noise size is sufficiently small, 
then the residual terms are  much smaller in size compared to $\bm{Z}_{\bm{X}}$  and $\bm{Z}_{\bm{Y}}$. 
To see this, it is helpful to leverage the Gaussianity~(\ref{eq:normality-X})
and compute that: for each $1\leq j\leq n$, the $j$th row of $\bm{Z}_{\bm{X}}$ obeys
\[
\mathbb{E}\left[\bigl\Vert\bm{Z}_{\bm{X}}^{\top}\bm{e}_{j}\bigr\Vert_{2}^{2}\right]=\mathsf{Tr}\Big(\frac{\sigma^{2}}{p}\left(\bm{\Sigma}^{\star}\right)^{-1}\Big)\geq\frac{\sigma^{2}r}{p\sigma_{\max}};
\]
in other words, the typical size of the $j$th row of  $\bm{Z}_{\bm{X}}$
is no smaller than the order of $\sigma\sqrt{r/(p\sigma_{\max})}$.
In comparison, the size of each row of  $\bm{\Psi}_{\bm{X}}$
(see~(\ref{eq:thm-low-rank-residual-size-simple})) is much smaller than $\sigma\sqrt{r/(p\sigma_{\max})}$ (and hence smaller than the size of the corresponding row of $\bm{Z}_{\bm{X}}$) with
high probability, provided that \eqref{eq:requirement-low-rank} is satisfied.

Equipped with the above master decompositions of the low-rank factors
and Remark~\ref{remark:pairwise-independence}, we are ready to present
a similar decomposition for the entry $M_{ij}^{\mathsf{d}}-M_{ij}^{\star}$.

\begin{theorem}[\textsf{{Distributional guarantees w.r.t.~matrix entries}}]
\label{thm:entries-master-decomposition}
For each $1\leq i,j\leq n$, define the variance $v_{ij}^{\star}$ as
\begin{equation}
	v_{ij}^{\star}\triangleq \frac{\sigma^{2}}{p} \left(\left\Vert \bm{U}_{i,\cdot}^{\star}\right\Vert _{2}^{2}+\left\Vert \bm{V}_{j,\cdot}^{\star}\right\Vert _{2}^{2}\right),
	\label{eq:variance-entry}
\end{equation}
where $\bm{U}_{i,\cdot}^{\star}$ (resp.~$\bm{V}_{j,\cdot}^{\star}$)
denotes the $i$th (resp.~$j$th) row of $\bm{U}^{\star}$ (resp.~$\bm{V}^{\star}$). Suppose that
\begin{subequations}\label{subeq:entry-condition}
\begin{align}
np\gtrsim \kappa^{8}\mu^{3}r^{3}\log^{3}n, & \qquad\sigma\sqrt{(\kappa^{8}\mu rn\log^2 n)/p} \lesssim  \sigma_{\min}\qquad\text{and}\label{eq:requirement-entry}\\
\left\Vert \bm{U}_{i,\cdot}^{\star}\right\Vert _{2}+\left\Vert \bm{V}_{j,\cdot}^{\star}\right\Vert _{2} & \gtrsim \sqrt{\frac{r}{n}}\frac{\sigma}{\sigma_{\min}}\sqrt{\frac{\kappa^{6}\mu^{2}rn\log^{3}n}{p}}.\label{eq:entry-inference-lower-bound}
\end{align}
\end{subequations} 
Then the matrix $\bm{M}^{\mathrm{d}}$ defined in Table~\ref{tab:Notation-unified} satisfies
\begin{equation}
M_{ij}^{\mathsf{{d}}}-M_{ij}^{\star}=g_{ij}+\Delta_{ij},\label{eq:thm-entry-decomposition-simple}
\end{equation}
where $g_{ij}\sim\mathcal{N}(0,v_{ij}^{\star})$ and the residual
	obeys $|\Delta_{ij}| =o( \sqrt{v_{ij}^{\star}} )$ with 
	 probability exceeding $1-O(n^{-3})$. 
\end{theorem}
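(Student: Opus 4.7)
The plan is to leverage Theorem~\ref{thm:low-rank-factor-master-bound-simple} together with the near-equivalence $\bm{M}^{\mathsf{d}} \approx \bm{X}^{\mathsf{d}}\bm{Y}^{\mathsf{d}\top}$ from Section~\ref{subsec:debiasing}, and reduce the entrywise error to a scalar that the low-rank factor decomposition already controls. Since $\bm{H}^{\mathsf{d}}$ is orthogonal, $\bm{X}^{\mathsf{d}}\bm{Y}^{\mathsf{d}\top} = (\bm{X}^{\mathsf{d}}\bm{H}^{\mathsf{d}})(\bm{Y}^{\mathsf{d}}\bm{H}^{\mathsf{d}})^{\top}$. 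Substituting $\bm{X}^{\mathsf{d}}\bm{H}^{\mathsf{d}} = \bm{X}^{\star} + \bm{Z}_{\bm{X}} + \bm{\Psi}_{\bm{X}}$ and $\bm{Y}^{\mathsf{d}}\bm{H}^{\mathsf{d}} = \bm{Y}^{\star} + \bm{Z}_{\bm{Y}} + \bm{\Psi}_{\bm{Y}}$, subtracting $\bm{X}^{\star}\bm{Y}^{\star\top}$, and taking the $(i,j)$-th entry, I would isolate the linear Gaussian part
\[
g_{ij} \;\triangleq\; \bm{e}_{i}^{\top}\bm{X}^{\star}\bm{Z}_{\bm{Y}}^{\top}\bm{e}_{j} + \bm{e}_{i}^{\top}\bm{Z}_{\bm{X}}\bm{Y}^{\star\top}\bm{e}_{j},
\]
and collect the rest (higher-order bilinear pieces, $\bm{\Psi}$-pieces, and the $\bm{M}^{\mathsf{d}} - \bm{X}^{\mathsf{d}}\bm{Y}^{\mathsf{d}\top}$ gap) into $\Delta_{ij}$.

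To verify that $g_{ij} \sim \mathcal{N}(0,v_{ij}^{\star})$, I would use that $\bm{Z}_{\bm{X}}^{\top}\bm{e}_{i}$ and $\bm{Z}_{\bm{Y}}^{\top}\bm{e}_{j}$ are each $\mathcal{N}(\bm{0},(\sigma^{2}/p)(\bm{\Sigma}^{\star})^{-1})$ and, by Remark~\ref{remark:pairwise-independence}, are (nearly) statistically independent. Writing $\bm{X}^{\star} = \bm{U}^{\star}(\bm{\Sigma}^{\star})^{1/2}$ and $\bm{Y}^{\star} = \bm{V}^{\star}(\bm{\Sigma}^{\star})^{1/2}$, a one-line computation gives
\[
\operatorname{Var}(g_{ij}) \;=\; \tfrac{\sigma^{2}}{p}\,\bm{U}_{i,\cdot}^{\star}(\bm{\Sigma}^{\star})^{1/2}(\bm{\Sigma}^{\star})^{-1}(\bm{\Sigma}^{\star})^{1/2}\bm{U}_{i,\cdot}^{\star\top} + \tfrac{\sigma^{2}}{p}\,\|\bm{V}_{j,\cdot}^{\star}\|_{2}^{2} \;=\; v_{ij}^{\star},
\]
recovering \eqref{eq:variance-entry}. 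The small dependence between $\bm{Z}_{\bm{X}}^{\top}\bm{e}_{i}$ and $\bm{Z}_{\bm{Y}}^{\top}\bm{e}_{j}$ contributes only a lower-order perturbation, which can be absorbed into $\Delta_{ij}$.

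The bulk of the work goes into showing $|\Delta_{ij}| = o(\sqrt{v_{ij}^{\star}})$. I would break $\Delta_{ij}$ into three families: (i) linear residual terms such as $\bm{e}_{i}^{\top}\bm{X}^{\star}\bm{\Psi}_{\bm{Y}}^{\top}\bm{e}_{j}$, bounded by Cauchy--Schwarz using $\|\bm{X}_{i,\cdot}^{\star}\|_{2} \leq \sqrt{\sigma_{\max}\mu r/n}$ from incoherence and the $(2,\infty)$-bound \eqref{eq:thm-low-rank-residual-size-simple}; (ii) the bilinear Gaussian term $\bm{e}_{i}^{\top}\bm{Z}_{\bm{X}}\bm{Z}_{\bm{Y}}^{\top}\bm{e}_{j}$, controlled by Gaussian concentration conditional on one factor; and (iii) $\bm{\Psi}$-$\bm{\Psi}$ and $\bm{\Psi}$-$\bm{Z}$ cross products, controlled again through the $(2,\infty)$-bound. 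Each of these is of size at most $O(\sigma\sqrt{\kappa^{6}\mu^{2}r^{2}\log^{3}n/p}\cdot\sqrt{\sigma_{\max}/(n\sigma_{\min}^{2})})$ up to constants, which is where the entrywise lower bound \eqref{eq:entry-inference-lower-bound} comes in: it is engineered so that $\sqrt{v_{ij}^{\star}} \gtrsim (\sigma/\sqrt{p})(\|\bm{U}_{i,\cdot}^{\star}\|_{2}+\|\bm{V}_{j,\cdot}^{\star}\|_{2})$ dominates the aggregate residual.

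The main obstacle I anticipate is not the calculation itself but the passage from $\bm{M}^{\mathsf{d}}$ to $\bm{X}^{\mathsf{d}}\bm{Y}^{\mathsf{d}\top}$: the approximation in \eqref{eq:equivalence-Zd-Xd-Yd} needs to hold in an entrywise (or $\ell_{\infty}$) sense, not merely in Frobenius norm, to ensure the error contributed by this step is itself $o(\sqrt{v_{ij}^{\star}})$. I would therefore need a fine-grained control of $\bm{M}^{\mathsf{d}} - \bm{X}^{\mathsf{d}}\bm{Y}^{\mathsf{d}\top}$ built on the leave-one-out machinery underlying Theorem~\ref{thm:low-rank-factor-master-bound-simple}, and the strengthened requirements in \eqref{eq:requirement-entry} relative to \eqref{eq:requirement-low-rank} (the extra factor of $r$ and dependence on $\log n$) are precisely what buys this entrywise control.
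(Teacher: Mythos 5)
Your plan matches the paper's proof structure (Theorem~\ref{thm:entries-master-decomposition-augment} and the two supporting lemmas~\ref{lemma:entry-residual} and~\ref{lemma:entry-normal}): reduce to the nonconvex de-shrunken factors, expand $\overline{\bm{X}}^{\mathsf{d}}\overline{\bm{Y}}^{\mathsf{d}\top}-\bm{X}^{\star}\bm{Y}^{\star\top}$ using the decomposition of Theorem~\ref{thm:low-rank-factor-master-bound-simple}, isolate the two linear Gaussian pieces, and push the $\bm{\Psi}$-terms and the quadratic $(\overline{\bm{X}}^{\mathsf{d}}-\bm{X}^{\star})(\overline{\bm{Y}}^{\mathsf{d}}-\bm{Y}^{\star})^{\top}$ piece into the residual, using incoherence and the $\ell_{2,\infty}$ bounds on $\bm{\Psi}_{\bm{X}},\bm{\Psi}_{\bm{Y}}$. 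Your variance computation is correct. Two points of calibration are worth flagging.

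First, the step where you assert $g_{ij}\sim\mathcal{N}(0,v_{ij}^{\star})$ with $g_{ij}=\bm{e}_{i}^{\top}\bm{X}^{\star}\bm{Z}_{\bm{Y}}^{\top}\bm{e}_{j}+\bm{e}_{i}^{\top}\bm{Z}_{\bm{X}}\bm{Y}^{\star\top}\bm{e}_{j}$ is not quite right as stated: $\bm{Z}_{\bm{X}}^{\top}\bm{e}_{i}$ and $\bm{Z}_{\bm{Y}}^{\top}\bm{e}_{j}$ are each marginally Gaussian, but they share randomness through $\delta_{ij}$ and $E_{ij}$ and are \emph{not} jointly Gaussian (owing to the $\Omega$-dependent whitening factors $\bm{S}^{-1/2}$ in their construction), so their sum need not be Gaussian. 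Saying the dependence ``can be absorbed into $\Delta_{ij}$'' is the right instinct, but executing it requires an explicit decoupling: the paper constructs a surrogate $\widetilde{\bm{Z}}_{\bm{X}}^{\top}\bm{e}_{i}$ that depends only on $\{\delta_{ik},E_{ik}\}_{k\neq j}$, is exactly $\mathcal{N}(\bm{0},\tfrac{\sigma^{2}}{p}(\bm{\Sigma}^{\star})^{-1})$, and is exactly independent of $\bm{Z}_{\bm{Y}}^{\top}\bm{e}_{j}$; then $g_{ij}$ is defined with $\widetilde{\bm{Z}}_{\bm{X}}$ in place of $\bm{Z}_{\bm{X}}$, and the difference $\bm{e}_{i}^{\top}(\bm{Z}_{\bm{X}}-\widetilde{\bm{Z}}_{\bm{X}})\bm{Y}^{\star\top}\bm{e}_{j}$ is bounded and folded into $\Delta_{ij}$. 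Without this explicit construction the Gaussianity claim does not hold.

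Second, the ``main obstacle'' you anticipate --- getting entrywise rather than Frobenius control of $\bm{M}^{\mathsf{d}}-\bm{X}^{\mathsf{d}}\bm{Y}^{\mathsf{d}\top}$ --- is not an obstacle at all. Lemma~\ref{lemma:connection-de-bias-shrunken} gives $\|\bm{M}^{\mathsf{d}}-\bm{X}^{\mathsf{d}}\bm{Y}^{\mathsf{d}\top}\|_{\mathrm{F}}\lesssim n^{-4}\sigma\sqrt{n/p}$, and since $|A_{ij}|\leq\|\bm{A}\|_{\mathrm{F}}$, this is already an entrywise bound which is polynomially smaller than any reasonable value of $\sqrt{v_{ij}^{\star}}\gtrsim\sigma\sqrt{r/(np)}$; no refined leave-one-out entrywise analysis of this particular gap is needed. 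Consequently the strengthened conditions in~\eqref{subeq:entry-condition} --- the extra factor of $r$ in the sample complexity and the lower bound~\eqref{eq:entry-inference-lower-bound} --- are not ``buying'' control of the $\bm{M}^{\mathsf{d}}$ gap as you suggest; they are used to make the residual $\Lambda_{ij}$ (in particular its quadratic piece $\bm{e}_{i}^{\top}(\overline{\bm{X}}^{\mathsf{d}}-\bm{X}^{\star})(\overline{\bm{Y}}^{\mathsf{d}}-\bm{Y}^{\star})^{\top}\bm{e}_{j}$) and the decoupling error $\theta_{ij}$ small relative to $\sqrt{v_{ij}^{\star}}$.
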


\begin{remark}[The symmetric case] 
In the symmetric case where the
noise $\bm{E}$, the truth $\bm{M}^{\star}$, and the sampling pattern
are all symmetric (i.e.~$\mathcal{P}_{\Omega}(\bm{E})=\big(\mathcal{P}_{\Omega}(\bm{E})\big)^{\top}$
and $\bm{M}^{\star}=\bm{M}^{\star\top}$), the variance $v_{ii}^{\star}$
(cf.~(\ref{eq:variance-entry})) for the diagonal entries has a different
formula; more specifically, it is straightforward to extend our theory
to show that
\[
v_{ii}^{\star}=\frac{4\sigma^{2}}{p}\left\Vert \bm{U}_{i,\cdot}^{\star}\right\Vert _{2}^{2}=\frac{2\sigma^{2}}{p}\left(\left\Vert \bm{U}_{i,\cdot}^{\star}\right\Vert _{2}^{2}+\left\Vert \bm{V}_{i,\cdot}^{\star}\right\Vert _{2}^{2}\right)\qquad\text{for the symmetric case}.
\]
This additional multiplicative factor of 2 arises since $\bm{Z}_{\bm{X}}^{\top}\bm{e}_{i}$
and $\bm{Z}_{\bm{Y}}^{\top}\bm{e}_{i}$ are identical (and hence not
independent) in this symmetric case. The variance formula for any
$v_{ij}^{\star}$ ($i\neq j$) remains unchanged. 
\end{remark}

Several remarks are in order. To begin with, we develop some intuition
regarding where the variance $v_{ij}^{\star}$ comes from.
By virtue of Theorem~\ref{thm:low-rank-factor-master-bound-simple}, one has the following Gaussian approximation 
\[
\bm{X}^{\mathrm{d}}\bm{H}^{\mathrm{d}}-\bm{X}^{\star}\approx\bm{Z}_{\bm{X}}\qquad\text{and}\qquad\bm{Y}^{\mathrm{d}}\bm{H}^{\mathrm{d}}-\bm{Y}^{\star}\approx\bm{Z}_{\bm{Y}} .
\]
Assuming that the first-order expansion is reasonably tight, one has
\begin{align}
M_{ij}^{\mathrm{d}}-M_{ij}^{\star} & =\left[\bm{X}^{\mathrm{d}}\bm{H}^{\mathrm{d}}\big(\bm{Y}^{\mathrm{d}}\bm{H}^{\mathrm{d}}\big)^{\top}-\bm{X}^{\star}\bm{Y}^{\star\top}\right]_{ij}\approx\bm{e}_{i}^{\top}\big(\bm{X}^{\mathrm{d}}\bm{H}^{\mathrm{d}}-\bm{X}^{\star}\big)\bm{Y}^{\star\top}\bm{e}_{j}+\bm{e}_{i}^{\top}\bm{X}^{\star}\big(\bm{Y}^{\mathrm{d}}\bm{H}^{\mathrm{d}}-\bm{Y}^{\star}\big)^{\top}\bm{e}_{j} \nonumber\\
	& \approx\bm{e}_{i}^{\top}\bm{Z}_{\bm{X}}\bm{Y}^{\star\top}\bm{e}_{j}+\bm{e}_{i}^{\top}\bm{X}^{\star}\bm{Z}_{\bm{Y}}^{\top}\bm{e}_{j}. \label{eq:Mij-leading-term}
\end{align}
According to Remark~\ref{remark:pairwise-independence}, $\bm{Z}_{\bm{X}}^{\top}\bm{e}_{i}$
and $\bm{Z}_{\bm{Y}}^{\top}\bm{e}_{j}$ are nearly independent. It
is thus straightforward to compute the variance of \eqref{eq:Mij-leading-term} as
\begin{align*}
\mathsf{Var}\left(M_{ij}^{\mathrm{d}}-M_{ij}^{\star}\right) & \overset{(\text{i})}{\approx}\mathsf{Var}\left(\bm{e}_{i}^{\top}\bm{Z}_{\bm{X}}\bm{Y}^{\star\top}\bm{e}_{j}\right)+\mathsf{Var}\left(\bm{e}_{i}^{\top}\bm{X}^{\star}\bm{Z}_{\bm{Y}}^{\top}\bm{e}_{j}\right) \\
 &\overset{(\text{ii})}{=}\frac{\sigma^{2}}{p}\left\{ \bm{e}_{j}^{\top}\bm{Y}^{\star}\left(\bm{\Sigma}^{\star}\right)^{-1}\bm{Y}^{\star\top}\bm{e}_{j}+\bm{e}_{i}^{\top}\bm{X}^{\star}\left(\bm{\Sigma}^{\star}\right)^{-1}\bm{X}^{\star\top}\bm{e}_{i}\right\} \overset{(\text{iii})}{=}\frac{\sigma^{2}}{p}\left(\left\Vert \bm{U}_{i,\cdot}^{\star}\right\Vert _{2}^{2}+\left\Vert \bm{V}_{j,\cdot}^{\star}\right\Vert _{2}^{2}\right)  = v_{ij}^{\star}.
\end{align*}
Here, (i) relies on \eqref{eq:Mij-leading-term} and the near independence between $\bm{Z}_{\bm{X}}^{\top}\bm{e}_{i}$
and $\bm{Z}_{\bm{Y}}^{\top}\bm{e}_{j}$; (ii) uses the variance formula
in Theorem~\ref{thm:low-rank-factor-master-bound-simple}; (iii)
arises from the definitions of $\bm{X}^{\star}$ and $\bm{Y}^{\star}$
(cf.~(\ref{eq:defn-Xstar-Ystar})). This computation explains (heuristically)
the variance formula $v_{ij}^{\star}$.



Given that Theorem~\ref{thm:entries-master-decomposition}
reveals the tightness of Gaussian approximation
under conditions \eqref{subeq:entry-condition}, it in turn allows  
 us to construct nearly
accurate confidence intervals for each matrix entry $M_{ij}^{\star}$.
This is formally summarized in the following corollary, the proof
of which is deferred to Appendix~\ref{sec:Proof-of-Corollary-confidence-interval}.
Here and throughout, we use $[a\pm b]$ to denote the interval $[a-b,a+b]$.

\begin{corollary}[\textsf{Confidence intervals for the entries $\{M^\star_{ij}\}$}]
\label{coro:confidence-interval}
Let $\bm{X}^{\mathrm{d}}$, $\bm{Y}^{\mathrm{d}}$ and $\bm{M}^{\mathrm{d}}$ be as defined in Table~\ref{tab:Notation-unified}. 
For any given $1\leq i,j\leq n$, suppose that \eqref{eq:requirement-entry} holds and that 
\begin{equation}
	\left\Vert \bm{U}_{i,\cdot}^{\star}\right\Vert _{2}+\left\Vert \bm{V}_{j,\cdot}^{\star}\right\Vert _{2}  \gtrsim \sqrt{\frac{r}{n}}\frac{\sigma}{\sigma_{\min}}\sqrt{\frac{\kappa^{10}\mu^{2}rn\log^{3}n}{p}}.
\end{equation}
Denote by
$\Phi(t)$ the CDF of a standard Gaussian random variable and by $\Phi^{-1}(\cdot)$
its inverse function. Let
\begin{equation}
	v_{ij}\triangleq  \frac{\sigma^{2}}{p} \left(\bm{X}_{i,\cdot}^{\mathsf{d}}\left(\bm{X}^{\mathsf{d}\top}\bm{X}^{\mathsf{d}}\right)^{-1}(\bm{X}_{i,\cdot}^{\mathsf{d}})^{\top}+\bm{Y}_{j,\cdot}^{\mathsf{d}}\left(\bm{Y}^{\mathsf{d}\top}\bm{Y}^{\mathsf{d}}\right)^{-1}(\bm{Y}_{j,\cdot}^{\mathsf{d}})^{\top}\right)\label{eq:empirical-variance-entry}
\end{equation}
be the empirical estimate of the theoretical variance $v_{ij}^{\star}$. Then one has
%
\begin{align*}
	& \sup_{0<\alpha<1}\Big|\mathbb{P}\Big\{ M_{ij}^{\star}\in\big[M_{ij}^{\mathsf{{d}}}\pm\Phi^{-1}\left(1-\alpha/2\right) \sqrt{v_{ij}}\big]\Big\} -(1-\alpha)\Big| = o(1). 
\end{align*}
\end{corollary}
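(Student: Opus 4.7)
The plan is to reduce the statement to the decomposition provided by Theorem~\ref{thm:entries-master-decomposition} by showing that the empirical variance $v_{ij}$ is a consistent estimator of the theoretical variance $v_{ij}^{\star}$. Concretely, the aim is to establish
\[
v_{ij}/v_{ij}^{\star} = 1 + o(1)
\]
with probability at least $1-O(n^{-3})$. Combined with Theorem~\ref{thm:entries-master-decomposition}, which gives $M_{ij}^{\mathsf{d}} - M_{ij}^{\star} = g_{ij} + \Delta_{ij}$ with $g_{ij}\sim \mathcal{N}(0,v_{ij}^{\star})$ and $|\Delta_{ij}| = o(\sqrt{v_{ij}^{\star}})$, this reduces the problem to a standard Gaussian-CDF calculation.

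To verify the variance approximation, I first note that the quadratic form $\bm{X}_{i,\cdot}^{\mathsf{d}}(\bm{X}^{\mathsf{d}\top}\bm{X}^{\mathsf{d}})^{-1}(\bm{X}_{i,\cdot}^{\mathsf{d}})^{\top}$ is invariant under right-multiplication of $\bm{X}^{\mathsf{d}}$ by any orthogonal matrix, so I may freely replace $\bm{X}^{\mathsf{d}}$ by $\bm{X}^{\mathsf{d}}\bm{H}^{\mathsf{d}}$ (and similarly for $\bm{Y}^{\mathsf{d}}$). Invoking Theorem~\ref{thm:low-rank-factor-master-bound-simple}, I would decompose $\bm{X}^{\mathsf{d}}\bm{H}^{\mathsf{d}} = \bm{X}^{\star} + \bm{Z}_{\bm{X}} + \bm{\Psi}_{\bm{X}}$ (and similarly for $\bm{Y}$) and then derive two bounds: (i) an operator-norm estimate showing that $\bm{X}^{\mathsf{d}\top}\bm{X}^{\mathsf{d}}$ is close to $\bm{\Sigma}^{\star}$ in a relative sense, via Gaussian concentration of $\bm{Z}_{\bm{X}}$ and the size control of $\bm{\Psi}_{\bm{X}}$; and (ii) a row-wise bound $\|(\bm{X}^{\mathsf{d}}\bm{H}^{\mathsf{d}})_{i,\cdot} - \bm{X}_{i,\cdot}^{\star}\|_{2} = o(\|\bm{X}_{i,\cdot}^{\star}\|_{2})$, which exploits the strengthened lower bound on $\|\bm{U}_{i,\cdot}^{\star}\|_{2} + \|\bm{V}_{j,\cdot}^{\star}\|_{2}$ assumed in the corollary (carrying an extra $\kappa^{2}$ factor relative to Theorem~\ref{thm:entries-master-decomposition}). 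A first-order Taylor expansion then yields
\[
\bm{X}_{i,\cdot}^{\mathsf{d}}(\bm{X}^{\mathsf{d}\top}\bm{X}^{\mathsf{d}})^{-1}(\bm{X}_{i,\cdot}^{\mathsf{d}})^{\top} = \|\bm{U}_{i,\cdot}^{\star}\|_{2}^{2}\bigl(1+o(1)\bigr)
\]
on the high-probability event, and symmetrically for the $\bm{Y}$ term, so that $v_{ij} = v_{ij}^{\star}\bigl(1+o(1)\bigr)$.

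Given the variance approximation, on a high-probability event of measure at least $1-O(n^{-3})$ we have
\[
\frac{|M_{ij}^{\mathsf{d}}-M_{ij}^{\star}|}{\sqrt{v_{ij}}} = \frac{|g_{ij}|}{\sqrt{v_{ij}^{\star}}}\bigl(1+o(1)\bigr) + o(1),
\]
where $g_{ij}/\sqrt{v_{ij}^{\star}}\sim \mathcal{N}(0,1)$ \emph{exactly}. Writing $z_{\alpha/2}:=\Phi^{-1}(1-\alpha/2)$, I may then sandwich the coverage probability between $\mathbb{P}\{|\mathcal{N}(0,1)|\leq z_{\alpha/2}(1+o(1))+o(1)\}+O(n^{-3})$ and its $-o(1)$ analogue. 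The final uniform-in-$\alpha$ statement follows from the uniform boundedness of the standard normal density, which guarantees that any $o(1)$ perturbation of the threshold translates into $o(1)$ uniform error in the half-normal CDF.

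The principal obstacle is the row-wise relative-error bound in step (ii): controlling $(\bm{X}^{\mathsf{d}}\bm{H}^{\mathsf{d}} - \bm{X}^{\star})_{i,\cdot}$ relative to $\bm{X}_{i,\cdot}^{\star}$ requires combining the Gaussian term $\bm{Z}_{\bm{X}}$ (whose typical row norm is of order $\sigma\sqrt{r/(p\sigma_{\min})}$) with the residual row-norm bound from Theorem~\ref{thm:low-rank-factor-master-bound-simple}, then matching these against $\|\bm{X}_{i,\cdot}^{\star}\|_{2} \asymp \sqrt{\sigma_{\min}}\,\|\bm{U}_{i,\cdot}^{\star}\|_{2}$. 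This is precisely where the extra $\kappa^{2}$ in the corollary's hypothesis is consumed; once this is established, the remainder of the argument is mechanical.
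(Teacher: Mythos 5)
Your high-level strategy — show the plug-in variance $v_{ij}$ is a consistent estimate of $v_{ij}^{\star}$ and then reduce the coverage statement to a Gaussian CDF computation via Theorem~\ref{thm:entries-master-decomposition} — is the same as the paper's, but there are two concrete gaps in the execution.

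First, your step (ii) claims a row-wise relative-error bound of the form $\|(\bm{X}^{\mathsf{d}}\bm{H}^{\mathsf{d}})_{i,\cdot} - \bm{X}_{i,\cdot}^{\star}\|_{2} = o(\|\bm{X}_{i,\cdot}^{\star}\|_{2})$, which you would then use to conclude $\bm{X}_{i,\cdot}^{\mathsf{d}}(\bm{X}^{\mathsf{d}\top}\bm{X}^{\mathsf{d}})^{-1}(\bm{X}_{i,\cdot}^{\mathsf{d}})^{\top} = \|\bm{U}_{i,\cdot}^{\star}\|_{2}^{2}(1+o(1))$. This cannot be obtained from the corollary's hypothesis, which only lower-bounds the \emph{sum} $\|\bm{U}_{i,\cdot}^{\star}\|_{2}+\|\bm{V}_{j,\cdot}^{\star}\|_{2}$. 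It is perfectly consistent with that hypothesis to have $\|\bm{U}_{i,\cdot}^{\star}\|_{2}=0$ (hence $\bm{X}_{i,\cdot}^{\star}=\bm{0}$) while $\|\bm{V}_{j,\cdot}^{\star}\|_{2}$ is large, in which case the asserted relative bound on the $\bm{X}$ row is vacuous and the factor-wise $(1+o(1))$ multiplicative approximation fails. The correct estimate is an \emph{absolute} bound measured against the sum, namely $\big|\bm{X}_{i,\cdot}^{\mathsf{d}}(\bm{X}^{\mathsf{d}\top}\bm{X}^{\mathsf{d}})^{-1}(\bm{X}_{i,\cdot}^{\mathsf{d}})^{\top}-\|\bm{U}_{i,\cdot}^{\star}\|_{2}^{2}\big| \lesssim o(1)\cdot\big(\|\bm{U}_{i,\cdot}^{\star}\|_{2}+\|\bm{V}_{j,\cdot}^{\star}\|_{2}\big)^{2}$, and similarly for the $\bm{Y}$ term; adding the two and comparing against $v_{ij}^{\star}\asymp\tfrac{\sigma^{2}}{p}(\|\bm{U}_{i,\cdot}^{\star}\|_{2}+\|\bm{V}_{j,\cdot}^{\star}\|_{2})^{2}$ gives the needed $|v_{ij}-v_{ij}^{\star}|=o(v_{ij}^{\star})$. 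This is what the paper does (see the bounds (\ref{eq:relative-bound}) and (\ref{eq:absolute-bound}) and the estimate of $|v_{ij}-v_{ij}^{\star}|$ in Claim~\ref{claim:Delta-V}).

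Second, your closing argument invokes only "uniform boundedness of the standard normal density" to turn a perturbed threshold into $o(1)$ coverage error. That suffices for an \emph{additive} perturbation $z_{\alpha/2}\mapsto z_{\alpha/2}+o(1)$, but after rescaling by $\sqrt{v_{ij}^{\star}/v_{ij}}$ you have a \emph{multiplicative} perturbation $z_{\alpha/2}\mapsto z_{\alpha/2}(1+o(1))$, and the deviation $\Phi(z(1+\delta))-\Phi(z)$ is of size $\delta\, z\,\phi(z')$, which is not controlled by $\sup_t\phi(t)$ alone since $z_{\alpha/2}$ is unbounded over $\alpha\in(0,1)$. You either need the additional fact $\sup_{z\geq 0} z\phi(z)<\infty$, or you should convert the multiplicative error into a bounded additive one by a high-probability bound on $|M_{ij}^{\mathsf{d}}-M_{ij}^{\star}|/\sqrt{v_{ij}^{\star}}$. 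The paper follows the latter route: they isolate the variance-mismatch term $\Delta_{V}=(M_{ij}^{\mathsf{d}}-M_{ij}^{\star})(v_{ij}^{-1/2}-v_{ij}^{\star -1/2})$, prove $|M_{ij}^{\mathsf{d}}-M_{ij}^{\star}|\lesssim\kappa^{2}\sqrt{\mu r\log n}\sqrt{v_{ij}^{\star}}$ with high probability, and thereby bound $|\Delta_V|$ by a vanishing quantity that can be absorbed additively into $\Phi(t+\varepsilon)-\Phi(t)\leq\varepsilon$.
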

In words, Corollary~\ref{coro:confidence-interval} tells us  that for any fixed significance level $0<\alpha<1$, the interval 
\begin{equation}\label{eq:entry-confidence-interval}
\big[M_{ij}^{\mathsf{d}}\pm\Phi^{-1}(1-\alpha/2)\sqrt{v_{ij}} \big]
\end{equation}
is a nearly accurate two-sided $(1-\alpha)$ confidence interval of $M_{ij}^{\star}$.

In addition, we remark that when $\|\bm{U}_{i,\cdot}^{\star}\|_{2}=\|\bm{V}_{j,\cdot}^{\star}\|_{2}=0$
(and hence $V_{ij}^{\star}=0$), the above Gaussian approximation
is completely off. In this case, one can still leverage Theorem~\ref{thm:low-rank-factor-master-bound-simple}
to show that
\begin{equation}
M_{ij}^{\mathsf{{d}}}-M_{ij}^{\star}=M_{ij}^{\mathsf{{d}}}\approx\bm{u}^{\top}\bm{v},\label{eq:chi-2-approx}
\end{equation}
where $\bm{u},\bm{v}\in\mathbb{R}^{r}$ are independent and identically
distributed according to $\mathcal{N}(\bm{0},\sigma^{2}(\bm{\Sigma}^{\star})^{-1}/p)$.
However, it is nontrivial to determine whether $\|\bm{U}_{i,\cdot}^{\star}\|_{2}+\|\bm{V}_{j,\cdot}^{\star}\|_{2}$
is vanishingly small or not based on the observed data, which makes
it challenging to conduct efficient inference for entries with small
(but \emph{a priori} unknown) $\|\bm{U}_{i,\cdot}^{\star}\|_{2}+\|\bm{V}_{j,\cdot}^{\star}\|_{2}$.

Last but not least, the careful readers might wonder how to interpret our conditions on the sample complexity and the signal-to-noise ratio. Take the case with $r, \mu, \kappa =O(1)$ for example: our conditions read
\begin{equation}
	n^2 p \gtrsim n \log^3 n; \qquad \sigma \sqrt{(n\log^2 n) / p} \lesssim \sigma_{\min}. 
\end{equation}
The first condition matches the minimal sample complexity limit (up to some logarithmic factor),  while the second one coincides with the regime (up to log factor) in which popular algorithms (like spectral methods or nonconvex algorithms) work better than a random guess~\cite{Se2010Noisy,chen2015fast,ma2017implicit}. The  take-away message is this: once we are able to compute a reasonable estimate in an overall $\ell_2$ sense,  then we can reinforce it to conduct entrywise inference in a
statistically efficient fashion.  
The discussion of the dependency on $r$ and $\kappa$ is deferred to Section~\ref{sec:discussion}.

\subsection{Lower bounds and optimality for inference}

It is natural to ask how well our inferential procedures perform compared
to other algorithms. Encouragingly, the de-biased estimator is optimal
in some sense; for instance, it nearly attains the minimum covariance
among all unbiased estimators. To formalize this claim, we shall
\begin{enumerate}
\item Quantify the performance of two  ideal estimators with the assistance of an oracle;
\item Demonstrate that the performance of our de-biased estimators is arbitrarily
close to that of the ideal estimators.
\end{enumerate}
In what follows, we denote by $\bm{X}_{i,\cdot}^{\star}$ (resp.~$\bm{Y}_{i,\cdot}^{\star}$)
the $i$th row of $\bm{X}^{\star}$ (resp.~$\bm{Y}^{\star}$).

\paragraph{An ideal estimator for $\bm{X}_{i,\cdot}^{\star}$ ($1\leq i\leq n$).}
Suppose that there is an oracle informing us of $\bm{Y}^{\star}$,
and that we observe the same set of data as in~(\ref{eq:observation-model}).
Under such an idealistic setting and for any given $1\leq i\leq n$, the following least-squares estimator
achieves the minimum covariance among all \emph{unbiased }estimators\emph{
}for the $i$th row $\bm{X}_{i,\cdot}^{\star}$ of $\bm{X}^{\star}$ (see e.g.~\cite[Theorem 3.7]{shao2003mathematical})  
\begin{equation}
\bm{X}_{i,\cdot}^{\mathsf{ideal}}\,\triangleq\,\arg\min_{\bm{u}\in\mathbb{R}^{1\times r}}\sum_{k:(i,k)\in\Omega}\left[M_{ik}-\bm{u}\big(\bm{Y}_{k,\cdot}^{\star}\big)^{\top}\right]^{2}.\label{eq:ideal-estimator-Xi}
\end{equation}
In other words, for any unbiased estimator $\bm{u}$ of $\bm{X}_{i,\cdot}^{\star}$
(conditional on $\Omega$), one has
\begin{equation}
\mathsf{Cov}\big(\bm{u}\,\big|\,\Omega\big)\,\succeq\,\mathsf{Cov}\big(\bm{X}_{i,\cdot}^{\mathsf{ideal}}\,\big|\,\Omega\big)=:\mathsf{CRLB}\big(\bm{X}_{i,\cdot}^{\star}\mid\Omega\big),\label{eq:CRLB-Xi}
\end{equation}
where $\mathsf{Cov}\big(\bm{X}_{i,\cdot}^{\mathsf{ideal}}\,\big|\,\Omega\big)$
is precisely the Cram\'er-Rao lower bound (conditional on $\Omega$)
under this ideal setting. As it turns out, with high probability,
this lower bound concentrates around $\sigma^{2}(\bm{\Sigma}^{\star})^{-1}/p$,
as stated in the following lemma. The proof is postponed to Appendix~\ref{sec:Proof-of-Lemma-optimal-low-rank}.

\begin{lemma}\label{lemma:optimal-low-rank-variance}Fix an arbitrarily
small constant $\varepsilon>0$. Suppose that $n^{2}p\geq C_{0}\varepsilon^{-2}\kappa^{4}\mu rn$
for some sufficiently large constant $C_{0}>0$ independent of $n$.
Then with probability at least $1-O(n^{-10})$, one has
\[
\mathsf{CRLB}\big(\bm{X}_{i,\cdot}^{\star}\mid\Omega\big)\succeq\left(1-\varepsilon\right)\frac{\sigma^{2}}{p}\left(\bm{\Sigma}^{\star}\right)^{-1}.
\]
\end{lemma}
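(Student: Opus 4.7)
\noindent The plan is to identify $\mathsf{CRLB}(\bm{X}_{i,\cdot}^{\star}\mid\Omega)$ with the conditional covariance of the oracle OLS estimator and then apply matrix concentration. Conditional on $\Omega$, the oracle model $M_{ik}=\bm{X}_{i,\cdot}^{\star}(\bm{Y}_{k,\cdot}^{\star})^{\top}+E_{ik}$, $(i,k)\in\Omega$, is a standard linear Gaussian regression with \emph{known} design $\{\bm{Y}_{k,\cdot}^{\star}\}_{k:(i,k)\in\Omega}$, so by the Gauss--Markov theorem $\bm{X}_{i,\cdot}^{\mathsf{ideal}}$ attains the Cram\'er--Rao lower bound, whose value is the standard OLS covariance
\begin{equation*}
\mathsf{CRLB}\bigl(\bm{X}_{i,\cdot}^{\star}\mid\Omega\bigr)=\sigma^{2}\bm{S}^{-1},\qquad \bm{S}\triangleq\sum_{k=1}^{n}\delta_{ik}\,(\bm{Y}_{k,\cdot}^{\star})^{\top}\bm{Y}_{k,\cdot}^{\star},
\end{equation*}
where $\delta_{ik}\triangleq\mathbf{1}\{(i,k)\in\Omega\}\stackrel{\mathrm{i.i.d.}}{\sim}\mathrm{Bernoulli}(p)$. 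Since $\mathbb{E}[\bm{S}]=p\bm{\Sigma}^{\star}$, the target inequality $\bm{S}^{-1}\succeq(1-\varepsilon)p^{-1}(\bm{\Sigma}^{\star})^{-1}$ is equivalent---by operator monotonicity of matrix inversion---to the matrix upper bound $\bm{S}\preceq\frac{1}{1-\varepsilon}p\bm{\Sigma}^{\star}$, subject to $\bm{S}$ being positive definite.

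\noindent To set up a clean concentration problem, I would first normalize by $(\bm{\Sigma}^{\star})^{-1/2}$. Using $\bm{Y}^{\star}=\bm{V}^{\star}(\bm{\Sigma}^{\star})^{1/2}$, the upper bound reduces to
\begin{equation*}
\tilde{\bm{S}}\triangleq(\bm{\Sigma}^{\star})^{-1/2}\bm{S}(\bm{\Sigma}^{\star})^{-1/2}=\sum_{k=1}^{n}\delta_{ik}(\bm{V}_{k,\cdot}^{\star})^{\top}\bm{V}_{k,\cdot}^{\star}\preceq\frac{1}{1-\varepsilon}\,p\,\bm{I}_{r}.
\end{equation*}
Now $\mathbb{E}[\tilde{\bm{S}}]=p\bm{V}^{\star\top}\bm{V}^{\star}=p\bm{I}_{r}$, and each summand is a rank-$1$ PSD matrix with spectral norm at most $\|\bm{V}_{k,\cdot}^{\star}\|_{2}^{2}\leq\mu r/n$ by the incoherence hypothesis on $\bm{V}^{\star}$. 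Tropp's matrix Chernoff inequality with summand bound $R=\mu r/n$ then yields, for any $\delta\in(0,1)$,
\begin{equation*}
\mathbb{P}\bigl\{\lambda_{\max}(\tilde{\bm{S}})\geq(1+\delta)p\bigr\}\leq r\,\exp\!\bigl(-\delta^{2}np/(3\mu r)\bigr),
\end{equation*}
together with an analogous lower-tail bound ensuring $\lambda_{\min}(\tilde{\bm{S}})\geq(1-\delta)p>0$ on the same event (so that $\bm{S}$ is indeed PD). Choosing $\delta$ as a constant multiple of $\varepsilon$ and invoking the hypothesis $np\gtrsim\varepsilon^{-2}\kappa^{4}\mu r$ pushes the failure probability below $O(n^{-10})$; inverting the resulting matrix inequality $\bm{S}\preceq(1+O(\varepsilon))p\bm{\Sigma}^{\star}$ and rescaling $\varepsilon$ yields the claim.

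\noindent The only nontrivial step is the matrix Chernoff application; the remaining pieces---the Gauss--Markov identification of the OLS covariance with the CRLB, the $(\bm{\Sigma}^{\star})^{-1/2}$ change of variables to unit-variance summands, and the operator-monotone inversion step---are essentially routine. One minor bookkeeping wrinkle is that the Chernoff tail literally requires $np\gtrsim\varepsilon^{-2}\mu r\log n$ to reach probability $n^{-10}$, whereas the stated hypothesis carries a $\kappa^{4}$ factor instead of $\log n$; this discrepancy should be understood as the $\log n$ being absorbed into the large constant $C_{0}$ (or equivalently into the $\kappa^{4}$ dependence, since $\kappa\geq 1$ and the lemma is used in regimes where these factors dominate logarithmic terms).
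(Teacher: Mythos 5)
Your proof is correct and takes a genuinely different, and in fact slightly sharper, route than the paper's. Both arguments start from the same OLS identification $\mathsf{CRLB}(\bm{X}_{i,\cdot}^{\star}\mid\Omega)=\sigma^{2}\bm{S}^{-1}$ with $\bm{S}=\sum_{k}\delta_{ik}(\bm{Y}_{k,\cdot}^{\star})^{\top}\bm{Y}_{k,\cdot}^{\star}$, but the paper then works at the unnormalized level: it applies matrix Bernstein to $\bm{A}=p^{-1}\bm{S}$ to get $\|\bm{A}-\bm{\Sigma}^{\star}\|\lesssim\sqrt{\mu r\log n/(np)}\,\sigma_{\max}$, and then passes to $\|\bm{A}^{-1}-(\bm{\Sigma}^{\star})^{-1}\|\leq\|\bm{A}-\bm{\Sigma}^{\star}\|\|\bm{A}^{-1}\|\|(\bm{\Sigma}^{\star})^{-1}\|$. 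That inverse-perturbation step pays a price of order $\sigma_{\max}/\sigma_{\min}^{2}$, and comparing the resulting operator-norm bound to $\varepsilon/\sigma_{\max}$ is what forces the $\kappa^{4}$ in the paper's sample-size requirement. By conjugating with $(\bm{\Sigma}^{\star})^{-1/2}$ \emph{before} doing any concentration, you reduce to the isotropic sum $\tilde{\bm{S}}=\sum_{k}\delta_{ik}(\bm{V}_{k,\cdot}^{\star})^{\top}\bm{V}_{k,\cdot}^{\star}$ with mean $p\bm{I}_{r}$ and summand norms $\leq\mu r/n$; matrix Chernoff then gives $\tilde{\bm{S}}\preceq(1+\delta)p\bm{I}_{r}$ under $np\gtrsim\delta^{-2}\mu r\log n$ with no $\kappa$ dependence at all, and operator monotonicity of inversion finishes the argument. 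Your route is thus both cleaner (no inverse-perturbation identity, no juggling of $\sigma_{\max}$ vs.\ $\sigma_{\min}$) and quantitatively better, removing the $\kappa^{4}$ overhead.

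Two small points. First, you correctly flag that your tail requires $np\gtrsim\varepsilon^{-2}\mu r\log n$, whereas the stated hypothesis $np\geq C_{0}\varepsilon^{-2}\kappa^{4}\mu r$ has no $\log n$; your suggestion of ``absorbing $\log n$ into $C_{0}$'' is not literally valid since $C_{0}$ is declared $n$-independent. This is, however, an inherited defect of the lemma statement rather than of your argument: the paper's own proof explicitly requires $np\geq C_{0}\varepsilon^{-2}\kappa^{4}\mu r\log n$, and the companion Lemma on $\mathsf{CRLB}(M_{ij}^{\star}\mid\Omega)$ does carry the $\log n$. The statement you were asked to prove appears to have simply dropped that factor. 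Second, your observation that the lower Chernoff tail is needed to guarantee $\tilde{\bm{S}}\succ 0$ (so that $\bm{S}^{-1}$ exists and the Loewner inversion step is licit) is correct and easy to forget; good that you noted it.
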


Given that $\varepsilon$ can be an arbitrarily small constant, Lemma
\ref{lemma:optimal-low-rank-variance} uncovers that the covariance
of the de-shrunken estimator $\bm{X}_{i,\cdot}^{\mathsf{d}}$ (cf.~Theorem~\ref{thm:low-rank-factor-master-bound-simple})
matches that of the  ideal estimator $\bm{X}_{i,\cdot}^{\mathsf{ideal}}$,
thus achieving the Cram\'er-Rao lower bound with high probability.
The same conclusion applies to $\bm{Y}_{j,\cdot}^{\mathsf{d}}$ as
well.

\paragraph{An ideal estimator for $M_{ij}^{\star}$ ($1\leq i,j\leq n$).}
Suppose that there is another oracle  informing us of $\{\bm{X}_{k,\cdot}^{\star}\}_{k:k\neq i}$
and $\{\bm{Y}_{k,\cdot}^{\star}\}_{k:k\neq j}$; that is, everything
about $\bm{X}^{\star}$ except $\bm{X}_{i,\cdot}^{\star}$ and everything
about $\bm{Y}^{\star}$ except $\bm{Y}_{j,\cdot}^{\star}$. In addition,
we observe the same set of data as in~(\ref{eq:observation-model}),
except that we do not get to see $M_{ij}$.\footnote{The exclusion
of $M_{ij}$ is merely for ease of presentation. One can consider
the model where all $M_{ij}$ with $(i,j)\in\Omega$ are observed with a slightly more complicated argument.}
Under this idealistic model, the Cram\'er-Rao lower bound~\cite[Theorem 3.3]{shao2003mathematical}
for estimating $M_{ij}^{\star}=\bm{X}_{i,\cdot}^{\star}(\bm{Y}_{j,\cdot}^{\star})^{\top}$
can be computed as
\begin{align}
 & \mathsf{CRLB}\left(M_{ij}^{\star}\mid\Omega\right)\nonumber \\
 & \quad\triangleq\frac{\sigma^{2}}{p}\cdot\Big[\bm{Y}_{j,\cdot}^{\star}\Big(\frac{1}{p}\sum_{k:k\neq j,(i,k)\in\Omega}(\bm{Y}_{k,\cdot}^{\star})^{\top}\bm{Y}_{k,\cdot}^{\star}\Big)^{-1}(\bm{Y}_{j,\cdot}^{\star})^{\top}+\bm{X}_{i,\cdot}^{\star}\Big(\frac{1}{p}\sum_{k:k\neq i,(k,j)\in\Omega}(\bm{X}_{k,\cdot}^{\star})^{\top}\bm{X}_{k,\cdot}^{\star}\Big)^{-1}(\bm{X}_{i,\cdot}^{\star})^{\top}\Big].\label{eq:CRLB-Mij}
\end{align}
This means that any unbiased estimator of $M_{ij}^{\star}$ must have
variance no smaller than $\mathsf{CRLB}(M_{ij}^{\star}\mid\Omega)$.
This quantity admits a much simpler lower bound as follows, whose
proof can be found in Appendix~\ref{sec:Proof-of-Lemma-optimal-entry}.

\begin{lemma}\label{lemma:optimal-entry-variance}Fix an arbitrarily
small constant $\varepsilon>0$. Suppose that $n^{2}p\geq C_{0}\varepsilon^{-2}\kappa^{4}\mu rn\log n$
for some sufficiently large constant $C_{0}>0$ independent of $n$.
Then with probability at least $1-O(n^{-10})$,
\[
\mathsf{CRLB}\big(M_{ij}^{\star}\mid\Omega\big)\geq\left(1-\varepsilon\right)v_{ij}^{\star},
\]
where $v_{ij}^{\star}$ is defined in Theorem~\ref{thm:entries-master-decomposition}.
\end{lemma}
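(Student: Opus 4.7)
The plan is to bound each of the two matrix inverses inside \eqref{eq:CRLB-Mij} from below in the positive semidefinite sense via matrix Bernstein, and then recognize that the resulting quadratic forms collapse to $\|\bm{U}_{i,\cdot}^{\star}\|_{2}^{2}$ and $\|\bm{V}_{j,\cdot}^{\star}\|_{2}^{2}$ thanks to the balanced factorizations $\bm{X}^{\star} = \bm{U}^{\star}(\bm{\Sigma}^{\star})^{1/2}$ and $\bm{Y}^{\star} = \bm{V}^{\star}(\bm{\Sigma}^{\star})^{1/2}$ from \eqref{eq:defn-Xstar-Ystar}.

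Set $\bm{A}_{i} \triangleq \frac{1}{p}\sum_{k\neq j,\,(i,k)\in\Omega}(\bm{Y}_{k,\cdot}^{\star})^{\top}\bm{Y}_{k,\cdot}^{\star}$ and define $\bm{B}_{j}$ analogously from the rows of $\bm{X}^{\star}$; these are exactly the Gram matrices whose inverses appear in \eqref{eq:CRLB-Mij}. Because $\bm{Y}^{\star\top}\bm{Y}^{\star} = \bm{\Sigma}^{\star}$, one has $\mathbb{E}[\bm{A}_{i}] = \bm{\Sigma}^{\star} - (\bm{Y}_{j,\cdot}^{\star})^{\top}\bm{Y}_{j,\cdot}^{\star}$, and the incoherence bound $\|\bm{Y}_{j,\cdot}^{\star}\|_{2}^{2} \leq \mu r\sigma_{\max}/n$ yields $\|\mathbb{E}[\bm{A}_{i}] - \bm{\Sigma}^{\star}\| \leq \varepsilon\sigma_{\min}/2$ under the stated sample complexity. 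For the fluctuation $\bm{A}_{i} - \mathbb{E}[\bm{A}_{i}]$, I would apply matrix Bernstein with per-summand operator-norm bound $\mu r\sigma_{\max}/(np)$ and matrix variance proxy dominated (in PSD order) by $(\mu r\sigma_{\max}/(np))\bm{\Sigma}^{\star}$. The hypothesis $n^{2}p \gtrsim \varepsilon^{-2}\kappa^{4}\mu rn\log n$ is precisely what one needs to push both the sub-Gaussian $\sqrt{(\cdot)\log n}$ term and the sub-exponential $\log n$ term below $\varepsilon\sigma_{\min}/2$, yielding $\|\bm{A}_{i} - \mathbb{E}[\bm{A}_{i}]\| \leq \varepsilon\sigma_{\min}/2$ with probability $1 - O(n^{-10})$.

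Combining the two bounds gives $\bm{A}_{i} \preceq (1+\varepsilon)\bm{\Sigma}^{\star}$, hence $\bm{A}_{i}^{-1} \succeq (1-O(\varepsilon))(\bm{\Sigma}^{\star})^{-1}$; a union bound yields the analogous inequality $\bm{B}_{j}^{-1} \succeq (1-O(\varepsilon))(\bm{\Sigma}^{\star})^{-1}$. Plugging these PSD lower bounds into \eqref{eq:CRLB-Mij} and invoking the identities
\[
\bm{Y}_{j,\cdot}^{\star}(\bm{\Sigma}^{\star})^{-1}(\bm{Y}_{j,\cdot}^{\star})^{\top} = \|\bm{V}_{j,\cdot}^{\star}\|_{2}^{2}, \qquad \bm{X}_{i,\cdot}^{\star}(\bm{\Sigma}^{\star})^{-1}(\bm{X}_{i,\cdot}^{\star})^{\top} = \|\bm{U}_{i,\cdot}^{\star}\|_{2}^{2}
\]
(which are immediate from $\bm{Y}_{j,\cdot}^{\star} = \bm{V}_{j,\cdot}^{\star}(\bm{\Sigma}^{\star})^{1/2}$ and its counterpart) delivers $\mathsf{CRLB}(M_{ij}^{\star}\mid\Omega) \geq (1-O(\varepsilon))v_{ij}^{\star}$, from which the claim follows after reparametrizing the constant in front of $\varepsilon$. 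The only delicate point is apportioning the $\varepsilon$-budget between the rank-one deletion of $(\bm{Y}_{j,\cdot}^{\star})^{\top}\bm{Y}_{j,\cdot}^{\star}$ and the Bernstein fluctuation so that the final preconstant is exactly $1-\varepsilon$; this is routine bookkeeping and does not require any new ideas beyond the matrix concentration inequality invoked above.
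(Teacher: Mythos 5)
Your proposal is correct and follows essentially the same route as the paper: concentrate the two Gram matrices $\bm{A}_{Y}$ and $\bm{A}_{X}$ appearing in~\eqref{eq:CRLB-Mij} around $\bm{\Sigma}^{\star}$ via matrix Bernstein (with the incoherence bound $\|\bm{Y}_{j,\cdot}^{\star}\|_{2}^{2}\leq\mu r\sigma_{\max}/n$ absorbing the rank-one deletion), and then use the balanced-factorization identities $\bm{Y}_{j,\cdot}^{\star}(\bm{\Sigma}^{\star})^{-1}(\bm{Y}_{j,\cdot}^{\star})^{\top}=\|\bm{V}_{j,\cdot}^{\star}\|_{2}^{2}$ and $\bm{X}_{i,\cdot}^{\star}(\bm{\Sigma}^{\star})^{-1}(\bm{X}_{i,\cdot}^{\star})^{\top}=\|\bm{U}_{i,\cdot}^{\star}\|_{2}^{2}$ to recover $v_{ij}^{\star}$. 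The only cosmetic difference is that you argue via a one-sided PSD bound $\bm{A}_{i}^{-1}\succeq(1-O(\varepsilon))(\bm{\Sigma}^{\star})^{-1}$ (which requires only $np\gtrsim\varepsilon^{-2}\kappa^{2}\mu r\log n$), whereas the paper bounds the absolute perturbation $\|\bm{A}_{Y}^{-1}-(\bm{\Sigma}^{\star})^{-1}\|$, picking up an extra $\kappa^{2}$; both are comfortably covered by the stated hypothesis.
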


Similar to Lemma~\ref{lemma:optimal-low-rank-variance}, Lemma~\ref{lemma:optimal-entry-variance}
reveals that the variance of our de-biased estimator $M_{ij}^{\mathsf{d}}$
(cf.~Theorem~\ref{thm:entries-master-decomposition}) --- which
certainly does not have access to the side information provided by
the oracle --- is arbitrarily close to the  Cram\'er-Rao
lower bound aided by an oracle.

\bigskip

All in all, the above lower bounds demonstrate that the degrees of
uncertainty underlying our de-shrunken low-rank factors and de-biased
matrix are, in some sense, statistically minimal.

\subsection{Back to estimation: the de-biased estimator is optimal}

While the emphasis of the current paper is on inference,  we would nevertheless like to single out an important consequence that informs the estimation step. 
To be  specific, the decompositions and distributional guarantees derived in Theorem~\ref{thm:low-rank-factor-master-bound-simple} and Theorem~\ref{thm:entries-master-decomposition} allow us to track the estimation accuracy of $\bm{M}^{\mathsf{d}}$, as stated in the following theorem. The proof of this result is postponed to Appendix~\ref{sec:estimation-error}.

\begin{theorem}[\textsf{Estimation accuracy of $\bm{M}^{\mathsf{d}}$}]
\label{thm:estimation-error}
Let $\bm{M}^{\mathrm{d}}$ be the de-biased estimator as defined in Table~\ref{tab:Notation-unified}. Instate the conditions
in~(\ref{eq:requirement-entry}). Then with probability at least $1-O(n^{-3})$, one has 
\begin{align}
	\left\Vert \bm{M}^{\mathsf{d}}-\bm{M}^{\star}\right\Vert _{\mathrm{F}}^{2}=\frac{(2+o(1))nr\sigma^{2}}{p}.
	\label{eq:Md-estimation-error}
\end{align}
\end{theorem}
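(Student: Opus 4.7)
The plan is to turn the two distributional decompositions from Theorem~\ref{thm:low-rank-factor-master-bound-simple} into a decomposition of $\bm{M}^{\mathsf{d}}-\bm{M}^{\star}$, isolate a Gaussian bilinear piece whose squared Frobenius norm equals $(2+o(1))nr\sigma^{2}/p$, and argue that every remaining term is $o(nr\sigma^{2}/p)$. To start, I would use the approximate identity~\eqref{eq:equivalence-Zd-Xd-Yd} together with the orthogonality of $\bm{H}^{\mathsf{d}}$ to write
\begin{equation*}
\bm{M}^{\mathsf{d}}-\bm{M}^{\star}\;\approx\;(\bm{X}^{\mathsf{d}}\bm{H}^{\mathsf{d}})(\bm{Y}^{\mathsf{d}}\bm{H}^{\mathsf{d}})^{\top}-\bm{X}^{\star}\bm{Y}^{\star\top}\;=\;\bm{L}+\bm{R},
\end{equation*}
where the dominant piece is $\bm{L}:=\bm{X}^{\star}\bm{Z}_{\bm{Y}}^{\top}+\bm{Z}_{\bm{X}}\bm{Y}^{\star\top}$ and $\bm{R}$ collects $\bm{X}^{\star}\bm{\Psi}_{\bm{Y}}^{\top}$, $\bm{\Psi}_{\bm{X}}\bm{Y}^{\star\top}$, the bilinear quadratic term $(\bm{Z}_{\bm{X}}+\bm{\Psi}_{\bm{X}})(\bm{Z}_{\bm{Y}}+\bm{\Psi}_{\bm{Y}})^{\top}$, plus the small discrepancy $\bm{M}^{\mathsf{d}}-\bm{X}^{\mathsf{d}}\bm{Y}^{\mathsf{d}\top}$. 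Since $\|\bm{M}^{\mathsf{d}}-\bm{M}^{\star}\|_{\mathrm{F}}^{2}=\|\bm{L}\|_{\mathrm{F}}^{2}+2\langle\bm{L},\bm{R}\rangle+\|\bm{R}\|_{\mathrm{F}}^{2}$, it suffices to establish $\|\bm{L}\|_{\mathrm{F}}^{2}=(2+o(1))nr\sigma^{2}/p$ together with $\|\bm{R}\|_{\mathrm{F}}=o(\sqrt{nr\sigma^{2}/p})$; the Cauchy--Schwarz cross product $2\langle\bm{L},\bm{R}\rangle$ is then automatically $o(nr\sigma^{2}/p)$.

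For the dominant piece I would expand $\|\bm{L}\|_{\mathrm{F}}^{2}=\|\bm{X}^{\star}\bm{Z}_{\bm{Y}}^{\top}\|_{\mathrm{F}}^{2}+\|\bm{Z}_{\bm{X}}\bm{Y}^{\star\top}\|_{\mathrm{F}}^{2}+2\langle\bm{X}^{\star}\bm{Z}_{\bm{Y}}^{\top},\bm{Z}_{\bm{X}}\bm{Y}^{\star\top}\rangle$. Because $\bm{X}^{\star\top}\bm{X}^{\star}=\bm{Y}^{\star\top}\bm{Y}^{\star}=\bm{\Sigma}^{\star}$, whitening $\bm{Z}_{\bm{X}}^{\top}\bm{e}_{j}$ by $(\bm{\Sigma}^{\star})^{1/2}$ reduces each of $\|\bm{Z}_{\bm{X}}\bm{Y}^{\star\top}\|_{\mathrm{F}}^{2}$ and $\|\bm{X}^{\star}\bm{Z}_{\bm{Y}}^{\top}\|_{\mathrm{F}}^{2}$ to $(\sigma^{2}/p)\,\chi^{2}_{nr}$, which concentrates at $(1+o(1))nr\sigma^{2}/p$ by Laurent--Massart tail bounds. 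This delivers the pre-constant $2$. The main obstacle is the cross term $\langle\bm{X}^{\star}\bm{Z}_{\bm{Y}}^{\top},\bm{Z}_{\bm{X}}\bm{Y}^{\star\top}\rangle$: since each factor has Frobenius norm of order $\sqrt{nr\sigma^{2}/p}$, a naive Cauchy--Schwarz bound only yields $O(nr\sigma^{2}/p)$ and would contaminate the pre-constant. To control it tightly I would invoke the noise-level representation that the proof of Theorem~\ref{thm:low-rank-factor-master-bound-simple} supplies, namely $\bm{Z}_{\bm{X}}\approx p^{-1}\mathcal{P}_{\Omega}(\bm{E})\bm{Y}^{\star}(\bm{\Sigma}^{\star})^{-1}$ and $\bm{Z}_{\bm{Y}}\approx p^{-1}\mathcal{P}_{\Omega}(\bm{E})^{\top}\bm{X}^{\star}(\bm{\Sigma}^{\star})^{-1}$. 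Using the identities $\bm{X}^{\star}(\bm{\Sigma}^{\star})^{-1}\bm{X}^{\star\top}=\bm{U}^{\star}\bm{U}^{\star\top}$ and $\bm{Y}^{\star}(\bm{\Sigma}^{\star})^{-1}\bm{Y}^{\star\top}=\bm{V}^{\star}\bm{V}^{\star\top}$, the cross term collapses to $p^{-2}\|\bm{U}^{\star\top}\mathcal{P}_{\Omega}(\bm{E})\bm{V}^{\star}\|_{\mathrm{F}}^{2}$, whose expectation is exactly $r^{2}\sigma^{2}/p$ and which concentrates there by Hanson--Wright; under~\eqref{eq:requirement-entry} one has $r=o(n)$, so this is indeed $o(nr\sigma^{2}/p)$.

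For the remainder $\bm{R}$, each piece can be dispatched by elementary inequalities. The $\bm{\Psi}$-linear pieces obey
\begin{equation*}
\|\bm{\Psi}_{\bm{X}}\bm{Y}^{\star\top}\|_{\mathrm{F}}\leq\|\bm{\Psi}_{\bm{X}}\|_{\mathrm{F}}\|\bm{Y}^{\star}\|\leq\sqrt{n}\,\|\bm{\Psi}_{\bm{X}}\|_{2,\infty}\sqrt{\sigma_{\max}}=o\big(\sigma\sqrt{nr/p}\,\big)
\end{equation*}
by~\eqref{eq:thm-low-rank-residual-size-simple}, with an analogous bound for $\bm{X}^{\star}\bm{\Psi}_{\bm{Y}}^{\top}$. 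For the quadratic piece, standard Gaussian matrix concentration applied to the row-whitened version of $\bm{Z}_{\bm{X}}$ and $\bm{Z}_{\bm{Y}}$ gives $\|\bm{Z}_{\bm{X}}\|\lesssim\sigma\sqrt{n/(p\sigma_{\min})}$ and $\|\bm{Z}_{\bm{Y}}\|_{\mathrm{F}}\lesssim\sigma\sqrt{nr/(p\sigma_{\min})}$, whence
\begin{equation*}
\|\bm{Z}_{\bm{X}}\bm{Z}_{\bm{Y}}^{\top}\|_{\mathrm{F}}\leq\|\bm{Z}_{\bm{X}}\|\,\|\bm{Z}_{\bm{Y}}\|_{\mathrm{F}}\lesssim\frac{n\sigma^{2}\sqrt{r}}{p\sigma_{\min}}=o\big(\sigma\sqrt{nr/p}\,\big),
\end{equation*}
precisely because the noise-to-signal assumption in~\eqref{eq:requirement-entry} gives $n\sigma^{2}/(p\sigma_{\min}^{2})=o(1)$; the mixed $\bm{\Psi}\bm{Z}$ pieces are even smaller. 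Finally, the discrepancy $\bm{M}^{\mathsf{d}}-\bm{X}^{\mathsf{d}}\bm{Y}^{\mathsf{d}\top}$ is negligible by the fine version of~\eqref{eq:equivalence-Zd-Xd-Yd} together with~\eqref{eq:proximity-cvx-ncvx}. Assembling these bounds yields~\eqref{eq:Md-estimation-error}.
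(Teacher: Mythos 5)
Your proposal is correct and follows essentially the same route as the paper's proof: decompose $\bm{M}^{\mathsf{d}}-\bm{M}^{\star}$ into the Gaussian leading term $\bm{Z}_{\bm{X}}\bm{Y}^{\star\top}+\bm{X}^{\star}\bm{Z}_{\bm{Y}}^{\top}$ plus a remainder, use $\chi^2_{nr}$ concentration for each diagonal term, bound the residual pieces via $\ell_{2,\infty}$ control and Cauchy--Schwarz, and — crucially — collapse the cross term $\mathsf{Tr}(\bm{Z}_{\bm{X}}\bm{Y}^{\star\top}\bm{Z}_{\bm{Y}}\bm{X}^{\star\top})$ to $p^{-2}\|\bm{U}^{\star\top}\mathcal{P}_{\Omega}(\bm{E})\bm{V}^{\star}\|_{\mathrm{F}}^{2}=O(\sigma^{2}r^{2}\log n/p)=o(nr\sigma^{2}/p)$ after replacing $\bm{Z}_{\bm{X}},\bm{Z}_{\bm{Y}}$ by their explicit noise representations $p^{-1}\mathcal{P}_{\Omega}(\bm{E})\bm{Y}^{\star}(\bm{\Sigma}^{\star})^{-1}$ and its transpose analogue (the paper calls these $\bm{Z}_{\bm{X},\bm{E}},\bm{Z}_{\bm{Y},\bm{E}}$ and bounds the substitution error with a short Cauchy--Schwarz computation, which your "$\approx$" step implicitly requires as well). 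The only cosmetic difference is that the paper bounds $\|\bm{\Gamma}_{\bm{X}}\bm{\Gamma}_{\bm{Y}}^{\top}\|_{\mathrm{F}}\le\|\bm{\Gamma}_{\bm{X}}\|_{\mathrm{F}}\|\bm{\Gamma}_{\bm{Y}}\|_{\mathrm{F}}$ directly using the Frobenius estimate (D.15c) rather than splitting into $\bm{Z}$- and $\bm{\Psi}$-parts; both routes close.
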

In stark contrast to prior statistical estimation guarantees (e.g.~\cite{CanPla10,Negahban2012restricted,MR2906869,chen2019noisy}), 
Theorem~\ref{thm:estimation-error} pins down the estimation error of the proposed de-biased estimator in a sharp manner (namely, even the pre-constant is fully determined).  Encouragingly, 
there is a sense in which the proposed de-biased estimator 
achieves the best possible statistical estimation accuracy, as revealed by the following result.  

\begin{theorem}[\textsf{An oracle lower bound on $\ell_2$ estimation errors}]
	Fix an arbitrarily small constant $\varepsilon>0$.  Suppose that $n^2 p \gtrsim \mu r n \log^2 n$, and that $r=o(n)$. Then with probability exceeding $1-O(n^{-10})$, any unbiased estimator $\widehat{\bm{M}}$ of $\bm{M}^{\star}$ obeys
	\begin{equation}
		\mathbb{E}\Big[ \big\| \widehat{\bm{M}} - \bm{M}^{\star} \big\|_{\mathrm{F}}^2 \mid \Omega \Big] \geq \frac{(1-\varepsilon)2 nr\sigma^2}{p}.  
	\end{equation}
\end{theorem}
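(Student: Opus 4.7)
The proof strategy is to combine an intrinsic Cram\'er-Rao lower bound on the manifold $\mathcal{M}_r$ of rank-$r$ matrices with the classical Cand\`es-Recht concentration ``the sampling operator is close to the identity on the tangent space.'' Let $T = T_{\bm{M}^\star}\mathcal{M}_r = \{\bm{U}^\star\bm{A}^\top + \bm{B}\bm{V}^{\star\top}:\bm{A},\bm{B}\in\mathbb{R}^{n\times r}\}$ denote the tangent space at $\bm{M}^\star$, which has dimension $2nr-r^2$, and let $\mathcal{P}_T$ be the associated orthogonal projector.

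The first step derives an intrinsic CRLB. For any tangent direction $\bm{\Delta}\in T$, pick a smooth curve $\bm{M}(t)\in\mathcal{M}_r$ with $\bm{M}(0)=\bm{M}^\star$ and $\dot{\bm{M}}(0)=\bm{\Delta}$. Differentiating $\mathbb{E}_{\bm{M}(t)}[\widehat{\bm{M}}]=\bm{M}(t)$ at $t=0$ yields the score identity involving the Gaussian score $S(\bm{\Delta}) = \frac{1}{\sigma^2}\sum_{(i,j)\in\Omega}\Delta_{ij}(M_{ij}-M_{ij}^\star)$, whose conditional variance is $\mathrm{Var}(S(\bm{\Delta})\mid\Omega)=\frac{1}{\sigma^2}\langle\bm{\Delta},\mathcal{P}_T\mathcal{P}_\Omega\mathcal{P}_T\bm{\Delta}\rangle$ (using $\bm{\Delta}\in T$). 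Cauchy-Schwarz then delivers the per-direction bound
$$\mathbb{E}\big[\langle\bm{\Delta},\widehat{\bm{M}}-\bm{M}^\star\rangle^2\mid\Omega\big] \geq \frac{\sigma^2\|\bm{\Delta}\|_{\mathrm{F}}^4}{\langle\bm{\Delta},\mathcal{P}_T\mathcal{P}_\Omega\mathcal{P}_T\bm{\Delta}\rangle}.$$
Choosing $\{\bm{\Delta}_k\}_{k=1}^{2nr-r^2}$ to be an orthonormal eigenbasis of $\mathcal{P}_T\mathcal{P}_\Omega\mathcal{P}_T|_T$ and summing over $k$, together with $\|\widehat{\bm{M}}-\bm{M}^\star\|_{\mathrm{F}}^2\geq \|\mathcal{P}_T(\widehat{\bm{M}}-\bm{M}^\star)\|_{\mathrm{F}}^2$, produces the manifold CRLB
$$\mathbb{E}\big[\|\widehat{\bm{M}}-\bm{M}^\star\|_{\mathrm{F}}^2\mid\Omega\big] \geq \sigma^2\cdot\mathrm{tr}\big((\mathcal{P}_T\mathcal{P}_\Omega\mathcal{P}_T)^{-1}\big|_T\big).$$

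The second step controls this trace. Under $n^2p\gtrsim\mu rn\log^2 n$, the standard Cand\`es-Recht concentration (see e.g.~\cite{ExactMC09,chen2015incoherence}) gives $\|p^{-1}\mathcal{P}_T\mathcal{P}_\Omega\mathcal{P}_T-\mathcal{P}_T\|_{\mathrm{op}}\leq\varepsilon/3$ on $T$ with probability at least $1-O(n^{-10})$, hence $(\mathcal{P}_T\mathcal{P}_\Omega\mathcal{P}_T)^{-1}|_T\succeq\frac{1}{(1+\varepsilon/3)p}\mathcal{I}_T$. Taking trace over $T$,
$$\mathrm{tr}\big((\mathcal{P}_T\mathcal{P}_\Omega\mathcal{P}_T)^{-1}\big|_T\big) \geq \frac{2nr-r^2}{(1+\varepsilon/3)p} \geq \frac{(1-\varepsilon)\cdot 2nr}{p},$$
where the last inequality uses $r=o(n)$ together with the smallness of $\varepsilon$. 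Combining with the CRLB yields the claimed bound.

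The main technical subtlety is the careful formulation of the intrinsic CRLB on the nonlinear manifold $\mathcal{M}_r$: the tangent-curve differentiation of unbiasedness must be justified (which follows from the smooth submanifold structure of $\mathcal{M}_r$ at any rank-exactly-$r$ point, so that $\bm{M}^\star$ is a regular point) and the passage from the per-direction bound to the trace bound must be arranged via an eigenbasis so that the operator trace emerges cleanly. The Cand\`es-Recht concentration is by now classical and, importantly, requires no condition-number dependence, which is exactly what allows the theorem's $\kappa$-free sample complexity.
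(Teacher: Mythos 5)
Your proof is correct and follows essentially the same route as the paper, which defers to the oracle argument in Cand\`es--Plan (Section~III.B of~\cite{CanPla10}): both reduce the conditional MSE to $\sigma^2\,\mathrm{tr}\big((\mathcal{P}_T\mathcal{P}_\Omega\mathcal{P}_T)^{-1}\big|_T\big)$ via a tangent-space Cram\'er--Rao bound at $\bm{M}^\star$ and then invoke the Cand\`es--Recht near-isometry of $p^{-1}\mathcal{P}_T\mathcal{P}_\Omega\mathcal{P}_T$ on $T$ to conclude the count $\approx 2nr/p$. Your formalization via an intrinsic CRLB along tangent curves in $\mathcal{M}_r$ is a slightly cleaner way of making the ``oracle reveals $T^\star$'' heuristic precise, but the key ingredient (tangent-space dimension $2nr-r^2$ plus RIP-type concentration on $T$) is identical.
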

\begin{proof}
	Intuitively, the term $2nr$ reflects approximately the underlying degrees of freedom in the true subspace $T^{\star}$ of interest (i.e.~the tangent space of the rank-$r$ matrices at the truth $\bm{M}^{\star}$), whereas the factor $1/p$ captures the effect due to sub-sampling.
	This result has already been established in~\cite[Section III.B]{CanPla10} (together with~\cite[Theorem 4.1]{ExactMC09}). We thus omit the proof for conciseness.  The key idea is to consider an oracle informing us of the true tangent space $T^{\star}$. 
\end{proof}

The implication of the above two theorems is remarkable: the de-biasing step  not merely facilitates uncertainty assessment,  but also proves crucial in minimizing the estimation errors. It achieves optimal statistical efficiency in terms of both the rate and the pre-constant. 
As far as we know, this is the first theory about a polynomial time algorithm that  matches the statistical limit in terms of the pre-constant. 
This intriguing finding is further corroborated by numerical experiments; see Section~\ref{sec:numerics} for details (in particular, Figure~\ref{fig:estimation-error}).

\subsection{Numerical experiments}
\label{sec:numerics}

\begin{table}[b]

\caption{Empirical coverage rates of $\bm{e}_{i}^{\top}\bm{X}^{\star}\bm{X}^{\star\top}\bm{e}_{j}$
for different $(r,p,\sigma)$'s over 200 Monte Carlo trials.\label{tab:cov_low_rank}}
\vspace{0.5em}
\centering
\begin{tabular}{|c|c|c|}
\hline 
$\vphantom{\sum_{i=1}^{K}}(r,p,\sigma)$ & $\mathsf{Mean}(\widehat{\mathsf{Cov}}_{\mathsf{L}})$ & $\mathsf{Std}(\widehat{\mathsf{Cov}}_{\mathsf{L}})$\tabularnewline
\hline 
$\vphantom{\frac{1}{2}}(2,0.2,10^{-6})$ & 0.9387 & 0.0197\tabularnewline
\hline 
$\vphantom{\frac{1}{2}}(2,0.2,10^{-3})$ & 0.9400 & 0.0193\tabularnewline
\hline 
$\vphantom{\frac{1}{2}}(2,0.4,10^{-6})$ & 0.9459 & 0.0161\tabularnewline
\hline 
$\vphantom{\frac{1}{2}}(2,0.4,10^{-3})$ & 0.9460 & 0.0162\tabularnewline
\hline 
$\vphantom{\frac{1}{2}}(5,0.2,10^{-6})$ & 0.9227 & 0.0244\tabularnewline
\hline 
$\vphantom{\frac{1}{2}}(5,0.2,10^{-3})$ & 0.9273 & 0.0226\tabularnewline
\hline 
$\vphantom{\frac{1}{2}}(5,0.4,10^{-6})$ & 0.9411 & 0.0173\tabularnewline
\hline 
$\vphantom{\frac{1}{2}}(5,0.4,10^{-3})$ & 0.9418 & 0.0171\tabularnewline
\hline 
\end{tabular}
\end{table}

We conduct numerical experiments on synthetic data to verify the distributional
characterizations provided in Theorem~\ref{thm:low-rank-factor-master-bound-simple}
and Theorem~\ref{thm:entries-master-decomposition}. Note that our
main results hold for the de-biased estimators built upon $\bm{Z}^{\mathsf{cvx}}$ and $\bm{X}^{\mathsf{ncvx}}\bm{Y}^{\mathsf{ncvx}\top}$.
As we will formalize shortly in Section~\ref{subsec:Approximate-equivalence},
these two de-biased estimators are extremely close to each other;
see also Figure~\ref{fig:equivalence} for experimental evidence.
Therefore, in order to save space, we use the de-biased estimator
built upon the convex estimate $\bm{Z}^{\mathsf{cvx}}$ throughout the experiments.

Fix the dimension $n=1000$ and the regularization parameter $\lambda=2.5\sigma\sqrt{np}$
throughout the experiments. We generate a rank-$r$
matrix $\bm{M}^{\star}=\bm{X}^{\star}\bm{Y}^{\star\top}$, where $\bm{X}^{\star},\bm{Y}^{\star}\in\mathbb{R}^{n\times r}$
are random orthonormal matrices and apply the proximal gradient method~\cite{parikh2014proximal} to solve the convex program
(\ref{eq:cvx}).

\begin{figure}[t]
\centering

\begin{tabular}{ccc}
\includegraphics[scale=0.3]{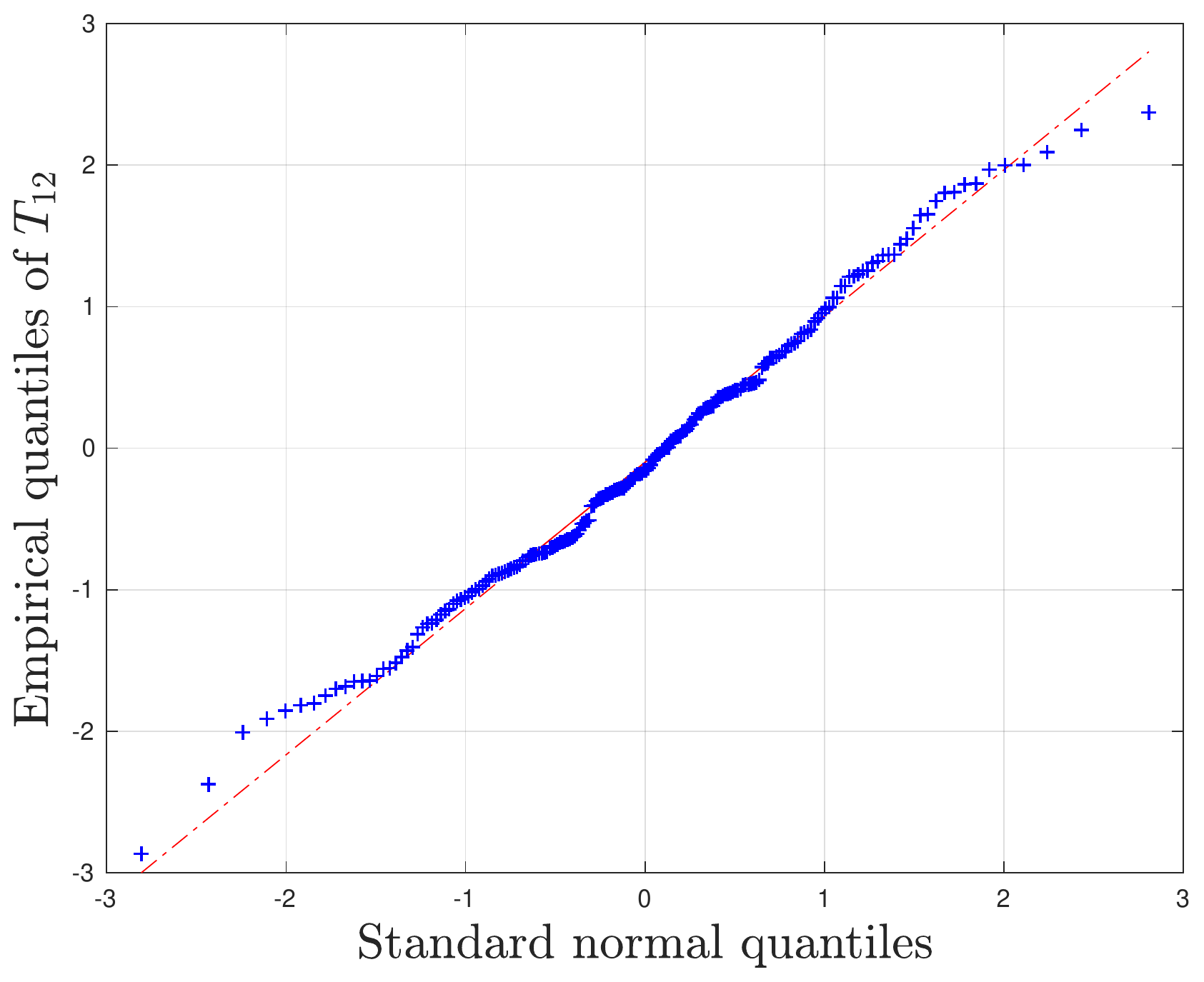} & \includegraphics[scale=0.3]{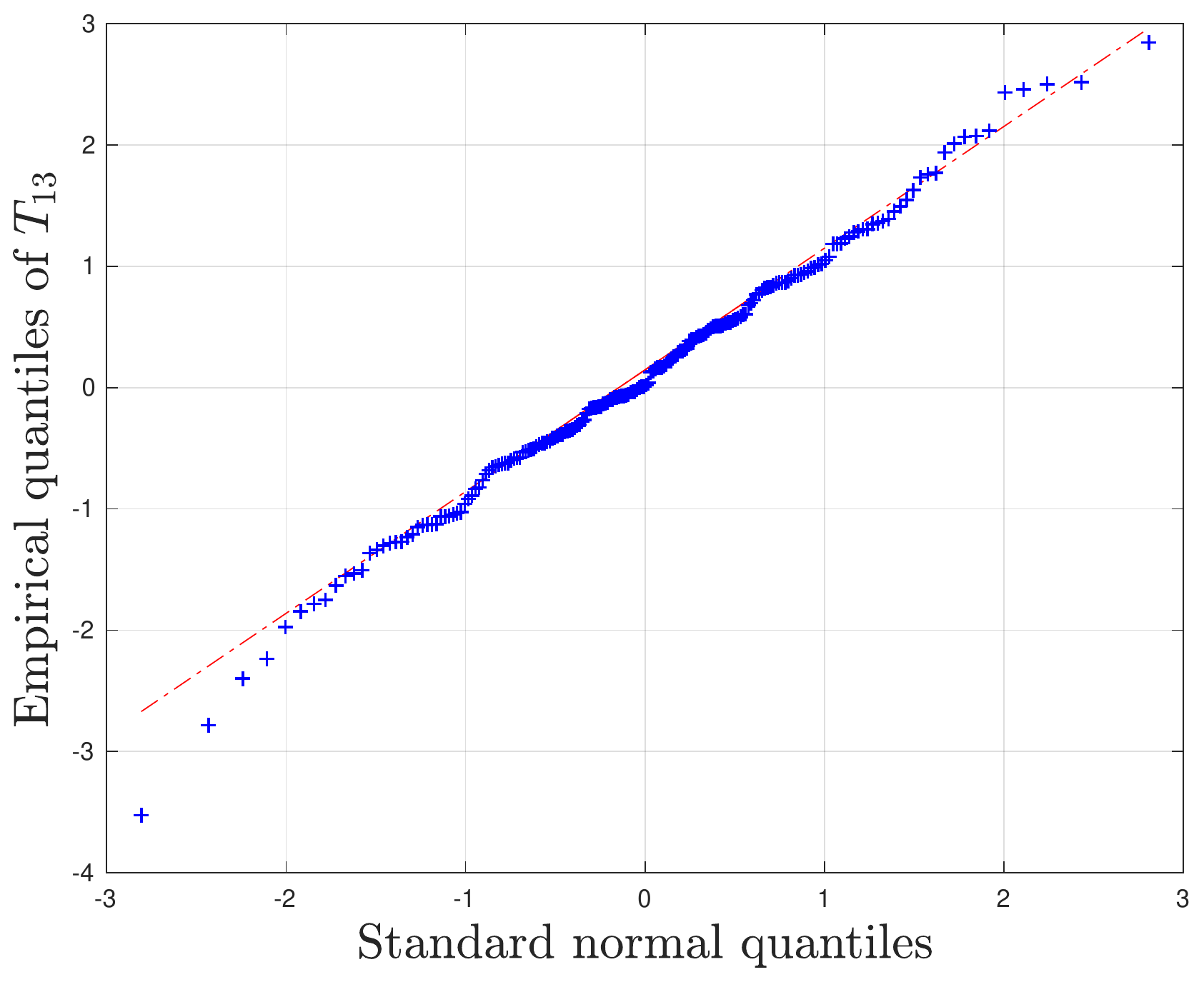} & \includegraphics[scale=0.3]{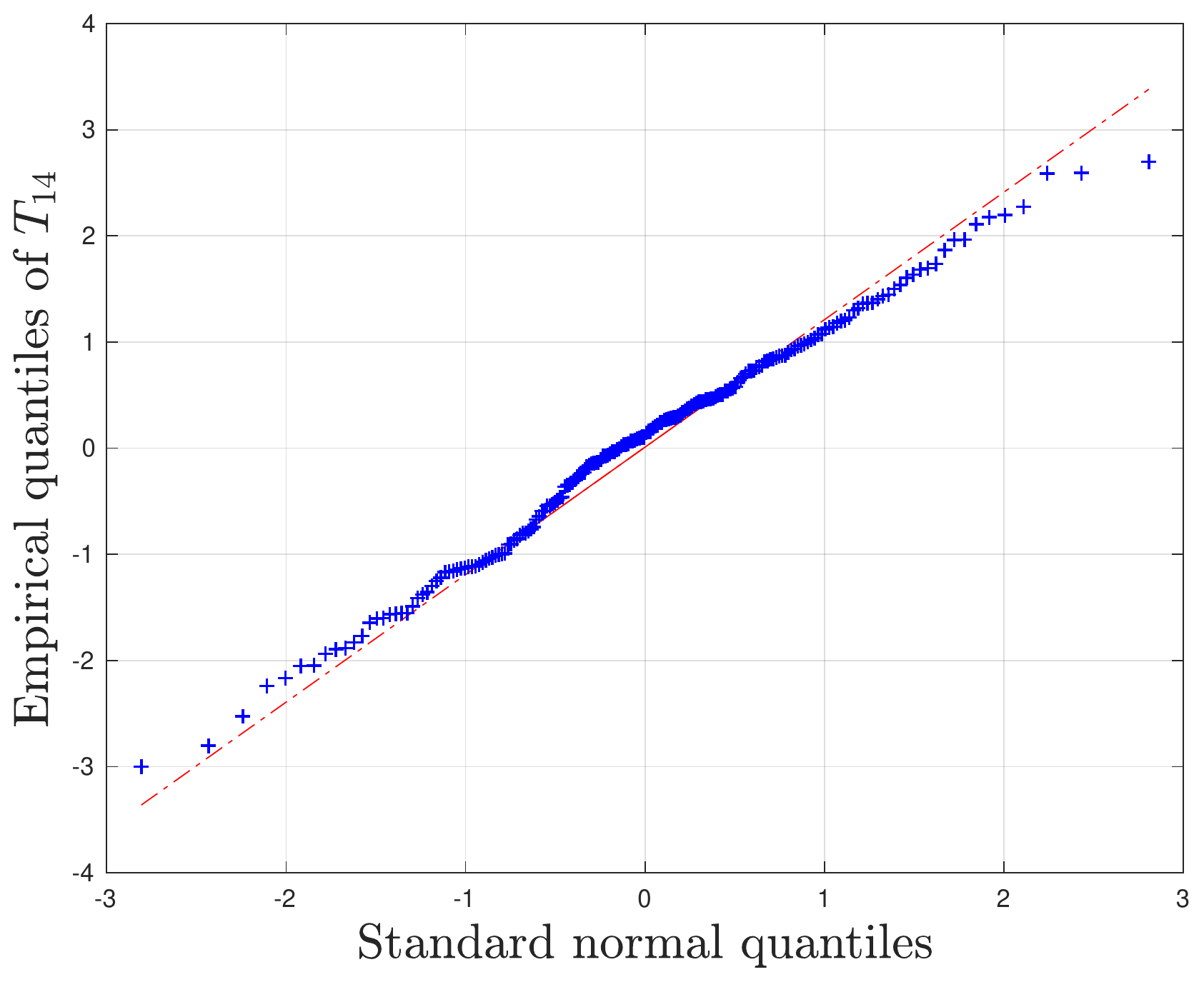}\tabularnewline
(a) & (b) & (c)\tabularnewline
\end{tabular}

\caption{Q-Q (quantile-quantile) plots of $T_{12}$, $T_{13}$ and $T_{14}$
vs.~the standard normal distribution in (a), (b) and (c), respectively. The results are reported over 200 independent trials for $r=5$, $p=0.4$ and $\sigma = 10^{-3}$.
\label{fig:low-rank}}
\end{figure}

We begin by checking the validity of Theorem~\ref{thm:low-rank-factor-master-bound-simple}.
Suppose that one is interested in estimating the inner product $\bm{e}_{i}^{\top}\bm{X}^{\star}\bm{X}^{\star\top}\bm{e}_{j}$
between $\bm{X}^{\star\top}\bm{e}_{i}$ and $\bm{X}^{\star\top}\bm{e}_{j}$
($i\neq j$). In the Netflix challenge, this might correspond to the
similarity between the $i$th user and the $j$th one. As a straightforward
consequence of Theorem~\ref{thm:low-rank-factor-master-bound-simple}, the
normalized estimation error
\begin{equation}\label{eq:defn-T-ij}
T_{ij}\triangleq\frac{1}{\sqrt{\rho_{ij}}}\left(\bm{e}_{i}^{\top}\bm{X}^{\mathsf{d}}\bm{X}^{\mathsf{d}\top}\bm{e}_{j}-\bm{e}_{i}^{\top}\bm{X}^{\star}\bm{X}^{\star\top}\bm{e}_{j}\right)
\end{equation}
is extremely close to a standard Gaussian random variable. Here,
similar to~(\ref{eq:empirical-variance-entry}), we let
\begin{equation}\label{eq:defn-rho-ij}
\rho_{ij}\triangleq\frac{\sigma^{2}}{p}\left\{ \bm{e}_{i}^{\top}\bm{X}^{\mathsf{d}}(\bm{X}^{\mathsf{d}\top}\bm{X}^{\mathsf{d}})^{-1}\bm{X}^{\mathsf{d}\top}\bm{e}_{i}+\bm{e}_{j}^{\top}\bm{X}^{\mathsf{d}}(\bm{X}^{\mathsf{d}\top}\bm{X}^{\mathsf{d}})^{-1}\bm{X}^{\mathsf{d}\top}\bm{e}_{j}\right\}
\end{equation}
be the empirical estimate of the theoretically predicted variance $\sigma^{2}(\|\bm{U}_{i,\cdot}^{\star}\|_{2}^{2}+\|\bm{U}_{j,\cdot}^{\star}\|_{2}^{2})/p$. As a result, a $95\%$ confidence interval of $\bm{e}_{i}^{\top}\bm{X}^{\star}\bm{X}^{\star\top}\bm{e}_{j}$ would be $[\bm{e}_{i}^{\top}\bm{X}^{\mathsf{d}}\bm{X}^{\mathsf{d}\top}\bm{e}_{j} \pm 1.96 \sqrt{\rho_{ij}}]$. For each $(i,j)$, we define $\widehat{\mathsf{Cov}}_{\mathsf{L},(i,j)}$
to be the empirical coverage rate of $\bm{e}_{i}^{\top}\bm{X}^{\star}\bm{X}^{\star\top}\bm{e}_{j}$
over $200$ Monte Carlo simulations. Correspondingly, denote by $\mathsf{Mean}(\widehat{\mathsf{Cov}}_{\mathsf{L}})$
(resp.~$\mathsf{Std}(\widehat{\mathsf{Cov}}_{\mathsf{L}})$) the
average (resp.~the standard deviation) of $\widehat{\mathsf{Cov}}_{\mathsf{L},(i,j)}$ over indices $1\leq i<j\leq n$.
Table~\ref{tab:cov_low_rank} collects the simulation results for
different values of $(r,p,\sigma)$. As can be seen, the  reported empirical
coverage rates are reasonably close to the nominal level $95\%$. In addition, Figure~\ref{fig:low-rank} depicts the Q-Q (quantile-quantile) plots
of $T_{12},T_{13}$ and $T_{14}$ vs.~the standard Gaussian random
variable over 200 Monte Carlo simulations for $r=5$, $p=0.4$ and $\sigma=10^{-3}$. It is clearly seen that
all of these are well approximated by a standard Gaussian random variable.

%
%
%

\begin{table}[b]

\caption{Empirical coverage rates of $M_{ij}^{\star}$ for different
$(r,p,\sigma)$'s over 200 Monte Carlo trials. \label{tab:cov_entry}}
\vspace{0.5em}

\centering
\begin{tabular}{|c|c|c|}
\hline 
$\vphantom{\sum_{i=1}^{K}}(r,p,\sigma)$ & $\mathsf{Mean}(\widehat{\mathsf{Cov}}_{\mathsf{E}})$ & $\mathsf{Std}(\widehat{\mathsf{Cov}}_{\mathsf{E}})$\tabularnewline
\hline 
$\vphantom{\frac{1}{2}}(2,0.2,10^{-6})$ & 0.9380 & 0.0200\tabularnewline
\hline 
$\vphantom{\frac{1}{2}}(2,0.2,10^{-3})$ & 0.9392 & 0.0196\tabularnewline
\hline 
$\vphantom{\frac{1}{2}}(2,0.4,10^{-6})$ & 0.9455 & 0.0164\tabularnewline
\hline 
$\vphantom{\frac{1}{2}}(2,0.4,10^{-3})$ & 0.9456 & 0.0164\tabularnewline
\hline 
$\vphantom{\frac{1}{2}}(5,0.2,10^{-6})$ & 0.9226 & 0.0247\tabularnewline
\hline 
$\vphantom{\frac{1}{2}}(5,0.2,10^{-3})$ & 0.9271 & 0.0228\tabularnewline
\hline 
$\vphantom{\frac{1}{2}}(5,0.4,10^{-6})$ & 0.9410 & 0.0173\tabularnewline
\hline 
$\vphantom{\frac{1}{2}}(5,0.4,10^{-3})$ & 0.9417 & 0.0172\tabularnewline
\hline 
\end{tabular}
\end{table}

\begin{figure}[t]
\centering

\begin{tabular}{ccc}
\includegraphics[scale=0.3]{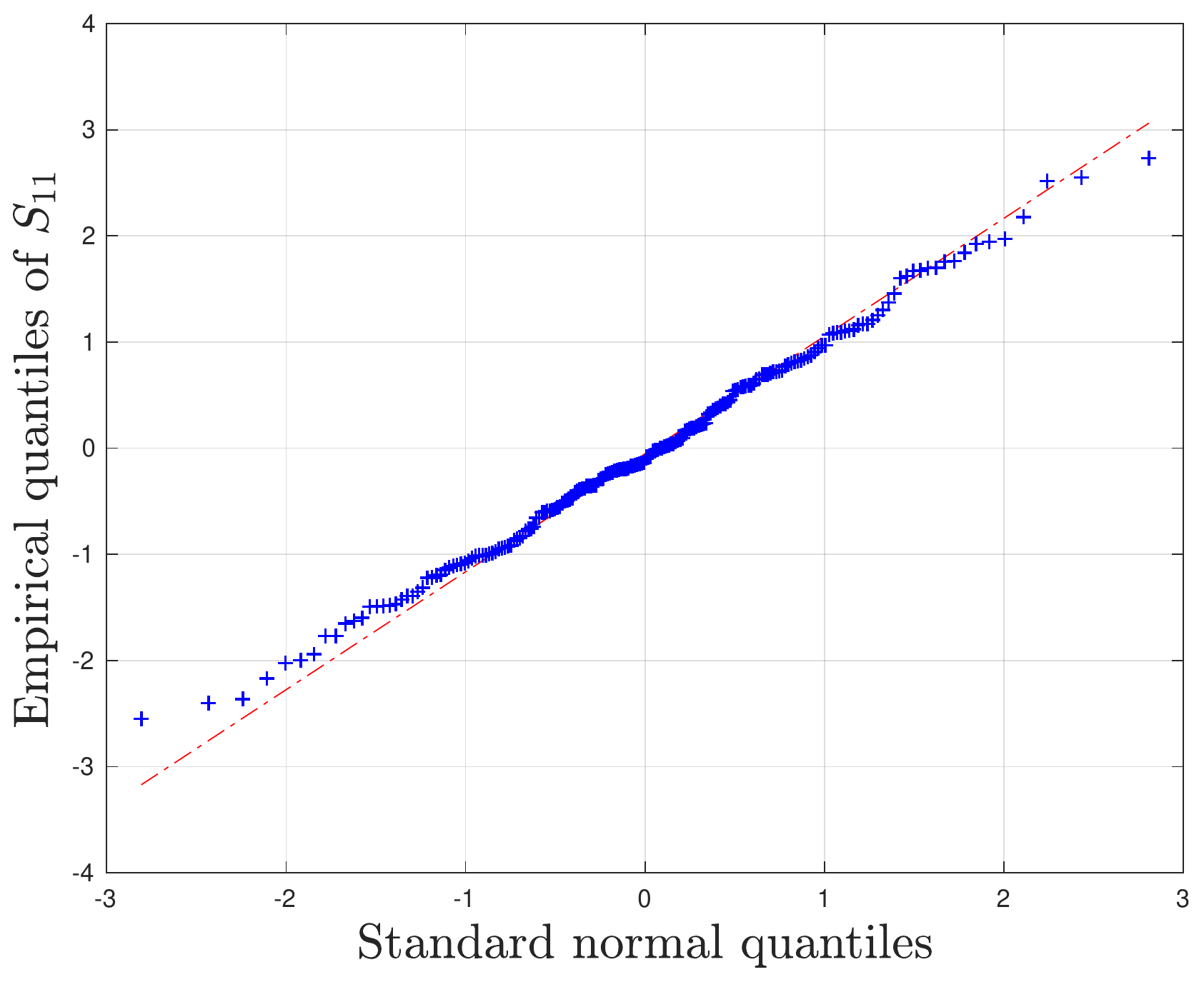} & \includegraphics[scale=0.3]{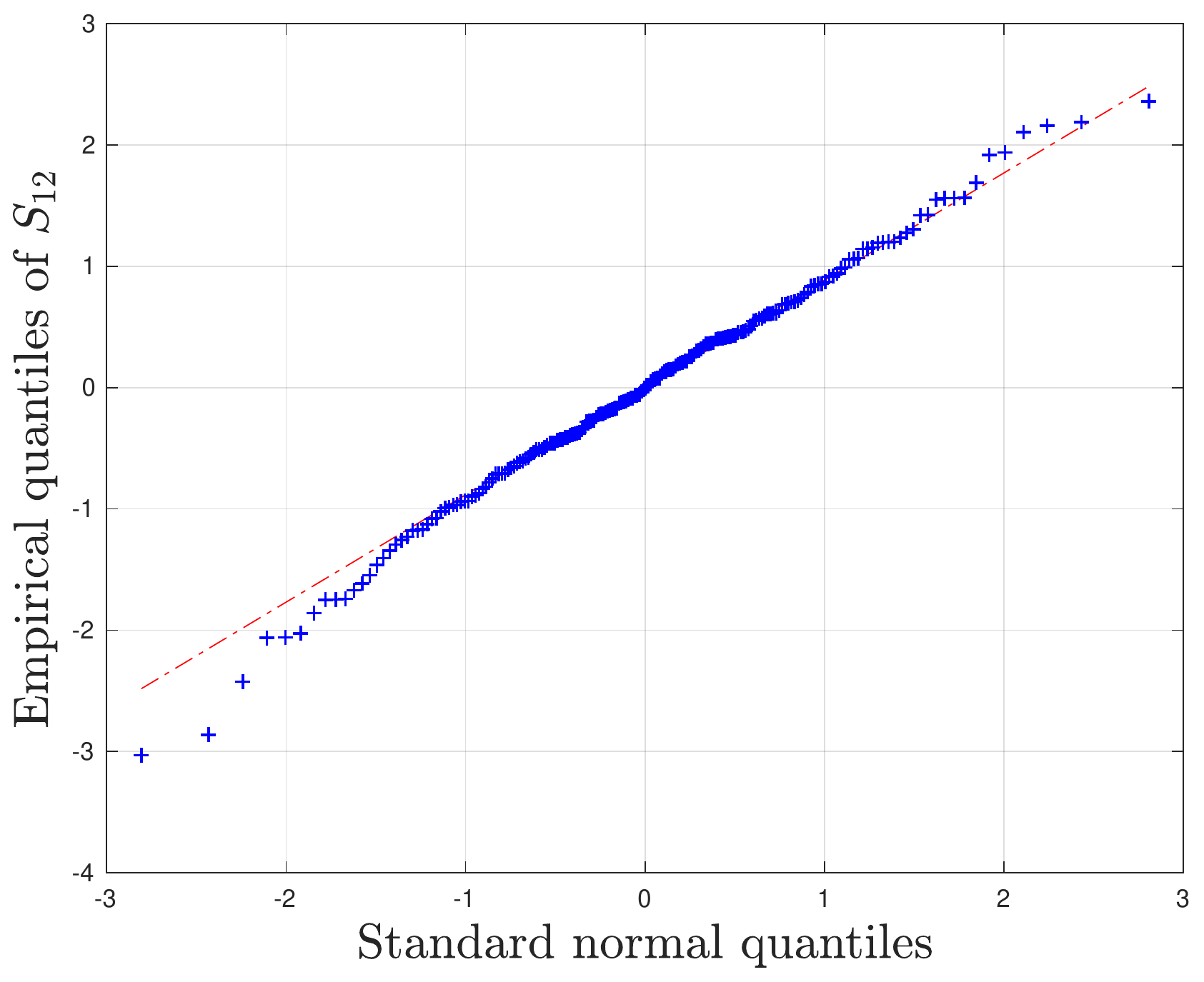} & \includegraphics[scale=0.3]{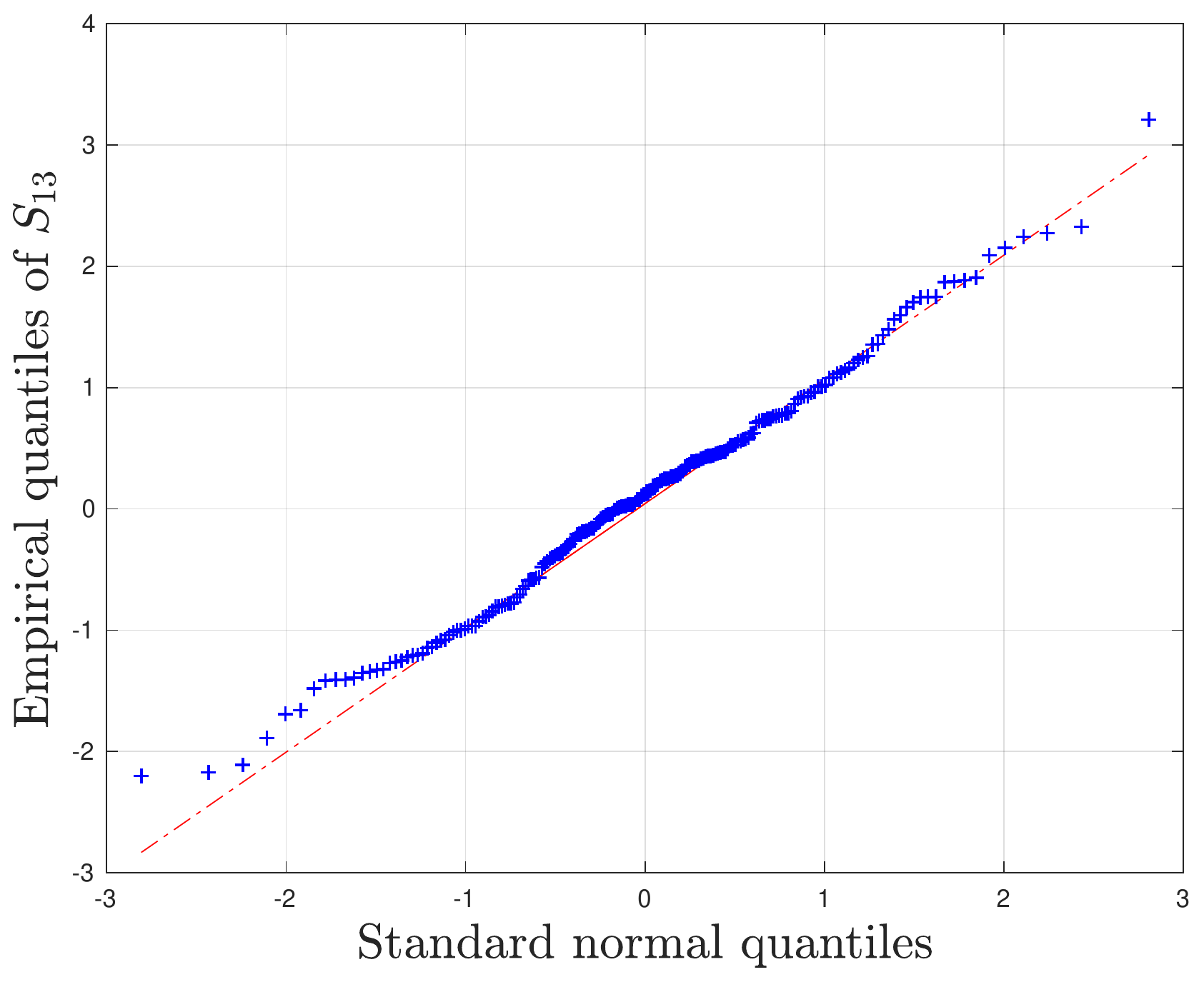}\tabularnewline
(a) & (b) & (c)\tabularnewline
\end{tabular}

\caption{Q-Q (quantile-quantile) plot of $S_{11}$, $S_{12}$ and $S_{13}$
vs.~the standard normal distribution in (a), (b) and (c) respectively. The results are reported over 200 independent trials for $r=5$, $p=0.4$ and $\sigma = 10^{-3}$.
\label{fig:entry}}
\end{figure}

Next, we turn to Theorem~\ref{thm:entries-master-decomposition}, namely
the distributional guarantee for the entries of the matrix. Denote
\begin{equation}\label{eq:defn-S-ij}
S_{ij}\triangleq\frac{1}{\sqrt{v_{ij}}}\left(M_{ij}^{\mathsf{d}}-M_{ij}^{\star}\right),
\end{equation}
where $v_{ij}$ is the empirical variance defined in~(\ref{eq:empirical-variance-entry}). In view of the $95\%$ confidence interval predicted by Corollary~\ref{coro:confidence-interval}, and similar to what have done for the low-rank components, for each $(i,j)$, we define $\widehat{\mathsf{Cov}}_{\mathsf{E},(i,j)}$
to be the empirical coverage rate of $M^\star_{ij}$
over $200$ Monte Carlo simulations. Correspondingly, denote by $\mathsf{Mean}(\widehat{\mathsf{Cov}}_{\mathsf{E}})$
(resp.~$\mathsf{Std}(\widehat{\mathsf{Cov}}_{\mathsf{E}})$) the
average (resp.~the standard deviation) of $\widehat{\mathsf{Cov}}_{\mathsf{E},(i,j)}$ over indices $1\leq i,j\leq n$.
As before, Table~\ref{tab:cov_entry} gathers the empirical coverage rates for $M^\star_{ij}$ and Figure~\ref{fig:entry} displays the Q-Q (quantile-quantile)
plots of $S_{11},S_{12}$ and $S_{13}$ vs.~the standard Gaussian random
variable over 200 Monte Carlo trials for $r=5$, $p=0.4$ and $\sigma = 10^{-3}$. It is evident that the distribution
of $S_{ij}$ matches that
of $\mathcal{N}(0,1)$ reasonably well.

\begin{figure}[t!]
\centering

\begin{tabular}{cc}
\quad\includegraphics[scale=0.35]{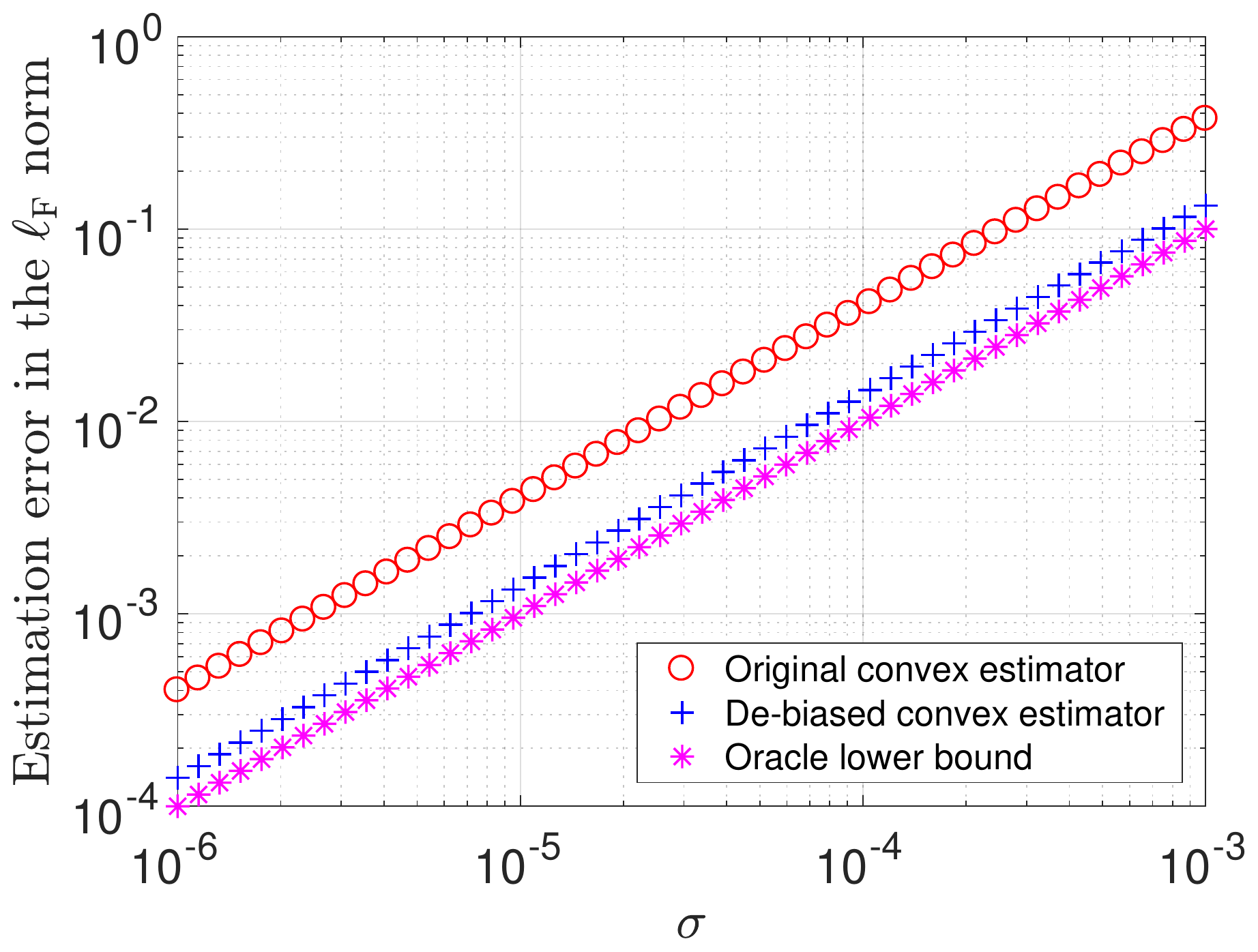}\quad & \quad\includegraphics[scale=0.35]{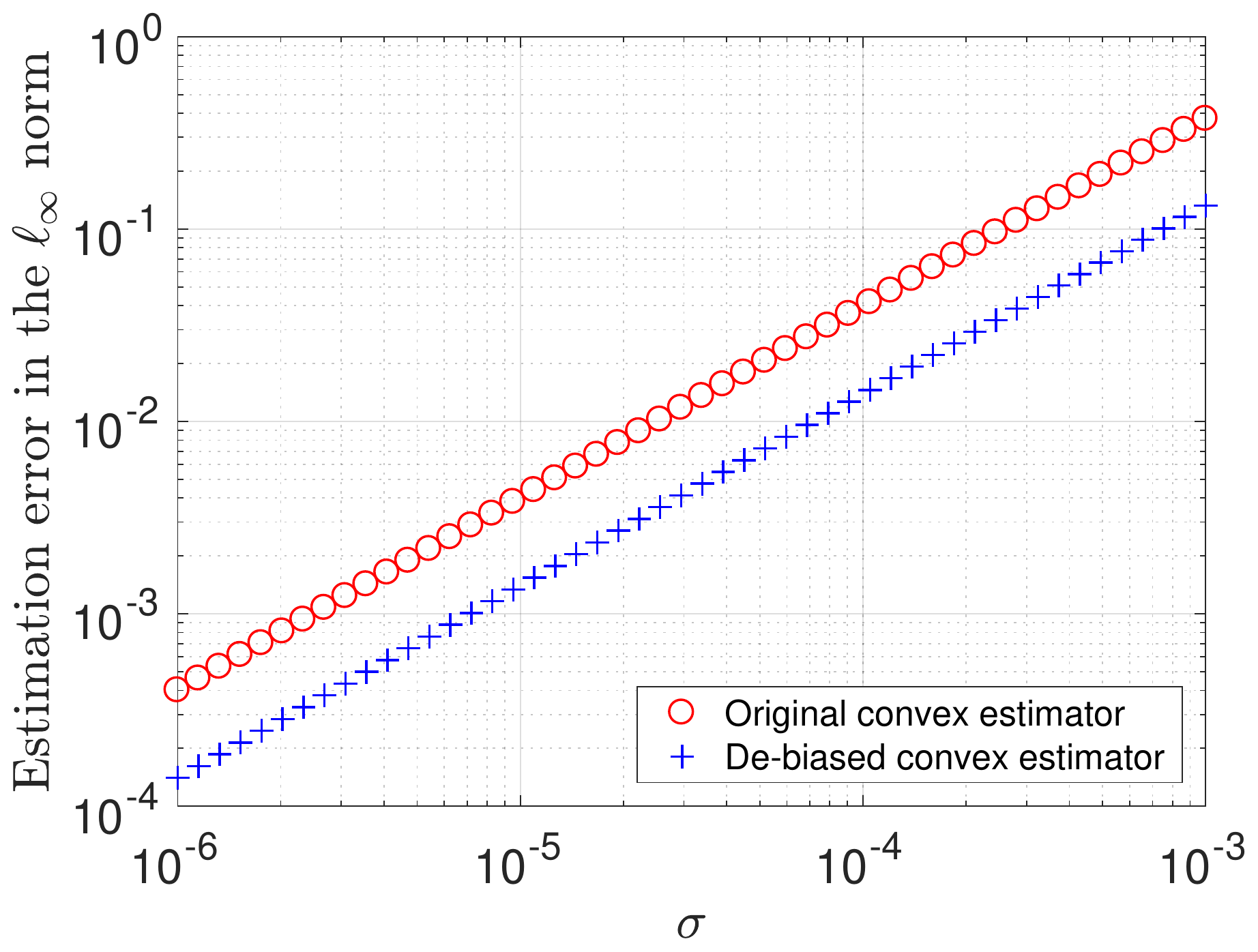}\quad\tabularnewline
(a) & (b)\tabularnewline
\end{tabular}

\caption{(a) Estimation error of $\bm{Z}^{\mathsf{cvx}}$ vs.~$\bm{M}^{\mathsf{d}}$
measured in the Frobenius norm. (b) Estimation error of $\bm{Z}^{\mathsf{cvx}}$
vs.~$\bm{M}^{\mathsf{d}}$ measured in the $\ell_{\infty}$ norm. The results are averaged over 20 independent trials for $r=5$, $p=0.2$ and $n=1000$.
\label{fig:estimation-error}}
\end{figure}


In addition to the tractable distributional guarantees,  the de-biased estimator
$\bm{M}^{\mathsf{d}}$ also exhibits superior estimation accuracy compared to the original estimator $\bm{Z}^{\mathsf{cvx}}$ (cf.~Theorem~\ref{thm:estimation-error}).
Figure~\ref{fig:estimation-error} reports the estimation error of
$\bm{M}^{\mathsf{d}}$ vs.~$\bm{Z}^{\mathsf{cvx}}$ measured in both
the Frobenius norm and in the $\ell_{\infty}$ norm across difference
noise levels. The results are averaged over 20 Monte Carlo simulations for $r=5$, $p=0.2$.
It can be seen that the errors of the de-biased estimator are uniformly
smaller than that of the original estimator and are much closer to the oracle lower bound.
As a result, we recommend using $\bm{M}^{\mathsf{d}}$ even for
the purpose of estimation. 

We conclude this section with experiments on real data. Similar to \cite{CanPla10}, we use the daily temperature data \cite{NCDC} for 1400 stations across the world in 2018, which results in a $1400 \times 365$ data matrix. Inspection on the singular values reveals that the data matrix is nearly  low-rank. We vary the observation probability $p$ from $0.5$ to $0.9$ and randomly subsample the data accordingly. Based on the observed temperatures, we then apply the proposed methodology to obtain $95\%$ confidence intervals for all the entries. Table~\ref{tab:real_data} reports the empirical coverage probabilities, the average length of the confidence intervals as well as the estimation error of both $\bm{Z}^\mathsf{cvx}$ and $\bm{M}^\mathsf{d}$ over 20 independent experiments. It can be seen that the average coverage probabilities are reasonably close to $95\%$ and the confidence intervals are also quite short. In addition, the estimation error of $\bm{M}^\mathsf{d}$ is smaller than that of $\bm{Z}^\mathsf{cvx}$, which corroborates our theoretical prediction. The discrepancy between the nominal coverage probability and the actual one might arise from the facts that (1) the underlying true temperature matrix is only approximately low-rank, and (2) the noise in the temperature might not be independent. 

\begin{table}[b]

\caption{Empirical coverage rates, average lengths of the confidence intervals of the entries as well as the estimation error vs.~observation probability $p$. The results are averaged over 20 Monte Carlo trials. \label{tab:real_data}}
\vspace{0.5em}
\centering

{\footnotesize
\centering
\begin{tabular}{|c|c|c|c|c|c|c|}
	\hline 
	\multirow{2}{*}{$p$} & \multicolumn{2}{c|}{${\mathsf{Coverage}}$} & \multicolumn{2}{c|}{CI Length} & \multicolumn{2}{c|}{$\Vert\widehat{\bm{Z}}-\bm{M}^\star\Vert_{\mathrm{F}}/\Vert\bm{M}^\star\Vert_{\mathrm{F}}$}\tabularnewline
	\cline{2-7} \cline{3-7} \cline{4-7} \cline{5-7} \cline{6-7} \cline{7-7} 
	& Mean & Std & Mean & Std & Convex $\bm{Z}^{\mathsf{cvx}}$& Debiased $\bm{M}^{\mathsf{d}}$\tabularnewline
	\hline 
	$0.5$ & $0.8265$ & $0.0016$ & $3.6698$ & $0.0209$ & $0.029$ & $0.028$\tabularnewline
	\hline 
	$0.6$ & $0.8268$ & $0.0011$ & $2.8774$ & $0.0098$ & $0.025$ & $0.023$\tabularnewline
	\hline 
	$0.7$ & $0.8431$ & $0.0006$ & $2.3426$ & $0.0054$ & $0.022$ & $0.019$\tabularnewline
	\hline 
	$0.8$ & $0.8725$ & $0.0003$ & $2.0234$ & $0.0052$ & $0.020$ & $0.015$\tabularnewline
	\hline 
	$0.9$ & $0.9093$ & $0.0003$ & $1.8296$ & $0.0072$ & $0.018$ & $0.011$\tabularnewline
	\hline 
\end{tabular}
}
\end{table}

\subsection{A bit of intuition}

We pause to develop some intuition behind the distributional guarantees
for the proposed estimators. Bearing in mind the intimate link between
convex and nonconvex optimization (cf.~(\ref{eq:proximity-cvx-ncvx})),
it suffices to concentrate on the nonconvex problem~(\ref{eq:ncvx}).
For the sake of clarity, we further restrict attention to the rank-1
positive semidefinite case where $\bm{M}^{\star}=\bm{x}^{\star}\bm{x}^{\star\top}$
and set $\lambda=0$, where one can focus on
\begin{equation}
\underset{\bm{x}\in\mathbb{R}^{n}}{\textsf{minimize}}\qquad f\left(\bm{x}\right)\triangleq\frac{1}{2}\big\|\mathcal{P}_{\Omega}\big(\bm{x}\bm{x}^{\top}-\bm{M}\big)\big\|_{\mathrm{F}}^{2}.\label{eq:ncvx-rank-1}
\end{equation}
Any optimizer $\widehat{\bm{x}}$ of~(\ref{eq:ncvx-rank-1}) would
necessarily satisfy the first-order optimality condition
\begin{equation}
\mathcal{P}_{\Omega}\big(\widehat{\bm{x}}\widehat{\bm{x}}^{\top}-\bm{M}\big)\widehat{\bm{x}}=\bm{0}.\label{eq:first-order-optimality-rank1}
\end{equation}
We shall also assume that $\widehat{\bm{x}}$ is a reasonably reliable
estimate obeying $\widehat{\bm{x}}\approx\bm{x}^{\star}$.

We begin with the no-missing-data case (i.e.~$p=1$), which already
conveys the key insight. The condition~(\ref{eq:first-order-optimality-rank1})
simplifies to
\begin{equation}
\widehat{\bm{x}}\widehat{\bm{x}}^{\top}\widehat{\bm{x}}-\bm{x}^{\star}\bm{x}^{\star\top}\widehat{\bm{x}}=\bm{E}\widehat{\bm{x}},
\end{equation}
which, through a little manipulation, leads to an equivalent decomposition:
\begin{align}
\|\widehat{\bm{x}}\|_{2}^{2}\left(\widehat{\bm{x}}-\bm{x}^{\star}\right)= & \underset{\text{approximately Gaussian}}{\underbrace{\vphantom{\bm{x}^{\star}\left(\bm{x}^{\star}-\widehat{\bm{x}}\right)^{\top}\bm{x}^{\star}}\bm{E}\widehat{\bm{x}}}}+\underset{\text{negligible first-order term}}{\underbrace{\bm{x}^{\star}\left(\bm{x}^{\star}-\widehat{\bm{x}}\right)^{\top}\bm{x}^{\star}}}+\underset{\text{second-order term}}{\underbrace{\bm{x}^{\star}\left(\bm{x}^{\star}-\widehat{\bm{x}}\right)^{\top}\left(\widehat{\bm{x}}-\bm{x}^{\star}\right)}}.\label{eq:first-order-condition-rank1-complete}
\end{align}
Then: (1) the third term of~(\ref{eq:first-order-condition-rank1-complete}),
which can be viewed as a second-order term (in the sense that it is
a quadratic term of $\widehat{\bm{x}}-\bm{x}^{\star}$), becomes vanishingly
small when $\widehat{\bm{x}}\approx\bm{x}^{\star}$; (2) while the
second term of~(\ref{eq:first-order-condition-rank1-complete}) looks
like a first-order term, it is natural to conjecture that $\widehat{\bm{x}}-\bm{x}^{\star}$
is sufficiently random and hence $(\widehat{\bm{x}}-\bm{x}^{\star})^{\top}\bm{x}^{\star}\ll\|\widehat{\bm{x}}-\bm{x}^{\star}\|_{2}\|\bm{x}^{\star}\|_{2}$
(i.e.~the estimation error is not aligned with $\bm{x}^{\star}$),
meaning that this term is also expected to be negligible compared
to a typical first-order term (e.g.~the term on the left-hand side of~\ref{eq:first-order-condition-rank1-complete}). In summary, these non-rigorous arguments
suggest that
\begin{align}
\|\widehat{\bm{x}}\|_{2}^{2}\left(\widehat{\bm{x}}-\bm{x}^{\star}\right)\,\approx\, & \bm{E}\widehat{\bm{x}}.
\end{align}
If one can be convinced that $\bm{E}$ and $\widehat{\bm{x}}$ are
only weakly dependent, then this means
\begin{align}
\widehat{\bm{x}}-\bm{x}^{\star}\,\approx\, & \frac{1}{\|\widehat{\bm{x}}\|_{2}^{2}}\bm{E}\widehat{\bm{x}}\,\approx\,\frac{1}{\|\bm{x}^{\star}\|_{2}^{2}}\bm{E}\bm{x}^{\star}\,\sim\,\mathcal{N}\Big(\bm{0},\frac{\sigma^{2}}{\|\bm{x}^{\star}\|_{2}^{2}}\bm{I}_{n}\Big).
\end{align}

Returning to the missing data scenario with $p<1$, everything is
based on the following approximation
\[
\mathcal{P}_{\Omega}\big(\widehat{\bm{x}}\widehat{\bm{x}}^{\top}-\bm{x}^{\star}\bm{x}^{\star\top}\big)\widehat{\bm{x}}\approx p\big(\widehat{\bm{x}}\widehat{\bm{x}}^{\top}-\bm{x}^{\star}\bm{x}^{\star\top}\big)\widehat{\bm{x}};
\]
this is certainly expected --- using standard concentration arguments
--- if we ``pretend'' that $\mathcal{P}_{\Omega}$ and $\widehat{\bm{x}}$
are statistically independent. With this approximation in mind, one
can translate~(\ref{eq:first-order-optimality-rank1}) into
\begin{equation}
p\left(\widehat{\bm{x}}\widehat{\bm{x}}^{\top}\widehat{\bm{x}}-\bm{x}^{\star}\bm{x}^{\star\top}\widehat{\bm{x}}\right)\approx\mathcal{P}_{\Omega}\left(\bm{E}\right)\widehat{\bm{x}}.
\end{equation}
Repeating the above argument then immediately yields
\begin{align}
\widehat{\bm{x}}-\bm{x}^{\star}\,\approx\, & \frac{1}{\|\widehat{\bm{x}}\|_{2}^{2}}\cdot\frac{1}{p}\mathcal{P}_{\Omega}\left(\bm{E}\right)\widehat{\bm{x}}\,\approx\,\frac{1}{p\|\bm{x}^{\star}\|_{2}^{2}}\cdot\mathcal{P}_{\Omega}\left(\bm{E}\right)\bm{x}^{\star}\overset{\text{approx.}}{\sim}\mathcal{N}\Big(\bm{0},\frac{\sigma^{2}}{p\|\bm{x}^{\star}\|_{2}^{2}}\bm{I}_{n}\Big).
\end{align}

The case with $\lambda>0$ can be intuitively understood in a very
similar way by first de-shrinking the estimate; we omit it here for
brevity. We note that these hand-waving arguments can all be made
rigorous, which is the main content of the proof.

\subsection{Inference based on spectral estimates?}

One would naturally be curious about whether there are other estimation
procedures that also enable reasonable statistical inference. While
this is beyond the scope of the current paper, we take a moment to
discuss one alternative: the spectral method, as pioneered by~\cite{KesMonSew2010,Se2010Noisy}
in the matrix completion problem. In a nutshell, this approach consists
in computing a rank-$r$ approximation to $\mathcal{P}_{\Omega}(\bm{M})/p$,
which is precisely the spectral initialization widely used in a two-stage
nonconvex algorithm (cf.~Algorithm~\ref{alg:gd-mc-ncvx})~\cite{Se2010Noisy,sun2016guaranteed,chen2015fast,chen2018asymmetry,ma2017implicit}.
While inference has not been, as far as we know, the focus of prior
work on spectral methods,\footnote{We note that inference from spectral estimates has been investigated
in other context beyond matrix completion (e.g.~the model without
missing data~\cite{xia2019data,fan2019asymptotic}).} the recent papers~\cite{abbe2017entrywise,ma2017implicit} hinted
at the possibility of characterizing the distribution of the spectral
estimate. Take a simple symmetric rank-1 case for example (i.e.~$\bm{M}^{\star}=\bm{x}^{\star}\bm{x}^{\star\top}$
with $\|\bm{x}^{\star}\|_{2}=1$): the leading eigenvector $\bm{u}^{\mathsf{spectral}}$
of $\mathcal{P}_{\Omega}(\bm{M})/p$ often admits the following approximation
(up to a global sign)
\begin{align*}
\bm{u}^{\mathsf{spectral}} & \approx\frac{1}{p}\mathcal{P}_{\Omega}(\bm{M})\bm{x}^{\star}.
\end{align*}
Expanding $\mathcal{P}_{\Omega}(\bm{M})=p\bm{x}^{\star}\bm{x}^{\star\top}+\mathcal{P}_{\Omega}(\bm{x}^{\star}\bm{x}^{\star\top})-p\bm{x}^{\star}\bm{x}^{\star\top}+\mathcal{P}_{\Omega}(\bm{E})$,
we arrive at
\begin{align*}
\bm{u}^{\mathsf{spectral}} & \approx\bm{x}^{\star}\bm{x}^{\star\top}\bm{x}^{\star}+\Big(\frac{1}{p}\mathcal{P}_{\Omega}(\bm{x}^{\star}\bm{x}^{\star\top})-\bm{x}^{\star}\bm{x}^{\star\top}\Big)\bm{x}^{\star}+\frac{1}{p}\mathcal{P}_{\Omega}(\bm{E})\bm{x}^{\star},
\end{align*}
which is equivalent to
\begin{align}
\bm{u}^{\mathsf{spectral}} & -\bm{x}^{\star}\approx\underset{\text{noise effect}}{\underbrace{\vphantom{\Big(\frac{1}{p}\mathcal{P}_{\Omega}(\bm{x}^{\star}\bm{x}^{\star\top})-\bm{x}^{\star}\bm{x}^{\star\top}\Big)\bm{x}^{\star}}\frac{1}{p}\mathcal{P}_{\Omega}(\bm{E})\bm{x}^{\star}}}+\underset{\text{effect of random sub-sampling}}{\underbrace{\Big(\frac{1}{p}\mathcal{P}_{\Omega}(\bm{x}^{\star}\bm{x}^{\star\top})-\bm{x}^{\star}\bm{x}^{\star\top}\Big)\bm{x}^{\star}}}.\label{eq:spectral-uncertainty}
\end{align}
In words, two major factors dictate the uncertainty of the spectral
estimate: (1) the additive Gaussian noise (cf.~the 1st term on the
right-hand side of~(\ref{eq:spectral-uncertainty})), and (2) random
sub-sampling (in particular, the randomness incurred by employing
the sub-sampled $\mathcal{P}_{\Omega}(\bm{x}^{\star}\bm{x}^{\star\top})/p$
to approximate the truth $\bm{x}^\star \bm{x}^{\star\top}$). Given that the random sub-sampling effect
cannot be ignored at all, the spectral estimates often suffer from
a much larger estimation error (and hence a higher degree of uncertainty)
compared to either the convex or the nonconvex estimates. In truth,
this random sub-sampling effect does not go away even when the noise
vanishes. Consequently, uncertainty quantification based on the spectral
estimates may not be the most desirable option.

\section{Prior art\label{sec:Prior-art}}

\paragraph{Matrix completion.}

Low-rank matrix completion, or more broadly, low-rank matrix recovery,
is a fundamental task that permeates through a wide spectrum of applications
in science, engineering, and finance (e.g.~\cite{rennie2005fast,so2007theory,chen2014robust,fan2018principal,chen2013exact,zhang2015accelerating,bai2006confidence,chen2016projected,fan2019robust,kneip2011factor,cai2016matrix,koltchinskii2015optimal,fan2013large,duchi2017solving,charisopoulos2019composite,davenport20141,sun2012calibrated,foygel2011concentration}).
A paper of this length is unable to review all papers motivating and
contributing to this enormous subject; interested readers are referred
to~\cite{davenport2016overview,chen2018harnessing} for extensive
discussions of motivating applications as well as the exciting recent development.

Numerous algorithms have been proposed to solve this problem efficiently,
with two paradigms being arguably the most widely used: convex relaxation
and nonconvex optimization. We briefly review the literature contributing
to these two paradigms.
\begin{itemize}
\item Convex relaxation was largely popularized by the seminal works~\cite{fazel2002matrix,RecFazPar07,ExactMC09}.
In the absence of noise, it has been shown that nuclear norm minimization,
which can be solved by semidefinite programming, achieves minimal
sample complexity under mild conditions~\cite{Gross2011recovering,recht2011simpler,chen2015incoherence}.
When the observed entries are further corrupted by noise, Candès and
Plan~\cite{CanPla10} provided the first theoretical guarantee regarding
the estimation accuracy of perhaps the most natural convex relaxation
algorithm. While the theory might be tight for certain adversarial
scenarios (as shown by the recent work~\cite{krahmer2019convex}),
it is loose by some large factor under the natural random noise model.
This statistical guarantee has been partially improved later on by
two papers~\cite{Negahban2012restricted,MR2906869} under proper modifications
to the convex program (e.g.~enforcing an additional spikiness constraint
\cite{Negahban2012restricted,klopp2014noisy}, or modifying the squared
loss~\cite{MR2906869}). Nevertheless, the error bounds provided in
these papers (and their follow-ups) remain suboptimal, unless the
typical size of the noise is sufficiently large. Our recent work~\cite{chen2019noisy}
establishes near-optimal statistical guarantees --- when the estimation
errors are measured by the Frobenius norm, the spectral norm, and
the $\ell_{2,\infty}$ norm --- for a wide range of noise levels
when $r=O(1)$. All of these estimation guarantees, however, come
with a hidden and likely large pre-constant, which do not serve the
inferential purpose well.
\item Nonconvex optimization algorithms, as pioneered by~\cite{KesMonSew2010,srebro2004learning},
become increasingly more popular for solving various low-rank factorization
problems, due to their appealing computational complexities~\cite{jain2013low,candes2014wirtinger,ChenCandes15solving,tu2016low,sun2016guaranteed,zheng2016convergence,chen2018gradient,wang2016unified,chen2019nonconvex}.
For instance, the gradient-based nonconvex methods have been analyzed
for noisy matrix completion~\cite{Se2010Noisy,chen2015fast,ma2017implicit,chen2019noisy},
which are shown to achieve near-optimal statistical accuracy and linear-time
convergence guarantees all at once. Going beyond gradient methods,
we note that other nonconvex methods (e.g.~\cite{rennie2005fast,jain2010guaranteed,wen2012solving,jain2013low,fornasier2011low,vandereycken2013low,lai2013improved,hardt2014understanding,jin2016provable,rohde2011estimation,wei2016guarantees,ding2018leave,zhao2015nonconvex-estimation,zhang2018primal,molybogno2019,charisopoulos2019low})
and landscape properties~\cite{ge2016matrix,chen2017memory,ge2017no,zhang2019sharp,zhang2018much,shapiro2018matrix}
have been largely explored as well. The interested readers are referred
to~\cite{chi2018nonconvex} for an in-depth discussion. One limitation,
however, is that the theoretical guarantees provided for nonconvex
algorithms often exhibit sub-optimal dependency in the rank $r$ of
the unknown matrix; for instance, most theory requires a sample complexity
of at least $nr^{2}$ (in fact, often much larger than $nr^{2}$).
This is outperformed by the convex relaxation approach.
\end{itemize}
Despite these recent developments, very little work has investigated
statistical inference for noisy matrix completion. While~\cite{carpentier2018adaptive, carpentier2016constructing,carpentier2015signal,carpentier2015uncertainty}
discussed the construction of ``honest'' confidence regions, the
volume of these regions is dependent on some (possibly huge) hidden
constants, thus resulting in over-coverage. Perhaps the closest to
our paper is the recent work~\cite{xia2018confidence}, which investigated
inference for low-rank trace regression. Employing a closely related de-biased
estimator with sample splitting, the paper~\cite{xia2018confidence}
established asymptotic normality of a certain projected distance between
the estimate and the truth. The result therein, however, requires
a sampling mechanism obeying the restricted isometry property (e.g.~i.i.d.~Gaussian
designs), which fails to hold for matrix completion. Also, our approach
does not require sample splitting --- a technique that is convenient
for analysis but conservative in constructing confidence regions.
Another work by Cai et al.~\cite{cai2016geometric} developed a unified
approach to provide inference guarantees for linear inverse problems
including low-rank matrix estimation. Their results, however, require
the sample size to exceed the total dimension $n^{2}$ even under
the Gaussian design. Finally, a recent line of work~\cite{mak2017active}
explored uncertainty quantification under the Bayesian setting, hypothesizing
on a special prior regarding the true matrix. This departs drastically
from the scenario considered herein.

\paragraph{Inference in high-dimensional problems.}

Inference in high-dimensional sparse regression has received much
attention in the last few years~\cite{wasserman2009high,zhang2014confidence,belloni2011inference,
van2014asymptotically,javanmard2014hypothesis,dezeure2015high,
cai2017confidence,ning2017general,neykov2018unified,lee2016exact,
lockhart2014significance,meinshausen2009p,dezeure2017high,zhang2017simultaneous,battey2018distributed}.
Our inferential approach is partly inspired by the recent developments
on this topic, particularly with regard to the de-biased\,/\,de-sparsified
estimators proposed for Lasso. More specifically, recognizing the
non-negligible bias of the Lasso estimate
\begin{equation}
\widehat{\bm{\beta}}\triangleq\arg\min_{\bm{\beta}}\,\,\frac{1}{2}\|\bm{y}-\bm{X}\bm{\beta}\|_{2}^{2}+\lambda\|\bm{\beta}\|_{1},\label{eq:Lasso}
\end{equation}
A line of work~\cite{zhang2014confidence,van2014asymptotically,javanmard2014confidence}
came up with a linear transformation of $\widehat{\bm{\beta}}$ of
the form
\begin{equation}
\bm{\beta}^{\mathrm{d}}\triangleq\widehat{\bm{\beta}}+\bm{L}\bm{X}^{\top}\big(\bm{y}-\bm{X}\widehat{\bm{\beta}}\big),\label{eq:debiased-lasso}
\end{equation}
where $\bm{L}$ is some matrix to be designed, and $\bm{X}^{\top}\big(\bm{y}-\bm{X}\widehat{\bm{\beta}}\big)$
corresponds to the negative gradient of the squared loss at $\widehat{\bm{\beta}}$,
or equivalently, the (scaled) sub-gradient of the $\ell_{1}$ norm
at $\widehat{\bm{\beta}}$. If $\bm{L}$ is properly chosen, then
$\bm{\beta}^{\mathrm{d}}$ is able to correct the bias of this nonlinear
estimator $\bm{\beta}$, while controlling the degree of uncertainty. Many follow-up
papers have investigated the design of $\bm{L}$ as well as the resulting
inferential guarantees~\cite{zhang2014confidence,van2014asymptotically,javanmard2014confidence,javanmard2015biasing}.

Interestingly, our de-biased estimator (\ref{eq:defn-Zd-cvx}) for
matrix completion admits a very similar form as (\ref{eq:debiased-lasso}).
To see this, recall that our de-biased estimator is given by
\[
\bm{M}^{\mathsf{d}}=\mathcal{P}_{\text{rank-}r}\big(\bm{Z}-\tfrac{1}{p}\mathcal{P}_{\Omega}\big(\bm{Z}\big)+\tfrac{1}{p}\mathcal{P}_{\Omega}\big(\bm{M}\big)\big),
\]
where $\bm{Z}$ can be either $\bm{Z}^{\mathsf{cvx}}$ or $\bm{X}^{\mathsf{ncvx}}\bm{Y}^{\mathsf{ncvx}\top}$ (see Table \ref{tab:Notation-unified}). Let $T$ be
the tangent space of the set of rank-$r$ matrices at $\bm{Z}^{\mathsf{cvx},r}$
(resp.~$\bm{X}^{\mathsf{ncvx}}\bm{Y}^{\mathsf{ncvx}\top}$) in the
convex (resp.~nonconvex) case, and $\mathcal{P}_{T}$ be the projection
operator onto $T$. Somewhat surprisingly, replacing $\mathcal{P}_{\text{rank-}r}$
by $\mathcal{P}_{T}$ does not affect the de-biased estimator by much,
in the sense that
\begin{equation}
\bm{M}^{\mathsf{d}}\approx\mathcal{P}_{T}\big(\bm{Z}-\tfrac{1}{p}\mathcal{P}_{\Omega}\big(\bm{Z}\big)+\tfrac{1}{p}\mathcal{P}_{\Omega}\big(\bm{M}\big)\big).
\end{equation}
In addition, recognizing that $\bm{Z}$ almost lies within the tangent
space $T$,\footnote{More precisely, if $\bm{Z}=\bm{X}^{\mathsf{ncvx}}\bm{Y}^{\mathsf{ncvx}\top}$,
then $\bm{Z}\in T$; if $\bm{Z}=\bm{Z}^{\mathsf{cvx}}$, one has $\mathcal{P}_{T}(\bm{Z})\approx\bm{Z}$.}one can rewrite
\begin{equation}
\bm{M}^{\mathsf{d}}\approx\bm{Z}-\tfrac{1}{p}\mathcal{P}_{T}\mathcal{P}_{\Omega}\big(\bm{Z}-\bm{M}\big),\label{eq:debiased-estimator-T}
\end{equation}
a fact to be made precise in Section~\ref{subsec:Approximate-equivalence}.
This bears a striking resemblance to the de-biasing approach developed
for Lasso --- the term $\mathcal{P}_{\Omega}\big(\bm{Z}-\bm{M}^{\star}\big)$
represents the gradient of the squared loss $0.5\|\mathcal{P}_{\Omega}\big(\bm{Z}-\bm{M}\big)\|_{\mathrm{F}}^{2}$
(or equivalently, the negative sub-gradient of the nuclear norm) at
$\bm{Z}$, and $\mathcal{P}_{T}$ is the linear operator we pick.
To the best of our knowledge, no de-biasing approach --- with rigorous
theoretical guarantees and without sample splitting --- has been
proposed and analyzed for matrix completion in prior literature.
In addition, we note that our de-biased estimator for matrix completion achieves full statistical efficiency in terms of both the rates and the pre-constant; in comparison, the commonly used de-biased estimators for sparse linear regression typically fall short of achieving the best possible estimation accuracy, unless additional  thresholding procedures are enforced.  

Finally, de-biased estimators have been put forward to tackle
other high-dimensional problems, including but not limited to generalized
linear models~\cite{van2014asymptotically,ning2017general}, graphical
models~\cite{jankova2015confidence,ren2015asymptotic,ma2017inter,jankova2017honest},
sparse PCA~\cite{jankova2018biased}, treatment effects estimation~\cite{chernozhukov2018double,athey2018approximate}. These are beyond
the scope of the current paper.

\section{Architecture of the proof}

This section outlines the main steps for establishing Theorem~\ref{thm:low-rank-factor-master-bound-simple}
and Theorem~\ref{thm:entries-master-decomposition}. Before starting,
we introduce some useful notation. For convenience of presentation,
we insert the factor $1/p$ into~(\ref{eq:ncvx}) and redefine the
nonconvex loss function as
\begin{equation}
f\left(\bm{X},\bm{Y}\right)\triangleq\frac{1}{2p}\left\Vert \mathcal{P}_{\Omega}\left(\bm{X}\bm{Y}^{\top}-\bm{M}\right)\right\Vert _{\mathrm{F}}^{2}+\frac{\lambda}{2p}\left\Vert \bm{X}\right\Vert _{\mathrm{F}}^{2}+\frac{\lambda}{2p}\left\Vert \bm{Y}\right\Vert _{\mathrm{F}}^{2}.\label{eq:ncvx-new}
\end{equation}
In addition, for each $1\leq j,k\leq n$, we define the indicator $\delta_{jk}\triangleq\ind\{(j,k)\in\Omega\}$,
which is a Bernoulli random variable with mean $p$.

We also note that Theorem~\ref{thm:low-rank-factor-master-bound-simple} (resp.~Theorem~\ref{thm:entries-master-decomposition}) is subsumed by Theorem~\ref{thm:low-rank-factor-master-bound} (resp.~Theorem~\ref{thm:entries-master-decomposition-augment}). As a result, we shall focus on establishing Theorem \ref{thm:low-rank-factor-master-bound} (resp.~Theorem~\ref{thm:entries-master-decomposition-augment}) when it comes to estimating low-rank factors (resp.~the entries of the matrix). 

\begin{theorem}
\label{thm:low-rank-factor-master-bound}
Suppose that the sample complexity meets $n^{2}p\geq C\kappa^{4}\mu^{2}r^{2}n\log^{3}n$
for some sufficiently large constant $C>0$ and the noise obeys $\sigma\sqrt{(\kappa^{4}\mu rn\log n)/p}\leq c\sigma_{\min}$
for some sufficiently small constant $c>0$. Then the decomposition in Theorem \ref{thm:low-rank-factor-master-bound-simple} remains valid, except that the residual matrices $\bm{\Psi}_{\bm{X}},\bm{\Psi}_{\bm{Y}}\in\mathbb{R}^{n\times r}$
satisfy, with probability at least $1-O(n^{-3})$, that
\begin{equation}
\max\big\{\left\Vert \bm{\Psi}_{\bm{X}}\right\Vert _{2,\infty},\left\Vert \bm{\Psi}_{\bm{Y}}\right\Vert _{2,\infty}\big\}\lesssim\frac{\sigma}{\sqrt{p\sigma_{\min}}}\left(\,\frac{\sigma}{\sigma_{\min}}\sqrt{\frac{\kappa^{7}\mu rn\log n}{p}}+\sqrt{\frac{\kappa^{7}\mu^{3}r^{3}\log^{2}n}{np}}\,\right).\label{eq:thm-low-rank-residual-size}
\end{equation}
\end{theorem}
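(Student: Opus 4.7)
The plan is to work with the nonconvex estimator $(\bm{X},\bm{Y}) = (\bm{X}^{\mathsf{ncvx}},\bm{Y}^{\mathsf{ncvx}})$ first, and then transfer the distributional guarantee to the convex estimator using the proximity (\ref{eq:proximity-cvx-ncvx}). The starting point is the first-order optimality condition
\[
\tfrac{1}{p}\mathcal{P}_\Omega\bigl(\bm{X}\bm{Y}^\top - \bm{M}\bigr)\bm{Y} + \tfrac{\lambda}{p}\bm{X} = \bm{0},
\]
together with its symmetric counterpart for $\bm{Y}$. Substituting $\mathcal{P}_\Omega(\bm{M}) = \mathcal{P}_\Omega(\bm{X}^\star\bm{Y}^{\star\top}) + \mathcal{P}_\Omega(\bm{E})$ and rearranging yields, after aligning with the rotation $\bm{H}^{\mathsf{d}}$,
\[
\bigl(\bm{X}\bm{H}^{\mathsf{d}} - \bm{X}^\star\bigr)\bm{Y}^\top\bm{Y} + \tfrac{\lambda}{p}\bm{X}\bm{H}^{\mathsf{d}} \approx \tfrac{1}{p}\mathcal{P}_\Omega(\bm{E})\bm{Y} - \bm{X}^\star\bigl(\bm{Y}^\top - \bm{Y}^{\star\top}\bm{H}^{\mathsf{d}\top}\bigr)^\top\bm{Y},
\]
where the approximation absorbs the concentration $p^{-1}\mathcal{P}_\Omega \approx \mathcal{I}$ applied to the low-rank correction $\bigl(\bm{X}\bm{Y}^\top - \bm{X}^\star\bm{Y}^{\star\top}\bigr)\bm{Y}$, whose validity under $np\gtrsim \kappa^4\mu^2 r^2\log^3 n$ follows from standard incoherence-based concentration (see e.g.\ \cite{chen2015incoherence}). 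Combining this with the symmetric equation for $\bm{Y}$, and then multiplying by the de-shrinkage factor $(\bm{I}_r + \tfrac{\lambda}{p}(\bm{X}^\top\bm{X})^{-1})^{1/2}$, the regularization term gets absorbed into the definition of $\bm{X}^{\mathsf{d}}$ and produces the identity $\bm{X}^{\mathsf{d}}\bm{H}^{\mathsf{d}}-\bm{X}^\star \approx \tfrac{1}{p}\mathcal{P}_\Omega(\bm{E})\bm{Y}^\star(\bm{\Sigma}^\star)^{-1}$.

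Once this master expansion is in place, the Gaussian part $\bm{Z}_{\bm{X}} \triangleq \tfrac{1}{p}\mathcal{P}_\Omega(\bm{E})\bm{Y}^\star(\bm{\Sigma}^\star)^{-1}$ is easy: each row is a sum of independent Gaussians with covariance $\tfrac{\sigma^2}{p}(\bm{\Sigma}^\star)^{-1}\bm{Y}^{\star\top}\bm{Y}^\star(\bm{\Sigma}^\star)^{-1} = \tfrac{\sigma^2}{p}(\bm{\Sigma}^\star)^{-1}$ by \eqref{eq:defn-Xstar-Ystar}, and the rows are independent because they involve disjoint entries of $\bm{E}$. The near-independence between $\bm{Z}_{\bm{X}}^\top\bm{e}_i$ and $\bm{Z}_{\bm{Y}}^\top\bm{e}_j$ claimed in Remark~\ref{remark:pairwise-independence} follows from the fact that the former depends only on row $i$ of $\mathcal{P}_\Omega(\bm{E})$ while the latter depends only on column $j$, and the two overlap in a single Bernoulli-controlled cell whose contribution is negligible.

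The main obstacle is proving the $\ell_{2,\infty}$ bound \eqref{eq:thm-low-rank-residual-size} on the residuals. There are three sources of residual: (i) the concentration error $\bigl(p^{-1}\mathcal{P}_\Omega - \mathcal{I}\bigr)\bigl[(\bm{X}\bm{Y}^\top-\bm{X}^\star\bm{Y}^{\star\top})\bigr]$, which a uniform spectral bound does not control at the row level; (ii) the replacement of $\bm{Y}$ by $\bm{Y}^\star$ in $\mathcal{P}_\Omega(\bm{E})\bm{Y}$, which requires controlling $\mathcal{P}_\Omega(\bm{E})(\bm{Y}-\bm{Y}^\star\bm{H}^{\mathsf{d}\top})$ row-wise; and (iii) the second-order term $\bm{X}^\star\bigl(\bm{Y}-\bm{Y}^\star\bm{H}^{\mathsf{d}\top}\bigr)^\top\bm{Y}$ together with the alignment error in $\bm{H}^{\mathsf{d}}$. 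To handle (ii), the key trouble is that $\bm{Y}$ depends on every entry of $\mathcal{P}_\Omega(\bm{E})$, so naive bounds lose a factor of $\sqrt{n}$. The standard remedy, which I plan to adopt, is the leave-one-out construction of \cite{ma2017implicit,chen2019noisy}: for each row $j$, one runs the algorithm on a modified dataset with row and column $j$ of $\mathcal{P}_\Omega(\bm{E})$ replaced by zero (or by a fresh independent copy), producing an auxiliary estimator $\bm{Y}^{(j)}$ that is independent of the $j$-th row of $\mathcal{P}_\Omega(\bm{E})$, and then showing $\bm{Y}\approx \bm{Y}^{(j)}$ with high probability. This decouples the dependence and yields the row-wise bound of order $\sigma\sqrt{\log n/(p\sigma_{\min})}$ times a polynomial in $\kappa,\mu,r$.

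Finally, I would choose $\bm{H}^{\mathsf{d}}$ as in \eqref{eq:defn-H-d}; the closeness of $\bm{X}^{\mathsf{d}}$ to $\bm{X}$ guarantees that $\bm{H}^{\mathsf{d}}$ is essentially the same as the Procrustes-optimal rotation between $\bm{X}$ and $\bm{X}^\star$, which is well-controlled by the results of \cite{chen2019noisy,ma2017implicit}. Combining all the pieces gives the advertised decomposition \eqref{subeq:final-decomposition-low-rank} for the nonconvex $\bm{X}^{\mathsf{d}},\bm{Y}^{\mathsf{d}}$; the corresponding statement for the convex de-shrunken estimator then follows from \eqref{eq:proximity-cvx-ncvx} and the approximate equivalence $\bm{M}^{\mathsf{d}}\approx\bm{X}^{\mathsf{d}}\bm{Y}^{\mathsf{d}\top}$ in \eqref{eq:equivalence-Zd-Xd-Yd}, to be formalized in Section~\ref{subsec:Approximate-equivalence}.
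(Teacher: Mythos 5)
Your high-level roadmap matches the paper's architecture: derive a master expansion for $\bm{X}^{\mathsf{d}}\bm{H}^{\mathsf{d}}-\bm{X}^{\star}$ via the (approximate) first-order stationarity condition, isolate the leading Gaussian piece $\tfrac{1}{p}\mathcal{P}_{\Omega}(\bm{E})\bm{Y}^{\star}(\bm{\Sigma}^{\star})^{-1}$, control the remaining terms row-by-row using the leave-one-out construction, and transfer to the convex estimator via proximity. This is indeed what the paper does, organised around Lemma~\ref{lemma:decomp-low-rank} and the four-term decomposition $\bm{\Phi}_{\bm{X}}=\bm{\Phi}_1+\bm{\Phi}_2+\bm{\Phi}_3+\bm{\Phi}_4$. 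However, there is a genuine gap in your treatment of the term you label (iii), and two smaller omissions.

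The dominant omission concerns the term $\bm{X}^{\star}\bigl[\bm{Y}^{\star\top}\overline{\bm{Y}}^{\mathsf{d}}(\overline{\bm{Y}}^{\mathsf{d}\top}\overline{\bm{Y}}^{\mathsf{d}})^{-1}-\bm{I}_r\bigr]$, which you describe as a ``second-order term'' that together with alignment error is handled routinely. Writing $\bm{\Delta}_{\bm{Y}}\triangleq\overline{\bm{Y}}^{\mathsf{d}}-\bm{Y}^{\star}$, this equals $-\bm{X}^{\star}\bm{\Delta}_{\bm{Y}}^{\top}\overline{\bm{Y}}^{\mathsf{d}}(\overline{\bm{Y}}^{\mathsf{d}\top}\overline{\bm{Y}}^{\mathsf{d}})^{-1}$, which is \emph{first-order} in the estimation error. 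The naive bound $\lVert\bm{\Delta}_{\bm{Y}}^{\top}\overline{\bm{Y}}^{\mathsf{d}}\rVert\le\lVert\bm{\Delta}_{\bm{Y}}\rVert\,\lVert\overline{\bm{Y}}^{\mathsf{d}}\rVert$ gives, after multiplying by the incoherence factor and $(\overline{\bm{Y}}^{\mathsf{d}\top}\overline{\bm{Y}}^{\mathsf{d}})^{-1}$, a contribution on the order of $\kappa^{2}\sqrt{\mu r\sigma_{\max}}\cdot\tfrac{\sigma}{\sigma_{\min}\sqrt{p}}$, which is \emph{not} smaller than the Gaussian row size $\sigma\sqrt{r/(p\sigma_{\max})}$ --- the ratio is of order $\kappa^{3}\sqrt{\mu}$, not $o(1)$. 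The resolution requires exploiting a cancellation special to the best Procrustes rotation $\bm{H}^{\mathsf{d}}$: the symmetry $\bm{\Delta}_{\bm{X}}^{\top}\bm{X}^{\star}+\bm{\Delta}_{\bm{Y}}^{\top}\bm{Y}^{\star}=\bm{X}^{\star\top}\bm{\Delta}_{\bm{X}}+\bm{Y}^{\star\top}\bm{\Delta}_{\bm{Y}}$ (Claim~\ref{claim:connection-delta-x-delta-y}), followed by a re-substitution of the exact decomposition of $\overline{\bm{Y}}^{\mathsf{d}}$ and a Sylvester-equation argument (Lemma~\ref{lemma:AX+XB=00003DC}) to bound $\bm{\Delta}_{\bm{Y}}^{\top}\bm{Y}^{\star}$. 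None of this is hinted at in your proposal, and without it the claimed $\ell_{2,\infty}$ bound fails.

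Two further points: (a) you invoke exact first-order optimality, but the algorithm returns only an $\epsilon$-approximate stationary point; the paper carries the gradient $\nabla_{\bm{X}}f(\bm{X},\bm{Y})$ and the imbalance $\bm{\Delta}_{\mathsf{balancing}}$ explicitly through Lemma~\ref{lemma:decomp-low-rank} and bounds them in $\bm{\Phi}_4$ via the small-gradient and near-balancedness conditions (\ref{eq:small-gradient})--(\ref{eq:X-Y-balance}). This is easily fixable once noticed. (b) You assert, then contradict, that the concentration error $\bm{A}=(p^{-1}\mathcal{P}_{\Omega}-\mathcal{I})(\bm{X}\bm{Y}^{\top}-\bm{M}^{\star})$ is handled by ``standard'' spectral concentration: in the paper, the row-wise bound on $\bm{\Phi}_3=-\bm{A}\overline{\bm{Y}}^{\mathsf{d}}(\overline{\bm{Y}}^{\mathsf{d}\top}\overline{\bm{Y}}^{\mathsf{d}})^{-1}$ (Lemma~\ref{lem:Phi3-two-infty}) also needs the leave-one-out estimator, not just a spectral-norm concentration of the sampling operator. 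Finally, the quantity $\tfrac{1}{p}\mathcal{P}_{\Omega}(\bm{E})\bm{Y}^{\star}(\bm{\Sigma}^{\star})^{-1}$ is only conditionally Gaussian given $\Omega$ with a random covariance; turning it into an unconditionally Gaussian $\bm{Z}_{\bm{X}}$ with deterministic covariance requires the coupling of Lemma~\ref{lemma:approx-gaussian-leading-term}, though this is a technicality rather than a conceptual gap.
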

\begin{theorem}\label{thm:entries-master-decomposition-augment}Instate
the assumptions of Theorem~\ref{thm:low-rank-factor-master-bound}. 
Recall the definition of $v_{ij}^{\star}$ in \eqref{eq:variance-entry}. 
Then one has the following decomposition
\begin{equation}
M_{ij}^{\mathsf{{d}}}-M_{ij}^{\star}=g_{ij}+\Delta_{ij},  \label{eq:thm-entry-decomposition}
\end{equation}
where $g_{ij}\sim\mathcal{N}(0,v_{ij}^{\star})$ and the residual
obeys --- with probability exceeding $1-O(n^{-10})$ --- that
\[
\left|\Delta_{ij}\right|\lesssim\left(\left\Vert \bm{U}_{i,\cdot}^{\star}\right\Vert _{2}+\left\Vert \bm{V}_{j,\cdot}^{\star}\right\Vert _{2}\right)\frac{\sigma}{\sqrt{p}}\left(\frac{\sigma}{\sigma_{\min}}\sqrt{\frac{\kappa^{8}\mu rn\log n}{p}}+\sqrt{\frac{\kappa^{8}\mu^{3}r^{3}\log^{2}n}{np}}\right)+\left(\frac{\sigma}{\sqrt{\sigma_{\min}}}\sqrt{\frac{\kappa^{3}\mu r\log n}{p}}\right)^{2}.
\]
\end{theorem}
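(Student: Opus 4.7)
The plan is to leverage the distributional decomposition of the low-rank factors from Theorem~\ref{thm:low-rank-factor-master-bound}, combined with the near-equivalence $\bm{M}^{\mathsf{d}} \approx \bm{X}^{\mathsf{d}}\bm{Y}^{\mathsf{d}\top}$ from~\eqref{eq:equivalence-Zd-Xd-Yd}. Using the orthonormality of $\bm{H}^{\mathsf{d}}$, one has $\bm{X}^{\mathsf{d}}\bm{Y}^{\mathsf{d}\top} = (\bm{X}^{\mathsf{d}}\bm{H}^{\mathsf{d}})(\bm{Y}^{\mathsf{d}}\bm{H}^{\mathsf{d}})^{\top}$, into which I would substitute $\bm{X}^{\mathsf{d}}\bm{H}^{\mathsf{d}} = \bm{X}^{\star}+\bm{Z}_{\bm{X}}+\bm{\Psi}_{\bm{X}}$ and $\bm{Y}^{\mathsf{d}}\bm{H}^{\mathsf{d}} = \bm{Y}^{\star}+\bm{Z}_{\bm{Y}}+\bm{\Psi}_{\bm{Y}}$. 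Expanding the $(i,j)$-th entry and subtracting $M_{ij}^{\star} = \bm{e}_i^{\top}\bm{X}^{\star}\bm{Y}^{\star\top}\bm{e}_j$, the leading Gaussian part would be identified as
\[
g_{ij} \triangleq \bm{e}_i^{\top}\bm{Z}_{\bm{X}}\bm{Y}^{\star\top}\bm{e}_j + \bm{e}_i^{\top}\bm{X}^{\star}\bm{Z}_{\bm{Y}}^{\top}\bm{e}_j,
\]
while all first-order terms involving $\bm{\Psi}$, all second-order cross terms, and the approximation slack $\bm{M}^{\mathsf{d}} - \bm{X}^{\mathsf{d}}\bm{Y}^{\mathsf{d}\top}$ would be absorbed into $\Delta_{ij}$.

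To verify $g_{ij}\sim \mathcal{N}(0,v_{ij}^{\star})$, I would invoke Theorem~\ref{thm:low-rank-factor-master-bound-simple} to conclude that $\bm{Z}_{\bm{X}}^{\top}\bm{e}_i$ and $\bm{Z}_{\bm{Y}}^{\top}\bm{e}_j$ are each $\mathcal{N}(\bm{0},(\sigma^2/p)(\bm{\Sigma}^{\star})^{-1})$, together with Remark~\ref{remark:pairwise-independence} to establish that they are (nearly) independent. The variance of their Gaussian combination then becomes
\[
\tfrac{\sigma^2}{p}\bm{Y}_{j,\cdot}^{\star}(\bm{\Sigma}^{\star})^{-1}\bm{Y}_{j,\cdot}^{\star\top} + \tfrac{\sigma^2}{p}\bm{X}_{i,\cdot}^{\star}(\bm{\Sigma}^{\star})^{-1}\bm{X}_{i,\cdot}^{\star\top} = \tfrac{\sigma^2}{p}\bigl(\|\bm{V}_{j,\cdot}^{\star}\|_2^2 + \|\bm{U}_{i,\cdot}^{\star}\|_2^2\bigr) = v_{ij}^{\star},
\]
where the last simplification uses $\bm{X}^{\star} = \bm{U}^{\star}(\bm{\Sigma}^{\star})^{1/2}$ and $\bm{Y}^{\star} = \bm{V}^{\star}(\bm{\Sigma}^{\star})^{1/2}$ from~\eqref{eq:defn-Xstar-Ystar}.

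Bounding $\Delta_{ij}$ breaks into three families of terms. The linear residuals such as $\bm{e}_i^{\top}\bm{\Psi}_{\bm{X}}\bm{Y}^{\star\top}\bm{e}_j$ are controlled by $\|\bm{\Psi}_{\bm{X}}\|_{2,\infty}\|\bm{Y}_{j,\cdot}^{\star}\|_2 \leq \sqrt{\sigma_{\max}}\|\bm{V}_{j,\cdot}^{\star}\|_2\|\bm{\Psi}_{\bm{X}}\|_{2,\infty}$; plugging in \eqref{eq:thm-low-rank-residual-size} and its $\bm{Y}$-counterpart produces the first summand of the claimed bound (with the extra $\sqrt{\kappa}$ absorbed into $\kappa^8$). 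The quadratic Gaussian term $\bm{e}_i^{\top}\bm{Z}_{\bm{X}}\bm{Z}_{\bm{Y}}^{\top}\bm{e}_j$ is controlled by Cauchy--Schwarz together with standard Gaussian tail bounds $\|\bm{Z}_{\bm{X}}^{\top}\bm{e}_i\|_2, \|\bm{Z}_{\bm{Y}}^{\top}\bm{e}_j\|_2 \lesssim \sigma\sqrt{r\log n/(p\sigma_{\min})}$, yielding an $O(\sigma^2 r\log n/(p\sigma_{\min}))$ bound that matches the second summand $\bigl(\sigma\sqrt{\kappa^3\mu r\log n/p}/\sqrt{\sigma_{\min}}\bigr)^2$. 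The remaining mixed terms, e.g.\ $\bm{e}_i^{\top}\bm{Z}_{\bm{X}}\bm{\Psi}_{\bm{Y}}^{\top}\bm{e}_j$ and $\bm{e}_i^{\top}\bm{\Psi}_{\bm{X}}\bm{\Psi}_{\bm{Y}}^{\top}\bm{e}_j$, combine the above estimates and are of strictly smaller order. The approximation error $\|\bm{M}^{\mathsf{d}} - \bm{X}^{\mathsf{d}}\bm{Y}^{\mathsf{d}\top}\|_\infty$ quantified in Section~\ref{subsec:Approximate-equivalence} can be absorbed similarly.

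The hard part will be rigorously justifying the near-independence of $\bm{Z}_{\bm{X}}^{\top}\bm{e}_i$ and $\bm{Z}_{\bm{Y}}^{\top}\bm{e}_j$ sufficiently to assert \emph{exact} Gaussianity of $g_{ij}$, or, failing that, to absorb the dependency error into $\Delta_{ij}$ without violating the claimed rate. This requires unpacking the explicit construction of $\bm{Z}_{\bm{X}}$ and $\bm{Z}_{\bm{Y}}$ from the proof of Theorem~\ref{thm:low-rank-factor-master-bound-simple}, which should identify $\bm{Z}_{\bm{X}}^{\top}\bm{e}_i$ (resp.\ $\bm{Z}_{\bm{Y}}^{\top}\bm{e}_j$) as a linear functional of the $i$-th row (resp.\ $j$-th column) of $\mathcal{P}_{\Omega}(\bm{E})$ up to higher-order perturbations; for $i\ne j$ these depend on disjoint coordinates of the independent noise $\bm{E}$ and hence are genuinely independent, while careful bookkeeping is needed at $i=j$ to track any diagonal interaction. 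A secondary technical nuisance is to ensure that the naive Cauchy--Schwarz bound on $\bm{e}_i^{\top}\bm{Z}_{\bm{X}}\bm{Z}_{\bm{Y}}^{\top}\bm{e}_j$ is not wasteful enough to break the advertised residual bound; the slack in the $\kappa^3\mu$ factor appears to accommodate this.
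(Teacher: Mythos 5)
Your overall architecture matches the paper's: expand $M_{ij}^{\mathsf{d}}-M_{ij}^{\star}=\bm{e}_i^{\top}(\overline{\bm{X}}^{\mathsf{d}}-\bm{X}^{\star})\bm{Y}^{\star\top}\bm{e}_j+\bm{e}_i^{\top}\bm{X}^{\star}(\overline{\bm{Y}}^{\mathsf{d}}-\bm{Y}^{\star})^{\top}\bm{e}_j+\bm{e}_i^{\top}(\overline{\bm{X}}^{\mathsf{d}}-\bm{X}^{\star})(\overline{\bm{Y}}^{\mathsf{d}}-\bm{Y}^{\star})^{\top}\bm{e}_j$, plug in Theorem~\ref{thm:low-rank-factor-master-bound}'s decomposition, identify $\Theta_{ij}=\bm{e}_i^{\top}\bm{Z}_{\bm{X}}\bm{Y}^{\star\top}\bm{e}_j+\bm{e}_i^{\top}\bm{X}^{\star}\bm{Z}_{\bm{Y}}^{\top}\bm{e}_j$ as the leading Gaussian part (Lemma~\ref{lemma:entry-normal}), and bound everything else in $\Lambda_{ij}$ (Lemma~\ref{lemma:entry-residual}). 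The paper keeps the quadratic cross-term $(\overline{\bm{X}}^{\mathsf{d}}-\bm{X}^{\star})(\overline{\bm{Y}}^{\mathsf{d}}-\bm{Y}^{\star})^{\top}$ whole and hits it with the $\ell_{2,\infty}$ bound \eqref{eq:F-d-2-infty}, whereas you split it into $\bm{Z}\bm{Z}$, $\bm{Z}\bm{\Psi}$, $\bm{\Psi}\bm{Z}$, $\bm{\Psi}\bm{\Psi}$ pieces; both land within the stated rate, and the paper's single bound is actually simpler and less delicate.

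One slip worth fixing: you assert that for $i\neq j$ the random vectors $\bm{Z}_{\bm{X}}^{\top}\bm{e}_i$ and $\bm{Z}_{\bm{Y}}^{\top}\bm{e}_j$ ``depend on disjoint coordinates of $\bm{E}$ and hence are genuinely independent,'' with $i=j$ as the only exceptional case. This is not right. In the construction of Lemma~\ref{lemma:approx-gaussian-leading-term}, $\bm{Z}_{\bm{X}}^{\top}\bm{e}_i$ is built from the $i$th \emph{row} $\{(\delta_{ik},E_{ik})\}_{k=1}^n$ and $\bm{Z}_{\bm{Y}}^{\top}\bm{e}_j$ from the $j$th \emph{column} $\{(\delta_{kj},E_{kj})\}_{k=1}^n$. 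A row and a column of an $n\times n$ array always intersect at exactly one entry --- $(i,j)$ --- so the two vectors are never exactly independent, regardless of whether $i=j$. The decoupling step in the paper (Claim~\ref{claim:independence}) is precisely about surgically removing this single shared coordinate: replace $\bm{Z}_{\bm{X}}^{\top}\bm{e}_i$ by $\widetilde{\bm{Z}}_{\bm{X}}^{\top}\bm{e}_i$ that drops the $k=j$ summand (and adjusts the conditional-covariance normalization from $\bm{S}$ to $\bm{S}_{-j}$), show the swap is $\mathcal{N}(\bm{0},\sigma^2(\bm{\Sigma}^{\star})^{-1}/p)$-distributed and genuinely independent of $\bm{Z}_{\bm{Y}}^{\top}\bm{e}_j$, and absorb $\bm{\Delta}_i\triangleq\bm{Z}_{\bm{X}}^{\top}\bm{e}_i-\widetilde{\bm{Z}}_{\bm{X}}^{\top}\bm{e}_i$ into $\theta_{ij}$. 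Your plan would stall if you only attempted this surgery on the diagonal; you need it for every $(i,j)$, and the single-entry overlap is small enough that the resulting $\|\bm{\Delta}_i\|_2$ bound fits comfortably within the claimed rate.
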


\subsection{Near equivalence between convex and nonconvex estimators \label{subsec:Approximate-equivalence}}

Note that Theorem~\ref{thm:low-rank-factor-master-bound} and Theorem~\ref{thm:entries-master-decomposition-augment}
are concerned with the de-biased estimators built upon
both convex and nonconvex estimates.
At first glance, one needs to establish theoretical guarantees for
each of them separately. Fortunately, as  alluded to previously (cf.~(\ref{eq:proximity-cvx-ncvx})), the
convex 
and nonconvex estimates 
are extremely close --- a fact that has been established in \cite{chen2019noisy}.
The proximity of these two estimates naturally extends to the de-biased estimators constructed
based on them. As a result,
it suffices to concentrate on proving the theorems for any of these
 estimators; the claims for the other one follow immediately.

The following key lemma formalizes this argument, 
which will be established in Appendix~\ref{sec:Proof-of-Lemma-connection}
(see also Figure~\ref{fig:equivalence} for numerical evidence). Before
continuing, we remind the readers of the key notation (see Appendix~\ref{sec:summary-estimators} for precise definitions): 
\begin{itemize}
\item $(\bm{X}^{\mathsf{ncvx}},\bm{Y}^{\mathsf{ncvx}})$: an approximate
solution to the nonconvex problem~(\ref{eq:ncvx}) (see Appendix~\ref{subsec:Algorithmic-details-nonconvex}); 
\item $\bm{M}^{\mathsf{cvx,d}},\bm{X}^{\mathsf{cvx,d}},\bm{Y}^{\mathsf{cvx,d}}$:
	the de-biased estimators built upon the convex optimizer $\bm{Z}^{\mathsf{cvx}}$;
\item $\bm{M}^{\mathsf{ncvx,d}},\bm{X}^{\mathsf{ncvx,d}},\bm{Y}^{\mathsf{ncvx,d}}$:
	the de-biased estimators built upon the nonconvex estimate $(\bm{X}^{\mathsf{ncvx}}, \bm{Y}^{\mathsf{ncvx}})$.
\end{itemize}
Our proximity result is this:
\begin{lemma}
	\label{lemma:connection-de-bias-shrunken}
Suppose
that the sample size obeys $n^{2}p\geq C\kappa^{4}\mu^{2}r^{2}n\log^{3}n$
for some sufficiently large constant $C>0$ and the noise satisfies
$\sigma\sqrt{(\kappa^{4}\mu nr\log n)/p}\leq c\sigma_{\min}$ for
some sufficiently small constant $c>0$. Set $\lambda=C_{\lambda}\sigma\sqrt{np}$
with some large enough constant $C_{\lambda}>0$.
\begin{enumerate}
\item With probability at least $1-O(n^{-10})$, one has \begin{subequations}
\begin{align}
\max\left\{ \left\Vert \bm{M}^{\mathsf{cvx,d}}-\bm{X}^{\mathsf{ncvx,d}}\bm{Y}^{\mathsf{ncvx,d}\top}\right\Vert _{\mathrm{F}},\left\Vert \bm{M}^{\mathsf{ncvx,d}}-\bm{X}^{\mathsf{ncvx,d}}\bm{Y}^{\mathsf{ncvx,d}\top}\right\Vert _{\mathrm{F}}\right\}  & \lesssim\frac{1}{n^{4}}\cdot\sigma\sqrt{\frac{n}{p}},\label{eq:matrix-equivalence}\\
\min_{\bm{R}\in\mathcal{O}^{r\times r}}\sqrt{\left\Vert \bm{X}^{\mathsf{cvx,d}}\bm{R}-\bm{X}^{\mathsf{ncvx,d}}\right\Vert _{\mathrm{F}}^{2}+\left\Vert \bm{Y}^{\mathsf{cvx,d}}\bm{R}-\bm{Y}^{\mathsf{ncvx,d}}\right\Vert _{\mathrm{F}}^{2}} & \lesssim\frac{1}{n^{3}}\cdot\frac{\sigma}{\sqrt{\sigma_{\min}}}\sqrt{\frac{n}{p}},\label{eq:low-rank-factor-equivalence}
\end{align}
\end{subequations}where $\mathcal{O}^{r\times r}$ is the set of
$r\times r$ rotation matrices. 
\item With probability exceeding $1-O(n^{-10})$, one has
\begin{equation}
\left\Vert \bm{M}^{\mathsf{ncvx,d}}-\left[\bm{X}^{\mathsf{ncvx}}\bm{Y}^{\mathsf{ncvx}\top}-p^{-1}\mathcal{P}_{T}\mathcal{P}_{\Omega}\left(\bm{X}^{\mathsf{ncvx}}\bm{Y}^{\mathsf{ncvx}\top}-\bm{M}\right)\right]\right\Vert _{\mathrm{F}}\lesssim\frac{1}{n^{4}}\cdot\sigma\sqrt{\frac{n}{p}},\label{eq:linearize-equivalence}
\end{equation}
where $T$ is the tangent space of the set of rank-$r$ matrices at
$\bm{X}^{\mathsf{ncvx}}\bm{Y}^{\mathsf{ncvx}\top}$. The same holds
true if we replace $\bm{X}^{\mathsf{ncvx}}\bm{Y}^{\mathsf{ncvx}\top}$
with $\bm{Z}^{\mathsf{cvx}}$  and
		replace $T$ with the tangent space at $\bm{Z}^{\mathsf{cvx},r}=\mathcal{P}_{\mathrm{rank}\text{-}r}(\bm{Z}^{\mathsf{cvx}})$ .
\end{enumerate}
\end{lemma}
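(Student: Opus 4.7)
}

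The plan is to build everything on top of the proximity result between the convex and nonconvex estimators established in \cite{chen2019noisy}, namely that under the stated conditions $\|\bm{Z}^{\mathsf{cvx}}-\bm{X}^{\mathsf{ncvx}}\bm{Y}^{\mathsf{ncvx}\top}\|_{\mathrm{F}}$ is bounded by $n^{-c}\sigma\sqrt{n/p}$ for some large exponent $c$ (and a similar bound holds for $\bm{Z}^{\mathsf{cvx}}-\bm{Z}^{\mathsf{cvx},r}$). Our task then splits naturally into two pieces: (i) show that the \emph{linear} de-biasing map $\bm{Z}\mapsto\bm{Z}-p^{-1}\mathcal{P}_{\Omega}(\bm{Z}-\bm{M})$ preserves proximity, and (ii) show that the \emph{nonlinear} rank-$r$ projection $\mathcal{P}_{\text{rank-}r}$ is stable at the relevant inputs and, moreover, is well approximated to first order by the tangent-space projection $\mathcal{P}_T$.

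For part (1), I will first introduce the pre-projection quantities $\bm{N}^{\mathsf{cvx}}\triangleq\bm{Z}^{\mathsf{cvx}}-p^{-1}\mathcal{P}_{\Omega}(\bm{Z}^{\mathsf{cvx}}-\bm{M})$ and $\bm{N}^{\mathsf{ncvx}}\triangleq\bm{X}^{\mathsf{ncvx}}\bm{Y}^{\mathsf{ncvx}\top}-p^{-1}\mathcal{P}_{\Omega}(\bm{X}^{\mathsf{ncvx}}\bm{Y}^{\mathsf{ncvx}\top}-\bm{M})$, and write $\bm{N}^{\mathsf{cvx}}-\bm{N}^{\mathsf{ncvx}}=(\mathcal{I}-p^{-1}\mathcal{P}_{\Omega})(\bm{Z}^{\mathsf{cvx}}-\bm{X}^{\mathsf{ncvx}}\bm{Y}^{\mathsf{ncvx}\top})$. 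Since the operator $\mathcal{I}-p^{-1}\mathcal{P}_{\Omega}$ has Frobenius-operator norm at most $1/p$ (deterministic) and a much better bound on low-rank inputs (via the usual $\|p^{-1}\mathcal{P}_{\Omega}-\mathcal{I}\|$ concentration under incoherence, see \cite{chen2019noisy}), the Frobenius distance between $\bm{N}^{\mathsf{cvx}}$ and $\bm{N}^{\mathsf{ncvx}}$ is still polynomially small. Next, I will check that both $\bm{N}^{\mathsf{cvx}}$ and $\bm{N}^{\mathsf{ncvx}}$ have a clean rank-$r$ spectral structure, i.e.\ their $r$-th singular value is $\gtrsim\sigma_{\min}$ while the $(r{+}1)$-th is $\ll\sigma_{\min}$; this gap makes $\mathcal{P}_{\text{rank-}r}$ locally $O(1)$-Lipschitz in Frobenius norm, which yields \eqref{eq:matrix-equivalence}. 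The low-rank factor bound \eqref{eq:low-rank-factor-equivalence} then follows from Wedin's $\sin\Theta$ theorem (applied with the same spectral gap) combined with the fact that the optimal rotational alignment of balanced factors is determined by the SVDs.

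For part (2), writing $\bm{A}\triangleq\bm{X}^{\mathsf{ncvx}}\bm{Y}^{\mathsf{ncvx}\top}\in T$ and $\bm{E}\triangleq-p^{-1}\mathcal{P}_{\Omega}(\bm{A}-\bm{M})$, the goal is to prove $\mathcal{P}_{\text{rank-}r}(\bm{A}+\bm{E})\approx\bm{A}+\mathcal{P}_T(\bm{E})$. Decomposing $\bm{E}=\mathcal{P}_T(\bm{E})+\mathcal{P}_{T^\perp}(\bm{E})$, I will note that (a) $\bm{A}+\mathcal{P}_T(\bm{E})$ already lies in $T$ and in particular is rank at most $r$, and (b) the orthogonal part $\mathcal{P}_{T^\perp}(\bm{E})$ has operator norm at most $\lambda/p$ up to lower-order terms (thanks to the KKT conditions of either the convex or the regularized nonconvex program, which control $\mathcal{P}_{T^\perp}(p^{-1}\mathcal{P}_{\Omega}(\bm{X}^{\mathsf{ncvx}}\bm{Y}^{\mathsf{ncvx}\top}-\bm{M}))$; this is essentially the argument already exploited in \cite{chen2019noisy}). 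A second-order perturbation expansion of the best rank-$r$ approximation (of the flavor of Wedin/Davis--Kahan, but at the matrix level rather than the subspace level) then gives
\begin{equation*}
\mathcal{P}_{\text{rank-}r}(\bm{A}+\bm{E})-\bigl(\bm{A}+\mathcal{P}_T(\bm{E})\bigr)=O\!\left(\frac{\|\mathcal{P}_{T^\perp}(\bm{E})\|^{2}}{\sigma_{\min}(\bm{A})}\right)
\end{equation*}
in Frobenius norm, plus cross terms of the same order. Plugging in $\|\mathcal{P}_{T^\perp}(\bm{E})\|\lesssim\lambda/p\asymp\sigma\sqrt{n/p}$ and $\sigma_{\min}(\bm{A})\asymp\sigma_{\min}$ yields a bound of order $\sigma^2 n/(p\sigma_{\min})$, which, under the noise condition $\sigma\sqrt{(\kappa^{4}\mu nr\log n)/p}\leq c\sigma_{\min}$, is already much smaller than $\sigma\sqrt{n/p}$; refining the analysis by isolating the ``signal'' component $\mathcal{P}_{\Omega}(\bm{A}-\bm{M}^\star)$ from the noise component $\mathcal{P}_{\Omega}(\bm{E})$ (and invoking the sharper bounds from \cite{chen2019noisy} on $\|\bm{X}^{\mathsf{ncvx}}\bm{Y}^{\mathsf{ncvx}\top}-\bm{M}^\star\|_\infty$ and $\|\mathcal{P}_{T^\perp}\cdot\|$) lets us gain further $1/n^{c}$ factors and reach the claimed $n^{-4}\sigma\sqrt{n/p}$ bound in \eqref{eq:linearize-equivalence}.

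The main obstacle I anticipate is this quantitative second-order SVD perturbation step: the Frobenius-norm bound $n^{-4}\sigma\sqrt{n/p}$ is extremely tight, so a generic ``$\|\bm{E}\|^{2}/\sigma_{\min}$'' bound is not sharp enough and one must separately and sharply control $\|\mathcal{P}_{T^\perp}(\bm{E})\|$, $\|\mathcal{P}_T(\bm{E})\|$, and the spectral gap of $\bm{A}+\bm{E}$, reusing the fine-grained estimates (KKT slackness and leave-one-out style bounds) developed in \cite{chen2019noisy,ma2017implicit}. The convex case requires the extra preliminary step of passing from $\bm{Z}^{\mathsf{cvx}}$ to its rank-$r$ truncation $\bm{Z}^{\mathsf{cvx},r}$ via the same proximity inequality, after which the nonconvex argument applies verbatim with $T$ replaced by the tangent space at $\bm{Z}^{\mathsf{cvx},r}$.
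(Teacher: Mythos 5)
Your proposal takes a genuinely different route from the paper's proof, and that route has several gaps, one of which is a factual error.

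\paragraph{The shared scaffolding.} You and the paper both start from the chen2019noisy proximity bound and from a KKT-style decomposition of $\mathcal{P}_{\Omega}(\bm{X}^{\mathsf{ncvx}}\bm{Y}^{\mathsf{ncvx}\top}-\bm{M})$, and both need a spectral-gap--controlled perturbation of the best rank-$r$ approximation (the paper's Lemma~\ref{lemma:SVD-pert}). Beyond that the approaches diverge: the paper picks an explicit reference matrix $\bm{U}(\bm{\Sigma}+\tfrac{\lambda}{p}\bm{I}_r)\bm{V}^\top$ and shows each of the three objects in \eqref{eq:matrix-equivalence} and the linearized estimator in \eqref{eq:linearize-equivalence} is within $n^{-4}\lambda/p$ of it; you instead compare the de-biased objects to each other via a ``Lipschitz $\mathcal{P}_{\mathrm{rank}\text{-}r}$'' argument and attack \eqref{eq:linearize-equivalence} via a second-order SVD expansion.

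\paragraph{Gap 1: the third object in \eqref{eq:matrix-equivalence}.} Your Part 1 only compares $\bm{M}^{\mathsf{cvx,d}}=\mathcal{P}_{\mathrm{rank}\text{-}r}(\bm{N}^{\mathsf{cvx}})$ to $\bm{M}^{\mathsf{ncvx,d}}=\mathcal{P}_{\mathrm{rank}\text{-}r}(\bm{N}^{\mathsf{ncvx}})$. But the lemma also requires showing both are within $n^{-4}\sigma\sqrt{n/p}$ of $\bm{X}^{\mathsf{ncvx,d}}\bm{Y}^{\mathsf{ncvx,d}\top}$, which is \emph{not} the rank-$r$ projection of any pre-projection quantity --- it is the product of the separately de-shrunken factors \eqref{eq:defn-Xd-Yd}. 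Relating this product to $\mathcal{P}_{\mathrm{rank}\text{-}r}(\bm{N}^{\mathsf{ncvx}})$ requires the near-SVD factorization $\bm{X}^{\mathsf{ncvx}}=\bm{U}\bm{\Sigma}^{1/2}\bm{Q}$, $\bm{Y}^{\mathsf{ncvx}}=\bm{V}\bm{\Sigma}^{1/2}\bm{Q}^{-\top}$ from chen2019noisy plus the balancedness bound \eqref{eq:X-Y-balance}; that computation (the paper's Claim~\ref{claim:connection-ncvx-ref}) is entirely absent from your plan.

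\paragraph{Gap 2: a wrong claim about the tangent space.} In Part 2 you assert that $\bm{A}+\mathcal{P}_T(\bm{E})$ ``already lies in $T$ and in particular is rank at most $r$.'' This is false: elements of the tangent space $T$ at a rank-$r$ matrix generically have rank up to $2r$, so $\bm{A}+\mathcal{P}_T(\bm{E})$ need not be rank-$r$. The paper sidesteps this by using the explicit structure $\mathcal{P}_T(\bm{E})=\tfrac{\lambda}{p}\bm{U}\bm{V}^\top-\tfrac{1}{p}\mathcal{P}_T(\bm{R})$, so that $\bm{A}+\mathcal{P}_T(\bm{E})=\bm{U}(\bm{\Sigma}+\tfrac{\lambda}{p}\bm{I}_r)\bm{V}^\top-\tfrac{1}{p}\mathcal{P}_T(\bm{R})$ is an exactly rank-$r$ matrix plus a tiny correction, not because tangent vectors have rank $r$.

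\paragraph{Gap 3: where $n^{-4}$ actually comes from.} Your second-order bound $\|\mathcal{P}_{T^\perp}(\bm{E})\|^2/\sigma_{\min}\asymp\sigma^2n/(p\sigma_{\min})$ exceeds the target $n^{-4}\sigma\sqrt{n/p}$ by a factor of roughly $n^4\sigma\sqrt{n/p}/\sigma_{\min}$, and the noise condition only makes $\sigma\sqrt{n/p}/\sigma_{\min}$ small --- it provides no $n^{-4}$. You propose to recover the missing factor by ``isolating signal from noise'' and invoking $\|\bm{X}^{\mathsf{ncvx}}\bm{Y}^{\mathsf{ncvx}\top}-\bm{M}^\star\|_\infty$-type bounds, but those are at best $O(\sigma\sqrt{\mu r\log n/(np)})$-ish, also nowhere near $n^{-4}$. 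The actual source of the polynomial factor is the design of the nonconvex algorithm: the stopping criterion \eqref{eq:stopping_criterion} enforces $\|\nabla f\|_{\mathrm{F}}\leq n^{-5}\tfrac{\lambda}{p}\sqrt{\sigma_{\min}}$ and the balancedness bound \eqref{eq:X-Y-balance} scales as $n^{-5}$, which through the KKT decomposition gives $\|\mathcal{P}_T(\bm{R})\|_{\mathrm{F}}\lesssim\kappa\lambda/n^5$. It is this polynomially small $\mathcal{P}_T(\bm{R})$ residual, not the size of the noise, that produces $n^{-4}\sigma\sqrt{n/p}$. Without identifying this mechanism, the second-order SVD expansion cannot close the gap.
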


In short, the first part of Lemma~\ref{lemma:connection-de-bias-shrunken} tells us that 
\begin{subequations}
\begin{align}
	\bm{M}^{\mathsf{cvx,d}} &\approx \bm{X}^{\mathsf{ncvx,d}}\bm{Y}^{\mathsf{ncvx,d}\top} \approx \bm{M}^{\mathsf{ncvx,d}}  
	, \\
	\big( \bm{X}^{\mathsf{cvx,d}}, \bm{Y}^{\mathsf{cvx,d}} \big)  &\approx \big( \bm{X}^{\mathsf{ncvx,d}}, \bm{Y}^{\mathsf{ncvx,d}} \big)  \qquad\qquad (\text{up to global rotation}),
\end{align}
\end{subequations}
whereas the second part of Lemma~\ref{lemma:connection-de-bias-shrunken}
justifies that the proposed de-biased estimator is closely approximated
by a linearized version (cf.~(\ref{eq:debiased-estimator-T})). Note that this
linearized form bears a resemblance to the de-biased estimators developed
for sparse linear regression \cite{zhang2014confidence,van2014asymptotically,javanmard2014confidence}.

\begin{figure}

\centering

\includegraphics[scale=0.35]{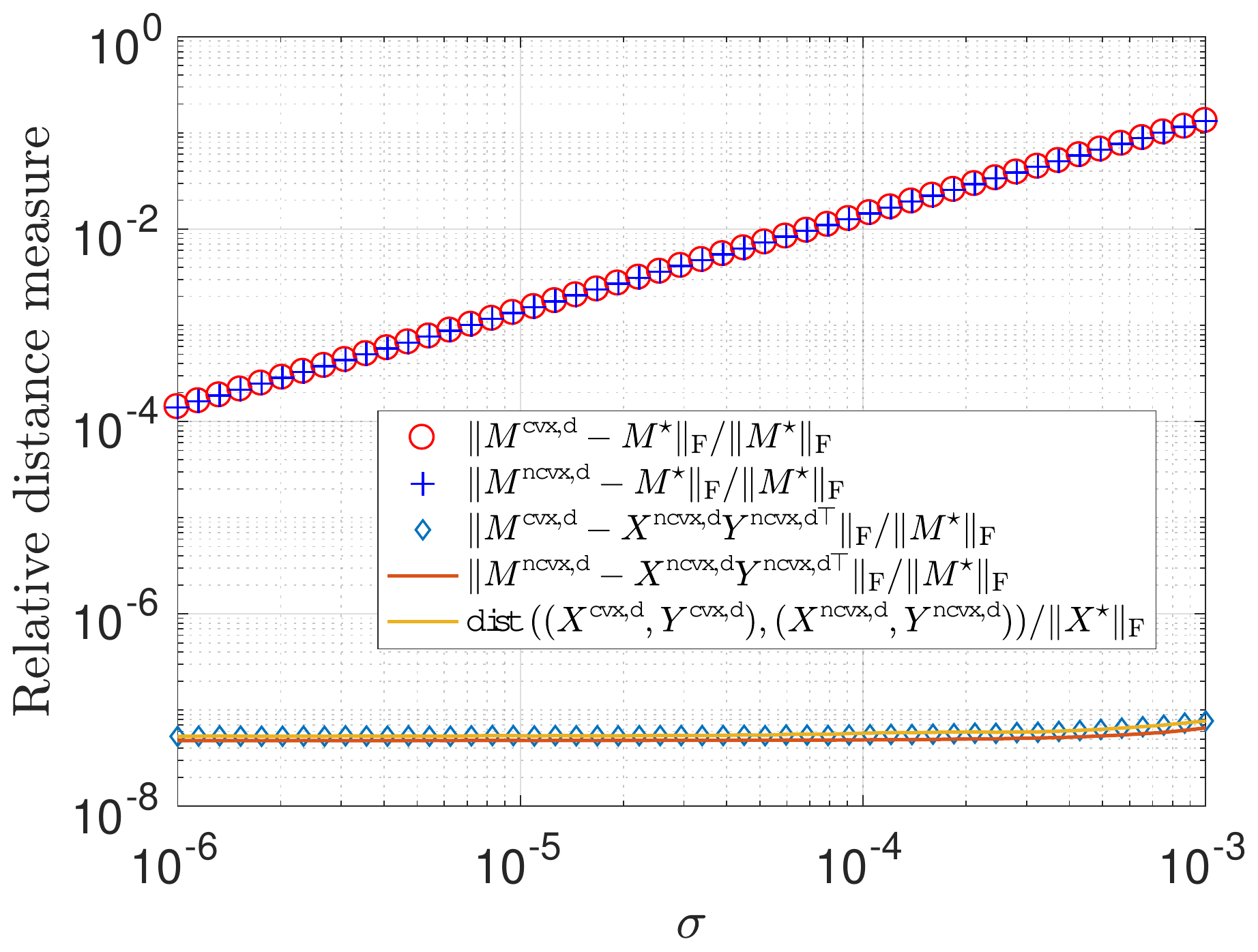}\caption{The relative estimation errors of $\bm{M}^{\mathsf{cvx,d}}$ and $\bm{M}^{\mathsf{ncvx,d}}$
and related quantities in Lemma~\ref{lemma:connection-de-bias-shrunken}
vs.~the standard deviation $\sigma$ of the noise. Here, $\mathsf{dist}((\bm{X}^{\mathsf{cvx,d}},\bm{Y}^{\mathsf{cvx,d}}),(\bm{X}^{\mathsf{ncvx,d}},\bm{Y}^{\mathsf{ncvx,d}}))$
is defined to be the left-hand side of~(\ref{eq:low-rank-factor-equivalence}).
The results, which are averaged over 20 trials, are reported for $n=1000$, $r=5$, $p=0.2$, and $\lambda=5\sigma\sqrt{np}$.
As can be seen, the difference between 	$\bm{M}^{\mathsf{cvx,d}}$, $\bm{M}^{\mathsf{ncvx,d}}$ and $\bm{X}^{\mathsf{ncvx,d}}\bm{Y}^{\mathsf{ncvx,d}\top}$, as well as the distance $\mathsf{dist}((\bm{X}^{\mathsf{cvx,d}},\bm{Y}^{\mathsf{cvx,d}}),(\bm{X}^{\mathsf{ncvx,d}},\bm{Y}^{\mathsf{ncvx,d}}))$, 
are all significantly smaller than the estimation errors. 
\label{fig:equivalence}}

\end{figure}

With Lemma~\ref{lemma:connection-de-bias-shrunken} in place, we
shall, from now on, focus on proving the main theorems for the nonconvex estimators, viz.
\begin{enumerate}
\item establishing Theorem~\ref{thm:low-rank-factor-master-bound} for
the de-shrunken low-rank factors $(\bm{X}^{\mathsf{ncvx,d}},\bm{Y}^{\mathsf{ncvx,d}})$;
\item establishing Theorem~\ref{thm:entries-master-decomposition-augment} for
the de-biased matrix estimator $\bm{X}^{\mathsf{ncvx,d}}\bm{Y}^{\mathsf{ncvx,d}\top}$.
\end{enumerate}
To simplify the presentation hereafter, we shall use the following notation
throughout the rest of this section:
\begin{itemize}
	\item $(\bm{X},\bm{Y})$: the nonconvex estimate $(\bm{X}^{\mathsf{ncvx}},\bm{Y}^{\mathsf{ncvx}})$;
\item $(\bm{X}^{\mathsf{d}},\bm{Y}^{\mathsf{d}})$: the de-shrunken estimate
	defined in~(\ref{eq:defn-Xd-Yd}) based on $(\bm{X},\bm{Y})=(\bm{X}^{\mathsf{ncvx}},\bm{Y}^{\mathsf{ncvx}})$;
\item $\bm{M}^{\mathsf{d}}\triangleq \bm{X}^{\mathsf{d}}\bm{Y}^{\mathsf{d}\top}$.
\end{itemize}

\subsection{A precise characterization of the de-shrunken low-rank factors}

We start with a precise characterization of the de-shrunken low-rank
factors $\bm{X}^{\mathsf{d}}$ and $\bm{Y}^{\mathsf{d}}$, which paves
the way for demonstrating both Theorem~\ref{thm:low-rank-factor-master-bound}
and Theorem~\ref{thm:entries-master-decomposition-augment}.

\begin{lemma}[\textsf{Decompositions of low-rank factors}]\label{lemma:decomp-low-rank}Denote
\begin{equation}
\bm{A}\triangleq\frac{1}{p}\mathcal{P}_{\Omega}\left(\bm{X}\bm{Y}^{\top}-\bm{X}^{\star}\bm{Y}^{\star\top}\right)-\left(\bm{X}\bm{Y}^{\top}-\bm{X}^{\star}\bm{Y}^{\star\top}\right).\label{eq:defn-A}
\end{equation}
One has the following decompositions for $\bm{X}^{\mathsf{d}}$ and
$\bm{Y}^{\mathsf{d}}$ \begin{subequations}\label{subeq:decomposition-X-d-Y-d}
\begin{align}
\bm{X}^{\mathsf{d}} & =\frac{1}{p}\mathcal{P}_{\Omega}\left(\bm{E}\right)\bm{Y}^{\mathsf{d}}\left(\bm{Y}^{\mathsf{d}\top}\bm{Y}^{\mathsf{d}}\right)^{-1}+\bm{X}^{\star}\bm{Y}^{\star\top}\bm{Y}^{\mathsf{d}}\left(\bm{Y}^{\mathsf{d}\top}\bm{Y}^{\mathsf{d}}\right)^{-1}-\bm{A}\bm{Y}^{\mathsf{d}}\left(\bm{Y}^{\mathsf{d}\top}\bm{Y}^{\mathsf{d}}\right)^{-1}\nonumber \\
 & \quad+\nabla_{\bm{X}}f\left(\bm{X},\bm{Y}\right)\Big(\bm{I}_{r}+\frac{\lambda}{p}\left(\bm{Y}^{\top}\bm{Y}\right)^{-1}\Big)^{1/2}\left(\bm{Y}^{\mathsf{d}\top}\bm{Y}^{\mathsf{d}}\right)^{-1}+\bm{X}\bm{\Delta}_{\mathsf{balancing}};\label{eq:X-d-identity}\\
\bm{Y}^{\mathsf{d}} & =\frac{1}{p}\left[\mathcal{P}_{\Omega}\left(\bm{E}\right)\right]^{\top}\bm{X}^{\mathsf{d}}\left(\bm{X}^{\mathsf{d}\top}\bm{X}^{\mathsf{d}}\right)^{-1}+\bm{Y}^{\star}\bm{X}^{\star\top}\bm{X}^{\mathsf{d}}\left(\bm{X}^{\mathsf{d}\top}\bm{X}^{\mathsf{d}}\right)^{-1}-\bm{A}^{\top}\bm{X}^{\mathsf{d}}\left(\bm{X}^{\mathsf{d}\top}\bm{X}^{\mathsf{d}}\right)^{-1}\nonumber \\
 & \quad+\nabla_{\bm{Y}}f\left(\bm{X},\bm{Y}\right)\Big(\bm{I}_{r}+\frac{\lambda}{p}\left(\bm{X}^{\top}\bm{X}\right)^{-1}\Big)^{1/2}\left(\bm{X}^{\mathsf{d}\top}\bm{X}^{\mathsf{d}}\right)^{-1}-\bm{Y}\bm{\Delta}_{\mathsf{balancing}}.\label{eq:Y-d-identity-1}
\end{align}
\end{subequations}Here, we denote
\begin{equation}
\bm{\Delta}_{\mathsf{balancing}}\triangleq\Big(\bm{I}_{r}+\frac{\lambda}{p}\left(\bm{X}^{\top}\bm{X}\right)^{-1}\Big)^{1/2}-\Big(\bm{I}_{r}+\frac{\lambda}{p}\left(\bm{Y}^{\top}\bm{Y}\right)^{-1}\Big)^{1/2},\label{eq:defn-balance-diff}
\end{equation}
which measures the imbalance between the low-rank factors $\bm{X}$ and
$\bm{Y}$.\end{lemma}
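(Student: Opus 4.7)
The lemma is a purely algebraic identity—no probability is needed—so the plan is to expand the gradient of $f$, rearrange, and then carefully convert the bare factors $\bm{X}$ and $\bm{Y}$ into their de-shrunken counterparts, keeping track of the residual that gives rise to $\bm{\Delta}_{\mathsf{balancing}}$. For brevity write $\bm{\Lambda}_{\bm{X}} \triangleq (\bm{I}_r + \frac{\lambda}{p}(\bm{X}^\top\bm{X})^{-1})^{1/2}$ and $\bm{\Lambda}_{\bm{Y}} \triangleq (\bm{I}_r + \frac{\lambda}{p}(\bm{Y}^\top\bm{Y})^{-1})^{1/2}$, so that $\bm{X}^{\mathsf{d}} = \bm{X}\bm{\Lambda}_{\bm{X}}$, $\bm{Y}^{\mathsf{d}} = \bm{Y}\bm{\Lambda}_{\bm{Y}}$, and $\bm{\Delta}_{\mathsf{balancing}} = \bm{\Lambda}_{\bm{X}} - \bm{\Lambda}_{\bm{Y}}$. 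I will derive \eqref{eq:X-d-identity}; the companion identity \eqref{eq:Y-d-identity-1} then follows by swapping the roles of $\bm{X}$ and $\bm{Y}$.

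First I compute
\[
\nabla_{\bm{X}}f(\bm{X},\bm{Y}) \;=\; \tfrac{1}{p}\mathcal{P}_{\Omega}(\bm{X}\bm{Y}^{\top}-\bm{M})\bm{Y} + \tfrac{\lambda}{p}\bm{X},
\]
and invoke the noise decomposition $\mathcal{P}_{\Omega}(\bm{M}) = \mathcal{P}_{\Omega}(\bm{M}^{\star}) + \mathcal{P}_{\Omega}(\bm{E})$ together with the definition \eqref{eq:defn-A} of $\bm{A}$, namely $\tfrac{1}{p}\mathcal{P}_{\Omega}(\bm{X}\bm{Y}^{\top}-\bm{X}^{\star}\bm{Y}^{\star\top}) = \bm{A} + (\bm{X}\bm{Y}^{\top}-\bm{X}^{\star}\bm{Y}^{\star\top})$. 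Substituting yields
\[
\bm{X}\bm{Y}^{\top}\bm{Y} + \tfrac{\lambda}{p}\bm{X} \;=\; \nabla_{\bm{X}}f(\bm{X},\bm{Y}) + \bm{X}^{\star}\bm{Y}^{\star\top}\bm{Y} - \bm{A}\bm{Y} + \tfrac{1}{p}\mathcal{P}_{\Omega}(\bm{E})\bm{Y}.
\]

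Second, I observe the key commutativity fact that since $\bm{\Lambda}_{\bm{Y}}^{2}$ is a function of $\bm{Y}^{\top}\bm{Y}$, it commutes with $\bm{Y}^{\top}\bm{Y}$, so
\[
\bm{Y}^{\mathsf{d}\top}\bm{Y}^{\mathsf{d}} \;=\; \bm{\Lambda}_{\bm{Y}}\bm{Y}^{\top}\bm{Y}\bm{\Lambda}_{\bm{Y}} \;=\; \bm{Y}^{\top}\bm{Y} + \tfrac{\lambda}{p}\bm{I}_{r},
\]
which is precisely the matrix appearing on the left-hand side of the displayed equation above as a multiplier of $\bm{X}$. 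Multiplying both sides on the right by $(\bm{Y}^{\mathsf{d}\top}\bm{Y}^{\mathsf{d}})^{-1}$ gives
\[
\bm{X} \;=\; \nabla_{\bm{X}}f(\bm{X},\bm{Y})(\bm{Y}^{\mathsf{d}\top}\bm{Y}^{\mathsf{d}})^{-1} + \bigl[\bm{X}^{\star}\bm{Y}^{\star\top} - \bm{A} + \tfrac{1}{p}\mathcal{P}_{\Omega}(\bm{E})\bigr]\bm{Y}(\bm{Y}^{\mathsf{d}\top}\bm{Y}^{\mathsf{d}})^{-1}.
\]

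Third, the identity in the statement is expressed in terms of $\bm{Y}^{\mathsf{d}}(\bm{Y}^{\mathsf{d}\top}\bm{Y}^{\mathsf{d}})^{-1}$ rather than $\bm{Y}(\bm{Y}^{\mathsf{d}\top}\bm{Y}^{\mathsf{d}})^{-1}$, so I right-multiply the previous display by $\bm{\Lambda}_{\bm{Y}}$. Using once more that $\bm{\Lambda}_{\bm{Y}}$ commutes with $(\bm{Y}^{\mathsf{d}\top}\bm{Y}^{\mathsf{d}})^{-1}$ (both are functions of $\bm{Y}^{\top}\bm{Y}$), and that $\bm{Y}\bm{\Lambda}_{\bm{Y}} = \bm{Y}^{\mathsf{d}}$, I obtain
\[
\bm{X}\bm{\Lambda}_{\bm{Y}} \;=\; \nabla_{\bm{X}}f(\bm{X},\bm{Y})\bm{\Lambda}_{\bm{Y}}(\bm{Y}^{\mathsf{d}\top}\bm{Y}^{\mathsf{d}})^{-1} + \bigl[\bm{X}^{\star}\bm{Y}^{\star\top} - \bm{A} + \tfrac{1}{p}\mathcal{P}_{\Omega}(\bm{E})\bigr]\bm{Y}^{\mathsf{d}}(\bm{Y}^{\mathsf{d}\top}\bm{Y}^{\mathsf{d}})^{-1}.
\]
Finally, the left-hand side differs from $\bm{X}^{\mathsf{d}} = \bm{X}\bm{\Lambda}_{\bm{X}}$ by exactly $\bm{X}(\bm{\Lambda}_{\bm{X}} - \bm{\Lambda}_{\bm{Y}}) = \bm{X}\bm{\Delta}_{\mathsf{balancing}}$, so writing $\bm{X}\bm{\Lambda}_{\bm{Y}} = \bm{X}^{\mathsf{d}} - \bm{X}\bm{\Delta}_{\mathsf{balancing}}$ and rearranging reproduces \eqref{eq:X-d-identity}. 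The parallel derivation starting from $\nabla_{\bm{Y}}f(\bm{X},\bm{Y})$ and using $\bm{X}^{\mathsf{d}\top}\bm{X}^{\mathsf{d}} = \bm{X}^{\top}\bm{X} + \tfrac{\lambda}{p}\bm{I}_{r}$ yields \eqref{eq:Y-d-identity-1}, with the sign on $\bm{\Delta}_{\mathsf{balancing}}$ flipping because now it is $\bm{\Lambda}_{\bm{X}}$ that plays the role of the ``right-multiplied'' factor. There is no genuine obstacle in the argument—the one point to handle with care is the bookkeeping at the last step, where the asymmetry between $\bm{\Lambda}_{\bm{X}}$ (which defines $\bm{X}^{\mathsf{d}}$) and $\bm{\Lambda}_{\bm{Y}}$ (which emerges naturally from the first-order condition) produces precisely the correction term $\bm{X}\bm{\Delta}_{\mathsf{balancing}}$.
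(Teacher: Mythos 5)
Your proof is correct and takes essentially the same route as the paper: expand $\nabla_{\bm{X}}f$, insert the $\bm{A}$-decomposition, exploit that $\bm{Y}^{\mathsf{d}\top}\bm{Y}^{\mathsf{d}}=\bm{Y}^{\top}\bm{Y}+\tfrac{\lambda}{p}\bm{I}_r$ together with commutativity of functions of $\bm{Y}^{\top}\bm{Y}$, and peel off $\bm{X}\bm{\Delta}_{\mathsf{balancing}}$ at the end. The only cosmetic difference is the order in which you apply $(\bm{Y}^{\mathsf{d}\top}\bm{Y}^{\mathsf{d}})^{-1}$ and $\bm{\Lambda}_{\bm{Y}}$, which doesn't change the argument.
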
\begin{proof}The claims follow from straightforward
algebraic manipulations; see Appendix~\ref{subsec:Proof-of-Lemma-decomposition-low-rank}.
\end{proof}

We make a few observations regarding Lemma~\ref{lemma:decomp-low-rank}.
Take the decomposition of $\bm{X}^{\mathsf{d}}$~(\ref{eq:X-d-identity})
as an example:
\begin{itemize}
\item First, the term $\bm{A}\bm{Y}^{\mathsf{d}}(\bm{Y}^{\mathsf{d}\top}\bm{Y}^{\mathsf{d}})^{-1}$
vanishes when we have full observations, i.e.~$p=1$. Second, the
terms involving $\nabla_{\bm{X}}f(\bm{X},\bm{Y})$ and $\bm{\Delta}_{\mathsf{balancing}}$
are both zero if $(\bm{X},\bm{Y})$ is an exact stationary point of
$f(\cdot,\cdot)$; to see this, it is not hard to verify that any
stationary point of $f(\cdot,\cdot)$ necessarily satisfies $\bm{X}^{\top}\bm{X}=\bm{Y}^{\top}\bm{Y}$,
which in turn implies $\bm{\Delta}_{\mathsf{balancing}}=\bm{0}$.
Consequently, the last three terms in~(\ref{eq:X-d-identity}) are
expected to be small when $p$ is sufficiently large and $(\bm{X},\bm{Y})$
is near a stationary point.
\item Turning to the first two terms in~(\ref{eq:X-d-identity}), we note
that the second term of~(\ref{eq:X-d-identity}) is close to $\bm{X}^{\star}$
(up to rotation) if $\bm{Y}^{\mathrm{d}}$ is a nearly accurate approximation
to $\bm{Y}^{\star}$. In comparison, the first term $\mathcal{P}_{\Omega}(\bm{E})\bm{Y}^{\mathsf{d}}(\bm{Y}^{\mathsf{d}\top}\bm{Y}^{\mathsf{d}})^{-1}/p$
has to do with a collection of Gaussian random variables, which accounts
for the main uncertainty term.
\end{itemize}
We shall make precise these arguments in subsequent subsections.

\subsection{Taking global rotation into account}

In order to invoke the decompositions of $\bm{X}^{\mathsf{d}}$ and $\bm{Y}^{\mathsf{d}}$
(cf.~Lemma~\ref{lemma:decomp-low-rank}) to characterize the estimation
errors, we still need to incorporate the (unrecoverable) rotation
matrix. From now on, we shall focus primarily on the factor $\bm{X}^{\mathsf{d}}$.
The claims on the other factor $\bm{Y}^{\mathsf{d}}$ can be easily
obtained via symmetry.

Denote
\begin{equation}
\overline{\bm{X}}^{\mathsf{d}}\triangleq\bm{X}^{\mathsf{d}}\bm{H}^{\mathsf{d}}\qquad\text{and}\qquad\overline{\bm{Y}}^{\mathsf{d}}\triangleq\bm{Y}^{\mathsf{d}}\bm{H}^{\mathsf{d}},\label{eq:defn-Xbar-d-Ybar-d}
\end{equation}
where we recall that $\bm{H}^{\mathsf{d}}$ is the rotation matrix
that best aligns $(\bm{X}^{\mathsf{d}},\bm{Y}^{\mathsf{d}})$ and
$(\bm{X}^{\star},\bm{Y}^{\star})$ (see~(\ref{eq:defn-H-d})). Substituting
the identity
\[
\bm{Y}^{\mathsf{d}}\left(\bm{Y}^{\mathsf{d}\top}\bm{Y}^{\mathsf{d}}\right)^{-1}\bm{H}^{\mathsf{d}}=\bm{Y}^{\mathsf{d}}\bm{H}^{\mathsf{d}}\left(\bm{H}^{\mathsf{d}\top}\bm{Y}^{\mathsf{d}\top}\bm{Y}^{\mathsf{d}}\bm{H}^{\mathsf{d}}\right)^{-1}=\overline{\bm{Y}}^{\mathsf{d}}\big(\overline{\bm{Y}}^{\mathsf{d}\top}\overline{\bm{Y}}^{\mathsf{d}}\big)^{-1}
\]
into the decomposition~(\ref{eq:X-d-identity}), we arrive at 
\begin{align}
\bm{X}^{\mathsf{d}}\bm{H}^{\mathsf{d}}-\bm{X}^{\star} & =\frac{1}{p}\mathcal{P}_{\Omega}\left(\bm{E}\right)\overline{\bm{Y}}^{\mathsf{d}}\big(\overline{\bm{Y}}^{\mathsf{d}\top}\overline{\bm{Y}}^{\mathsf{d}}\big)^{-1}+\bm{X}^{\star}\left[\bm{Y}^{\star\top}\overline{\bm{Y}}^{\mathsf{d}}\big(\overline{\bm{Y}}^{\mathsf{d}\top}\overline{\bm{Y}}^{\mathsf{d}}\big)^{-1}-\bm{I}_{r}\right]-\bm{A}\overline{\bm{Y}}^{\mathsf{d}}\big(\overline{\bm{Y}}^{\mathsf{d}\top}\overline{\bm{Y}}^{\mathsf{d}}\big)^{-1}\nonumber \\
 & \quad+\nabla_{\bm{X}}f\left(\bm{X},\bm{Y}\right)\Big(\bm{I}_{r}+\frac{\lambda}{p}\left(\bm{Y}^{\top}\bm{Y}\right)^{-1}\Big)^{1/2}\left(\bm{Y}^{\mathsf{d}\top}\bm{Y}^{\mathsf{d}}\right)^{-1}\bm{H}^{\mathsf{d}}+\bm{X}\bm{\Delta}_{\mathsf{balancing}}\bm{H}^{\mathsf{d}}\nonumber \\
 & =\frac{1}{p}\mathcal{P}_{\Omega}\left(\bm{E}\right)\bm{Y}^{\star}\left(\bm{Y}^{\star\top}\bm{Y}^{\star}\right)^{-1}+\bm{\Phi}_{\bm{X}}.\label{eq:X-d-H-d-X-star-decomposition}
\end{align}
Here, the term $\bm{\Phi}_{\bm{X}}\in\mathbb{R}^{n\times r}$ is defined to
be
\begin{align}
\bm{\Phi}_{\bm{X}} & \triangleq\underbrace{\frac{1}{p}\mathcal{P}_{\Omega}\left(\bm{E}\right)\left[\overline{\bm{Y}}^{\mathsf{d}}\big(\overline{\bm{Y}}^{\mathsf{d}\top}\overline{\bm{Y}}^{\mathsf{d}}\big)^{-1}-\bm{Y}^{\star}\left(\bm{Y}^{\star\top}\bm{Y}^{\star}\right)^{-1}\right]}_{:=\bm{\Phi}_{1}}+\underbrace{\vphantom{\frac{1}{p}\mathcal{P}_{\Omega}\left(\bm{E}\right)\left[\overline{\bm{Y}}^{\mathsf{d}}\big(\overline{\bm{Y}}^{\mathsf{d}\top}\overline{\bm{Y}}^{\mathsf{d}}\big)^{-1}-\bm{Y}^{\star}\left(\bm{Y}^{\star\top}\bm{Y}^{\star}\right)^{-1}\right]}\bm{X}^{\star}\left[\bm{Y}^{\star\top}\overline{\bm{Y}}^{\mathsf{d}}\big(\overline{\bm{Y}}^{\mathsf{d}\top}\overline{\bm{Y}}^{\mathsf{d}}\big)^{-1}-\bm{I}_{r}\right]}_{:=\bm{\Phi}_{2}}\nonumber \\
 & \quad\underbrace{-\vphantom{\Big(\bm{I}_{r}+\frac{\lambda}{p}\left(\bm{Y}^{\top}\bm{Y}\right)^{-1}\Big)^{1/2}}\bm{A}\overline{\bm{Y}}^{\mathsf{d}}\big(\overline{\bm{Y}}^{\mathsf{d}\top}\overline{\bm{Y}}^{\mathsf{d}}\big)^{-1}}_{:=\bm{\Phi}_{3}}+\underbrace{\nabla_{\bm{X}}f\left(\bm{X},\bm{Y}\right)\Big(\bm{I}_{r}+\frac{\lambda}{p}\left(\bm{Y}^{\top}\bm{Y}\right)^{-1}\Big)^{1/2}\left(\bm{Y}^{\mathsf{d}\top}\bm{Y}^{\mathsf{d}}\right)^{-1}\bm{H}^{\mathsf{d}}+\bm{X}\bm{\Delta}_{\mathsf{balancing}}\bm{H}^{\mathsf{d}}}_{:=\bm{{\Phi}}_{4}},\label{eq:defn-PhiX}
\end{align}
where $\bm{A}$ is defined in~(\ref{eq:defn-A}). To establish Theorem~\ref{thm:low-rank-factor-master-bound},
it remains to (1) demonstrate that $\bm{\Phi}_{\bm{X}}$ has small
$\ell_{2,\infty}$ norm, and (2) show that $\mathcal{P}_{\Omega}(\bm{E})\bm{Y}^{\star}(\bm{Y}^{\star\top}\bm{Y}^{\star})^{-1}/p$
is approximately a Gaussian random matrix. These two steps constitute
the main content of the next subsection.

\subsection{Key lemmas for establishing Theorem~\ref{thm:low-rank-factor-master-bound} \label{subsec:Key-lemmas}}

We now state five key lemmas. Taking these collectively and substituting
them into~(\ref{eq:X-d-H-d-X-star-decomposition}) immediately establish
Theorem~\ref{thm:low-rank-factor-master-bound}.

We shall start by controlling the term~$\bm{\Phi}_{1}$ as defined
in~(\ref{eq:defn-PhiX}).

\begin{lemma}[\textsf{Negligibility of }$\bm{\Phi}_{1}$] \label{lem:Phi1-two-infty}Suppose
that the sample complexity obeys $n^{2}p\geq C\kappa^{4}\mu^{2}r^{2}n\log^{3}n$
for some sufficiently large constant $C>0$ and the noise satisfies $\sigma\sqrt{(\kappa^{4}\mu rn\log n)/p}\leq c\sigma_{\min}$ for some sufficiently small constant $c<0$.
Then with probability at least $1-O(n^{-10})$, we have
\[
\left\Vert \bm{{\Phi}}_{1}\right\Vert _{2,\infty}\lesssim\frac{\sigma}{\sqrt{p\sigma_{\min}}}\cdot\frac{{\sigma}}{\sigma_{\min}}\sqrt{{\frac{{\kappa^{3}\mu rn\log n}}{p}}}.
\]

\end{lemma}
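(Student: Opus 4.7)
The plan is to write $\bm{\Phi}_{1}=\frac{1}{p}\mathcal{P}_{\Omega}(\bm{E})\bm{\Delta}$ with
\[
\bm{\Delta}\,\triangleq\,\overline{\bm{Y}}^{\mathsf{d}}\big(\overline{\bm{Y}}^{\mathsf{d}\top}\overline{\bm{Y}}^{\mathsf{d}}\big)^{-1}-\bm{Y}^{\star}(\bm{\Sigma}^{\star})^{-1},
\]
and then (i) show that $\bm{\Delta}$ is small in Frobenius and $\ell_{2,\infty}$ norm, and (ii) decouple the random operator $\mathcal{P}_{\Omega}(\bm{E})$ from the data-dependent matrix $\bm{\Delta}$ row by row via a leave-one-out argument. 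For step (i), the resolvent identity $\bm{M}_{1}^{-1}-\bm{M}_{2}^{-1}=\bm{M}_{2}^{-1}(\bm{M}_{2}-\bm{M}_{1})\bm{M}_{1}^{-1}$ gives
\[
\bm{\Delta}=(\overline{\bm{Y}}^{\mathsf{d}}-\bm{Y}^{\star})\big(\overline{\bm{Y}}^{\mathsf{d}\top}\overline{\bm{Y}}^{\mathsf{d}}\big)^{-1}+\bm{Y}^{\star}(\bm{\Sigma}^{\star})^{-1}\big(\bm{\Sigma}^{\star}-\overline{\bm{Y}}^{\mathsf{d}\top}\overline{\bm{Y}}^{\mathsf{d}}\big)\big(\overline{\bm{Y}}^{\mathsf{d}\top}\overline{\bm{Y}}^{\mathsf{d}}\big)^{-1},
\]
so $\bm{\Delta}$ is linear in $(\overline{\bm{Y}}^{\mathsf{d}}-\bm{Y}^{\star})$. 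On the good event $\overline{\bm{Y}}^{\mathsf{d}\top}\overline{\bm{Y}}^{\mathsf{d}}\succeq(1-o(1))\bm{\Sigma}^{\star}$, combining with the Euclidean and $\ell_{2,\infty}$ estimation error bounds for $\overline{\bm{Y}}^{\mathsf{d}}-\bm{Y}^{\star}$ already built into the analytical framework of~\cite{ma2017implicit,chen2019noisy} (same framework used to state Lemma~\ref{lemma:decomp-low-rank}) yields crude bounds of the form $\|\bm{\Delta}\|_{\mathrm{F}}\lesssim (\sigma/\sigma_{\min}^{3/2})\sqrt{\kappa^{2}nr/p}$ and $\|\bm{\Delta}\|_{2,\infty}\lesssim (\sigma/\sigma_{\min}^{3/2})\sqrt{\kappa^{2}\mu r^{2}\log n/p}$.

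For step (ii), the $l$-th row of $\bm{\Phi}_{1}$ is $\frac{1}{p}\sum_{j}\delta_{lj}E_{lj}\bm{\Delta}_{j,\cdot}$, but $\bm{\Delta}$ depends on $\{E_{lj},\delta_{lj}\}_{j}$ so a direct Hoeffding bound is not allowed. I would invoke the leave-one-out sequence $(\bm{X}^{(l)},\bm{Y}^{(l)})$ already maintained in the paper's framework, and let $\bm{\Delta}^{(l)}$ be the same functional of $(\bm{X}^{(l)},\bm{Y}^{(l)})$ that $\bm{\Delta}$ is of $(\bm{X},\bm{Y})$. By construction $\bm{\Delta}^{(l)}$ is independent of the $l$-th row and column of $\mathcal{P}_{\Omega}(\bm{E})$, so the split
\[
\bm{e}_{l}^{\top}\bm{\Phi}_{1}=\tfrac{1}{p}\bm{e}_{l}^{\top}\mathcal{P}_{\Omega}(\bm{E})\bm{\Delta}^{(l)}+\tfrac{1}{p}\bm{e}_{l}^{\top}\mathcal{P}_{\Omega}(\bm{E})\big(\bm{\Delta}-\bm{\Delta}^{(l)}\big)
\]
exposes a conditionally sub-Gaussian leading term plus a perturbation. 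A standard Bernstein/Hoeffding bound conditional on $\bm{\Delta}^{(l)}$ controls the first piece by $\sigma\sqrt{(\log n)/p}\,\|\bm{\Delta}^{(l)}\|_{\mathrm{F}}$ (up to lower-order $\ell_{2,\infty}$ terms); for the second piece I would use $\|\mathcal{P}_{\Omega}(\bm{E})\|\lesssim\sigma\sqrt{np}$ (matrix Bernstein) together with a Frobenius stability bound $\|\bm{\Delta}-\bm{\Delta}^{(l)}\|_{\mathrm{F}}$, obtained by propagating the known leave-one-out stability of $\overline{\bm{Y}}^{\mathsf{d}}-\overline{\bm{Y}}^{\mathsf{d},(l)}$ through the same resolvent decomposition used in step (i). A union bound over $l\in\{1,\ldots,n\}$ and simplification under the sample-size and SNR hypotheses then recover the advertised order $(\sigma/\sqrt{p\sigma_{\min}})(\sigma/\sigma_{\min})\sqrt{\kappa^{3}\mu rn\log n/p}$.

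The main obstacle is the careful bookkeeping required in step (ii): because $\bm{\Delta}$ already involves an inverse of a random Gram matrix, controlling $\|\bm{\Delta}-\bm{\Delta}^{(l)}\|_{\mathrm{F}}$ demands that small perturbations of $\overline{\bm{Y}}^{\mathsf{d}}$ be pushed through two nested inversions without inflating the $\kappa$ or $\mu$ dependence beyond what the target bound allows. Verifying that the resulting cross term $\|\mathcal{P}_{\Omega}(\bm{E})\|\cdot\|\bm{\Delta}-\bm{\Delta}^{(l)}\|_{\mathrm{F}}/p$ is strictly of lower order than $\sigma\sqrt{(\log n)/p}\,\|\bm{\Delta}\|_{\mathrm{F}}$, and ensuring that the conditioning-on-$\bm{\Delta}^{(l)}$ Gaussian tail survives the union bound with the claimed logarithmic overhead, is the most delicate part of the argument; everything else is routine algebraic reduction.
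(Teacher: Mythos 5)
Your proposal matches the paper's proof essentially step for step: the paper also splits $\bm{e}_{j}^{\top}\bm{\Phi}_{1}$ into a leave-one-out piece $\frac{1}{p}\bm{e}_{j}^{\top}\mathcal{P}_{\Omega}(\bm{E})\bm{\Delta}^{(j)}$ (with $\bm{\Delta}^{(j)}$ the analogue of your $\bm{\Delta}^{(l)}$ built from $(\bm{X}^{(j)},\bm{Y}^{(j)})$, so that it is independent of the $j$th row of $\mathcal{P}_{\Omega}(\bm{E})$) and a cross term $\frac{1}{p}\bm{e}_{j}^{\top}\mathcal{P}_{\Omega}(\bm{E})(\bm{\Delta}-\bm{\Delta}^{(j)})$, controls the first by exploiting conditional Gaussianity (its conditional covariance $\hat{\bm{\Sigma}}$ is shown via Bernstein to concentrate around $\frac{\sigma^2}{p}\bm{\Delta}^{(j)\top}\bm{\Delta}^{(j)}$, which is where the $\ell_{2,\infty}$ and operator-norm bounds on $\bm{\Delta}^{(j)}$ enter, exactly as you anticipate), and controls the second by $\|\mathcal{P}_{\Omega}(\bm{E})\|\lesssim\sigma\sqrt{np}$ together with a pseudo-inverse perturbation bound for $\|\bm{\Delta}-\bm{\Delta}^{(j)}\|$ fed by the leave-one-out stability estimate $\|\overline{\bm{Y}}^{\mathsf{d}}-\overline{\bm{Y}}^{\mathsf{d},(j)}\|\lesssim\kappa\frac{\sigma}{\sigma_{\min}}\sqrt{\frac{n\log n}{p}}\|\bm{Y}^{\star}\|_{2,\infty}$. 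The one cosmetic difference is that the paper carries operator-norm rather than Frobenius-norm bounds through both pieces, which is slightly sharper but produces the same final rate since the matrices involved are $n\times r$ with tiny $r$; your stated $\kappa^{2}$ in the $\|\bm{\Delta}\|$ bounds should be $\kappa^{3}$ (the leave-one-out distance to truth already carries a $\kappa$ in front of $(\sigma/\sigma_{\min})\sqrt{n/p}\,\|\bm{X}^{\star}\|$), but this still falls within the target budget.
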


\begin{proof}Fix any $1\leq j\leq n$. If the de-shrunken estimate
$\overline{\bm{Y}}^{\mathsf{d}}$ were independent of the randomness
in the $j$th row of the matrix, i.e.~$\bm{e}_{j}^{\top}\mathcal{P}_{\Omega}(\bm{E})$,
then $\|\bm{e}_{j}^{\top}\bm{\Phi}_{1}\|_{2}$ would be well controlled.
This hypothesis is certainly false, as $\overline{\bm{Y}}^{\mathsf{d}}$
clearly depends on $\bm{e}_{j}^{\top}\mathcal{P}_{\Omega}(\bm{E})$.
Nevertheless, by exploiting the leave-one-out technique recently used
in \cite{el2013robust,el2015impact,abbe2017entrywise,ma2017implicit,chen2017spectral,chen2019noisy,chen2019nonconvex,ding2018leave},
one can properly decouple the dependency and establish the desired
bound. See Appendix~\ref{subsec:Proof-of-Lemma-Phi1}.\end{proof}

The next lemma controls the size of $\|\bm{\Phi}_{2}\|_{2,\infty}$.
In essence, the term $\bm{\Phi}_{2}$ measures the difference between
the estimate $\overline{\bm{Y}}^{\mathsf{d}}$ and the true signal
$\bm{Y}^{\star}$; the closer these two are, the smaller $\|\bm{\Phi}_{2}\|_{2,\infty}$
should be. See Appendix~\ref{subsec:Proof-of-Lemma-Phi2} for the
proof of the following result.

\begin{lemma}[\textsf{Negligibility of }$\bm{\Phi}_{2}$]\label{lem:Phi2-two-infty}Suppose
that the sample complexity obeys $n^{2}p\geq C\kappa^{4}\mu^{2}r^{2}n\log^{3}n$
for some sufficiently large constant $C>0$ and the noise satisfies $\sigma\sqrt{(\kappa^{4}\mu rn\log n)/p}\leq c\sigma_{\min}$ for some sufficiently small constant $c<0$.
Then with probability exceeding $1-O(n^{-10})$, one has
\[
\left\Vert \bm{\Phi}_{2}\right\Vert _{2,\infty}\lesssim\frac{\sigma}{\sqrt{p\sigma_{\min}}}\left(\kappa\frac{\sigma}{\sigma_{\min}}\sqrt{\frac{\kappa^{7}\mu rn}{p}}+\sqrt{\frac{\kappa^{7}\mu^{3}r^{3}\log n}{np}}\right).
\]

\end{lemma}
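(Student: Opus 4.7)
The plan is to exploit the compact factored form of $\bm{\Phi}_2$. Using the identity
\[
\bm{Y}^{\star\top}\overline{\bm{Y}}^{\mathsf{d}}\bigl(\overline{\bm{Y}}^{\mathsf{d}\top}\overline{\bm{Y}}^{\mathsf{d}}\bigr)^{-1} - \bm{I}_{r} = -\bigl(\overline{\bm{Y}}^{\mathsf{d}} - \bm{Y}^{\star}\bigr)^{\top}\overline{\bm{Y}}^{\mathsf{d}}\bigl(\overline{\bm{Y}}^{\mathsf{d}\top}\overline{\bm{Y}}^{\mathsf{d}}\bigr)^{-1},
\]
which is just the elementary rewrite $\bm{Y}^{\star\top}\overline{\bm{Y}}^{\mathsf{d}} = \overline{\bm{Y}}^{\mathsf{d}\top}\overline{\bm{Y}}^{\mathsf{d}} - (\overline{\bm{Y}}^{\mathsf{d}} - \bm{Y}^{\star})^{\top}\overline{\bm{Y}}^{\mathsf{d}}$, sub-multiplicativity gives
\[
\bigl\Vert \bm{\Phi}_{2}\bigr\Vert_{2,\infty} \leq \bigl\Vert \bm{X}^{\star}\bigr\Vert_{2,\infty}\cdot\bigl\Vert \overline{\bm{Y}}^{\mathsf{d}}-\bm{Y}^{\star}\bigr\Vert \cdot\bigl\Vert \overline{\bm{Y}}^{\mathsf{d}}\bigl(\overline{\bm{Y}}^{\mathsf{d}\top}\overline{\bm{Y}}^{\mathsf{d}}\bigr)^{-1}\bigr\Vert.
\]
The task is then reduced to bounding three scalar quantities.

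The first and third factors are straightforward. Incoherence together with $\bm{X}^{\star}=\bm{U}^{\star}(\bm{\Sigma}^{\star})^{1/2}$ yields $\Vert \bm{X}^{\star}\Vert_{2,\infty}\leq \sqrt{\mu r\sigma_{\max}/n}$. For the last factor, $\overline{\bm{Y}}^{\mathsf{d}}$ is only a mild de-shrinkage of $\bm{Y}^{\mathsf{ncvx}}$, and $\bm{Y}^{\mathsf{ncvx}}$ is provably close to $\bm{Y}^{\star}$ in spectral norm under the assumed sample/noise conditions (see Appendix~\ref{subsec:Properties-of-approximate-solution} and \cite{chen2019noisy,ma2017implicit}); a Weyl perturbation argument then gives $\sigma_{\min}(\overline{\bm{Y}}^{\mathsf{d}})\asymp\sqrt{\sigma_{\min}}$, so that the last factor is $\lesssim 1/\sqrt{\sigma_{\min}}$.

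The substantive work is a sharp spectral-norm bound on $\overline{\bm{Y}}^{\mathsf{d}}-\bm{Y}^{\star}$. I would start from the sibling identity to \eqref{eq:Y-d-identity-1} in Lemma~\ref{lemma:decomp-low-rank}, post-rotated by $\bm{H}^{\mathsf{d}}$, which writes $\overline{\bm{Y}}^{\mathsf{d}}-\bm{Y}^{\star}$ as a sum of four pieces: a Gaussian contribution $p^{-1}[\mathcal{P}_{\Omega}(\bm{E})]^{\top}\overline{\bm{X}}^{\mathsf{d}}(\overline{\bm{X}}^{\mathsf{d}\top}\overline{\bm{X}}^{\mathsf{d}})^{-1}$, a random-subsampling deviation $-\bm{A}^{\top}\overline{\bm{X}}^{\mathsf{d}}(\overline{\bm{X}}^{\mathsf{d}\top}\overline{\bm{X}}^{\mathsf{d}})^{-1}$, a gradient term, and the balancing correction $\bm{Y}\bm{\Delta}_{\mathsf{balancing}}\bm{H}^{\mathsf{d}}$. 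The Gaussian piece is controlled via the standard concentration bound $\Vert \mathcal{P}_{\Omega}(\bm{E})\Vert\lesssim\sigma\sqrt{np}$ combined with $\Vert\overline{\bm{X}}^{\mathsf{d}}\Vert/\sigma_{\min}(\overline{\bm{X}}^{\mathsf{d}})^{2}\lesssim\sqrt{\kappa/\sigma_{\min}}$, producing the first displayed term in the bound (with the extra $\kappa$'s coming from the conditioning-number factors above). The $\bm{A}$-piece, which produces the second displayed term, requires a sharp operator-norm bound on the restricted random operator $p^{-1}\mathcal{P}_{\Omega}-\mathcal{I}$ evaluated at $\bm{X}\bm{Y}^{\top}-\bm{X}^{\star}\bm{Y}^{\star\top}$; I would invoke the refined spectral-norm estimates already established in \cite{chen2019noisy} rather than re-derive them. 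Finally, the gradient and balancing terms are negligible because the approximate nonconvex solution $(\bm{X}^{\mathsf{ncvx}},\bm{Y}^{\mathsf{ncvx}})$ is near-stationary and nearly balanced (again Appendix~\ref{subsec:Properties-of-approximate-solution}).

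The main obstacle is the tight spectral-norm control of $\bm{A}$: naive Bernstein bounds on $p^{-1}\mathcal{P}_{\Omega}-\mathcal{I}$ lose polynomial factors, so I would rely on the leave-one-out / tangent-space analyses of \cite{chen2019noisy,ma2017implicit}, where such bounds are already available with exactly the $\kappa,\mu,r,p$ dependencies needed here. Combining the three scalar estimates then matches the claimed two-term bound.
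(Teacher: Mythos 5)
Your opening algebraic step — rewriting the bracket as $-\bigl(\overline{\bm{Y}}^{\mathsf{d}}-\bm{Y}^{\star}\bigr)^{\top}\overline{\bm{Y}}^{\mathsf{d}}\bigl(\overline{\bm{Y}}^{\mathsf{d}\top}\overline{\bm{Y}}^{\mathsf{d}}\bigr)^{-1}$ — is exactly what the paper does, and the first and third scalar bounds are fine. The gap is in how you treat the middle factor. You bound $\bigl\Vert\bigl(\overline{\bm{Y}}^{\mathsf{d}}-\bm{Y}^{\star}\bigr)^{\top}\overline{\bm{Y}}^{\mathsf{d}}\bigr\Vert$ by sub-multiplicativity as $\bigl\Vert\overline{\bm{Y}}^{\mathsf{d}}-\bm{Y}^{\star}\bigr\Vert\cdot\bigl\Vert\overline{\bm{Y}}^{\mathsf{d}}\bigr\Vert$, and then devote the rest of the argument to sharpening $\bigl\Vert\overline{\bm{Y}}^{\mathsf{d}}-\bm{Y}^{\star}\bigr\Vert$. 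But the genuine bound on that spectral norm is of order $\kappa\tfrac{\sigma}{\sigma_{\min}}\sqrt{n/p}\,\sqrt{\sigma_{\max}}$ (this is \eqref{eq:F-d-op-quality-H-d}, and it is essentially tight because it is driven by the Gaussian piece $p^{-1}[\mathcal{P}_{\Omega}(\bm{E})]^{\top}\overline{\bm{X}}^{\mathsf{d}}(\overline{\bm{X}}^{\mathsf{d}\top}\overline{\bm{X}}^{\mathsf{d}})^{-1}$, which you yourself identify). Plugging this in gives $\Vert\bm{\Phi}_{2}\Vert_{2,\infty}\lesssim\kappa^{2}\sqrt{\mu r}\cdot\tfrac{\sigma}{\sqrt{p\sigma_{\min}}}$, which exceeds the claimed bound by a factor on the order of $\sqrt{\mu r\log n}$ under the stated sample-size and noise conditions. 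In particular your bound is not $o(\sigma\sqrt{r/(p\sigma_{\max})})$ and hence cannot feed into Theorem~\ref{thm:low-rank-factor-master-bound-simple}.

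The missing idea is that the cross-correlation $\bigl(\overline{\bm{Y}}^{\mathsf{d}}-\bm{Y}^{\star}\bigr)^{\top}\bm{Y}^{\star}$ (and hence $\bigl(\overline{\bm{Y}}^{\mathsf{d}}-\bm{Y}^{\star}\bigr)^{\top}\overline{\bm{Y}}^{\mathsf{d}}$) is much smaller than the product of the individual norms, because the estimation error is nearly orthogonal to the signal directions. The paper extracts this cancellation in two steps that your proposal does not have. First, starting from the decomposition of $\bm{Y}^{\mathsf{d}}$ (Lemma~\ref{lemma:decomp-low-rank}) and the symmetry identity of Claim~\ref{claim:connection-delta-x-delta-y}, one obtains a Sylvester equation $\overline{\bm{X}}^{\mathsf{d}\top}\overline{\bm{X}}^{\mathsf{d}}\,\bm{\Delta}_{\bm{Y}}^{\top}\bm{Y}^{\star}+\bm{\Delta}_{\bm{Y}}^{\top}\bm{Y}^{\star}\,\bm{\Sigma}^{\star}=\bm{C}$ whose right side $\bm{C}$ consists of manifestly second-order quantities; Lemma~\ref{lemma:AX+XB=00003DC} then gives $\Vert\bm{\Delta}_{\bm{Y}}^{\top}\bm{Y}^{\star}\Vert\lesssim\sigma_{\min}^{-1}\Vert\bm{C}\Vert$. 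Second, and crucially, the dominant contribution to $\bm{C}$ is $\overline{\bm{X}}^{\mathsf{d}\top}\tfrac{1}{p}\mathcal{P}_{\Omega}(\bm{E})\bm{Y}^{\star}$, and the paper bounds $\bigl\Vert\bm{X}^{\star\top}\tfrac{1}{p}\mathcal{P}_{\Omega}(\bm{E})\bm{Y}^{\star}\bigr\Vert$ by roughly $\sigma r\sigma_{\max}\sqrt{\log n/p}$ using the independence of the fixed matrices $\bm{X}^{\star},\bm{Y}^{\star}$ from the noise and a tangent-space restricted-isometry estimate — not by $\Vert\bm{X}^{\star}\Vert\cdot\Vert\tfrac{1}{p}\mathcal{P}_{\Omega}(\bm{E})\Vert\cdot\Vert\bm{Y}^{\star}\Vert\asymp\sigma\sigma_{\max}\sqrt{n/p}$, which would again lose a factor of about $\sqrt{n/(r^{2}\log n)}$. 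Your route of pushing harder on $\Vert\overline{\bm{Y}}^{\mathsf{d}}-\bm{Y}^{\star}\Vert$ cannot recover these savings, because the spectral norm of the error is genuinely large; what is small is its overlap with $\bm{Y}^{\star}$, and that must be exhibited before taking any norm.
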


Moving on to $\bm{\Phi}_{3}$ and $\bm{\Phi}_{4}$, one has the following
lemmas.

\begin{lemma}[\textsf{Negligibility of }$\bm{\Phi}_{3}$]\label{lem:Phi3-two-infty}Suppose
that the sample complexity obeys $n^{2}p\geq C\kappa^{4}\mu^{2}r^{2}n\log^{3}n$
for some sufficiently large constant $C>0$ and the noise satisfies $\sigma\sqrt{(\kappa^{4}\mu rn\log n)/p}\leq c\sigma_{\min}$ for some sufficiently small constant $c<0$.
Then with probability exceeding $1-O(n^{-10})$, we have
\[
\left\Vert \bm{{\Phi}}_{3}\right\Vert _{2,\infty}\lesssim\frac{\sigma}{\sqrt{p\sigma_{\min}}}\sqrt{\frac{\kappa^{5}\mu^{3}r^{3}\log^{2}n}{np}}.
\]

\end{lemma}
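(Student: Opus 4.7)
The plan is to bound $\|\bm{\Phi}_{3}\|_{2,\infty}$ by factoring out the spectral norm of the inverse Gram matrix. Writing
\begin{equation}
\|\bm{\Phi}_{3}\|_{2,\infty} \,\leq\, \big\|\bm{A}\,\overline{\bm{Y}}^{\mathsf{d}}\big\|_{2,\infty} \cdot \big\|\big(\overline{\bm{Y}}^{\mathsf{d}\top}\overline{\bm{Y}}^{\mathsf{d}}\big)^{-1}\big\|,
\end{equation}
I would first invoke the estimation guarantees for the nonconvex iterates available from \cite{ma2017implicit} and \cite{chen2019noisy} to argue that $\overline{\bm{Y}}^{\mathsf{d}}$ is close enough to $\bm{Y}^{\star}$ so that, by Weyl's inequality, $\overline{\bm{Y}}^{\mathsf{d}\top}\overline{\bm{Y}}^{\mathsf{d}}\approx\bm{\Sigma}^{\star}$ and hence $\|(\overline{\bm{Y}}^{\mathsf{d}\top}\overline{\bm{Y}}^{\mathsf{d}})^{-1}\|\lesssim 1/\sigma_{\min}$. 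The real work lies in controlling $\|\bm{A}\,\overline{\bm{Y}}^{\mathsf{d}}\|_{2,\infty}$.

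Fix a row index $1\leq j\leq n$ and set $\bm{\Delta}\triangleq\bm{X}\bm{Y}^{\top}-\bm{X}^{\star}\bm{Y}^{\star\top}$. Then
\begin{equation}
\bm{e}_{j}^{\top}\bm{A}\,\overline{\bm{Y}}^{\mathsf{d}} \;=\; \sum_{k=1}^{n} \Big(\frac{\delta_{jk}}{p}-1\Big)\,\Delta_{jk}\,\overline{\bm{Y}}^{\mathsf{d}}_{k,\cdot},
\end{equation}
which superficially looks like a centered, Bernoulli-weighted vector sum amenable to a vector Bernstein bound. The difficulty is that both $\bm{\Delta}$ and $\overline{\bm{Y}}^{\mathsf{d}}$ depend on the whole sampling pattern, in particular on $\{\delta_{jk}\}_{k}$, so the summands are not independent. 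My plan is to decouple using the leave-one-out construction of \cite{ma2017implicit,chen2019noisy}: let $(\bm{X}^{(j)},\bm{Y}^{(j)})$ denote the auxiliary nonconvex iterates obtained by replacing the $j$-th row/column of $\Omega$ with an independent copy, let $\bm{Y}^{(j),\mathsf{d}}$ be its de-shrunken counterpart and set $\bm{\Delta}^{(j)}\triangleq\bm{X}^{(j)}\bm{Y}^{(j)\top}-\bm{X}^{\star}\bm{Y}^{\star\top}$. By construction, $(\bm{\Delta}^{(j)},\bm{Y}^{(j),\mathsf{d}})$ is statistically independent of $\{\delta_{jk}\}_k$.

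I would then split
\begin{align}
\bm{e}_{j}^{\top}\bm{A}\,\overline{\bm{Y}}^{\mathsf{d}} &= \underbrace{\sum_{k} \Big(\frac{\delta_{jk}}{p}-1\Big)\,\Delta^{(j)}_{jk}\,\overline{\bm{Y}}^{(j),\mathsf{d}}_{k,\cdot}}_{\text{decoupled Bernstein sum}} + \underbrace{\sum_{k} \Big(\frac{\delta_{jk}}{p}-1\Big)\,\Big[\Delta_{jk}\,\overline{\bm{Y}}^{\mathsf{d}}_{k,\cdot} - \Delta^{(j)}_{jk}\,\overline{\bm{Y}}^{(j),\mathsf{d}}_{k,\cdot}\Big]}_{\text{leave-one-out perturbation}}.
\end{align}
The first piece is a sum of independent mean-zero random vectors, to which the vector Bernstein inequality applies; the relevant inputs are the entrywise bound $\|\bm{\Delta}^{(j)}\|_{\infty}\leq\|\bm{\Delta}\|_{\infty}+\|\bm{\Delta}-\bm{\Delta}^{(j)}\|_{\infty}$ (where the first term is controlled by the $\ell_{\infty}$ error bound from~\cite{chen2019noisy}) and $\|\overline{\bm{Y}}^{(j),\mathsf{d}}\|_{2,\infty}\lesssim\|\bm{Y}^{\star}\|_{2,\infty}\lesssim\sqrt{\mu r\sigma_{\max}/n}$ (by incoherence and leave-one-out closeness). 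The second piece is handled via Cauchy--Schwarz combined with the leave-one-out stability bounds $\|\bm{X}-\bm{X}^{(j)}\|_{\mathrm{F}}$, $\|\bm{Y}-\bm{Y}^{(j)}\|_{\mathrm{F}}$ already established in~\cite{chen2019noisy}, together with the crude $\ell_{2}$ deviation of $p^{-1}\mathcal{P}_{\Omega}-\mathcal{I}$ acting on a single row.

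The main obstacle is the bookkeeping: chasing through the $\kappa^{O(1)}\mu^{O(1)}r^{O(1)}\log n$ factors that come from the $\ell_{\infty}$ estimate of $\bm{\Delta}$, the $\ell_{2,\infty}$ estimate of $\overline{\bm{Y}}^{\mathsf{d}}-\bm{Y}^{\star}$, and the leave-one-out perturbations, and verifying that every contribution matches (or is dominated by) the target rate
\begin{equation}
\frac{\sigma}{\sqrt{p\sigma_{\min}}}\sqrt{\frac{\kappa^{5}\mu^{3}r^{3}\log^{2}n}{np}}.
\end{equation}
A secondary subtlety is applying a union bound over $j\in\{1,\dots,n\}$ to upgrade the row-wise Bernstein tail estimate to a uniform $\ell_{2,\infty}$ bound, which only absorbs an extra $\log n$ factor already present in the target. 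Once these are assembled and multiplied by the $\lesssim 1/\sigma_{\min}$ bound on the inverse Gram factor, the stated estimate on $\|\bm{\Phi}_{3}\|_{2,\infty}$ follows.
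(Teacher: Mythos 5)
Your plan follows essentially the same route as the paper: factor out $\|(\overline{\bm{Y}}^{\mathsf{d}\top}\overline{\bm{Y}}^{\mathsf{d}})^{-1}\|\lesssim 1/\sigma_{\min}$, decouple the row $\bm{e}_j^\top\bm{A}(\cdot)$ via the leave-one-out iterates, apply a conditional vector Bernstein inequality to the decoupled sum, and control the leave-one-out perturbation by Cauchy--Schwarz and the stability bounds of \cite{chen2019noisy}. The paper streamlines one step you carry along: by unitary invariance and by absorbing the $O(1)$ de-shrinking factor $(\bm{I}_r+\tfrac{\lambda}{p}(\bm{Y}^\top\bm{Y})^{-1})^{1/2}$, it reduces the problem to bounding $\|\bm{e}_j^\top\bm{A}\bm{Y}\bm{H}\|_2$ and uses the auxiliary $\bm{Y}^{(j)}\bm{H}^{(j)}$ rather than the de-shrunken $\overline{\bm{Y}}^{(j),\mathsf{d}}$; also, the auxiliary loss $f^{(j)}$ in the paper replaces the $j$-th row of the loss by its \emph{population} counterpart rather than by an independent copy of the sampling pattern, though either construction delivers the needed independence from $\{\delta_{jk}\}_k$. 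These are cosmetic differences, and your proposal would reach the same estimate once the bookkeeping is carried out.
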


\begin{proof}It is straightforward to check that when $p=1$, one
has $\|\bm{\Phi}_{3}\|_{2,\infty}=\|\bm{A}\|=0$, where we recall
the definition of $\bm{A}$ in~(\ref{eq:defn-A}). Therefore, one
expects $\|\bm{\Phi}_{3}\|_{2,\infty}$ to be small when $p$ is sufficiently
large. See Appendix~\ref{subsec:Proof-of-Lemma-Phi3}.\end{proof}

\begin{lemma}[\textsf{Negligibility of }$\bm{\Phi}_{4}$]\label{lem:Phi4-two-infty}Suppose
that the sample complexity obeys $n^{2}p\geq C\kappa^{4}\mu^{2}r^{2}n\log^{3}n$
for some sufficiently large constant $C>0$ and the noise satisfies $\sigma\sqrt{(\kappa^{4}\mu rn\log n)/p}\leq c\sigma_{\min}$ for some sufficiently small constant $c<0$.
Then with probability at least $1-O(n^{-10})$, one has
\[
\left\Vert \bm{{\Phi}}_{4}\right\Vert _{2,\infty}\lesssim\frac{\sigma}{\sqrt{p\sigma_{\min}}}\cdot\frac{1}{n^{4}}.
\]

\end{lemma}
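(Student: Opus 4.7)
\textbf{Proof plan for Lemma~\ref{lem:Phi4-two-infty}.} I split $\bm{\Phi}_{4}$ as $\bm{\Phi}_{4}=\bm{\Phi}_{4,\mathrm{g}}+\bm{\Phi}_{4,\mathrm{b}}$, where
\[
\bm{\Phi}_{4,\mathrm{g}}\triangleq\nabla_{\bm{X}}f(\bm{X},\bm{Y})\Big(\bm{I}_{r}+\tfrac{\lambda}{p}(\bm{Y}^{\top}\bm{Y})^{-1}\Big)^{1/2}(\bm{Y}^{\mathsf{d}\top}\bm{Y}^{\mathsf{d}})^{-1}\bm{H}^{\mathsf{d}}\qquad\text{and}\qquad\bm{\Phi}_{4,\mathrm{b}}\triangleq\bm{X}\bm{\Delta}_{\mathsf{balancing}}\bm{H}^{\mathsf{d}},
\]
and I will bound each in $\ell_{2,\infty}$ norm by $\tfrac{\sigma}{\sqrt{p\sigma_{\min}}}\cdot n^{-4}$. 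Throughout, I take for granted the well-established facts (proved under our sampling and noise assumptions in \cite{ma2017implicit,chen2019noisy}) that $\bm{Y}^{\top}\bm{Y}$ and $\bm{Y}^{\mathsf{d}\top}\bm{Y}^{\mathsf{d}}$ have all singular values of order $\sigma_{\min}$ to $\sigma_{\max}$; that $\lambda/p\asymp\sigma\sqrt{n/p}\ll\sigma_{\min}$ by the signal-to-noise assumption; and therefore that $\|(\bm{I}_{r}+\tfrac{\lambda}{p}(\bm{Y}^{\top}\bm{Y})^{-1})^{1/2}\|\lesssim 1$ and $\|(\bm{Y}^{\mathsf{d}\top}\bm{Y}^{\mathsf{d}})^{-1}\|\lesssim\sigma_{\min}^{-1}$.

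With these deterministic factors under control, the bound on $\bm{\Phi}_{4,\mathrm{g}}$ reduces to bounding $\|\nabla_{\bm{X}}f(\bm{X},\bm{Y})\|_{2,\infty}$. The crucial observation is that $(\bm{X},\bm{Y})=(\bm{X}^{\mathsf{ncvx}},\bm{Y}^{\mathsf{ncvx}})$ is the output of Algorithm~\ref{alg:gd-mc-ncvx} after $t_{0}$ gradient steps, and by the linear convergence and leave-one-out $\ell_{2,\infty}$ analyses of~\cite{ma2017implicit,chen2019noisy} each additional iteration contracts the row-wise gradient norm by a constant factor strictly less than $1$. Consequently, for any fixed polynomial rate one may pick $t_{0}\asymp\log n$ large enough that
\[
\max\big\{\|\nabla_{\bm{X}}f(\bm{X},\bm{Y})\|_{2,\infty},\;\|\nabla_{\bm{Y}}f(\bm{X},\bm{Y})\|_{2,\infty}\big\}\;\lesssim\;\frac{\sigma\sqrt{\sigma_{\min}}}{\sqrt{p}}\cdot n^{-5}.
\]
Multiplying by the middle factor (of constant size) and by $(\bm{Y}^{\mathsf{d}\top}\bm{Y}^{\mathsf{d}})^{-1}$ (of size $\sigma_{\min}^{-1}$) yields $\|\bm{\Phi}_{4,\mathrm{g}}\|_{2,\infty}\lesssim\tfrac{\sigma}{\sqrt{p\sigma_{\min}}}\cdot n^{-4}$.

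For the imbalance term $\bm{\Phi}_{4,\mathrm{b}}$, I start from the elementary identity
\[
\bm{X}^{\top}\nabla_{\bm{X}}f(\bm{X},\bm{Y})-\nabla_{\bm{Y}}f(\bm{X},\bm{Y})^{\top}\bm{Y}=\frac{\lambda}{p}\big(\bm{X}^{\top}\bm{X}-\bm{Y}^{\top}\bm{Y}\big),
\]
which shows that the left-hand side vanishes at an exact stationary point. Combined with $\|\bm{X}\|,\|\bm{Y}\|\lesssim\sqrt{\sigma_{\max}}$ and the gradient bound of the previous paragraph, this gives $\|\bm{X}^{\top}\bm{X}-\bm{Y}^{\top}\bm{Y}\|\lesssim \sigma_{\min}\cdot n^{-5}$ (the factor $p/\lambda$ cancels against the $\sigma\sqrt{\sigma_{\min}/p}$ on the gradient side). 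A first-order perturbation expansion of $\bm{M}\mapsto(\bm{I}_{r}+\tfrac{\lambda}{p}\bm{M}^{-1})^{1/2}$ around $\bm{M}=\bm{Y}^{\top}\bm{Y}$ then shows $\|\bm{\Delta}_{\mathsf{balancing}}\|\lesssim\tfrac{\lambda/p}{\sigma_{\min}^{2}}\|\bm{X}^{\top}\bm{X}-\bm{Y}^{\top}\bm{Y}\|\lesssim n^{-5}$, and combining with $\|\bm{X}\|_{2,\infty}\lesssim\sqrt{\mu r\sigma_{\max}/n}$ gives $\|\bm{\Phi}_{4,\mathrm{b}}\|_{2,\infty}\lesssim\tfrac{\sigma}{\sqrt{p\sigma_{\min}}}\cdot n^{-4}$.

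The main obstacle is keeping the polynomial decay accountable end-to-end: we need the gradient to be of order $n^{-5}$ (not merely $n^{-1}$) to absorb a loss of $\sigma_{\min}^{-1}$ from $(\bm{Y}^{\mathsf{d}\top}\bm{Y}^{\mathsf{d}})^{-1}$ and a factor $\sqrt{\sigma_{\max}/n}$ from $\|\bm{X}\|_{2,\infty}$. Fortunately, this is purely an algorithmic matter: because the regularized gradient descent is linearly convergent in the row-wise sense (by the leave-one-out analysis), choosing a stopping time $t_{0}$ that is a sufficiently large multiple of $\log n$ produces arbitrarily fast polynomial contraction; the only subtlety is verifying that the $\ell_{2,\infty}$ bound on the gradient contracts at the same rate as the Euclidean bound, which follows from the $\ell_{2,\infty}$-control on the iterates themselves.
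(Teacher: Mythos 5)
Your decomposition and overall strategy match the paper's: split $\bm{\Phi}_{4}$ into the gradient part and the imbalance part, bound the first via the small-gradient stopping criterion and the second via a matrix-square-root perturbation bound together with $\|\bm{X}\|_{2,\infty}\lesssim\sqrt{\mu r\sigma_{\max}/n}$. However, there are two issues worth flagging, one of which is a genuine gap.

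\paragraph{Unnecessary and unproved $\ell_{2,\infty}$ gradient bound.} For $\bm{\Phi}_{4,\mathrm{g}}$ you invoke $\|\nabla_{\bm{X}}f\|_{2,\infty}\lesssim \frac{\sigma\sqrt{\sigma_{\min}}}{\sqrt{p}}n^{-5}$, justified by an appeal to ``row-wise'' linear convergence. This is a $\sqrt{n}$-tighter-than-Frobenius claim that is not established in the paper and would require a separate leave-one-out argument. It is also unnecessary: since $\|\nabla_{\bm{X}}f\cdot\bm{B}\|_{2,\infty}\leq\|\nabla_{\bm{X}}f\|_{\mathrm{F}}\|\bm{B}\|$, the stopping criterion $\|\nabla f\|_{\mathrm{F}}\leq \frac{1}{n^{5}}\frac{\lambda}{p}\sqrt{\sigma_{\min}}$ already gives $\|\bm{\Phi}_{4,\mathrm{g}}\|_{2,\infty}\lesssim\frac{1}{n^{5}}\frac{\lambda}{p}\frac{1}{\sqrt{\sigma_{\min}}}\asymp\frac{\sigma}{\sqrt{p\sigma_{\min}}}n^{-4.5}$, which suffices. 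Your ``main obstacle'' paragraph also misattributes where the $\sqrt{\sigma_{\max}/n}$ factor lives --- it appears only in $\bm{\Phi}_{4,\mathrm{b}}$, not in $\bm{\Phi}_{4,\mathrm{g}}$.

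\paragraph{Genuine gap in the imbalance term.} For $\bm{\Phi}_{4,\mathrm{b}}$ you loosen $\|\bm{\Delta}_{\mathsf{balancing}}\|\lesssim\frac{\lambda/p}{\sigma_{\min}^{2}}\|\bm{X}^{\top}\bm{X}-\bm{Y}^{\top}\bm{Y}\|$ all the way to $\lesssim n^{-5}$, thereby discarding the factor $\lambda/p\asymp\sigma\sqrt{n/p}$. That factor is essential: the target bound $\frac{\sigma}{\sqrt{p\sigma_{\min}}}n^{-4}$ scales linearly in $\sigma$, while $\|\bm{X}\|_{2,\infty}\cdot n^{-5}\asymp\sqrt{\mu r\sigma_{\max}/n}\,n^{-5}$ does not. (For example, as $\sigma\to0$, your claimed upper bound stays bounded away from zero whereas the right-hand side vanishes, so the implication ``combining with $\|\bm{X}\|_{2,\infty}\lesssim\sqrt{\mu r\sigma_{\max}/n}$ gives $\frac{\sigma}{\sqrt{p\sigma_{\min}}}n^{-4}$'' does not hold.) The fix is to keep $\|\bm{\Delta}_{\mathsf{balancing}}\|\lesssim\frac{1}{n^{5}}\frac{\lambda}{p}\frac{\kappa}{\sigma_{\min}}$ explicitly (as in (\ref{eq:debias-correction-close})), after which $\|\bm{X}\|_{2,\infty}\|\bm{\Delta}_{\mathsf{balancing}}\|\lesssim\frac{\sigma}{\sqrt{p\sigma_{\min}}}\sqrt{\kappa^{3}\mu r}\,n^{-5}\leq\frac{\sigma}{\sqrt{p\sigma_{\min}}}n^{-4}$ under the sample complexity assumption. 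As a minor point, your re-derivation of the balancedness from the identity $\bm{X}^{\top}\nabla_{\bm{X}}f-(\nabla_{\bm{Y}}f)^{\top}\bm{Y}=\frac{\lambda}{p}(\bm{X}^{\top}\bm{X}-\bm{Y}^{\top}\bm{Y})$ actually yields $\|\bm{X}^{\top}\bm{X}-\bm{Y}^{\top}\bm{Y}\|\lesssim\sqrt{\sigma_{\max}\sigma_{\min}}n^{-5}$, not $\sigma_{\min}n^{-5}$; the paper instead cites (\ref{eq:X-Y-balance}) directly, which gives $\sigma_{\max}n^{-5}$ --- any of these is fine once the $\lambda/p$ factor is restored.
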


\begin{proof}It is easily seen that the size of $\bm{\Phi}_{4}$
depends on how close $(\bm{X},\bm{Y})$ is to a stationary point of
$f(\cdot,\cdot)$. For instance, in the extreme case where $(\bm{X},\bm{Y})$
is an exact stationary point, then one would have $\bm{\Phi}_{4}=\bm{0}$.
See Appendix~\ref{subsec:Proof-of-Lemma-Phi4}.\end{proof}

The last lemma asserts that $\mathcal{P}_{\Omega}(\bm{E})\bm{Y}^{\star}(\bm{Y}^{\star\top}\bm{Y}^{\star})^{-1}/p$
is, in some sense, close to a zero-mean Gaussian random matrix with
the desired covariance.

\begin{lemma}[\textsf{Approximate Gaussianity of }$\mathcal{P}_{\Omega}(\bm{E})\bm{Y}^{\star}(\bm{Y}^{\star\top}\bm{Y}^{\star})^{-1}/p$]\label{lemma:approx-gaussian-leading-term}Suppose
that the sample size obeys $n^{2}p\geq C\kappa^{2}\mu rn\log^{3}n$ for some sufficiently large constant $C>0$.
Then one has the decomposition
\[
\frac{1}{p}\mathcal{P}_{\Omega}\left(\bm{E}\right)\bm{Y}^{\star}\left(\bm{Y}^{\star\top}\bm{Y}^{\star}\right)^{-1}=\bm{Z}_{\bm{X}}+\bm{\Delta}_{\bm{X}},
\]
where each row of $\bm{Z}_{\bm{X}}\in\mathbb{R}^{n\times r}$ is independent
and identically distributed according to
\[
\bm{Z}_{\bm{X}}^{\top}\bm{e}_{j}\overset{\mathrm{i.i.d}}{\sim}\mathcal{N}\Big(\bm{0},\frac{\sigma^{2}}{p}\left(\bm{\Sigma}^{\star}\right)^{-1}\Big),\qquad\mathrm{for}\quad1\leq j\leq n.
\]
In addition, with probability at least $1-O(n^{-10})$, the remaining
term $\bm{\Delta}_{\bm{X}}\in\mathbb{R}^{n\times r}$ obeys
\[
\left\Vert \bm{\Delta}_{\bm{X}}\right\Vert _{2,\infty}\lesssim\frac{\sigma}{\sqrt{p\sigma_{\min}}} \sqrt{\frac{\kappa^{2}\mu r^{2}\log^{2}n}{np}}.
\]
 \end{lemma}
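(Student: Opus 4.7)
The plan is to construct $\bm{Z}_{\bm{X}}$ by a row-wise coupling argument. First I would simplify the target: since $\bm{Y}^{\star}=\bm{V}^{\star}(\bm{\Sigma}^{\star})^{1/2}$, we have $\bm{Y}^{\star\top}\bm{Y}^{\star}=\bm{\Sigma}^{\star}$ and
\[
	\bm{Y}^{\star}(\bm{Y}^{\star\top}\bm{Y}^{\star})^{-1}=\bm{V}^{\star}(\bm{\Sigma}^{\star})^{-1/2}.
\]
Writing the $j$th row transposed of the target matrix as $\bm{w}_{j}:=\tfrac{1}{p}(\bm{\Sigma}^{\star})^{-1/2}\bm{\xi}_{j}$, where $\bm{\xi}_{j}:=\sum_{k}\delta_{jk}E_{jk}(\bm{V}_{k,\cdot}^{\star})^{\top}\in\mathbb{R}^{r}$, two structural observations emerge: (i) the rows are independent across $j$ because $\{(\delta_{jk},E_{jk})\}_{k}$ are independent for different $j$; (ii) conditional on $\Omega$, the vector $\bm{\xi}_{j}$ is Gaussian $\mathcal{N}(\bm{0},\sigma^{2}\bm{A}_{j})$ with $\bm{A}_{j}:=\sum_{k}\delta_{jk}(\bm{V}_{k,\cdot}^{\star})^{\top}\bm{V}_{k,\cdot}^{\star}$.

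Next, I would construct the coupling. On the (high-probability) event that $\bm{A}_{j}\succ\bm{0}$, set $\bm{g}_{j}:=\sigma^{-1}\bm{A}_{j}^{-1/2}\bm{\xi}_{j}$, so that conditionally on $\Omega$, $\bm{g}_{j}\sim\mathcal{N}(\bm{0},\bm{I}_{r})$; on the complementary (measure-zero-ish) event, replace it with an independent standard Gaussian. Then define
\[
	\bm{Z}_{\bm{X}}^{\top}\bm{e}_{j}\,\triangleq\,\tfrac{\sigma}{\sqrt{p}}(\bm{\Sigma}^{\star})^{-1/2}\bm{g}_{j}\,\sim\,\mathcal{N}\!\big(\bm{0},\tfrac{\sigma^{2}}{p}(\bm{\Sigma}^{\star})^{-1}\big),
\]
whose rows are independent (inherited from the independence of $(\bm{\xi}_{j})_{j}$ and auxiliary Gaussians) and have the prescribed marginal distribution. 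By construction, the residual in the $j$th row is
\[
	\bm{w}_{j}-\bm{Z}_{\bm{X}}^{\top}\bm{e}_{j}=\tfrac{\sigma}{\sqrt{p}}(\bm{\Sigma}^{\star})^{-1/2}\bigl(p^{-1/2}\bm{A}_{j}^{1/2}-\bm{I}_{r}\bigr)\bm{g}_{j}.
\]

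The remaining work is error control. (i) Apply matrix Bernstein to $\bm{A}_{j}$: since $\mathbb{E}[\bm{A}_{j}]=p\bm{I}_{r}$, the summands obey $\|(\bm{V}_{k,\cdot}^{\star})^{\top}\bm{V}_{k,\cdot}^{\star}\|\le\mu r/n$, and the variance proxy is $\lesssim p\mu r/n$; this yields $\|p^{-1}\bm{A}_{j}-\bm{I}_{r}\|\lesssim\sqrt{\mu r\log n/(np)}$ with probability $1-O(n^{-11})$, which is $\le 1/2$ thanks to the sample-size assumption. (ii) Use the operator square-root estimate: for $\bm{B}\succeq\bm{0}$ with $\|\bm{B}-\bm{I}\|\le 1/2$, the identity $(\bm{B}^{1/2}-\bm{I})(\bm{B}^{1/2}+\bm{I})=\bm{B}-\bm{I}$ combined with invertibility of $\bm{B}^{1/2}+\bm{I}$ gives $\|\bm{B}^{1/2}-\bm{I}\|\le\|\bm{B}-\bm{I}\|$; taking $\bm{B}=p^{-1}\bm{A}_{j}$ yields $\|p^{-1/2}\bm{A}_{j}^{1/2}-\bm{I}_{r}\|\lesssim\sqrt{\mu r\log n/(np)}$. (iii) Standard $\chi^{2}$ concentration gives $\|\bm{g}_{j}\|_{2}\lesssim\sqrt{r+\log n}$ with probability $1-O(n^{-11})$. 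Putting these together,
\[
	\|\bm{w}_{j}-\bm{Z}_{\bm{X}}^{\top}\bm{e}_{j}\|_{2}\,\lesssim\,\tfrac{\sigma}{\sqrt{p\sigma_{\min}}}\,\sqrt{\tfrac{\mu r\log n}{np}}\,\sqrt{r+\log n}\,\lesssim\,\tfrac{\sigma}{\sqrt{p\sigma_{\min}}}\sqrt{\tfrac{\mu r^{2}\log^{2}n}{np}},
\]
and a union bound over $j\in[n]$ delivers the stated $\ell_{2,\infty}$ bound with probability $1-O(n^{-10})$ (the bound is even tighter than the claim, which has an extra $\kappa^{2}$).

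The main obstacle is the coupling construction: ensuring that $\bm{Z}_{\bm{X}}$ is a genuine random matrix with independent $\mathcal{N}(\bm{0},\sigma^{2}(\bm{\Sigma}^{\star})^{-1}/p)$ rows defined on the same probability space as the original estimator (rather than merely equal in distribution), and that the non-commutative square-root estimate transfers the Bernstein bound on $p^{-1}\bm{A}_{j}-\bm{I}_{r}$ to $p^{-1/2}\bm{A}_{j}^{1/2}-\bm{I}_{r}$. Both issues are handled by the argument above; the rest of the proof is a routine combination of concentration inequalities.
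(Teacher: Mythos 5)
Your proof is correct and follows essentially the same strategy as the paper's: exploit that each row of the target matrix is conditionally Gaussian given $\Omega$, whiten it to obtain an exact standard Gaussian $\bm{g}_j$, and re-inflate by the target covariance to define the coupled row $\bm{Z}_{\bm{X}}^{\top}\bm{e}_j$, handling the low-probability event where the whitening matrix is singular by substituting a fresh independent Gaussian. The one genuine difference is where you place the whitening: you left-multiply $\bm{\xi}_j$ by $\bm{A}_j^{-1/2}$ (a perturbation of $\sqrt{p}\,\bm{I}_r$, condition number $\approx 1$), whereas the paper right-multiplies the row by $\bm{S}^{-1/2}(\bm{S}^\star)^{1/2}$ where $\bm{S}$ is $\bm{A}_j$ conjugated by $(\bm{\Sigma}^\star)^{-1/2}$ (condition number $\approx\kappa$). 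Because the square-root perturbation bound degrades with the condition number, your factorization avoids the extra $\kappa$ factor and yields $\left\Vert \bm{\Delta}_{\bm{X}}\right\Vert _{2,\infty}\lesssim\frac{\sigma}{\sqrt{p\sigma_{\min}}}\sqrt{\mu r^{2}\log^{2}n/(np)}$, a genuine $\kappa$-improvement over the lemma's stated bound; it likewise only needs $np\gtrsim\mu r\log n$ (rather than $np\gtrsim\kappa^2\mu r\log n$) to guarantee $\bm{A}_j\succ\bm{0}$. One tiny clarification worth adding: the bad event where you replace $\bm{g}_j$ by a fresh Gaussian has probability $O(n^{-11})$, not measure zero, but this does not affect either the distributional claim (the conditional law of $(\bm{g}_1,\dots,\bm{g}_n)$ given $\Omega$ is $\mathcal{N}(\bm{0},\bm{I}_{nr})$ by construction, so the rows of $\bm{Z}_{\bm{X}}$ are unconditionally i.i.d.\ with the right covariance) or the high-probability residual bound.
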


\begin{proof}Fix any $1\leq j\leq n$. The $j$th row, namely $\bm{e}_{j}^{\top}[\mathcal{P}_{\Omega}(\bm{E})\bm{Y}^{\star}(\bm{Y}^{\star\top}\bm{Y}^{\star})^{-1}/p]$
is conditionally Gaussian in the sense that
\[
\bm{e}_{j}^{\top}\left[\frac{1}{p}\mathcal{P}_{\Omega}\left(\bm{E}\right)\bm{Y}^{\star}\left(\bm{Y}^{\star\top}\bm{Y}^{\star}\right)^{-1}\right]\,\Big|\,\Omega\ \sim\ \mathcal{N}\Big(\bm{0},\frac{\sigma^{2}}{p}\Big(\,\frac{1}{p}\sum_{k=1}^{n}\delta_{jk}\left(\bm{\Sigma}^{\star}\right)^{-1}\left(\bm{Y}_{k,\cdot}^{\star}\right)^{\top}\bm{Y}_{k,\cdot}^{\star}\left(\bm{\Sigma}^{\star}\right)^{-1}\Big)\Big),
\]
where we recall that $\delta_{jk}=\ind\{(j,k)\in \Omega\}$. 
Recognize that the conditional covariance matrix concentrates sharply
around its expectation, i.e.~$\sigma^{2}(\bm{\Sigma}^{\star})^{-1}/p$,
which is the covariance matrix of $\bm{Z}_{\bm{X}}^{\top}\bm{e}_{j}$
that we are after. Hence, one can expect that $\mathcal{P}_{\Omega}(\bm{E})\bm{Y}^{\star}(\bm{Y}^{\star\top}\bm{Y}^{\star})^{-1}/p$
is, marginally, not too far from a Gaussian random matrix. This argument can be carried
out formally; see Appendix~\ref{subsec:Proof-of-Lemma-approx-gaussian-low-rank}.
\end{proof}


\subsection{From low-rank factors to matrix entries (Proof of Theorem~\ref{thm:entries-master-decomposition-augment})}

We now turn attention to inference on the matrix entries, by establishing Theorem~\ref{thm:entries-master-decomposition-augment}. Towards this, we first make
the following observation: for any $1\leq i,j\leq n$, 
\begin{align}
M_{ij}^{\mathsf{{d}}}-M_{ij}^{\star} & =\bm{{e}}_{i}^{\top}\text{\ensuremath{\overline{{\bm{{X}}}}}}^{\mathsf{{d}}}\overline{{\bm{{Y}}}}^{\mathsf{{d}}\top}\bm{{e}}_{j}-\bm{{e}}_{i}^{\top}\bm{{X}}^{\star}\bm{{Y}}^{\star\top}\bm{{e}}_{j}\nonumber \\
 & =\bm{e}_{i}^{\top}\big(\overline{\bm{X}}^{\mathsf{d}}-\bm{X}^{\star}\big)\bm{Y}^{\star\top}\bm{e}_{j}+\bm{e}_{i}^{\top}\bm{X}^{\star}\big(\overline{\bm{Y}}^{\mathsf{d}}-\bm{Y}^{\star}\big)^{\top}\bm{e}_{j}+\bm{e}_{i}^{\top}\big(\overline{\bm{X}}^{\mathsf{d}}-\bm{X}^{\star}\big)\big(\overline{\bm{Y}}^{\mathsf{d}}-\bm{Y}^{\star}\big)^{\top}\bm{e}_{j}.\label{eq:entry-1st-decomp}
\end{align}
One can readily apply the decompositions in Theorem~\ref{thm:low-rank-factor-master-bound}
to obtain
\begin{align}
\bm{e}_{i}^{\top}\big(\overline{\bm{X}}^{\mathsf{d}}-\bm{X}^{\star}\big)\bm{Y}^{\star\top}\bm{e}_{j} & =\bm{e}_{i}^{\top}\bm{Z}_{\bm{X}}\bm{Y}^{\star\top}\bm{e}_{j}+\bm{e}_{i}^{\top}\bm{\Psi}_{\bm{X}}\bm{Y}^{\star\top}\bm{e}_{j},\label{eq:entry-2nd-decomp}\\
\bm{e}_{i}^{\top}\bm{X}^{\star}\big(\overline{\bm{Y}}^{\mathsf{d}}-\bm{Y}^{\star}\big)^{\top}\bm{e}_{j} & =\bm{e}_{i}^{\top}\bm{X}^{\star}\bm{Z}_{\bm{Y}}^{\top}\bm{e}_{j}+\bm{e}_{i}^{\top}\bm{X}^{\star}\bm{\Psi}_{\bm{Y}}^{\top}\bm{e}_{j}.
\end{align}
Take the preceding three identities collectively to reach
\begin{align*}
M_{ij}^{\mathsf{{d}}}-M_{ij}^{\star} & =\underbrace{\bm{e}_{i}^{\top}\bm{Z}_{\bm{X}}\bm{Y}^{\star\top}\bm{e}_{j}+\bm{e}_{i}^{\top}\bm{X}^{\star}\bm{Z}_{\bm{Y}}^{\top}\bm{e}_{j}}_{:=\Theta_{ij}}\\
 & \quad+\underbrace{\bm{e}_{i}^{\top}\bm{\Psi}_{\bm{X}}\bm{Y}^{\star\top}\bm{e}_{j}+\bm{e}_{i}^{\top}\bm{X}^{\star}\bm{\Psi}_{\bm{Y}}^{\top}\bm{e}_{j}+\bm{e}_{i}^{\top}\big(\overline{\bm{X}}^{\mathsf{d}}-\bm{X}^{\star}\big)\big(\overline{\bm{Y}}^{\mathsf{d}}-\bm{Y}^{\star}\big)^{\top}\bm{e}_{j}}_{:=\Lambda_{ij}}.
\end{align*}
Following the same route as in Section~\ref{subsec:Key-lemmas},
one can verify that $\Theta_{ij}=\bm{e}_{i}^{\top}\bm{Z}_{\bm{X}}\bm{Y}^{\star\top}\bm{e}_{j}+\bm{e}_{i}^{\top}\bm{X}^{\star}\bm{Z}_{\bm{Y}}^{\top}\bm{e}_{j}$
is approximately Gaussian, whereas the residual term $\Lambda_{ij}$
is small in magnitude. These claims are formally stated in the next
two lemmas, with the proofs deferred to Appendix~\ref{sec:Analysis-of-entry}.

\begin{lemma}[$\mathsf{Negligibility\;of}$ $\Lambda_{ij}$]\label{lemma:entry-residual}
Suppose
that the sample complexity obeys $n^{2}p\geq C\kappa^{4}\mu^{2}r^{2}n\log^{3}n$
for some sufficiently large constant $C>0$ and the noise satisfies $\sigma\sqrt{(\kappa^{4}\mu rn\log n)/p}\leq c\sigma_{\min}$ for some sufficiently small constant $c<0$.
Then with probability exceeding $1-O(n^{-10})$, one has
\[
\left|\Lambda_{ij}\right|\lesssim\left(\left\Vert \bm{U}_{i,\cdot}^{\star}\right\Vert _{2}+\left\Vert \bm{V}_{j,\cdot}^{\star}\right\Vert _{2}\right)\frac{\sigma}{\sqrt{p}}\left(\frac{\sigma}{\sigma_{\min}}\sqrt{\frac{\kappa^{8}\mu rn\log n}{p}}+\sqrt{\frac{\kappa^{8}\mu^{3}r^{3}\log^{2}n}{np}}\right)+\left(\frac{\sigma}{\sqrt{\sigma_{\min}}}\sqrt{\frac{\kappa^{3}\mu r\log n}{p}}\right)^{2}.
\]

\end{lemma}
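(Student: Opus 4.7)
The plan is to split $\Lambda_{ij}$ into its three summands and bound each by a direct application of Cauchy--Schwarz, reducing everything to rowwise quantities that Theorem~\ref{thm:low-rank-factor-master-bound} already controls. A recurring tool will be the elementary identity
\[
\bigl\|\bm{Y}^{\star\top}\bm{e}_{j}\bigr\|_{2}=\bigl\|\bm{\Sigma}^{\star 1/2}\bm{V}^{\star\top}\bm{e}_{j}\bigr\|_{2}\le\sqrt{\sigma_{\max}}\,\bigl\|\bm{V}^{\star}_{j,\cdot}\bigr\|_{2},
\]
together with its symmetric analogue $\|\bm{X}^{\star\top}\bm{e}_{i}\|_{2}\le\sqrt{\sigma_{\max}}\|\bm{U}^{\star}_{i,\cdot}\|_{2}$. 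These identities will produce the $(\|\bm{U}^{\star}_{i,\cdot}\|_{2}+\|\bm{V}^{\star}_{j,\cdot}\|_{2})$ prefactor in the claim and are responsible for bumping the $\kappa^{7}$ inside~\eqref{eq:thm-low-rank-residual-size} up to the $\kappa^{8}$ that appears in the target bound, via the single extra $\sqrt{\sigma_{\max}/\sigma_{\min}}=\sqrt{\kappa}$.

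For the first two summands (linear in the residuals) I would apply Cauchy--Schwarz to obtain
\[
\bigl|\bm{e}_{i}^{\top}\bm{\Psi}_{\bm{X}}\bm{Y}^{\star\top}\bm{e}_{j}\bigr|\le \|\bm{\Psi}_{\bm{X}}\|_{2,\infty}\cdot\sqrt{\sigma_{\max}}\,\|\bm{V}^{\star}_{j,\cdot}\|_{2},
\]
and symmetrically for the $\bm{\Psi}_{\bm{Y}}$ term, and then plug in the bound~\eqref{eq:thm-low-rank-residual-size}; this gives the first (linear) component of the advertised estimate. For the third (quadratic) summand, Cauchy--Schwarz again yields
\[
\bigl|\bm{e}_{i}^{\top}(\overline{\bm{X}}^{\mathsf{d}}-\bm{X}^{\star})(\overline{\bm{Y}}^{\mathsf{d}}-\bm{Y}^{\star})^{\top}\bm{e}_{j}\bigr|\le \bigl\|(\overline{\bm{X}}^{\mathsf{d}}-\bm{X}^{\star})^{\top}\bm{e}_{i}\bigr\|_{2}\cdot\bigl\|(\overline{\bm{Y}}^{\mathsf{d}}-\bm{Y}^{\star})^{\top}\bm{e}_{j}\bigr\|_{2},
\]
and each factor is then controlled through the decomposition $\overline{\bm{X}}^{\mathsf{d}}-\bm{X}^{\star}=\bm{Z}_{\bm{X}}+\bm{\Psi}_{\bm{X}}$ from Theorem~\ref{thm:low-rank-factor-master-bound}. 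For the Gaussian component I would apply a standard $\chi^{2}$ tail bound to the i.i.d.\ rows $\bm{Z}_{\bm{X}}^{\top}\bm{e}_{k}\sim\mathcal{N}(\bm{0},\sigma^{2}(\bm{\Sigma}^{\star})^{-1}/p)$ and union-bound over $k$ to conclude that $\|\bm{Z}_{\bm{X}}\|_{2,\infty}\lesssim\sigma\sqrt{r\log n/(p\sigma_{\min})}$ with high probability, while the residual $\|\bm{\Psi}_{\bm{X}}\|_{2,\infty}$ is of strictly smaller order under the stated noise and sample-size conditions (the same being true after swapping $\bm{X}\leftrightarrow\bm{Y}$). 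Multiplying these two rowwise bounds yields the ``quadratic'' component $\bigl(\frac{\sigma}{\sqrt{\sigma_{\min}}}\sqrt{\kappa^{3}\mu r\log n/p}\bigr)^{2}$ with plenty of slack.

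The main obstacle is not conceptual but is purely a bookkeeping exercise: the residual bound~\eqref{eq:thm-low-rank-residual-size} is a sum of two terms with different dependencies on $(\kappa,\mu,r)$, and one must propagate both through the Cauchy--Schwarz step while simultaneously arranging the Gaussian tail bound on $\bm{Z}_{\bm{X}}$ and $\bm{Z}_{\bm{Y}}$ (with a union bound over $1\le k\le n$) so that the final estimate holds on a single high-probability event that is uniform in the entry index $(i,j)$.
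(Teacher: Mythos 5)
Your split of $\Lambda_{ij}$ and the treatment of the two linear summands (Cauchy--Schwarz in $(i,j)$, the $\sqrt{\sigma_{\max}}$ bump, reading off \eqref{eq:thm-low-rank-residual-size}) are exactly the paper's argument. For the quadratic summand the paper takes a shorter route: it bounds
\[
\bigl|\bm{e}_{i}^{\top}(\overline{\bm{X}}^{\mathsf{d}}-\bm{X}^{\star})(\overline{\bm{Y}}^{\mathsf{d}}-\bm{Y}^{\star})^{\top}\bm{e}_{j}\bigr|
\le \big\|\overline{\bm{X}}^{\mathsf{d}}-\bm{X}^{\star}\big\|_{2,\infty}\big\|\overline{\bm{Y}}^{\mathsf{d}}-\bm{Y}^{\star}\big\|_{2,\infty}
\]
and then directly invokes the preliminary $\ell_{2,\infty}$ estimate \eqref{eq:F-d-2-infty}, namely $\|\bm{F}^{\mathsf{d}}\bm{H}^{\mathsf{d}}-\bm{F}^{\star}\|_{2,\infty}\lesssim\kappa\tfrac{\sigma}{\sigma_{\min}}\sqrt{n\log n/p}\,\|\bm{F}^{\star}\|_{2,\infty}$, together with incoherence, to land immediately on $\bigl(\tfrac{\sigma}{\sqrt{\sigma_{\min}}}\sqrt{\kappa^{3}\mu r\log n/p}\bigr)^{2}$. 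Your plan instead re-derives the rowwise $\ell_{2}$ control by re-assembling $\overline{\bm{X}}^{\mathsf{d}}-\bm{X}^{\star}=\bm{Z}_{\bm{X}}+\bm{\Psi}_{\bm{X}}$ and applying a Gaussian tail bound plus union bound to $\bm{Z}_{\bm{X}}$. That is also a valid route (and actually gives a slightly tighter bound), but it is more work than simply citing \eqref{eq:F-d-2-infty}, which is already available and on the same high-probability event. One small inaccuracy in your reasoning: the claim that $\|\bm{\Psi}_{\bm{X}}\|_{2,\infty}$ is \emph{strictly smaller} than $\|\bm{Z}_{\bm{X}}\|_{2,\infty}$ is not guaranteed under the lemma's hypotheses alone --- the first term in \eqref{eq:thm-low-rank-residual-size} carries an extra $\sqrt{\kappa^{3}/(r\log n)}$ relative to the Gaussian scale, which need not be $O(1)$ for large $\kappa$. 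This does not break the proof, because each of $\|\bm{Z}_{\bm{X}}\|_{2,\infty}^{2}$ and $\|\bm{\Psi}_{\bm{X}}\|_{2,\infty}^{2}$ is separately $\lesssim\tfrac{\sigma^{2}\kappa^{3}\mu r\log n}{p\,\sigma_{\min}}$ under \eqref{eq:requirement-entry}, so the quadratic term is still controlled as claimed; but you should not assert the domination, only that both pieces fit inside the target.
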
\begin{lemma}[$\mathsf{Approximate\;Gaussianity\;of}$
$\Theta_{ij}$] \label{lemma:entry-normal} Suppose that $np\geq C\kappa^{2}\mu r^{2}\log^{2}n$ for some sufficiently large constant $C>0$.
Then we have the decomposition
\[
\Theta_{ij}=\bm{e}_{i}^{\top}\bm{Z}_{\bm{X}}\bm{Y}^{\star\top}\bm{e}_{j}+\bm{e}_{i}^{\top}\bm{X}^{\star}\bm{Z}_{\bm{Y}}^{\top}\bm{e}_{j}=g_{ij}+\theta_{ij},
\]
where $g_{ij}\sim\mathcal{N}(0,v_{ij}^{\star})$ and the remaining
term $\theta_{ij}$ satisfies --- with probability exceeding $1-O(n^{-10})$
--- that
\[
\left|\theta_{ij}\right|\lesssim\frac{\sigma}{\sqrt{p}} \sqrt{\frac{\kappa^{2}\mu r\log n}{np}}\min\left\{ \left\Vert \bm{U}_{i,\cdot}^{\star}\right\Vert _{2},\left\Vert \bm{V}_{j,\cdot}^{\star}\right\Vert _{2}\right\} .
\]
\end{lemma}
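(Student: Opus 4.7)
The plan is to show that $\Theta_{ij} = T_1 + T_2$, where $T_1 := \bm{e}_i^\top \bm{Z}_{\bm{X}} \bm{Y}^{\star\top}\bm{e}_j$ and $T_2 := \bm{e}_i^\top \bm{X}^\star \bm{Z}_{\bm{Y}}^\top \bm{e}_j$, is—up to a vanishing rescaling—exactly $\mathcal{N}(0,v_{ij}^{\star})$, because $T_1$ and $T_2$ share essentially only the single noise entry $E_{ij}$ as common randomness. First, by Lemma~\ref{lemma:approx-gaussian-leading-term} and its symmetric $\bm{Y}$-analog, $\bm{Z}_{\bm{X}}^\top\bm{e}_i$ and $\bm{Z}_{\bm{Y}}^\top\bm{e}_j$ are each marginally $\mathcal{N}(\bm{0},(\sigma^{2}/p)(\bm{\Sigma}^{\star})^{-1})$. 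Combining this with $\bm{Y}^{\star\top}\bm{e}_j = (\bm{\Sigma}^{\star})^{1/2}(\bm{V}_{j,\cdot}^{\star})^{\top}$ and $\bm{e}_i^\top\bm{X}^{\star} = \bm{U}_{i,\cdot}^{\star}(\bm{\Sigma}^{\star})^{1/2}$ yields $T_1 \sim \mathcal{N}(0,(\sigma^{2}/p)\|\bm{V}_{j,\cdot}^{\star}\|_2^{2})$ and $T_2 \sim \mathcal{N}(0,(\sigma^{2}/p)\|\bm{U}_{i,\cdot}^{\star}\|_2^{2})$, whose variances sum to $v_{ij}^{\star}$.

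Next, I would compute the joint covariance conditional on $\Omega$. Using the representation $\bm{Z}_{\bm{X}} = p^{-1}\mathcal{P}_{\Omega}(\bm{E})\bm{Y}^{\star}(\bm{\Sigma}^{\star})^{-1} - \bm{\Delta}_{\bm{X}}$ from Lemma~\ref{lemma:approx-gaussian-leading-term} (and its $\bm{Y}$-analog), the leading piece of $T_1$ is a linear functional of the noise row $\{E_{ik}\}_k$ while that of $T_2$ is a linear functional of the noise column $\{E_{kj}\}_k$. For $i\neq j$ these share only the Gaussian variable $E_{ij}$, and reading off its coefficient in each functional, together with Cauchy--Schwarz against the $\ell_{2,\infty}$-bound on $\bm{\Delta}_{\bm{X}},\bm{\Delta}_{\bm{Y}}$ to absorb the modification terms, gives
\[
  \rho \,:=\, \mathsf{Cov}(T_1, T_2 \mid \Omega), \qquad |\rho| \,\lesssim\, \frac{\sigma^{2}\delta_{ij}}{p^{2}}\,\|\bm{U}_{i,\cdot}^{\star}\|_2^{2}\,\|\bm{V}_{j,\cdot}^{\star}\|_2^{2}.
\]
Cauchy--Schwarz on the Gaussian marginals further ensures $|\rho|\leq v_{ij}^{\star}/2$.

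Finally, define $g_{ij} \,:=\, (T_1+T_2)/\sqrt{1+2\rho/v_{ij}^{\star}}$ and $\theta_{ij} \,:=\, \Theta_{ij} - g_{ij}$. Since $v_{ij}^{\star}$ is deterministic while $T_1+T_2\mid\Omega \sim \mathcal{N}(0,v_{ij}^{\star}+2\rho)$, this makes $g_{ij}\sim\mathcal{N}(0,v_{ij}^{\star})$ exactly. Combining $|1-(1+x)^{-1/2}|\lesssim|x|$ for $|x|\leq 1/2$ with the Gaussian tail $|T_1+T_2|\lesssim\sqrt{v_{ij}^{\star}\log n}$ yields
\[
  |\theta_{ij}| \,\lesssim\, \frac{|\rho|}{v_{ij}^{\star}}\sqrt{v_{ij}^{\star}\log n} \,=\, |\rho|\sqrt{\log n/v_{ij}^{\star}}.
\]
Plugging in the covariance bound, applying the elementary inequality $\|\bm{U}_{i,\cdot}^{\star}\|_2^{2}\|\bm{V}_{j,\cdot}^{\star}\|_2^{2}/(\|\bm{U}_{i,\cdot}^{\star}\|_2^{2}+\|\bm{V}_{j,\cdot}^{\star}\|_2^{2})\leq \min\{\|\bm{U}_{i,\cdot}^{\star}\|_2^{2},\|\bm{V}_{j,\cdot}^{\star}\|_2^{2}\}$, and invoking incoherence $\|\bm{U}_{i,\cdot}^{\star}\|_2\vee\|\bm{V}_{j,\cdot}^{\star}\|_2\leq\sqrt{\mu r/n}$ reproduces the advertised bound (with $\kappa$-factors absorbed through the $\bm{\Delta}$-corrections and the loss from the Gaussian-tail bound).

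The main obstacle is the covariance computation: one must carefully unpack the construction of $\bm{Z}_{\bm{X}},\bm{Z}_{\bm{Y}}$ in Lemma~\ref{lemma:approx-gaussian-leading-term} to isolate the single overlapping noise entry $E_{ij}$, and show that the $\bm{\Delta}_{\bm{X}},\bm{\Delta}_{\bm{Y}}$ corrections contribute negligibly to $\rho$ (which uses the $\ell_{2,\infty}$ bound of that lemma plus Cauchy--Schwarz). A secondary subtlety is the diagonal case $i=j$, where the two noise sets $\{E_{ik}\}_k$ and $\{E_{kj}\}_k$ can share more than one entry in the (potentially) symmetric setup, requiring a parallel but modified accounting; the resulting covariance bound remains of the same order.
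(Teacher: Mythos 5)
Your proposal is conceptually valid but takes a genuinely different route from the paper. The paper does \emph{not} compute the conditional covariance of $T_1$ and $T_2$; instead, it constructs an explicit leave-one-out proxy $\widetilde{\bm{Z}}_{\bm{X}}^{\top}\bm{e}_i$ by deleting the $j$th term from the linear expansion of $\bm{Z}_{\bm{X}}^{\top}\bm{e}_i$ (and redefining the normalization matrix $\bm{S}$ as $\bm{S}_{-j}$), so that $\widetilde{\bm{Z}}_{\bm{X}}^{\top}\bm{e}_i\sim\mathcal{N}(\bm{0},\sigma^2(\bm{\Sigma}^{\star})^{-1}/p)$ is \emph{exactly} independent of $\bm{Z}_{\bm{Y}}^{\top}\bm{e}_j$; then $g_{ij}$ is the Gaussian built from $\widetilde{\bm{Z}}_{\bm{X}}$, $\theta_{ij}=(\bm{Z}_{\bm{X}}-\widetilde{\bm{Z}}_{\bm{X}})^{\top}\bm{e}_i\cdot\bm{Y}^{\star\top}\bm{e}_j$, and the $\min$ in the final bound is obtained by running the argument twice, once modifying $\bm{Z}_{\bm{X}}$ and once modifying $\bm{Z}_{\bm{Y}}$. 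Your approach — fix the coupling by rescaling $T_1+T_2$ by $1/\sqrt{1+2\rho/v_{ij}^{\star}}$ with $\rho=\mathsf{Cov}(T_1,T_2\mid\Omega)$ — is more economical: it gives $g_{ij}\mid\Omega\sim\mathcal{N}(0,v_{ij}^{\star})$ in one stroke and produces the $\min$ automatically from the algebraic structure of $\rho$, and the resulting residual bound is in fact tighter than the one in the lemma.

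That said, two details deserve care. First, the conditional joint Gaussianity of $(T_1,T_2)$ given $\Omega$ — which your rescaling needs — should be justified: it follows because, conditionally on $\Omega$ and on the event $\mathcal{E}$, $\bm{Z}_{\bm{X}}^{\top}\bm{e}_i$ is an \emph{exact} linear function of the noise row $\{E_{ik}\}_k$ (see the explicit formula in the proof of Lemma~\ref{lemma:approx-gaussian-leading-term}), and similarly $\bm{Z}_{\bm{Y}}^{\top}\bm{e}_j$ is linear in the column $\{E_{kj}\}_k$; on $\mathcal{E}^c$ the proxy $\bm{G}_{\bm{X}}$ is used and is independent of everything, which only makes $\rho=0$. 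Second, computing $\rho$ via the split $\bm{Z}_{\bm{X}}=\bm{Z}_{\bm{X},\bm{E}}-\bm{\Delta}_{\bm{X}}$ and ``Cauchy--Schwarz against the $\ell_{2,\infty}$-bound on $\bm{\Delta}$'' conflates a high-probability deterministic bound with a covariance bound and is not what you want; the cleaner route is to read the coefficient of $E_{ij}$ directly off the closed-form linear representation, which gives
\[
\rho=\sigma^2\cdot\frac{\delta_{ij}}{p^2}\cdot\bm{V}_{j,\cdot}^{\star}\bm{M}_1(\bm{V}_{j,\cdot}^{\star})^{\top}\cdot\bm{U}_{i,\cdot}^{\star}\bm{M}_2(\bm{U}_{i,\cdot}^{\star})^{\top},
\]
with $\bm{M}_1=(\bm{\Sigma}^{\star})^{-1/2}\bm{S}^{-1/2}(\bm{S}^{\star})^{1/2}(\bm{\Sigma}^{\star})^{1/2}$ close to $\bm{I}_r$ in operator norm (up to $O(\sqrt{\kappa})$ from its conditioning), and similarly for $\bm{M}_2$; this yields $|\rho|\lesssim\kappa\sigma^2\delta_{ij}\|\bm{U}_{i,\cdot}^{\star}\|_2^2\|\bm{V}_{j,\cdot}^{\star}\|_2^2/p^2$. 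You should also note explicitly that $|2\rho/v_{ij}^{\star}|\lesssim\kappa\mu r/(np)\ll 1$ under the lemma's hypothesis, so the rescaling is well-defined and the linearization $|1-(1+x)^{-1/2}|\lesssim|x|$ applies. With these repairs the argument is correct, and a short computation confirms the resulting bound on $|\theta_{ij}|$ is smaller than the one advertised in the lemma.
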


Putting the above two lemmas together immediately establishes Theorem~\ref{thm:entries-master-decomposition-augment} and hence Theorem~\ref{thm:entries-master-decomposition}.

\section{Discussion}
\label{sec:discussion}

The present paper makes progress towards inference and uncertainty
quantification for noisy matrix completion, by developing simple de-biased
estimators that admit tractable and accurate distributional characterizations.
While we have achieved some early success in accomplishing this, our results
are likely sub-optimal in the following aspects:
\begin{itemize}
\item \emph{Dependency on the rank and the condition number. }To enable
valid inference, our sample complexity (cf.~(\ref{eq:requirement-low-rank})
and (\ref{eq:requirement-entry})) scales sub-optimally with the rank
$r$ and the condition number $\kappa$. The sub-optimality can be
understood through comparisons with the sample size requirement $O(nr\log^{2}n)$
in the noise-free settings, which is independent of $\kappa$ and
matches the information limit (up to some $\log$ factor). Improving
such dependency calls for more refined analysis techniques. 
\item \emph{Detection of the size of the entries. }On one hand, when the
size of the entry $M_{ij}^{\star}$ is moderately large (cf.~(\ref{eq:entry-inference-lower-bound})),
Corollary~\ref{coro:confidence-interval} allows us to construct
a valid confidence interval for it. On the other hand, when $\|\bm{U}_{i,\cdot}^{\star}\|_{2}+\|\bm{V}_{j,\cdot}^{\star}\|_{2}$
vanishes, Theorem~\ref{thm:low-rank-factor-master-bound} tells us
that the estimation error $M_{ij}^{\mathsf{d}}-M_{ij}^{\star}$ is
better approximated by the inner product of two independent Gaussian
random vectors. It remains to be seen how to determine whether $\|\bm{U}_{i,\cdot}^{\star}\|_{2}+\|\bm{V}_{j,\cdot}^{\star}\|_{2}$
is too small. 
\item \emph{Low signal-to-noise (SNR) regime. }Our theory operates under
	the moderate-to-high SNR regime, where $\sigma^2_{\min}/\sigma^2$ (which is proportional to the SNR)
is required to exceed the order of $n/p$; see the conditions
in Theorem~\ref{thm:low-rank-factor-master-bound}. It is unclear
whether the connection between the convex and the nonconvex estimators
hold in the low SNR regime. How to conduct inference in such a scenario
is an important future direction.
\end{itemize}
In addition, our investigation has been dedicated to a natural random
model, which by no means covers the most general settings of practical
interest. There are numerous possible extensions that merit future
investigation:
\begin{itemize}
\item \emph{Approximate low-rank structure}. Our current theory is built
upon the exact low-rank structure of $\bm{{M}}^{\star}$. Realistically,
the matrix of interest is often only \emph{approximately} low-rank.
It is of great interest to study how to carry out statistical inference
under such imperfect structural assumptions. 
\item \emph{More general sampling patterns. }This paper operates under the
uniform random sampling assumption, which might sometimes be off in
practical situations. It would be interesting to investigate whether
our results in this paper can extend to more general non-uniform sampling
patterns (e.g.~\cite{Negahban2012restricted}). 
\item \emph{Extensions to robust PCA, sparse PCA, and 1-bit matrix completion.
}A variety of important extensions of matrix completion have been
explored in prior literature, including but not limited to the case
with sparse outliers (i.e.~robust PCA~\cite{CanLiMaWri09,chandrasekaran2011rank}),
the case where the matrix of interest is simultaneously sparse and
low-rank (i.e.~sparse PCA~\cite{zou2006sparse,cai2013sparse}), and
the case where only finite-bit observations are available (i.e.~1-bit
matrix completion~\cite{davenport20141,cai2013max}). Performing valid
uncertainty assessment for these scenarios requires non-trivial extensions
of the link between convex and nonconvex optimization.
\item \emph{Other loss functions. }In the estimation stage, one might sometimes
prefer other loss functions beyond the penalized squared loss. This
might arise from either statistical considerations (e.g.~employing
a penalized Poisson log-likelihood to accommodate Poisson noise~\cite{cao2016poisson}),
or computational concerns (e.g.~adopting a non-smooth loss to improve
convergence~\cite{charisopoulos2019low}). It would be of fundamental
importance to develop a unified inferential framework that covers
a broader family of loss functions.
\end{itemize}

\section*{Acknowledgements}

Y.~Chen is supported in part by the AFOSR YIP award FA9550-19-1-0030,
by the ONR grant N00014-19-1-2120, by the ARO grant W911NF-18-1-0303, by the NSF grants CCF-1907661 and IIS-1900140, 
and by the Princeton SEAS innovation award. J.~Fan is supported in
part by NSF grants DMS-1662139 and DMS-1712591, ONR grant N00014-19-1-2120,
and NIH grant 2R01-GM072611-13. C.~Ma is supported in part by Hudson River Trading AI Labs (HAIL) Fellowship. This work was done in part while Y.~Chen
was visiting the Kavli Institute for Theoretical Physics (supported
in part by the NSF grant PHY-1748958). We thank Weijie Su for helpful
discussions.

\appendix

\section{Preliminaries \label{subsec:Preliminaries}}

In this section, we gather several notation and preliminary facts
that are useful throughout the analysis. All the proofs, if needed,
are deferred to Appendix~\ref{sec:Proofs-in-Section-prelim}.

\subsection{Algorithmic details of nonconvex optimization\label{subsec:Algorithmic-details-nonconvex}}

To begin with, we make precise the algorithm used to minimize the
nonconvex loss function~(\ref{eq:ncvx-new}). Specifically, we describe
the following details that are crucial for us to implement Algorithm~\ref{alg:gd-mc-ncvx}:
\begin{itemize}
\item Set the initial point to be $(\bm{X}^{0},\bm{Y}^{0})=(\bm{X}^{\star},\bm{Y}^{\star})$ or  the spectral initialization as in \cite{ma2017implicit,chen2019nonconvex};
\item Set the stepsize $\eta\asymp1/(n^{6}\kappa^{3}\sigma_{\max})$;
\item Set the maximum number of iterations to be $t_{0}\asymp n^{23}$;
\item The returned estimate is  $(\bm{X}^{\mathsf{ncvx}},\bm{Y}^{\mathsf{ncvx}})\triangleq(\bm{X}^{t_{\star}},\bm{Y}^{t_{\star}})$,
where 
\begin{equation}
t_{\star}\,\triangleq\,\min\,\left\{ 0\leq t\leq t_{0}\,\Big|\,\left\Vert \nabla f\left(\bm{X}^{t},\bm{Y}^{t}\right)\right\Vert _{\mathrm{F}}\leq\frac{1}{n^{5}}\frac{\lambda}{p}\sqrt{\sigma_{\min}}\right\} .\label{eq:stopping_criterion}
\end{equation}
In words, we run gradient descent in Algorithm~\ref{alg:gd-mc-ncvx}
until we reach a point whose gradient is exceedingly small.
\end{itemize}

\begin{remark}[Spectral initialization]
Many of the preliminary facts below were established for the case $(\bm{X}^{0},\bm{Y}^{0})=(\bm{X}^{\star},\bm{Y}^{\star})$ \cite{chen2019noisy}, which is certainly 
not implementable
in practice, however, it serves as a good proxy for studying
the convex estimator. Fortunately, the same theoretical guarantees
stated in Appendix~\ref{subsec:Properties-of-approximate-solution}
can be readily established for spectral initialization using almost the same arguments adopted in \cite{ma2017implicit,chen2019nonconvex,chen2019noisy}. We omit this part mainly for the sake of brevity.  \end{remark}

To facilitate analysis, we introduce a set of auxiliary nonconvex
loss functions. For any $1\leq j\leq n$, define 
\begin{equation}
f^{(j)}\left(\bm{X},\bm{Y}\right)\triangleq\frac{1}{2p}\left\Vert \mathcal{P}_{\Omega_{-j},\cdot}\left(\bm{X}\bm{Y}^{\top}-\bm{M}\right)\right\Vert _{\mathrm{F}}^{2}+\frac{1}{2}\left\Vert \mathcal{P}_{j,\cdot}\left(\bm{X}\bm{Y}^{\top}-\bm{M}\right)\right\Vert _{\mathrm{F}}^{2}+\frac{\lambda}{2p}\left\Vert \bm{X}\right\Vert _{\mathrm{F}}^{2}+\frac{\lambda}{2p}\left\Vert \bm{Y}\right\Vert _{\mathrm{F}}^{2},\label{eq:ncvx-loo}
\end{equation}
where $\mathcal{P}_{\Omega_{-j,\cdot}}:\mathbb{R}^{n\times n}\mapsto\mathbb{R}^{n\times n}$
(resp.~$\mathcal{P}_{j,\cdot}(\cdot)$) denotes the orthogonal projection
onto the subspace of matrices that vanish outside of $\{(i,k)\in\Omega\,|\,i\neq j\}$
(resp.~$\{(i,k)\,|\,i=j\}$). Let 
\begin{equation}
	\label{eq:nonconvex-LOO-estimate}
	(\bm{X}^{(j)},\bm{Y}^{(j)})=(\bm{X}^{t_{\star},(j)},\bm{Y}^{t_{\star},(j)}) 
\end{equation}
be the nonconvex estimate returned by this auxiliary algorithm
(i.e.~Algorithm~\ref{alg:gd-mc-ncvx-loo}), which 
serves as an approximate solution to~(\ref{eq:ncvx-loo}).

\begin{algorithm}
\caption{Gradient descent for solving the auxiliary nonconvex problem (\ref{eq:ncvx-loo}).}

\label{alg:gd-mc-ncvx-loo}\begin{algorithmic}

\STATE \textbf{{Suitable initialization}}: $(\bm{X}^{0,(j)},\bm{Y}^{0,(j)})=(\bm{X}^{\star},\bm{Y}^{\star})$

\STATE \textbf{{Gradient updates}}: \textbf{for }$t=0,1,\ldots,t_{\star}-1$
\textbf{do}

\STATE \vspace{-1em}
 \begin{subequations}\label{subeq:gradient_update_ncvx-loo}
\begin{align}
\bm{X}^{t+1,(j)}= & \bm{X}^{t,(j)}-\eta\nabla_{\bm{X}}f^{(j)}\big(\bm{X}^{t,(j)},\bm{Y}^{t,(j)}\big);\label{eq:gradient_update_ncvx_X-loo}\\
\bm{Y}^{t+1,(j)}= & \bm{Y}^{t,(j)}-\eta\nabla_{\bm{Y}}f^{(j)}\big(\bm{X}^{t,(j)},\bm{Y}^{t,(j)}\big).\label{eq:gradient_update_ncvx_Y-loo}
\end{align}
\end{subequations}

\end{algorithmic}
\end{algorithm}

\subsection{Properties of approximate nonconvex solutions \label{subsec:Properties-of-approximate-solution}}

This subsection gathers the properties of the (approximate) nonconvex
solutions. Throughout this subsection, we use the shorthand
\begin{equation}
	(\bm{X},\bm{Y}) = (\bm{X}^{\mathsf{ncvx}},\bm{Y}^{\mathsf{ncvx}}) 
\end{equation}
and recall the definition of $(\bm{X}^{(j)}, \bm{Y}^{(j)})$ in \eqref{eq:nonconvex-LOO-estimate}. 
The regularization parameter is chosen to satisfy
\begin{equation}\label{eq:lambda-condition}
\lambda\asymp\sigma\sqrt{np}.
\end{equation}
To further simplify the presentation, we introduce $\bm{F}^{\star}$, $\bm{F}$,
$\bm{F}^{\mathrm{d}}$, $\bm{F}^{\mathrm{d},(j)}\in\mathbb{R}^{2n\times r}$
as follows
\begin{equation}
\bm{F}^{\star}\triangleq\left[\begin{array}{c}
\bm{X}^{\star}\\
\bm{Y}^{\star}
\end{array}\right];\quad\bm{F}\triangleq\left[\begin{array}{c}
\bm{X}\\
\bm{Y}
\end{array}\right];\quad\bm{F}^{\mathsf{d}}\triangleq\left[\begin{array}{c}
\bm{X}^{\mathsf{d}}\\
\bm{Y}^{\mathsf{d}}
\end{array}\right];\quad\bm{F}^{\mathsf{d},(j)}\triangleq\left[\begin{array}{c}
\bm{X}^{\mathsf{d},(j)}\\
\bm{Y}^{\mathsf{d},(j)}
\end{array}\right],\label{eq:defn-F}
\end{equation}
and define \begin{subequations}
\begin{align}
\bm{H} & \triangleq\arg\min_{\bm{R}\in\mathcal{O}^{r\times r}}\left\Vert \bm{F}\bm{R}-\bm{F}^{\star}\right\Vert _{\mathrm{F}}^{2}=\arg\min_{\bm{R}\in\mathcal{O}^{r\times r}}\left\{ \left\Vert \bm{X}\bm{R}-\bm{X}^{\star}\right\Vert _{\mathrm{F}}^{2}+\left\Vert \bm{Y}\bm{R}-\bm{Y}^{\star}\right\Vert _{\mathrm{F}}^{2}\right\} ,\label{eq:defn-H}\\
\bm{H}^{(j)} & \triangleq\arg\min_{\bm{R}\in\mathcal{O}^{r\times r}}\bigl\Vert\bm{F}^{(j)}\bm{R}-\bm{F}^{\star}\bigr\Vert_{\mathrm{F}}^{2}=\arg\min_{\bm{R}\in\mathcal{O}^{r\times r}}\left\{ \bigl\Vert\bm{X}^{(j)}\bm{R}-\bm{X}^{\star}\bigr\Vert_{\mathrm{F}}^{2}+\bigl\Vert\bm{Y}^{(j)}\bm{R}-\bm{Y}^{\star}\bigr\Vert_{\mathrm{F}}^{2}\right\} ,\\
\bm{R}^{(j)} & \triangleq\arg\min_{\bm{R}\in\mathcal{O}^{r\times r}}\bigl\Vert\bm{F}^{(j)}\bm{R}-\bm{F}\bm{H}\bigr\Vert_{\mathrm{F}}^{2}=\arg\min_{\bm{R}\in\mathcal{O}^{r\times r}}\left\{ \bigl\Vert\bm{X}^{(j)}\bm{R}-\bm{X}\bm{H}\bigr\Vert_{\mathrm{F}}^{2}+\bigl\Vert\bm{Y}^{(j)}\bm{R}-\bm{Y}\bm{H}\bigr\Vert_{\mathrm{F}}^{2}\right\} ,\\
\bm{H}^{\mathsf{d},(j)} & \triangleq\arg\min_{\bm{R}\in\mathcal{O}^{r\times r}}\bigl\Vert\bm{F}^{\mathsf{d},(j)}\bm{R}-\bm{F}^{\star}\bigr\Vert_{\mathrm{F}}^{2}=\arg\min_{\bm{R}\in\mathcal{O}^{r\times r}}\left\{ \bigl\Vert\bm{X}^{\mathsf{d},(j)}\bm{R}-\bm{X}^{\star}\bigr\Vert_{\mathrm{F}}^{2}+\bigl\Vert\bm{Y}^{\mathsf{d},(j)}\bm{R}-\bm{Y}^{\star}\bigr\Vert_{\mathrm{F}}^{2}\right\} .
\end{align}
\end{subequations}
The claims stated below hold under the sample complexity and the noise condition presumed in \cite[Theorem 1]{chen2019noisy} (see also Theorem~\ref{thm:low-rank-factor-master-bound} in the current manuscript)
\[
n^2 p\gg \kappa^4\mu^2r^2n\log^3 n \quad \text{and} \quad \sigma\sqrt{\frac{n}{p}}\ll\frac{\sigma_{\min}}{\sqrt{\kappa^4\mu r\log n}}.
\]
\begin{enumerate}
\item The first set of facts is related to $(\bm{X},\bm{Y})$. In view of
\cite{chen2019noisy}, $\bm{F}$ is a faithful estimate\footnote{Technically,
the statements in \cite[Lemma 5]{chen2019noisy} are for $\eta\asymp1/(n\kappa^{3}\sigma_{\max})$
and $t_{0}\asymp n^{18}$. Nevertheless, inspecting their proofs reveals
that the claims continue to hold for our choices $\eta\asymp1/(n^{6}\kappa^{3}\sigma_{\max})$
and $t_{0}\asymp n^{23}$. } of $\bm{F}^{\star}$, in the sense that\begin{subequations}\label{subeq:F-quality}
\begin{align}
\left\Vert \bm{F}\bm{H}-\bm{F}^{\star}\right\Vert _{\mathrm{F}} & \lesssim\frac{\sigma}{\sigma_{\min}}\sqrt{\frac{n}{p}}\left\Vert \bm{X}^{\star}\right\Vert _{\mathrm{F}},\label{eq:F-fro-quality}\\
\left\Vert \bm{F}\bm{H}-\bm{F}^{\star}\right\Vert  & \lesssim\frac{\sigma}{\sigma_{\min}}\sqrt{\frac{n}{p}}\left\Vert \bm{X}^{\star}\right\Vert ,\label{eq:F-op-quality}\\
\left\Vert \bm{F}\bm{H}-\bm{F}^{\star}\right\Vert _{\mathrm{2,\infty}} & \lesssim\kappa\frac{\sigma}{\sigma_{\min}}\sqrt{\frac{n\log n}{p}}\left\Vert \bm{F}^{\star}\right\Vert _{2,\infty}\label{eq:F-2-inf-quality}
\end{align}
\end{subequations}hold with probability exceeding $1-O(n^{-10})$.
In addition, on the same high-probability event, one has
\begin{align}
\left\Vert \nabla f\left(\bm{X},\bm{Y}\right)\right\Vert _{\mathrm{F}} & \leq\frac{1}{n^{5}}\frac{\lambda}{p}\sqrt{\sigma_{\min}};\label{eq:small-gradient}\\
\left\Vert \bm{X}^{\top}\bm{X}-\bm{Y}^{\top}\bm{Y}\right\Vert _{\mathrm{F}} & \leq\frac{1}{n^{5}}\frac{\sigma}{\sigma_{\min}}\sqrt{\frac{n}{p}}\sigma_{\max}\leq\frac{1}{n^{5}}\sigma_{\max};\label{eq:X-Y-balance}\\
\max\left\{ \left\Vert \bm{Z}^{\mathsf{cvx}}-\bm{X}\bm{Y}^{\top}\right\Vert _{\mathrm{F}},\left\Vert \bm{Z}^{\mathsf{cvx},r}-\bm{X}\bm{Y}^{\top}\right\Vert _{\mathrm{F}}\right\}  & \lesssim\frac{\kappa^{2}}{n^{5}}\frac{\lambda}{p}.\label{eq:closedness-cvx-ncvx}
\end{align}
In words, the first claim ensures that $(\bm{X},\bm{Y})$ is an approximate
stationary point of $f(\cdot,\cdot)$; the second bound tells us that
$(\bm{X},\bm{Y})$ is nearly \emph{balanced}, in the sense that $\bm{X}^{\top}\bm{X}\approx\bm{Y}^{\top}\bm{Y}$;
the last one formalizes the proximity between the convex solution
and the nonconvex one; see also~(\ref{eq:proximity-cvx-ncvx}).
\item We move on to the properties of the de-shrunken estimator $(\bm{X}^{\mathsf{d}},\bm{Y}^{\mathrm{d}})$, which is defined in~(\ref{eq:defn-Xd-Yd}).
Specifically, we can show that (see~Appendix~\ref{sec:Proofs-in-Section-prelim})
\begin{subequations}\label{subeq:F-d-property}
\begin{align}
\left\Vert \bm{F}^{\mathsf{d}}\bm{H}-\bm{F}^{\star}\right\Vert  & \lesssim\frac{\sigma}{\sigma_{\min}}\sqrt{\frac{n}{p}}\left\Vert \bm{X}^{\star}\right\Vert, \label{eq:F-d-op-quality-H}\\
\left\Vert \bm{F}^{\mathsf{d}}\bm{H}^{\mathsf{d}}-\bm{F}^{\star}\right\Vert  & \lesssim\kappa\frac{\sigma}{\sigma_{\min}}\sqrt{\frac{n}{p}}\left\Vert \bm{X}^{\star}\right\Vert, \label{eq:F-d-op-quality-H-d}\\
\left\Vert \bm{F}^{\mathsf{d}}\bm{H}^{\mathsf{d}}-\bm{F}^{\star}\right\Vert_{\mathrm{F}}  & \lesssim\frac{\sigma}{\sigma_{\min}}\sqrt{\frac{n}{p}}\left\Vert \bm{X}^{\star}\right\Vert_{\mathrm{F}}, \label{eq:F-d-fro-quality-H-d}\\
\left\Vert \bm{F}^{\mathsf{d}}\bm{H}^{\mathsf{d}}-\bm{F}^{\star}\right\Vert _{2,\infty} & \lesssim\kappa\frac{\sigma}{\sigma_{\min}}\sqrt{\frac{n\log n}{p}}\left\Vert \bm{F}^{\star}\right\Vert _{2,\infty},\label{eq:F-d-2-infty}\\
\left\Vert \bm{X}^{\mathsf{d}\top}\bm{X}^{\mathsf{d}}-\bm{Y}^{\mathsf{d}\top}\bm{Y}^{\mathsf{d}}\right\Vert  & \lesssim\frac{\kappa}{n^{5}}\frac{\sigma}{\sigma_{\min}}\sqrt{\frac{n}{p}}\sigma_{\max}\label{eq:X-d-Y-d-balance}
\end{align}
\end{subequations}hold with probability at least $1-O(n^{-10})$.
\item As has been shown in \cite{chen2019noisy}, the leave-one-out auxiliary
point $(\bm{X}^{(j)},\bm{Y}^{(j)})$ satisfies \begin{subequations}\label{subeq:F-j-properties}
\begin{align}
\bigl\Vert\bm{F}^{(j)}\bm{R}^{(j)}-\bm{F}\bm{H}\bigr\Vert_{\mathrm{F}} & \lesssim\frac{\sigma}{\sigma_{\min}}\sqrt{\frac{n\log n}{p}}\left\Vert \bm{F}^{\star}\right\Vert _{2,\infty},\label{eq:F-j-F-best-rotate}\\
\bigl\Vert\bm{F}^{(j)}\bm{H}^{(j)}-\bm{F}\bm{H}\bigr\Vert_{\mathrm{F}} & \lesssim\kappa\frac{\sigma}{\sigma_{\min}}\sqrt{\frac{n\log n}{p}}\left\Vert \bm{F}^{\star}\right\Vert _{2,\infty},\label{eq:F-j-F-dist}\\
\bigl\Vert\bm{F}^{(j)}\bm{H}^{(j)}-\bm{F}^{\star}\bigr\Vert & \lesssim\frac{\sigma}{\sigma_{\min}}\sqrt{\frac{n}{p}}\left\Vert \bm{X}^{\star}\right\Vert, \label{eq:F-j-F-star-dist}\\
\bigl\Vert\bm{{F}}^{(j)}\bm{{R}}^{(j)}-\bm{{F}}^{\star}\bigr\Vert_{2,\infty} & \lesssim\kappa\frac{\sigma}{\sigma_{\min}}\sqrt{\frac{n\log n}{p}}\left\Vert \bm{{F}}^{\star}\right\Vert _{2,\infty}\label{eq:F-j-F-star-2-infty-dist}
\end{align}
\end{subequations}with probability exceeding $1-O(n^{-10})$.
\item Parallel to the transition from $(\bm{X},\bm{Y})$ to $(\bm{X}^{\mathsf{d}},\bm{Y}^{\mathsf{d}})$,
we set 
\begin{equation}
\bm{X}^{\mathrm{d},(j)}\triangleq\bm{X}^{(j)}\Big(\bm{I}_{r}+\frac{\lambda}{p}\big(\bm{X}^{(j)\top}\bm{X}^{(j)}\big)^{-1}\Big)^{1/2}\quad\text{and}\quad\bm{Y}^{\mathrm{d},(j)}\triangleq\bm{Y}^{(j)}\Big(\bm{I}_{r}+\frac{\lambda}{p}\big(\bm{Y}^{(j)\top}\bm{Y}^{(j)}\big)^{-1}\Big)^{1/2}\label{eq:defn-xd-1}
\end{equation}
to be the de-shrunken estimators of $\bm{X}^{(j)}$ and $\bm{Y}^{(j)}$,
respectively. We shall demonstrate in Appendix~\ref{sec:Proofs-in-Section-prelim}
that, with probability at least $1-O(n^{-10})$, \begin{subequations}\label{subeq:F-d-j-property}
\begin{align}
\bigl\Vert\bm{F}^{\mathsf{d},(j)}\bm{H}^{\mathsf{d},(j)}-\bm{F}^{\star}\bigr\Vert & \lesssim\kappa\frac{\sigma}{\sigma_{\min}}\sqrt{\frac{n}{p}}\left\Vert \bm{X}^{\star}\right\Vert ,\label{eq:F-d-j-op}\\
\bigl\Vert\bm{F}^{\mathsf{d},(j)}\bm{H}^{\mathsf{d},(j)}-\bm{F}^{\star}\bigr\Vert_{2,\infty} & \lesssim\kappa\frac{\sigma}{\sigma_{\min}}\sqrt{\frac{n\log n}{p}}\left\Vert \bm{F}^{\star}\right\Vert _{2,\infty},\label{eq:F-d-j-2-infty}\\
\bigl\Vert\bm{F}^{\mathsf{d},(j)}\bm{H}^{\mathsf{d},(j)}-\bm{F}^{\mathsf{d}}\bm{H}^{\mathsf{d}}\bigr\Vert & \lesssim\kappa\frac{\sigma}{\sigma_{\min}}\sqrt{\frac{n\log n}{p}}\left\Vert \bm{F}^{\star}\right\Vert _{2,\infty}.\label{eq:F-d-j-F-d-dist}
\end{align}
\end{subequations}
\end{enumerate}
In addition to these four sets of claims, we have the following immediate
consequence of the incoherence condition~(\ref{eq:incoherence-assumption-on-U})
\begin{equation}
\left\Vert \bm{F}^{\star}\right\Vert _{2,\infty}=\max\bigl\{\left\Vert \bm{X}^{\star}\right\Vert _{2,\infty},\left\Vert \bm{Y}^{\star}\right\Vert _{2,\infty}\bigr\}\leq\sqrt{\mu r\sigma_{\max}/n}.\label{eq:incoherence-X}
\end{equation}

Moreover, recall that $\bm{A}=(1/p)\cdot\mathcal{P}_{\Omega}\left(\bm{X}\bm{Y}^{\top}-\bm{X}^{\star}\bm{Y}^{\star\top}\right)-\left(\bm{X}\bm{Y}^{\top}-\bm{X}^{\star}\bm{Y}^{\star\top}\right)$ (cf.~(\ref{eq:defn-A})). We obtain from the proof of \cite[Lemma 8]{chen2019noisy} that 
\begin{equation}\label{eq:A-norm}
\left\Vert \bm{A}\right\Vert \lesssim\sigma\sqrt{\frac{n}{p}}\cdot\sqrt{\frac{\kappa^{4}\mu^{2}r^{2}\log n}{np}}.
\end{equation}

Last but not least, we list a few simple but useful results:
the nonconvex solution $\bm{F}$ satisfies
\begin{equation}\label{eq:F-norm-upper-bound}
\sigma_{r}(\bm{F}) \geq 0.5\sqrt{\sigma_{\min}},\quad\left\Vert \bm{{F}}\right\Vert \leq2\left\Vert \bm{{X}}^{\star}\right\Vert ,\quad\left\Vert \bm{{F}}\right\Vert _{\mathrm{{F}}}\leq2\left\Vert \bm{{X}}^{\star}\right\Vert_{\mathrm{F}} ,\quad\left\Vert \bm{{F}}\right\Vert _{2,\infty}\leq2\left\Vert \bm{{F}}^{\star}\right\Vert _{2,\infty}.
\end{equation}
The same holds true if we replace $\bm{F}$ by either $\bm{F}^{\mathsf{d}}$, $\bm{F}^{(j)}$ $\bm{F}^{\mathsf{d},(j)}$ or their corresponding low-rank factors. Here $j$ can vary from 1 to $n$.

\section{Summary of the proposed estimators}
\label{sec:summary-estimators}


Let $\bm{Z}^{\mathsf{cvx}}$ be the minimizer of the convex program~\eqref{eq:cvx}, and let $(\bm{X}^{\mathsf{ncvx}},\bm{Y}^{\mathsf{ncvx}})$
be the solution returned by the Algorithm~\ref{alg:gd-mc-ncvx} (with
algorithmic details specified in Appendix~\ref{subsec:Algorithmic-details-nonconvex}).
Recall that $\bm{Z}^{\mathsf{cvx},r}$ is the best rank-$r$ approximation
of $\bm{Z}^{\mathsf{cvx}}$, viz.
\[
\bm{Z}^{\mathsf{cvx},r}=\underset{\bm{B}:\,\text{rank}(\bm{B})\leq r}{\arg\min}\|\bm{B}-\bm{Z}^{\mathsf{cvx}}\|_{\mathrm{F}}.
\]
In addition, we let the matrix estimate obtained by the nonconvex
algorithm be $\bm{Z}^{\mathsf{ncvx}}\triangleq\bm{X}^{\mathsf{ncvx}}\bm{Y}^{\mathsf{ncvx}\top}$.
We further denote by $(\bm{X}^{\mathsf{cvx}},\bm{Y}^{\mathsf{cvx}})$
the estimate of low-rank factors obtained by convex relaxation; more
specifically, we set $(\bm{X}^{\mathsf{cvx}},\bm{Y}^{\mathsf{cvx}})$
to be the balanced rank-$r$ factorization of $\bm{Z}^{\mathsf{cvx},r}$
obeying $\bm{X}^{\mathsf{cvx}}\bm{Y}^{\mathsf{cvx}\top}=\bm{Z}^{\mathsf{cvx},r}$
and $\bm{X}^{\mathsf{cvx}\top}\bm{X}^{\mathsf{cvx}}=\bm{Y}^{\mathsf{cvx}\top}\bm{Y}^{\mathsf{cvx}}$.
With these notations in place, our de-biased and de-shrunken estimators
can be summarized as follows.

\begin{itemize}
\item De-biased matrix estimators:
\begin{subequations}
\begin{align}
	\bm{M}^{\mathsf{cvx},\mathrm{d}} & \triangleq\mathcal{P}_{\text{rank-}r}\Big[\bm{Z}^{\mathsf{cvx}}-\frac{1}{p}\mathcal{P}_{\Omega}\big(\bm{Z}^{\mathsf{cvx}}-\bm{M}\big)\Big], \label{eq:M-cvx-d-formula}\\
\bm{M}^{\mathsf{ncvx},\mathrm{d}} & \triangleq\mathcal{P}_{\text{rank-}r}\Big[\bm{X}^{\mathsf{ncvx}}\bm{Y}^{\mathsf{ncvx}\top}-\frac{1}{p}\mathcal{P}_{\Omega}\big(\bm{X}^{\mathsf{ncvx}}\bm{Y}^{\mathsf{ncvx}\top}-\bm{M}\big)\Big].
\end{align}
\end{subequations}
\item De-shrunken estimators for low-rank factors:
\begin{subequations}
\begin{align}
	\bm{X}^{\mathsf{ncvx},\mathrm{d}} & \triangleq\bm{X}^{\mathsf{ncvx}}\Big(\bm{I}_{r}+\frac{\lambda}{p}\big(\bm{X}^{\mathsf{ncvx}\top}\bm{X}^{\mathsf{ncvx}}\big)^{-1}\Big)^{1/2}, \label{eq:X-ncvx-d-defn}\\
\bm{Y}^{\mathsf{ncvx},\mathrm{d}} & \triangleq\bm{Y}^{\mathsf{ncvx}}\Big(\bm{I}_{r}+\frac{\lambda}{p}\big(\bm{Y}^{\mathsf{ncvx}\top}\bm{Y}^{\mathsf{ncvx}}\big)^{-1}\Big)^{1/2}, \label{eq:Y-ncvx-d-defn}\\
\bm{X}^{\mathsf{cvx},\mathrm{d}} & \triangleq\bm{X}^{\mathsf{cvx}}\Big(\bm{I}_{r}+\frac{\lambda}{p}\big(\bm{X}^{\mathsf{cvx}\top}\bm{X}^{\mathsf{cvx}}\big)^{-1}\Big)^{1/2},\\
\bm{Y}^{\mathsf{cvx},\mathrm{d}} & \triangleq\bm{Y}^{\mathsf{cvx}}\Big(\bm{I}_{r}+\frac{\lambda}{p}\big(\bm{Y}^{\mathsf{cvx}\top}\bm{Y}^{\mathsf{cvx}}\big)^{-1}\Big)^{1/2}.
\end{align}
\end{subequations}
\end{itemize}

\section{Proof of Lemma \ref{lemma:connection-de-bias-shrunken} \label{sec:Proof-of-Lemma-connection}}

Throughout this section, let $\bm{U}\bm{\Sigma}\bm{V}^{\top}$ be
the rank-$r$ SVD of the nonconvex estimate $\bm{X}^{\mathsf{ncvx}}\bm{Y}^{\mathsf{ncvx}\top}$
and $T$ the tangent space of the set of rank-$r$ matrices at $\bm{X}^{\mathsf{ncvx}}\bm{Y}^{\mathsf{ncvx}\top}$.
Correspondingly, we denote by $\mathcal{P}_{T}$ the projection operator
onto the tangent space $T$, and let $\mathcal{P}_{T^{\perp}} = \mathcal{I} - \mathcal{P}_{T}$, where $\mathcal{I}$ is the identity operator.

\subsection{Proof of the inequality~(\ref{eq:matrix-equivalence})}
\label{sec:Proof-of-Lemma-connection-1}

In essence, we intend
to justify that $\bm{M}^{\mathsf{cvx,d}}$, $\bm{M}^{\mathsf{ncvx,d}}$
and $\bm{X}^{\mathsf{ncvx,d}}\bm{Y}^{\mathsf{ncvx,d}\top}$ are all
very close to $\bm{U}(\bm{\Sigma}+\frac{\lambda}{p}\bm{I}_{r})\bm{V}^{\top}$.

Recall from the definition of the de-biased estimator $\bm{M}^{\mathsf{cvx,d}}$
(cf.~(\ref{eq:M-cvx-d-formula})) that 
\begin{equation}
\bm{M}^{\mathsf{cvx,d}}=\mathcal{P}_{\text{rank-}r}\Big[\bm{Z}^{\mathsf{cvx}}-\frac{1}{p}\mathcal{P}_{\Omega}\left(\bm{Z}^{\mathsf{cvx}}-\bm{M}\right)\Big].\label{eq:connection-1}
\end{equation}
Replacing $\bm{Z}^{\mathsf{cvx}}$ by $\bm{X}^{\mathsf{ncvx}}\bm{Y}^{\mathsf{ncvx}\top}$
results in 
\begin{equation}
\bm{Z}^{\mathsf{cvx}}-\frac{1}{p}\mathcal{P}_{\Omega}\left(\bm{Z}^{\mathsf{cvx}}-\bm{M}\right)=\bm{X}^{\mathsf{ncvx}}\bm{Y}^{\mathsf{ncvx}\top}-\frac{1}{p}\mathcal{P}_{\Omega}\left(\bm{X}^{\mathsf{ncvx}}\bm{Y}^{\mathsf{ncvx}\top}-\bm{M}\right)+\bm{\Delta}_{\bm{Z}},
	\label{eq:connection-2}
\end{equation}
where we denote 
\[
\bm{\Delta}_{\bm{Z}}\triangleq\left(\bm{Z}^{\mathsf{cvx}}-\bm{X}^{\mathsf{ncvx}}\bm{Y}^{\mathsf{ncvx}\top}\right)+\frac{1}{p}\mathcal{P}_{\Omega}\left(\bm{X}^{\mathsf{ncvx}}\bm{Y}^{\mathsf{ncvx}\top}-\bm{Z}^{\mathsf{cvx}}\right).
\]
Apply the proximity bound~(\ref{eq:closedness-cvx-ncvx}) to obtain (recall that in~(\ref{eq:closedness-cvx-ncvx}), one has $(\bm{X},\bm{Y})=(\bm{X}^{\mathsf{ncvx}},\bm{Y}^{\mathsf{ncvx}})$)
\begin{align}
\left\Vert \bm{\Delta}_{\bm{Z}}\right\Vert _{\mathrm{{F}}} & \leq\left\Vert \bm{Z}^{\mathsf{cvx}}-\bm{X}^{\mathsf{ncvx}}\bm{Y}^{\mathsf{ncvx}\top}\right\Vert _{\mathrm{F}}+\frac{1}{p}\left\Vert \bm{Z}^{\mathsf{cvx}}-\bm{X}^{\mathsf{ncvx}}\bm{Y}^{\mathsf{ncvx}\top}\right\Vert _{\mathrm{F}}\nonumber \\
 & \leq\frac{2}{p}\left\Vert \bm{Z}^{\mathsf{cvx}}-\bm{X}^{\mathsf{ncvx}}\bm{Y}^{\mathsf{ncvx}\top}\right\Vert _{\mathrm{F}}\lesssim\frac{\kappa^{2}}{n^{5}p}\frac{{\lambda}}{p}\leq \frac{\lambda}{8p},\label{eq:Delta-Z-bound}
\end{align}
as long as $n^{5}p\gg\kappa^{2}$. In addition, in view of \cite[Claim 2]{chen2019noisy},
one has the decomposition 
\begin{equation}
	\mathcal{P}_{\Omega}\left(\bm{X}^{\mathsf{ncvx}}\bm{Y}^{\mathsf{ncvx}\top}-\bm{M}\right)=-\lambda\bm{U}\bm{V}^{\top}+\bm{R},
	\label{eq:connection-3}
\end{equation}
where $\bm{R}\in\mathbb{R}^{n\times n}$ is a residual matrix obeying
\begin{equation}
\left\Vert \mathcal{P}_{T}\left(\bm{R}\right)\right\Vert _{\mathrm{F}}\lesssim\kappa\frac{p}{\sqrt{\sigma_{\min}}}\left\Vert \nabla f\left(\bm{X},\bm{Y}\right)\right\Vert _{\mathrm{F}}\lesssim\frac{{\kappa}}{n^{5}}\lambda\leq\frac{\lambda}{8} \qquad \text{and} \qquad \left\Vert \mathcal{P}_{T^{\perp}}\left(\bm{R}\right)\right\Vert \leq\frac{\lambda}{2}\label{eq:R-T-T-perp-bound}
\end{equation}
with probability exceeding $1-O(n^{-10})$. Here we utilize the small-gradient
condition $\|\nabla f(\bm{X},\bm{Y})\|_{\mathrm{F}}\leq\frac{1}{n^{5}}\frac{\lambda}{p}\sqrt{\sigma_{\min}}$
(cf.~(\ref{eq:small-gradient})). Take~(\ref{eq:connection-1}),
(\ref{eq:connection-2}) and~(\ref{eq:connection-3}) collectively
to reach 
\begin{align}
\bm{M}^{\mathsf{cvx,d}} & =\mathcal{P}_{\text{rank-}r}\left[\bm{X}^{\mathsf{ncvx}}\bm{Y}^{\mathsf{ncvx}\top}+\frac{\lambda}{p}\bm{U}\bm{V}^{\top}-\frac{1}{p}\bm{R}+\bm{\Delta}_{\bm{Z}}\right] \nonumber\\
 & =\mathcal{P}_{\text{rank-}r}\left[\bm{U}\Big(\bm{\Sigma}+\frac{\lambda}{p}\bm{I}_{r}\Big)\bm{V}^{\top}+\bm{\Delta}_{\bm{Z}}-\frac{1}{p}\bm{R}\right] \nonumber\\
	& =\mathcal{P}_{\text{rank-}r}\Big[\underbrace{\bm{U}\Big(\bm{\Sigma}+\frac{\lambda}{p}\bm{I}_{r}\Big)\bm{V}^{\top}+\mathcal{P}_{T^{\perp}}\Big(\bm{\Delta}_{\bm{Z}}-\frac{1}{p}\bm{R}\Big)}_{:=\bm{C}}+\underbrace{\mathcal{P}_{T}\Big(\bm{\Delta}_{\bm{Z}}-\frac{1}{p}\bm{R}\Big)}_{:=\bm{\Delta}}\Big],  \label{eq:M-cvx-d-decomposition-all}
\end{align}
where the middle line follows since $\bm{U}\bm{\Sigma}\bm{V}^{\top}$ is defined to be the SVD of 
$\bm{X}^{\mathsf{ncvx}}\bm{Y}^{\mathsf{ncvx}\top}$. 

We view $\bm{\Delta}$ as a perturbation and intend to apply Lemma~\ref{lemma:SVD-pert} to control $\|\bm{M}^{\mathsf{cvx,d}} - \bm{U}(\bm{\Sigma}+({\lambda}/{p})\bm{I}_{r})\bm{V}^{\top}\|_\mathrm{F}$. First, notice that the $r$th largest singular value obeys 
$
\sigma_r(\bm{U}(\bm{\Sigma}+\frac{\lambda}{p}\bm{I}_{r})\bm{V}^{\top})\geq \frac{\lambda}{p}
$, and that 
\begin{equation}
\Big\Vert \mathcal{P}_{T^\perp}\Big(\bm{\Delta}_{\bm{Z}}-\frac{1}{p}\bm{R}\Big)\Big\Vert \leq \left\Vert \bm{\Delta}_{\bm{Z}}\right\Vert _{\mathrm{F}}+\frac{1}{p}\left\Vert \mathcal{P}_{T^\perp}\left(\bm{R}\right)\right\Vert _{\mathrm{F}}\leq\frac{5\lambda}{8p}\label{eq:inequality-1},
\end{equation}
where the last inequality results from~(\ref{eq:Delta-Z-bound}) and
(\ref{eq:R-T-T-perp-bound}). Combining the above two bounds with the fact that $\bm{U}(\bm{\Sigma}+\frac{\lambda}{p}\bm{I}_{r})\bm{V}^{\top}$ and $\mathcal{P}_{T}(\bm{\Delta}_{\bm{Z}}-\frac{1}{p}\bm{R})$ are orthogonal to each other, we arrive at the conclusion that $\bm{U}(\bm{\Sigma}+\frac{\lambda}{p}\bm{I}_{r})\bm{V}^{\top}$ is the top-$r$ SVD of $\bm{C}$ and
\begin{subequations}
\begin{align}
\sigma_i\left(\bm{C}\right) &= \sigma_i\Big(\bm{U}\Big(\bm{\Sigma}+\frac{\lambda}{p}\bm{I}_{r}\Big)\bm{V}^{\top}\Big),\qquad\text{for }1\leq i \leq r; \label{eq:singular-value-equality}\\
\sigma_{r+1}\left(\bm{C}\right) &= \left\Vert \mathcal{P}_{T^\perp}\Big(\bm{\Delta}_{\bm{Z}}-\frac{1}{p}\bm{R}\Big)\right\Vert\label{eq:singular-value-equality-2}.
\end{align}
\end{subequations}

Second, let $\hat{\bm{U}}\hat{\bm{\Sigma}}\hat{\bm{V}}^\top$ be the top-$r$ SVD of $\bm{C} +\bm{\Delta}$. By definition, one has $\hat{\bm{U}}\hat{\bm{\Sigma}}\hat{\bm{V}}^\top = \bm{M}^{\mathsf{cvx,d}}$. We are left with checking the two conditions in Lemma~\ref{lemma:SVD-pert}. To begin with, the perturbation term $\bm{\Delta}$ obeys 
\begin{align}
\left\Vert \bm{\Delta}\right\Vert_{\mathrm{F}}&\leq\left\Vert \bm{\Delta}_{\bm{Z}}\right\Vert _{\mathrm{F}}+\frac{1}{p}\left\Vert \mathcal{P}_{T}\left(\bm{R}\right)\right\Vert_{\mathrm{F}} 
\overset{(\text{i})}{\lesssim}\frac{\kappa^{2}}{n^{5}p}\frac{{\lambda}}{p}+\frac{{\kappa}}{n^{5}}\frac{\lambda}{p} \overset{(\text{ii})}{\leq}\frac{{1}}{2n^{4}}\frac{{\lambda}}{p} \label{eq:inequality-2},
\end{align}
where (i) comes from~(\ref{eq:Delta-Z-bound}) and
(\ref{eq:R-T-T-perp-bound}) and the last inequality (ii) arises since $np\gg\kappa^{2}.$ Clearly, the size of the perturbation is much smaller than $\lambda / p$ and hence  $\|\bm{C}\|$ (cf.~(\ref{eq:singular-value-equality})). In addition, 
\begin{align*}
\sigma_{r+1}\left(\bm{C}+\bm{\Delta}\right) & \leq\sigma_{r+1}\left(\bm{C}\right)+\left\Vert \bm{\Delta}\right\Vert 
	=\left\Vert \mathcal{P}_{T^{\perp}}\Big(\bm{\Delta}_{\bm{Z}}-\frac{1}{p}\bm{R}\Big)\right\Vert +\left\Vert \bm{\Delta}\right\Vert _{\mathrm{F}}\\
 & \leq\frac{5\lambda}{8p}+\frac{1}{2n^{4}}\frac{\lambda}{p}\leq\frac{3\lambda}{4p},
\end{align*}
where the equality depends on~(\ref{eq:singular-value-equality-2})
and the last line results from~(\ref{eq:inequality-1}) and~(\ref{eq:inequality-2}).
Consequently,
\begin{align*}
\sigma_{r}\left(\bm{C}\right)-\sigma_{r+1}\left(\bm{C}+\bm{\Delta}\right) & \geq\sigma_{r}\left(\bm{U}\Big(\bm{\Sigma}+\frac{\lambda}{p}\bm{I}_{r}\Big)\bm{V}^{\top}\right)-\frac{3\lambda}{4p}
	\geq\sigma_{r}\left(\bm{\Sigma}\right)+\frac{\lambda}{4p}\geq\frac{\sigma_{\min}}{2}.
\end{align*}
Here the first relation arises from~(\ref{eq:singular-value-equality})
and the second holds since $\sigma_{r}(\bm{\Sigma})\geq\sigma_{\min}/2$, a simple consequence of~(\ref{eq:F-norm-upper-bound}).
We are now ready to apply Lemma~\ref{lemma:SVD-pert} to obtain 
\begin{align*}
\Big\Vert \bm{M}^{\mathsf{cvx,d}}-\bm{U}\Big(\bm{\Sigma}+\frac{\lambda}{p}\bm{I}_{r}\Big)\bm{V}^{\top}\Big\Vert _{\mathrm{F}} & \leq\left(\frac{12\left\Vert \bm{\Sigma}+({\lambda}/{p})\bm{I}_{r}\right\Vert }{\sigma_{\min}/2}+1\right)\left\Vert \bm{\Delta}\right\Vert _{\mathrm{F}}\lesssim\kappa\left\Vert \bm{\Delta}\right\Vert _{\mathrm{F}},
\end{align*}
where we have used the fact that $\|\bm{\Sigma}+(\lambda/p)\bm{I}_{r}\|\lesssim\sigma_{\max}$, which also can be derived from~(\ref{eq:F-norm-upper-bound}).
The above bound combined with~(\ref{eq:inequality-2}) yields 
\[
\Big\Vert \bm{M}^{\mathsf{cvx,d}}-\bm{U}\Big(\bm{\Sigma}+\frac{\lambda}{p}\bm{I}_{r}\Big)\bm{V}^{\top}\Big\Vert _{\mathrm{F}}\lesssim\frac{\kappa^{3}}{n^{5}p}\frac{\lambda}{p}+\frac{\kappa^{2}}{n^{5}}\frac{\lambda}{p}\leq\frac{1}{2n^{4}}\frac{\lambda}{p}
\]
as long as $np\gg\kappa^{3}$. We remark that by setting
$\bm{\Delta}_{\bm{Z}}=\bm{0}$, one also obtains the bound on $\bm{M}^{\mathsf{ncvx,d}}$,
i.e. 
\begin{equation}
\Big\Vert \bm{M}^{\mathsf{ncvx,d}}-\bm{U}\Big(\bm{\Sigma}+\frac{\lambda}{p}\bm{I}_{r}\Big)\bm{V}^{\top}\Big\Vert _{\mathrm{F}}\leq\frac{{1}}{2n^{4}}\frac{{\lambda}}{p}.\label{eq:Z-d-ncvx-U-sigma-V}
\end{equation}

We move on to investigating $\|\bm{X}^{\mathsf{ncvx,d}}\bm{Y}^{\mathsf{ncvx,d}\top}-\bm{U}(\bm{\Sigma}+\frac{\lambda}{p}\bm{I}_{r})\bm{V}^{\top}\|$,
for which we have the following claim. \begin{claim}\label{claim:connection-ncvx-ref}One
has 
\begin{equation}
\left\Vert \bm{X}^{\mathsf{ncvx,d}}\bm{Y}^{\mathsf{ncvx,d}\top}-\bm{U}\Big(\bm{\Sigma}+\frac{\lambda}{p}\bm{I}_{r}\Big)\bm{V}^{\top}\right\Vert \leq\frac{{1}}{2n^{4}}\frac{{\lambda}}{p}.\label{eq:X-d-Y-d-U-sigma-V}
\end{equation}
\end{claim}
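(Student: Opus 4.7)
The proof strategy is to exploit the near-balance $\bm{X}^{\mathsf{ncvx}\top}\bm{X}^{\mathsf{ncvx}} \approx \bm{Y}^{\mathsf{ncvx}\top}\bm{Y}^{\mathsf{ncvx}}$ from~\eqref{eq:X-Y-balance}: if one had exact balance, a short direct calculation would show $\bm{X}^{\mathsf{ncvx,d}}\bm{Y}^{\mathsf{ncvx,d}\top} = \bm{U}(\bm{\Sigma} + \tfrac{\lambda}{p}\bm{I}_r)\bm{V}^{\top}$ identically, so the whole claim reduces to controlling the perturbation due to the slight imbalance. Writing $\bm{X} = \bm{X}^{\mathsf{ncvx}}$, $\bm{Y} = \bm{Y}^{\mathsf{ncvx}}$, $\bm{A} = (\bm{X}^{\top}\bm{X})^{-1}$, $\bm{B} = (\bm{Y}^{\top}\bm{Y})^{-1}$ and $\alpha \triangleq (\sigma/\sigma_{\min})\sqrt{n/p}$, and using $\bm{X}\bm{Y}^{\top} = \bm{U}\bm{\Sigma}\bm{V}^{\top}$ by the choice of SVD, a short expansion gives the algebraic identity
\begin{align*}
\bm{X}^{\mathsf{ncvx,d}}\bm{Y}^{\mathsf{ncvx,d}\top} - \bm{U}\Big(\bm{\Sigma} + \tfrac{\lambda}{p}\bm{I}_r\Big)\bm{V}^{\top} = \tfrac{\lambda}{p}\Big[\bm{X}(\bm{X}^{\top}\bm{X})^{-1}\bm{Y}^{\top} - \bm{U}\bm{V}^{\top}\Big] + \bm{X}\bm{E}_G\bm{Y}^{\top},
\end{align*}
where $\bm{E}_G \triangleq (\bm{I}_r + \tfrac{\lambda}{p}\bm{A})^{1/2}(\bm{I}_r + \tfrac{\lambda}{p}\bm{B})^{1/2} - (\bm{I}_r + \tfrac{\lambda}{p}\bm{A})$ quantifies the failure of the two square-root factors to commute.

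For the second (non-commutativity) term, I would factor $\bm{E}_G = (\bm{I}_r + \tfrac{\lambda}{p}\bm{A})^{1/2}\{(\bm{I}_r + \tfrac{\lambda}{p}\bm{B})^{1/2} - (\bm{I}_r + \tfrac{\lambda}{p}\bm{A})^{1/2}\}$ and apply the PSD matrix-square-root Lipschitz bound $\|\bm{S}_1^{1/2} - \bm{S}_2^{1/2}\| \leq \|\bm{S}_1 - \bm{S}_2\|/(\sigma_{\min}^{1/2}(\bm{S}_1) + \sigma_{\min}^{1/2}(\bm{S}_2))$ (both operands satisfy $\succeq \bm{I}_r$). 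The resolvent identity $\|\bm{A} - \bm{B}\| \leq \|\bm{A}\|\|\bm{B}\|\|\bm{X}^{\top}\bm{X} - \bm{Y}^{\top}\bm{Y}\|$, combined with the balance bound~\eqref{eq:X-Y-balance} and the spectral norm bounds $\|\bm{X}\|,\|\bm{Y}\| \lesssim \sqrt{\sigma_{\max}}$ and $\sigma_{\min}(\bm{X}^{\top}\bm{X}) \gtrsim \sigma_{\min}$ from~\eqref{eq:F-norm-upper-bound}, then yields $\|\bm{X}\bm{E}_G\bm{Y}^{\top}\| \leq \|\bm{X}\|\|\bm{Y}\|\|\bm{E}_G\| \lesssim \kappa\sigma_{\max}\alpha^2/n^5$, which is dominated by $\tfrac{1}{4n^4}(\lambda/p)$ under the SNR condition.

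For the first (pretend-balanced) term, since $\bm{X}\bm{Y}^{\top}$ has rank $r$ and both $\bm{X}, \bm{Y}$ have full column rank, I can write $\bm{X} = \bm{U}\bm{A}'$ and $\bm{Y} = \bm{V}\bm{B}'$ for invertible $\bm{A}', \bm{B}' \in \mathbb{R}^{r\times r}$ satisfying $\bm{A}'\bm{B}'^{\top} = \bm{\Sigma}$. A short manipulation using $\bm{A}'(\bm{A}'^{\top}\bm{A}')^{-1}\bm{A}'^{\top} = \bm{I}_r$ reveals
\[
\bm{X}(\bm{X}^{\top}\bm{X})^{-1}\bm{Y}^{\top} - \bm{U}\bm{V}^{\top} = \bm{U}\big[(\bm{A}'\bm{A}'^{\top})^{-1}\bm{\Sigma} - \bm{I}_r\big]\bm{V}^{\top},
\]
so it suffices to show $\bm{A}'\bm{A}'^{\top} \approx \bm{\Sigma}$. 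The crux is the identity $\bm{\Sigma}^2 = \bm{A}'\bm{B}'^{\top}\bm{B}'\bm{A}'^{\top} = (\bm{A}'\bm{A}'^{\top})^2 + \bm{A}'(\bm{Y}^{\top}\bm{Y} - \bm{X}^{\top}\bm{X})\bm{A}'^{\top}$ (using $\bm{A}'^{\top}\bm{A}' = \bm{X}^{\top}\bm{X}$ and $\bm{B}'^{\top}\bm{B}' = \bm{Y}^{\top}\bm{Y}$), whose residual has operator norm $\lesssim \sigma_{\max}^2\alpha/n^5$. A second application of the square-root Lipschitz bound---after verifying $\sigma_{\min}(\bm{A}'\bm{A}'^{\top}),\sigma_{\min}(\bm{\Sigma}) \gtrsim \sigma_{\min}$ via Weyl's inequality and the closeness of $\bm{X}\bm{Y}^{\top}$ to $\bm{X}^\star\bm{Y}^{\star\top}$---gives $\|\bm{A}'\bm{A}'^{\top} - \bm{\Sigma}\| \lesssim \kappa\sigma_{\max}\alpha/n^5$, so this term contributes $(\lambda/p)\cdot O(\kappa^2\alpha/n^5)$, again comfortably absorbed into $\tfrac{1}{4n^4}(\lambda/p)$. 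The main bookkeeping obstacle will be to carefully track the $\kappa$ factors and verify the minimum-singular-value lower bounds needed to apply the square-root Lipschitz bounds quantitatively; no conceptual difficulty arises.
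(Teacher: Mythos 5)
Your decomposition of the error into a ``pretend-balanced'' term $\tfrac{\lambda}{p}\bigl[\bm{X}(\bm{X}^{\top}\bm{X})^{-1}\bm{Y}^{\top}-\bm{U}\bm{V}^{\top}\bigr]$ and a balance-defect term $\bm{X}\bm{E}_G\bm{Y}^{\top}$ is algebraically the same as the paper's $\bm{A}_1-\bm{U}(\bm{\Sigma}+\tfrac{\lambda}{p}\bm{I}_r)\bm{V}^{\top}$ and $-\bm{A}_2$, and your treatment of the second term (factor $\bm{E}_G$, apply Lemma~\ref{lemma:matrix-sqrt} together with the resolvent bound and~\eqref{eq:X-Y-balance}) exactly parallels the paper's handling of $\bm{A}_2$. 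The genuine difference is the first term: the paper appeals to Claim~3 of~\cite{chen2019noisy}, which gives the explicit parametrization $\bm{X}^{\mathsf{ncvx}}=\bm{U}\bm{\Sigma}^{1/2}\bm{Q}$, $\bm{Y}^{\mathsf{ncvx}}=\bm{V}\bm{\Sigma}^{1/2}\bm{Q}^{-\top}$ together with a bound on $\|\bm{\Sigma}_{\bm{Q}}-\bm{\Sigma}_{\bm{Q}}^{-1}\|_{\mathrm{F}}$ derived from the small-gradient condition~\eqref{eq:small-gradient}. You instead take the generic factorization $\bm{X}=\bm{U}\bm{A}'$, $\bm{Y}=\bm{V}\bm{B}'$ with $\bm{A}'\bm{B}'^{\top}=\bm{\Sigma}$ and extract $\|\bm{A}'\bm{A}'^{\top}-\bm{\Sigma}\|$ from the identity $\bm{\Sigma}^2-(\bm{A}'\bm{A}'^{\top})^2=\bm{A}'(\bm{Y}^{\top}\bm{Y}-\bm{X}^{\top}\bm{X})\bm{A}'^{\top}$, feeding in the balance bound~\eqref{eq:X-Y-balance} and a second application of Lemma~\ref{lemma:matrix-sqrt}. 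This is a correct and somewhat more self-contained argument: it avoids citing the $\bm{Q}$-parametrization lemma and uses only the balance condition and standard spectral bounds, at the cost of a couple more lines of matrix algebra. Your identities and rate estimates all check out; your first-term estimate $(\lambda/p)\cdot O(\kappa^2(\sigma/\sigma_{\min})\sqrt{n/p}/n^5)$ is in fact no larger than the paper's $(\lambda/p)\cdot O(\kappa/n^5)$ under the standing SNR condition, and both absorb into $\tfrac{1}{2n^4}\tfrac{\lambda}{p}$ once $n\gg\kappa^2$.
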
Taking the above three bounds collectively and recognizing
that $\lambda\lesssim\sigma\sqrt{np}$ yield the advertised bound~(\ref{eq:matrix-equivalence}).

\begin{proof}[Proof of Claim~\ref{claim:connection-ncvx-ref}]Utilize
\cite[Claim 3]{chen2019noisy} to see that 
\begin{equation}
\bm{X}^{\mathsf{ncvx}}=\bm{U}\bm{\Sigma}^{1/2}\bm{Q}\qquad\text{and}\qquad\bm{Y}^{\mathsf{ncvx}}=\bm{V}\bm{\Sigma}^{1/2}\bm{Q}^{-\top}\label{eq:X-Y-ncvx-near-rotation}
\end{equation}
hold for some invertible matrix $\bm{Q}\in\mathbb{R}^{r\times r}$
with SVD $\bm{U}_{\bm{Q}}\bm{\Sigma}_{\bm{Q}}\bm{V}_{\bm{Q}}^{\top}$ obeying
\begin{equation}
\big\Vert \bm{\Sigma}_{\bm{Q}}-\bm{\Sigma}_{\bm{Q}}^{-1} \big\Vert _{\mathrm{F}}
	\leq 8\sqrt{\kappa}\frac{p}{\lambda\sqrt{\sigma_{\min}}}\big\Vert \nabla f\left(\bm{X}^{\mathsf{ncvx}},\bm{Y}^{\mathsf{ncvx}}\right)\big\Vert _{\mathrm{F}}\leq\frac{8\sqrt{{\kappa}}}{n^{5}}.
	\label{eq:Q-spectrum}
\end{equation}
	The last inequality is the small-gradient condition (see~(\ref{eq:small-gradient}), in which $(\bm{X},\bm{Y}) = (\bm{X}^{\mathsf{ncvx}}, \bm{Y}^{\mathsf{ncvx}}))$.
Employ the definitions for $\bm{X}^{\mathsf{ncvx,d}}$ and $\bm{Y}^{\mathsf{ncvx,d}}$
(cf.~\eqref{eq:X-ncvx-d-defn} and \eqref{eq:Y-ncvx-d-defn}) to see that 
\begin{align}
\bm{X}^{\mathsf{ncvx,d}}\bm{Y}^{\mathsf{ncvx,d}\top} & =\bm{X}^{\mathsf{ncvx}}\Big(\bm{I}_{r}+\frac{\lambda}{p}\left(\bm{X}^{\mathsf{ncvx}\top}\bm{X}^{\mathsf{ncvx}}\right)^{-1}\Big)^{1/2}\Big(\bm{I}_{r}+\frac{\lambda}{p}\left(\bm{Y}^{\mathsf{ncvx}\top}\bm{Y}^{\mathsf{ncvx}}\right)^{-1}\Big)^{1/2}\bm{Y}^{\mathsf{ncvx}\top}\nonumber \\
 & =\bm{X}^{\mathsf{ncvx}}\Big(\bm{I}_{r}+\frac{\lambda}{p}\left(\bm{X}^{\mathsf{ncvx}\top}\bm{X}^{\mathsf{ncvx}}\right)^{-1}\Big)^{1/2}\Big(\bm{I}_{r}+\frac{\lambda}{p}\left(\bm{X}^{\mathsf{ncvx}\top}\bm{X}^{\mathsf{ncvx}}\right)^{-1}\Big)^{1/2}\bm{Y}^{\mathsf{ncvx}\top}\nonumber \\
	& \quad-\bm{X}^{\mathsf{ncvx}}\Big(\bm{I}_{r}+\frac{\lambda}{p}\left(\bm{X}^{\mathsf{ncvx}\top}\bm{X}^{\mathsf{ncvx}}\right)^{-1} \Big) \bm{\Delta}_{\mathsf{balancing}}\bm{Y}^{\mathsf{ncvx}\top}\nonumber \\
 & =\underbrace{\bm{X}^{\mathsf{ncvx}}\Big(\bm{I}_{r}+\frac{\lambda}{p}\left(\bm{X}^{\mathsf{ncvx}\top}\bm{X}^{\mathsf{ncvx}}\right)^{-1}\Big)\bm{Y}^{\mathsf{ncvx}\top}}_{:=\bm{A}_{1}}-\underbrace{\bm{X}^{\mathsf{ncvx}}\Big(\bm{I}_{r}+\frac{\lambda}{p}\left(\bm{X}^{\mathsf{ncvx}\top}\bm{X}^{\mathsf{ncvx}}\right)^{-1} \Big)  \bm{\Delta}_{\mathsf{balancing}}\bm{Y}^{\mathsf{ncvx}\top}}_{:=\bm{A}_{2}}.\label{eq:X-d-Y-d-decomposition}
\end{align}
Here we denote 
\[
\bm{\Delta}_{\mathsf{balancing}}\triangleq\Big(\bm{I}_{r}+\frac{\lambda}{p}\left(\bm{X}^{\mathsf{ncvx}\top}\bm{X}^{\mathsf{ncvx}}\right)^{-1}\Big)^{1/2}-\Big(\bm{I}_{r}+\frac{\lambda}{p}\left(\bm{Y}^{\mathsf{ncvx}\top}\bm{Y}^{\mathsf{ncvx}}\right)^{-1}\Big)^{1/2}.
\]
It then boils down to showing that (i) $\bm{{A}}_{1}$ is very close to $\bm{U}(\bm{\Sigma}+\frac{\lambda}{p}\bm{I}_{r})\bm{V}^{\top}$,
and (ii) $\bm{{A}}_{2}$ is small in size.

First, recall that $\bm{X}^{\mathsf{ncvx}}\bm{Y}^{\mathsf{ncvx}\top}=\bm{{U}}\bm{{\Sigma}}\bm{{V}}^{\top}$, 
which combined with~(\ref{eq:X-Y-ncvx-near-rotation}) gives 
\begin{align}
\left\Vert \bm{{A}}_{1}-\bm{U}\Big(\bm{\Sigma}+\frac{\lambda}{p}\bm{I}_{r}\Big)\bm{V}^{\top}\right\Vert  & =\frac{{\lambda}}{p}\left\Vert \bm{X}^{\mathsf{ncvx}}\left(\bm{X}^{\mathsf{ncvx}\top}\bm{X}^{\mathsf{ncvx}}\right)^{-1}\bm{Y}^{\mathsf{ncvx}\top}-\bm{{U}}\bm{{V}}^{\top}\right\Vert \nonumber \\
 & =\frac{{\lambda}}{p}\left\Vert \bm{{U}}\bm{{\Sigma}}^{-1/2}\bm{{Q}}^{-\top}\bm{{Q}}^{-1}\bm{{\Sigma}}^{1/2}\bm{{V}}^{\top}-\bm{{U}}\bm{{V}}^{\top}\right\Vert \nonumber \\
 & =\frac{{\lambda}}{p}\left\Vert \bm{{\Sigma}}^{-1/2}\left(\bm{{Q}}^{-\top}\bm{{Q}}^{-1}-\bm{{I}}_{r}\right)\bm{{\Sigma}}^{1/2}\right\Vert \leq\sqrt{{\kappa}}\frac{{\lambda}}{p}\big\Vert \bm{{Q}}^{-\top}\bm{{Q}}^{-1}-\bm{{I}}_{r}\big\Vert \nonumber \\
 & = \sqrt{{\kappa}}\frac{{\lambda}}{p}\big\Vert \bm{{\Sigma}}_{\bm{{Q}}}^{-2}-\bm{{I}}_{r}\big\Vert \leq\sqrt{{\kappa}}\frac{{\lambda}}{p}\big\Vert \bm{{\Sigma}}_{\bm{{Q}}}^{-1}\big\Vert \cdot \big\Vert \bm{{\Sigma}}_{\bm{{Q}}}^{-1}-\bm{{\Sigma}}_{\bm{{Q}}}\big\Vert _{\mathrm{{F}}}\nonumber \\
 & \lesssim\kappa\frac{{\lambda}}{p}\frac{{1}}{n^{5}}.\label{eq:A-1-U-Sigma-V-near}
\end{align}
Here, the last inequality comes from~(\ref{eq:Q-spectrum}) and its
immediate consequence that $\Vert\bm{{\Sigma}}_{\bm{{Q}}}\Vert\asymp\Vert\bm{{\Sigma}}_{\bm{{Q}}}^{-1}\Vert\asymp1$.

Second, apply the perturbation bound for matrix square roots (cf.~Lemma~\ref{lemma:matrix-sqrt})
to obtain 
\begin{align}
\left\Vert \bm{\Delta}_{\mathsf{balancing}}\right\Vert  & \lesssim \frac{ \frac{\lambda}{p} \big\Vert \left(\bm{X}^{\mathsf{ncvx}\top}\bm{X}^{\mathsf{ncvx}}\right)^{-1}-\left(\bm{Y}^{\mathsf{ncvx}\top}\bm{Y}^{\mathsf{ncvx}}\right)^{-1}\big\Vert }{\lambda_{\min}\left[\left(\bm{I}_{r}+\frac{\lambda}{p}\left(\bm{X}^{\mathsf{ncvx}\top}\bm{X}^{\mathsf{ncvx}}\right)^{-1}\right)^{1/2}\right]+\lambda_{\min}\left[\left(\bm{I}_{r}+\frac{\lambda}{p}\left(\bm{Y}^{\mathsf{ncvx}\top}\bm{Y}^{\mathsf{ncvx}}\right)^{-1}\right)^{1/2}\right]}\nonumber \\
 & \overset{(\text{i})}{\lesssim}\frac{\lambda}{p}\left\Vert \left(\bm{X}^{\mathsf{ncvx}\top}\bm{X}^{\mathsf{ncvx}}\right)^{-1}-\left(\bm{Y}^{\mathsf{ncvx}\top}\bm{Y}^{\mathsf{ncvx}}\right)^{-1}\right\Vert \nonumber \\
 & \leq\frac{\lambda}{p}\left\Vert \left(\bm{X}^{\mathsf{ncvx}\top}\bm{X}^{\mathsf{ncvx}}\right)^{-1}\right\Vert \left\Vert \bm{X}^{\mathsf{ncvx}\top}\bm{X}^{\mathsf{ncvx}}-\bm{Y}^{\mathsf{ncvx}\top}\bm{Y}^{\mathsf{ncvx}}\right\Vert _{\mathrm{F}}\left\Vert \left(\bm{Y}^{\mathsf{ncvx}\top}\bm{Y}^{\mathsf{ncvx}}\right)^{-1}\right\Vert \nonumber \\
 & \overset{(\text{ii})}{\lesssim}\frac{{1}}{n^{5}}\frac{{\lambda}}{p}\frac{\kappa}{\sigma_{\min}}.\label{eq:debias-correction-close}
\end{align}
Here, the inequality (i) depends on the facts that 
\[
\lambda_{\min}\left[\left(\bm{I}_{r}+\frac{\lambda}{p}\left(\bm{X}^{\mathsf{ncvx}\top}\bm{X}^{\mathsf{ncvx}}\right)^{-1}\right)^{1/2}\right]\geq 1\qquad\text{and}\qquad\lambda_{\min}\left[\left(\bm{I}_{r}+\frac{\lambda}{p}\left(\bm{Y}^{\mathsf{ncvx}\top}\bm{Y}^{\mathsf{ncvx}}\right)^{-1}\right)^{1/2}\right]\geq 1,
\]
whereas the inequality (ii) holds because of the facts that $\|(\bm{X}^{\mathsf{ncvx}\top}\bm{X}^{\mathsf{ncvx}})^{-1}\|\lesssim1/\sigma_{\min}$,
$\|(\bm{Y}^{\mathsf{ncvx}\top}\bm{Y}^{\mathsf{ncvx}})^{-1}\|\lesssim1/\sigma_{\min}$
and the balancedness condition~(\ref{eq:X-Y-balance})
\[
\left\Vert \bm{X}^{\mathsf{ncvx}\top}\bm{X}^{\mathsf{ncvx}}-\bm{Y}^{\mathsf{ncvx}\top}\bm{Y}^{\mathsf{ncvx}}\right\Vert _{\mathrm{F}}\leq\frac{1}{n^{5}}\sigma_{\max}.
\]
As a result, the operator norm of $\bm{{A}}_{2}$ is bounded by 
\begin{align}
\left\Vert \bm{{A}}_{2}\right\Vert  & \leq\left\Vert \bm{X}^{\mathsf{ncvx}}\Big(\bm{I}_{r}+\frac{\lambda}{p}\left(\bm{X}^{\mathsf{ncvx}\top}\bm{X}^{\mathsf{ncvx}}\right)^{-1}\Big)\right\Vert \left\Vert \bm{\Delta}_{\mathsf{balancing}}\right\Vert \left\Vert \bm{Y}^{\mathsf{ncvx}}\right\Vert \nonumber \\
 & \lesssim\sqrt{{\sigma_{\max}}}\cdot\frac{{1}}{n^{5}}\frac{{\lambda}}{p}\frac{\kappa}{\sigma_{\min}}\cdot\sqrt{{\sigma_{\max}}}\asymp\frac{\lambda}{p}\frac{\kappa^{2}}{n^{5}}.\label{eq:A-2-small}
\end{align}

Take~(\ref{eq:X-d-Y-d-decomposition}),~(\ref{eq:A-1-U-Sigma-V-near})
and~(\ref{eq:A-2-small}) collectively to arrive at 
\[
\left\Vert \bm{X}^{\mathsf{ncvx,d}}\bm{Y}^{\mathsf{ncvx,d}\top}-\bm{U}\Big(\bm{\Sigma}+\frac{\lambda}{p}\bm{I}_{r}\Big)\bm{V}^{\top}\right\Vert \leq\left\Vert \bm{{A}}_{1}-\bm{U}\Big(\bm{\Sigma}+\frac{\lambda}{p}\bm{I}_{r}\Big)\bm{V}^{\top}\right\Vert +\left\Vert \bm{{A}}_{2}\right\Vert \lesssim\frac{\lambda}{p}\frac{\kappa^{2}}{n^{5}}\leq\frac{{1}}{2n^{4}}\frac{{\lambda}}{p},
\]
provided that $n\gg\kappa^{2}$. \end{proof}

\subsection{Proof of the inequality~(\ref{eq:low-rank-factor-equivalence})}

Next, we switch attention to the low-rank factors. Our goal is to
demonstrate that $(\bm{X}^{\mathsf{cvx,d}},\bm{Y}^{\mathsf{cvx,d}})$
and $(\bm{X}^{\mathsf{ncvx,d}},\bm{Y}^{\mathsf{ncvx,d}})$ are both
extremely close to $(\bm{U}(\bm{\Sigma}+\frac{\lambda}{p}\bm{I}_{r})^{1/2},\bm{V}(\bm{\Sigma}+\frac{\lambda}{p}\bm{I}_{r})^{1/2})$
modulo some global rotation, which will be established in~(\ref{eq:ncvx-low-rank-dist-to-ref}) and~(\ref{eq:cvx-low-rank-dist-to-ref}) shortly.

We start by justifying the proximity between $(\bm{X}^{\mathsf{ncvx,d}},\bm{Y}^{\mathsf{ncvx,d}})$
and $(\bm{U}(\bm{\Sigma}+\frac{\lambda}{p}\bm{I}_{r})^{1/2},\bm{V}(\bm{\Sigma}+\frac{\lambda}{p}\bm{I}_{r})^{1/2})$.
In view of~(\ref{eq:X-Y-ncvx-near-rotation}), we know that 
\begin{align}
\left\Vert \bm{X}^{\mathsf{ncvx}}-\bm{U}\bm{\Sigma}^{1/2}\bm{U}_{\bm{Q}}\bm{V}_{\bm{Q}}^{\top}\right\Vert  & =\left\Vert \bm{U}\bm{\Sigma}^{1/2}\bm{U}_{\bm{Q}}\bm{\Sigma}_{\bm{Q}}\bm{V}_{\bm{Q}}^{\top}-\bm{U}\bm{\Sigma}^{1/2}\bm{U}_{\bm{Q}}\bm{V}_{\bm{Q}}^{\top}\right\Vert 
  \leq\big\|\bm{\Sigma}^{1/2}\big\|\left\Vert \bm{\Sigma}_{\bm{Q}}-\bm{I}_{r}\right\Vert \nonumber\\
 & \overset{(\text{i})}{\lesssim}\sqrt{\sigma_{\max}}\frac{1}{\sigma_{\min}}\left\Vert \bm{X}^{\top}\bm{X}-\bm{Y}^{\top}\bm{Y}\right\Vert _{\mathrm{F}} \nonumber\\
 & \overset{(\text{ii})}{\lesssim}\sqrt{\sigma_{\max}}\frac{1}{\sigma_{\min}}\frac{1}{n^{5}}\sigma_{\max}\frac{\sigma}{\sigma_{\min}}\sqrt{\frac{n}{p}}
	\overset{(\text{ii})}{\leq} \frac{1}{n^{4}}\frac{\sigma}{\sigma_{\min}}\sqrt{\frac{n}{p}}\cdot\sqrt{\sigma_{\max}}. \label{eq:Xncvx-Xtilde-gap}
\end{align}
Here, (i) depends on the fact that $\|\bm{\Sigma}_{\bm{Q}}-\bm{I}_{r}\|\lesssim\|\bm{\Sigma}_{\bm{Q}}-\bm{\Sigma}_{\bm{Q}}^{-1}\|_{\mathrm{F}}\lesssim\|\bm{X}^{\top}\bm{X}-\bm{Y}^{\top}\bm{Y}\|_{\mathrm{F}}/\sigma_{\min}$
(see \cite[Lemma 20]{chen2019noisy}),  (ii) makes use of the
balancedness assumption~(\ref{eq:X-Y-balance}), whereas (iii) holds if $n\gg \kappa$. Denoting $\tilde{\bm{X}}\triangleq\bm{U}\bm{\Sigma}^{1/2}\bm{U}_{\bm{Q}}\bm{V}_{\bm{Q}}^{\top}$, 
one invokes the triangle inequality to reach
\begin{align*}
 & \left\Vert \bm{X}^{\mathsf{ncvx,d}}-\tilde{\bm{X}}\left(\bm{I}_{r}+\frac{\lambda}{p}\left(\tilde{\bm{X}}^{\top}\tilde{\bm{X}}\right)^{-1}\right)^{1/2}\right\Vert \\
 & \quad\leq\big\|\bm{X}^{\mathsf{ncvx}}-\tilde{\bm{X}}\big\|\left\Vert \Big(\bm{I}_{r}+\frac{\lambda}{p}\left(\bm{X}^{\mathsf{ncvx}\top}\bm{X}^{\mathsf{ncvx}}\right)^{-1}\Big)^{1/2}\right\Vert \\
 & \quad\quad+\big\|\tilde{\bm{X}}\big\|\left\Vert \Big(\bm{I}_{r}+\frac{\lambda}{p}\left(\bm{X}^{\mathsf{ncvx}\top}\bm{X}^{\mathsf{ncvx}}\right)^{-1}\Big)^{1/2}-\left(\bm{I}_{r}+\frac{\lambda}{p}\big(\tilde{\bm{X}}^{\top}\tilde{\bm{X}}\big)^{-1}\right)^{1/2}\right\Vert \\
 & \quad\leq\frac{1}{n^{4}}\frac{\sigma}{\sigma_{\min}}\sqrt{\frac{n}{p}}\cdot\sqrt{\sigma_{\max}}.
\end{align*}
Here the last line arises from \eqref{eq:Xncvx-Xtilde-gap} and the facts $\|\bm{I}_{r}+\frac{\lambda}{p}(\bm{X}^{\mathsf{ncvx}\top}\bm{X}^{\mathsf{ncvx}})^{-1}\|\asymp1$, $\|\tilde{\bm{X}}\|\lesssim \sqrt{\sigma_{\max}}$  
and 
\[
\left\Vert \Big(\bm{I}_{r}+\frac{\lambda}{p}\left(\bm{X}^{\mathsf{ncvx}\top}\bm{X}^{\mathsf{ncvx}}\right)^{-1}\Big)^{1/2}-\Big(\bm{I}_{r}+\frac{\lambda}{p}\big(\tilde{\bm{X}}^{\top}\tilde{\bm{X}}\big)^{-1}\Big)^{1/2}\right\Vert \lesssim\frac{1}{n^{4}}\frac{\sigma}{\sigma_{\min}}\sqrt{\frac{n}{p}}.
\]
The latter bound follows from similar derivations as in~(\ref{eq:debias-correction-close}).
A similar bound holds for $\bm{Y}^{\mathsf{ncvx,d}}$. Recognizing that
\[
\tilde{\bm{X}}\left(\bm{I}_{r}+\frac{\lambda}{p}\big(\tilde{\bm{X}}^{\top}\tilde{\bm{X}}\big)^{-1}\right)^{1/2}=\bm{U}\Big(\bm{\Sigma}+\frac{\lambda}{p}\bm{I}_{r}\Big)^{1/2}\bm{U}_{\bm{Q}}\bm{V}_{\bm{Q}}^{\top},
\]
we have
\begin{align}
	& \min_{\bm{R}\in\mathcal{O}^{r\times r}}\sqrt{\left\Vert \bm{X}^{\mathsf{ncvx,d}}\bm{R}-\bm{U}\Big(\bm{\Sigma}+\frac{\lambda}{p}\bm{I}_{r}\Big)^{1/2}\right\Vert _{\mathrm{F}}^{2}+\left\Vert \bm{Y}^{\mathsf{ncvx,d}}\bm{R}-\bm{V}\Big(\bm{\Sigma}+\frac{\lambda}{p}\bm{I}_{r}\Big)^{1/2}\right\Vert _{\mathrm{F}}^{2}} \nonumber\\
	& \quad \leq \sqrt{\left\Vert \bm{X}^{\mathsf{ncvx,d}} -\bm{U}\Big(\bm{\Sigma}+\frac{\lambda}{p}\bm{I}_{r}\Big)^{1/2} \bm{U}_{\bm{Q}} \bm{V}_{\bm{Q}}^{\top} \right\Vert _{\mathrm{F}}^{2}+\left\Vert \bm{Y}^{\mathsf{ncvx,d}} -\bm{V}\Big(\bm{\Sigma}+\frac{\lambda}{p}\bm{I}_{r}\Big)^{1/2} \bm{U}_{\bm{Q}} \bm{V}_{\bm{Q}}^{\top}  \right\Vert _{\mathrm{F}}^{2}} \nonumber\\
	& \quad \leq \sqrt{r} \sqrt{\left\Vert \bm{X}^{\mathsf{ncvx,d}} -\bm{U}\Big(\bm{\Sigma}+\frac{\lambda}{p}\bm{I}_{r}\Big)^{1/2} \bm{U}_{\bm{Q}} \bm{V}_{\bm{Q}}^{\top} \right\Vert ^{2}+\left\Vert \bm{Y}^{\mathsf{ncvx,d}} -\bm{V}\Big(\bm{\Sigma}+\frac{\lambda}{p}\bm{I}_{r}\Big)^{1/2} \bm{U}_{\bm{Q}} \bm{V}_{\bm{Q}}^{\top}  \right\Vert ^{2}} \nonumber\\
	& \quad \lesssim\frac{\sqrt{r}}{n^{4}}\frac{\sigma}{\sigma_{\min}}\sqrt{\frac{n}{p}}\cdot\sqrt{\sigma_{\max}}.\label{eq:ncvx-low-rank-dist-to-ref}
\end{align}

Next, we establish the connection between $(\bm{X}^{\mathsf{cvx,d}},\bm{Y}^{\mathsf{cvx,d}})$
and $(\bm{U}(\bm{\Sigma}+\frac{\lambda}{p}\bm{I}_{r})^{1/2},\bm{V}(\bm{\Sigma}+\frac{\lambda}{p}\bm{I}_{r})^{1/2})$.
To accomplish this, we first study the relationship between $(\bm{X}^{\mathsf{cvx}},\bm{Y}^{\mathsf{cvx}})$
and $(\bm{U}\bm{\Sigma}^{1/2},\bm{V}\bm{\Sigma}^{1/2})$. Recall that
$\bm{X}^{\mathsf{cvx}}$ and $\bm{Y}^{\mathsf{cvx}}$ constitute a
balanced factorization of $\bm{Z}^{\mathsf{cvx},r}$, while $(\bm{U}\bm{\Sigma}^{1/2},\bm{V}\bm{\Sigma}^{1/2})$
is a balanced one of $\bm{X}^{\mathsf{ncvx}}\bm{Y}^{\mathsf{ncvx}\top}=\bm{U}\bm{\Sigma}\bm{V}^{\top}$.
Hence one can view $\bm{Z}^{\mathsf{cvx},r}$ as a perturbation of
$\bm{X}^{\mathsf{ncvx}}\bm{Y}^{\mathsf{ncvx}\top}=\bm{U}\bm{\Sigma}\bm{V}^{\top}$
and investigate the perturbation bounds on the balanced factorizations.
Going through the same derivations as in \cite[Appendix B.7]{ma2017implicit}
and \cite[Appendix B.2.1]{chen2019nonconvex}, one reaches 
\begin{align*}
\min_{\bm{R}\in\mathcal{O}^{r\times r}}\sqrt{\left\Vert \bm{X}^{\mathsf{cvx}}\bm{R}-\bm{U}\bm{\Sigma}^{1/2}\right\Vert _{\mathrm{F}}^{2}+\left\Vert \bm{Y}^{\mathsf{cvx}}\bm{R}-\bm{V}\bm{\Sigma}^{1/2}\right\Vert _{\mathrm{F}}^{2}} & \lesssim\sqrt{r}\cdot\frac{\kappa^{2}}{\sqrt{\sigma_{\min}}}\left\Vert \bm{Z}^{\mathsf{cvx},r}-\bm{X}^{\mathsf{ncvx}}\bm{Y}^{\mathsf{ncvx}\top}\right\Vert _{\mathrm{F}}\\
 & \lesssim\sqrt{r}\cdot\frac{\kappa^{4}}{\sqrt{\sigma_{\min}}}\cdot\frac{1}{n^{5}}\frac{\lambda}{p}.
\end{align*}
Here the last relation follows from the proximity of the convex
estimator and the nonconvex estimator; see~(\ref{eq:closedness-cvx-ncvx}).
Repeating the same argument as above to translate the bound between
$(\bm{X}^{\mathsf{cvx}},\bm{Y}^{\mathsf{cvx}})$ and $(\bm{U}\bm{\Sigma}^{1/2},\bm{V}\bm{\Sigma}^{1/2})$
to that of $(\bm{X}^{\mathsf{cvx,d}},\bm{Y}^{\mathsf{cvx,d}})$ and
$(\bm{U}(\bm{\Sigma}+\frac{\lambda}{p}\bm{I}_{r})^{1/2},\bm{V}(\bm{\Sigma}+\frac{\lambda}{p}\bm{I}_{r})^{1/2})$,
we conclude that
\begin{equation}
\min_{\bm{R}\in\mathcal{O}^{r\times r}}\sqrt{\left\Vert \bm{X}^{\mathsf{cvx,d}}\bm{R}-\bm{U}\Big(\bm{\Sigma}+\frac{\lambda}{p}\bm{I}_{r}\Big)^{1/2}\right\Vert _{\mathrm{F}}^{2}+\left\Vert \bm{Y}^{\mathsf{cvx,d}}\bm{R}-\bm{V}\Big(\bm{\Sigma}+\frac{\lambda}{p}\bm{I}_{r}\Big)^{1/2}\right\Vert _{\mathrm{F}}^{2}}\lesssim\sqrt{r}\cdot\frac{\kappa^{4}}{n^{5}}\frac{\sigma}{\sigma_{\min}}\sqrt{\frac{n}{p}}\cdot\sqrt{\sigma_{\min}}.
	\label{eq:cvx-low-rank-dist-to-ref}
\end{equation}
This together with~(\ref{eq:ncvx-low-rank-dist-to-ref}) and the assumption $n \gg \kappa^4$ concludes the proof.

\subsection{Proof of the inequality~(\ref{eq:linearize-equivalence})}

We shall focus on proving the claim for the nonconvex estimator $\bm{M}^{\mathsf{ncvx,d}}$ and $\bm{X}^{\mathsf{ncvx}}\bm{Y}^{\mathsf{ncvx}\top}$;
the claim for the convex estimator $\bm{Z}^{\mathsf{cvx}}$ can be treated similarly. 

Recall from~(\ref{eq:Z-d-ncvx-U-sigma-V}) that 
\[
\Big\Vert \bm{M}^{\mathsf{ncvx,d}}-\bm{U}\Big(\bm{\Sigma}+\frac{\lambda}{p}\bm{I}_{r}\Big)\bm{V}^{\top}\Big\Vert _{\mathrm{F}}\leq\frac{{1}}{2n^{4}}\frac{{\lambda}}{p}.
\]
It then suffices to prove that 
\[
\left\Vert \bm{X}^{\mathsf{ncvx}}\bm{Y}^{\mathsf{ncvx}\top}-\frac{1}{p}\mathcal{P}_{T}\mathcal{P}_{\Omega}\left(\bm{X}^{\mathsf{ncvx}}\bm{Y}^{\mathsf{ncvx}\top}-\bm{M}\right)-\bm{U}\Big(\bm{\Sigma}+\frac{\lambda}{p}\bm{I}_{r}\Big)\bm{V}^{\top}\right\Vert _{\mathrm{F}}\leq\frac{{1}}{2n^{4}}\frac{{\lambda}}{p}.
\]
To see this, it has been established in Appendix~\ref{sec:Proof-of-Lemma-connection-1} 
that
\begin{align*}
\bm{X}^{\mathsf{ncvx}}\bm{Y}^{\mathsf{ncvx}\top}-\frac{1}{p}\mathcal{P}_{T}\mathcal{P}_{\Omega}\left(\bm{X}^{\mathsf{ncvx}}\bm{Y}^{\mathsf{ncvx}\top}-\bm{M}\right) & =\bm{U}\bm{\Sigma}\bm{V}^{\top}-\frac{1}{p}\mathcal{P}_{T}\left(-\lambda\bm{U}\bm{V}^{\top}+\bm{R}\right)\\
 & =\bm{U}\bm{\Sigma}\bm{V}^{\top}+\frac{\lambda}{p}\bm{U}\bm{V}^{\top}-\frac{1}{p}\mathcal{P}_{T}\left(\bm{R}\right)\\
 & =\bm{U}\Big(\bm{\Sigma}+\frac{\lambda}{p}\bm{I}_{r}\Big)\bm{V}^{\top}-\frac{1}{p}\mathcal{P}_{T}\left(\bm{R}\right).
\end{align*}
This together with the fact $\|\mathcal{P}_{T}(\bm{R})\|_{\mathrm{F}}\leq\frac{{72\kappa}}{n^{5}}\lambda$
(cf.~(\ref{eq:R-T-T-perp-bound})) and the assumption $n\gg \kappa$ immediately completes the proof.

\section{Analysis of the low-rank factors}

\subsection{Proof of Lemma~\ref{lemma:decomp-low-rank} \label{subsec:Proof-of-Lemma-decomposition-low-rank}}

We concentrate on the factor $\bm{X}^{\mathsf{d}}$; the other factor
$\bm{Y}^{\mathsf{d}}$ can be treated similarly. By definition of
the gradient, one has 
\begin{equation}
\nabla_{\bm{X}}f\left(\bm{X},\bm{Y}\right)=\frac{{1}}{p}\mathcal{P}_{\Omega}\left(\bm{X}\bm{Y}^{\top}-\bm{M}\right)\bm{Y}+\frac{{\lambda}}{p}\bm{X}.\label{eq:first-order-optimality-condition-X}
\end{equation}
Making use of the decomposition 
\begin{equation}
\frac{1}{p}\mathcal{P}_{\Omega}\left(\bm{X}\bm{Y}^{\top}-\bm{M}\right)=\bm{X}\bm{Y}^{\top}-\bm{X}^{\star}\bm{Y}^{\star\top}+\bm{A}-\frac{1}{p}\mathcal{P}_{\Omega}\left(\bm{E}\right)\label{eq:defn-X-rank-r}
\end{equation}
with $\bm{A}$ defined in~(\ref{eq:defn-A}), we can rearrange~(\ref{eq:first-order-optimality-condition-X})
as follows 
\begin{align}
\bm{X}\Big(\bm{Y}^{\top}\bm{Y}+\frac{\lambda}{p}\bm{I}_{r}\Big) & =\bm{X}^{\star}\bm{Y}^{\star\top}\bm{Y}+\frac{1}{p}\mathcal{P}_{\Omega}\left(\bm{E}\right)\bm{Y}-\bm{A}\bm{Y}+\nabla_{\bm{X}}f\left(\bm{X},\bm{Y}\right).\label{eq:first-order-rearrange-X}
\end{align}

By construction, the de-shrunken estimator $\bm{Y}^{\mathsf{d}}$
satisfies the following property 
\begin{align}
\bm{Y}^{\mathsf{d}\top}\bm{Y}^{\mathsf{d}} & =\Big(\bm{{I}}_{r}+\frac{\lambda}{p}\left(\bm{Y}^{\top}\bm{Y}\right)^{-1}\Big)^{1/2}\left(\bm{Y}^{\top}\bm{Y}\right)\Big(\bm{{I}}_{r}+\frac{\lambda}{p}\left(\bm{Y}^{\top}\bm{Y}\right)^{-1}\Big)^{1/2}\nonumber \\
 & =\Big(\bm{{I}}_{r}+\frac{\lambda}{p}\left(\bm{Y}^{\top}\bm{Y}\right)^{-1}\Big)^{1/2}\left(\bm{Y}^{\top}\bm{Y}\right)^{\frac{1}{2}}\left(\bm{Y}^{\top}\bm{Y}\right)^{\frac{1}{2}}\Big(\bm{{I}}_{r}+\frac{\lambda}{p}\left(\bm{Y}^{\top}\bm{Y}\right)^{-1}\Big)^{1/2}\nonumber \\
 & =\bm{Y}^{\top}\bm{Y}+\frac{\lambda}{p}\bm{{I}}_{r},\label{eq:relation-Yd-Y}
\end{align}
where the last identity follows since $(\bm{Y}^{\top}\bm{Y})^{1/2}$
and $(\bm{I}_{r}+\frac{\lambda}{p}(\bm{Y}^{\top}\bm{Y})^{-1})^{1/2}$
commute. Combining~(\ref{eq:first-order-rearrange-X})
with the identity~(\ref{eq:relation-Yd-Y}) gives 
\begin{align}
\bm{X}\left(\bm{Y}^{\mathsf{d}\top}\bm{Y}^{\mathsf{d}}\right) & =\bm{X}^{\star}\bm{Y}^{\star\top}\bm{Y}+\frac{1}{p}\mathcal{P}_{\Omega}\left(\bm{E}\right)\bm{Y}-\bm{A}\bm{Y}+\nabla_{\bm{X}}f\left(\bm{X},\bm{Y}\right).\label{eq:X-equality-first}
\end{align}
Multiplying both sides of~(\ref{eq:X-equality-first}) by $(\bm{I}_{r}+\frac{\lambda}{p}(\bm{Y}^{\top}\bm{Y})^{-1})^{1/2}$
and recalling the definition of $\bm{Y}^{\mathsf{d}}$ in~(\ref{eq:defn-Xd-Yd}), we have 
\begin{align}
 & \bm{X}\left(\bm{Y}^{\mathsf{d}\top}\bm{Y}^{\mathsf{d}}\right)\Big(\bm{{I}}_{r}+\frac{\lambda}{p}\left(\bm{Y}^{\top}\bm{Y}\right)^{-1}\Big)^{1/2}\nonumber \\
 & \quad=\bm{X}^{\star}\bm{Y}^{\star\top}\bm{Y}^{\mathsf{d}}+\frac{1}{p}\mathcal{P}_{\Omega}\left(\bm{E}\right)\bm{Y}^{\mathsf{d}}-\bm{A}\bm{Y}^{\mathsf{d}}+\nabla_{\bm{X}}f\left(\bm{X},\bm{Y}\right)\Big(\bm{{I}}_{r}+\frac{\lambda}{p}\left(\bm{Y}^{\top}\bm{Y}\right)^{-1}\Big)^{1/2}.\label{eq:X-Xd-eqn-1}
\end{align}
Since $\bm{Y}^{\mathsf{d}\top}\bm{Y}^{\mathsf{d}}$ and $(\bm{I}_{r}+\frac{\lambda}{p}(\bm{Y}^{\top}\bm{Y})^{-1})^{1/2}$
also commute, we have 
\begin{align}
\bm{X}\left(\bm{Y}^{\mathsf{d}\top}\bm{Y}^{\mathsf{d}}\right)\Big(\bm{{I}}_{r}+\frac{\lambda}{p}\left(\bm{Y}^{\top}\bm{Y}\right)^{-1}\Big)^{1/2} & =\bm{X}\Big(\bm{{I}}_{r}+\frac{\lambda}{p}\left(\bm{Y}^{\top}\bm{Y}\right)^{-1}\Big)^{1/2}\left(\bm{Y}^{\mathsf{d}\top}\bm{Y}^{\mathsf{d}}\right)\nonumber \\
 & =\bm{X}\Big(\bm{{I}}_{r}+\frac{\lambda}{p}\left(\bm{X}^{\top}\bm{X}\right)^{-1}\Big)^{1/2}\left(\bm{Y}^{\mathsf{d}\top}\bm{Y}^{\mathsf{d}}\right)-\bm{X}\bm{\Delta}_{\mathsf{balancing}}\left(\bm{Y}^{\mathsf{d}\top}\bm{Y}^{\mathsf{d}}\right)\nonumber \\
 & =\bm{X}^{\mathsf{d}}\left(\bm{Y}^{\mathsf{d}\top}\bm{Y}^{\mathsf{d}}\right)-\bm{X}\bm{\Delta}_{\mathsf{balancing}}\left(\bm{Y}^{\mathsf{d}\top}\bm{Y}^{\mathsf{d}}\right),\label{eq:X-Xd-eqn-2-1}
\end{align}
where the last relation uses the definition of $\bm{X}^{\mathsf{d}}$
(see~(\ref{eq:defn-Xd-Yd})).

Substituting the identity~(\ref{eq:X-Xd-eqn-2-1}) back into~(\ref{eq:X-Xd-eqn-1})
and making a few elementary algebraic manipulations yield the desired
decomposition~(\ref{eq:X-d-identity}).

\subsection{Proof of Lemma \ref{lem:Phi1-two-infty}\label{subsec:Proof-of-Lemma-Phi1}}

Recall that $\overline{\bm{Y}}^{\mathsf{d}}=\bm{Y}^{\mathsf{d}}\bm{H}^{\mathsf{d}}$ and similarly define
\[
\overline{\bm{Y}}^{\mathsf{d},(j)}\triangleq\bm{Y}^{\mathsf{d},(j)}\bm{H}^{\mathsf{d},(j)}.
\]
The triangle inequality tells us that for any fixed $1\leq j\leq n$,
\begin{align*}
 \left\Vert \bm{e}_{j}^{\top}\bm{\Phi}_{1}\right\Vert _{2}&\leq\underbrace{\left\Vert \bm{e}_{j}^{\top}\frac{1}{p}\mathcal{P}_{\Omega}\left(\bm{E}\right)\left[\overline{\bm{Y}}^{\mathsf{d},(j)}\big(\overline{\bm{Y}}^{\mathsf{d},(j)\top}\overline{\bm{Y}}^{\mathsf{d},(j)}\big)^{-1}-\bm{Y}^{\star}\left(\bm{Y}^{\star\top}\bm{Y}^{\star}\right)^{-1}\right]\right\Vert _{2}}_{:=\alpha_{1}}\\
 &\quad+\underbrace{\left\Vert \bm{e}_{j}^{\top}\frac{1}{p}\mathcal{P}_{\Omega}\left(\bm{E}\right)\left[\overline{\bm{Y}}^{\mathsf{d}}\big(\overline{\bm{Y}}^{\mathsf{d}\top}\overline{\bm{Y}}^{\mathsf{d}}\big)^{-1}-\overline{\bm{Y}}^{\mathsf{d},(j)}\big(\overline{\bm{Y}}^{\mathsf{d},(j)\top}\overline{\bm{Y}}^{\mathsf{d},(j)}\big)^{-1}\right]\right\Vert _{2}}_{:=\alpha_{2}}.
\end{align*}
In what follows, we shall control $\alpha_{1}$ and $\alpha_{2}$
separately.
\begin{enumerate}
\item To begin with, denoting $\bm{\Delta}^{(j)}\triangleq\overline{\bm{Y}}^{\mathsf{d},(j)}(\overline{\bm{Y}}^{\mathsf{d},(j)\top}\overline{\bm{Y}}^{\mathsf{d},(j)})^{-1}-\bm{Y}^{\star}(\bm{Y}^{\star\top}\bm{Y}^{\star})^{-1}$
results in 
\begin{equation}
\alpha_{1}=\left\Vert \bm{e}_{j}^{\top}\frac{1}{p}\mathcal{P}_{\Omega}\left(\bm{E}\right)\bm{\Delta}^{(j)}\right\Vert _{2}=\bigg\Vert \frac{1}{p}\sum_{k=1}^{n}E_{jk}\delta_{jk}\bm{\Delta}_{k,\cdot}^{(j)}\bigg\Vert _{2}.\label{eq:1st-pert-theta-1}
\end{equation}
Before proceeding, we gather a few useful facts regarding $\bm{\Delta}^{(j)}$,
as summarized in the following claim. \begin{claim}\label{claim:Delta-d-j}With
probability at least $1-O(n^{-11})$, we have 
\begin{align*}
\big\|\bm{\Delta}^{(j)}\big\| & \lesssim\frac{1}{\sqrt{\sigma_{\min}}}\cdot\frac{\sigma}{\sigma_{\min}}\sqrt{\frac{\kappa^{3}n}{p}},\\
\big\|\bm{\Delta}^{(j)}\big\|_{2,\infty} & \lesssim\frac{1}{\sqrt{\sigma_{\min}}}\cdot\frac{\sigma}{\sigma_{\min}}\sqrt{\frac{\kappa^{5}\mu r\log n}{p}}.
\end{align*}
\end{claim}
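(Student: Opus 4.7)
The plan is to expand $\bm{\Delta}^{(j)}$ via a standard matrix difference identity and then invoke the preliminary bounds from Appendix~\ref{subsec:Properties-of-approximate-solution}. Writing $\bm{A}\triangleq\overline{\bm{Y}}^{\mathsf{d},(j)}$ and $\bm{B}\triangleq\bm{Y}^{\star}$ to reduce clutter, we decompose
\begin{align*}
\bm{\Delta}^{(j)} \;=\;& \underbrace{(\bm{A}-\bm{B})\big(\bm{A}^{\top}\bm{A}\big)^{-1}}_{:=\bm{\Delta}^{(j)}_{1}}
 \;+\; \underbrace{\bm{B}\,\big(\bm{A}^{\top}\bm{A}\big)^{-1}\big(\bm{B}^{\top}\bm{B}-\bm{A}^{\top}\bm{A}\big)\big(\bm{B}^{\top}\bm{B}\big)^{-1}}_{:=\bm{\Delta}^{(j)}_{2}},
\end{align*}
where we further expand $\bm{B}^{\top}\bm{B}-\bm{A}^{\top}\bm{A}=(\bm{B}-\bm{A})^{\top}\bm{B}+\bm{A}^{\top}(\bm{B}-\bm{A})$. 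The point of this decomposition is that every factor can be estimated by a bound that has already been collected in Appendix~\ref{subsec:Properties-of-approximate-solution}.

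The key auxiliary facts I plan to use are: (i) the operator-norm bound $\|\bm{A}-\bm{B}\|\lesssim\kappa\tfrac{\sigma}{\sigma_{\min}}\sqrt{\tfrac{n\sigma_{\max}}{p}}$ from~\eqref{eq:F-d-j-op} together with the row-wise bound $\|\bm{A}-\bm{B}\|_{2,\infty}\lesssim\kappa\tfrac{\sigma}{\sigma_{\min}}\sqrt{\tfrac{n\log n}{p}}\,\|\bm{F}^{\star}\|_{2,\infty}$ from~\eqref{eq:F-d-j-2-infty}; (ii) the incoherence bound $\|\bm{B}\|_{2,\infty}\leq\sqrt{\mu r\sigma_{\max}/n}$ (cf.~\eqref{eq:incoherence-X}); and (iii) the spectral non-degeneracy $\|(\bm{A}^{\top}\bm{A})^{-1}\|\asymp 1/\sigma_{\min}$, which follows from Weyl's inequality applied to $\|\bm{A}-\bm{B}\|$ combined with $\sigma_{r}(\bm{B})=\sqrt{\sigma_{\min}}$ under the SNR condition imposed in Theorem~\ref{thm:low-rank-factor-master-bound}. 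Of course $\|\bm{B}\|\leq\sqrt{\sigma_{\max}}$ and $\|(\bm{B}^{\top}\bm{B})^{-1}\|=1/\sigma_{\min}$ hold by construction.

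For the operator-norm bound, I plug (i) and (iii) directly into $\|\bm{\Delta}_{1}^{(j)}\|\leq\|\bm{A}-\bm{B}\|\,\|(\bm{A}^{\top}\bm{A})^{-1}\|$, which already delivers the target $\tfrac{1}{\sqrt{\sigma_{\min}}}\cdot\tfrac{\sigma}{\sigma_{\min}}\sqrt{\kappa^{3}n/p}$ after substituting $\sqrt{\sigma_{\max}}=\sqrt{\kappa\sigma_{\min}}$; the term $\bm{\Delta}_{2}^{(j)}$ is bounded analogously via $\|\bm{B}^{\top}\bm{B}-\bm{A}^{\top}\bm{A}\|\leq 2\sqrt{\sigma_{\max}}\,\|\bm{A}-\bm{B}\|$ and merges into the same order. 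For the $\ell_{2,\infty}$ bound, the same route works but with the row-wise counterparts: bound $\|\bm{\Delta}_{1}^{(j)}\|_{2,\infty}\leq\|\bm{A}-\bm{B}\|_{2,\infty}\,\|(\bm{A}^{\top}\bm{A})^{-1}\|$ through (ii)–(iii), and bound $\|\bm{\Delta}_{2}^{(j)}\|_{2,\infty}\leq\|\bm{B}\|_{2,\infty}\,\|(\bm{A}^{\top}\bm{A})^{-1}\|\,\|\bm{B}^{\top}\bm{B}-\bm{A}^{\top}\bm{A}\|\,\|(\bm{B}^{\top}\bm{B})^{-1}\|$, inserting the incoherence bound only where it helps.

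The only step that requires a moment of care is the minimum-singular-value bound (iii) for $\bm{A}$ (equivalently $\overline{\bm{Y}}^{\mathsf{d},(j)}$), since this underlies every inversion we perform. I expect it to follow immediately from~\eqref{eq:F-d-j-op} and the SNR assumption $\sigma\sqrt{(\kappa^{4}\mu rn\log n)/p}\ll\sigma_{\min}$ (so that $\|\bm{A}-\bm{B}\|\ll\sqrt{\sigma_{\min}}$), after which Weyl yields $\sigma_{r}(\bm{A})\gtrsim\sqrt{\sigma_{\min}}$. This is the one place where we use the leave-one-out construction in an essential way; the rest is purely algebraic. All the invoked bounds hold on a common event of probability $1-O(n^{-10})$, giving the advertised high-probability guarantee.
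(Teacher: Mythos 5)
Your decomposition $\bm{\Delta}^{(j)}=(\bm{A}-\bm{B})(\bm{A}^{\top}\bm{A})^{-1}+\bm{B}(\bm{A}^{\top}\bm{A})^{-1}(\bm{B}^{\top}\bm{B}-\bm{A}^{\top}\bm{A})(\bm{B}^{\top}\bm{B})^{-1}$ is valid, and your $\ell_{2,\infty}$ bound does recover the claimed $\sqrt{\kappa^{5}\mu r\log n/p}$ rate --- this is essentially the paper's own argument for that part, just with the roles of $\bm{A}$ and $\bm{B}$ exchanged. But your operator-norm bound has a genuine $\kappa$ gap: bounding $\bm{\Delta}_{2}^{(j)}$ as $\|\bm{B}\|\,\|(\bm{A}^{\top}\bm{A})^{-1}\|\,\|\bm{B}^{\top}\bm{B}-\bm{A}^{\top}\bm{A}\|\,\|(\bm{B}^{\top}\bm{B})^{-1}\|\lesssim\sqrt{\sigma_{\max}}\cdot\sigma_{\min}^{-1}\cdot\sqrt{\sigma_{\max}}\|\bm{A}-\bm{B}\|\cdot\sigma_{\min}^{-1}=\kappa\sigma_{\min}^{-1}\|\bm{A}-\bm{B}\|$ yields $\sigma_{\min}^{-1/2}(\sigma/\sigma_{\min})\sqrt{\kappa^{5}n/p}$, \emph{not} the claimed $\sqrt{\kappa^{3}n/p}$. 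Your assertion that $\bm{\Delta}_{2}^{(j)}$ ``merges into the same order'' as $\bm{\Delta}_{1}^{(j)}$ is exactly where the argument breaks.

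The paper circumvents this by invoking Stewart's pseudo-inverse perturbation lemma (Lemma~\ref{lemma:pseudo-inverse}), which gives $\|\bm{\Delta}^{(j)}\|=\|\bm{A}^{\dagger}-\bm{B}^{\dagger}\|\lesssim\max\{\|\bm{A}^{\dagger}\|^{2},\|\bm{B}^{\dagger}\|^{2}\}\|\bm{A}-\bm{B}\|\lesssim\sigma_{\min}^{-1}\|\bm{A}-\bm{B}\|$ --- crucially $\|\bm{B}^{\dagger}\|=\|\bm{B}(\bm{B}^{\top}\bm{B})^{-1}\|=1/\sqrt{\sigma_{\min}}$, not $\|\bm{B}\|\|(\bm{B}^{\top}\bm{B})^{-1}\|=\sqrt{\kappa/\sigma_{\min}}$. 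That $\sqrt{\kappa}$ saving on each pseudo-inverse factor is exactly the $\kappa$ you are losing. You could also repair your decomposition directly by never splitting the pseudo-inverse products, e.g.\ expanding $\bm{B}^{\top}\bm{B}-\bm{A}^{\top}\bm{A}=(\bm{B}-\bm{A})^{\top}\bm{B}+\bm{A}^{\top}(\bm{B}-\bm{A})$ and noting that $\|\bm{B}(\bm{A}^{\top}\bm{A})^{-1}\|\lesssim1/\sqrt{\sigma_{\min}}$ (since $\bm{B}(\bm{A}^{\top}\bm{A})^{-1}=\bm{A}(\bm{A}^{\top}\bm{A})^{-1}+(\bm{B}-\bm{A})(\bm{A}^{\top}\bm{A})^{-1}$ and the second piece is subdominant under the SNR assumption), $\|\bm{B}(\bm{A}^{\top}\bm{A})^{-1}\bm{A}^{\top}\|\lesssim1$, and $\|\bm{B}(\bm{B}^{\top}\bm{B})^{-1}\|=1/\sqrt{\sigma_{\min}}$; but as written, the proof sketch does not do this. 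Losing the $\kappa$ here would not derail Theorem~\ref{thm:low-rank-factor-master-bound} (the $\bm{\Phi}_{2}$ term already carries $\kappa^{7}$), but it does mean you have not established the claim as stated.
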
With the bounds on $\|\bm{\Delta}^{(j)}\|$ and $\|\bm{\Delta}^{(j)}\|_{2,\infty}$
in place, we are ready to control $\alpha_{1}$. By construction,
$\bm{\Delta}^{(j)}$ is independent of $\bm{e}_{j}^{\top}\mathcal{P}_{\Omega}\left(\bm{E}\right)$.
Therefore, the vector on the right-hand side of~(\ref{eq:1st-pert-theta-1}),
$\frac{1}{p}\sum_{k=1}^{n}E_{jk}\delta_{jk}\bm{\Delta}_{k,\cdot}^{(j)}$,
is a sum of conditionally independent random vectors. In particular,
conditional on $\bm{\Delta}^{(j)}$ and $\{\delta_{jk}\}_{k:1\leq k\leq n}$,
one has 
\begin{equation}
\frac{1}{p}\sum_{k=1}^{n}E_{jk}\delta_{jk}\bm{\Delta}_{k,\cdot}^{(j)}\,\Big|\,\bm{\Delta}^{(j)},\{\delta_{jk}\}_{k:1\leq k\leq n}\ \sim\ \mathcal{N}\Big(\bm{0},\underbrace{\frac{\sigma^{2}}{p^{2}}\sum_{k=1}^{n}\delta_{jk}\bm{\Delta}_{k,\cdot}^{(j)\top}\bm{\Delta}_{k,\cdot}^{(j)}}_{:=\hat{\bm{\Sigma}}}\Big).\label{eq:1st-pert-cond-gaussian}
\end{equation}
Invoke the concentration inequality for Gaussian random vectors \cite[Proposition 1.1]{MR2994877}
to see that 
\begin{align}
\alpha_{1} & \leq\sqrt{\mathsf{Tr}\big(\hat{\bm{\Sigma}}\big)+2\sqrt{t}\big\|\hat{\bm{\Sigma}}\big\|_{\mathrm{F}}+2\big\|\hat{\bm{\Sigma}}\big\| t}\leq\sqrt{r\big\|\hat{\bm{\Sigma}}\big\|+2\sqrt{rt}\big\|\hat{\bm{\Sigma}}\big\|+2\big\|\hat{\bm{\Sigma}}\big\| t}\nonumber \\
 & \lesssim\sqrt{\big\|\hat{\bm{\Sigma}}\big\|} \left(\sqrt{r}+\sqrt{t}\right)
	\label{eq:upper-bound-theta-1}
\end{align}
with probability at least $1-e^{-t}$. It remains to control $\|\hat{\bm{\Sigma}}\|$,
which we state in the following claim. \begin{claim}\label{claim:1st-pert-sigma-upper}Suppose
that $n^{2}p\gg\kappa^{2}\mu rn\log^{2}n$. Then with probability
exceeding $1-O(n^{-11})$, 
\[
\big\|\hat{\bm{\Sigma}}\big\|\lesssim\frac{\sigma^{2}}{p}\left(\frac{1}{\sqrt{\sigma_{\min}}}\frac{\sigma}{\sigma_{\min}}\sqrt{\frac{\kappa^{3}n}{p}}\right)^{2}.
\]
\end{claim}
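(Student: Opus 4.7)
My plan is to apply matrix Bernstein's inequality to $\hat{\bm{\Sigma}}$ after conditioning on $\bm{\Delta}^{(j)}$, exploiting the crucial fact that the leave-one-out quantity $\bm{\Delta}^{(j)}$ is, by construction of Algorithm~\ref{alg:gd-mc-ncvx-loo}, statistically independent of $\{\delta_{jk}\}_{1\le k\le n}$. Throughout, I shall condition on $\bm{\Delta}^{(j)}$ and on the high-probability event on which Claim~\ref{claim:Delta-d-j} holds; the randomness being exploited is then only that of the Bernoulli indicators in row $j$.

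The first step is to compute the expected value explicitly:
\[
\mathbb{E}\!\left[\hat{\bm{\Sigma}} \,\big|\, \bm{\Delta}^{(j)}\right] \,=\, \frac{\sigma^{2}}{p}\,\bm{\Delta}^{(j)\top}\bm{\Delta}^{(j)}, \qquad \text{so} \qquad \left\|\mathbb{E}\bigl[\hat{\bm{\Sigma}}\bigr]\right\| \,=\, \frac{\sigma^{2}}{p}\bigl\|\bm{\Delta}^{(j)}\bigr\|^{2},
\]
which, upon plugging in the bound on $\|\bm{\Delta}^{(j)}\|$ from Claim~\ref{claim:Delta-d-j}, exactly matches the target upper bound (up to constants). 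It therefore suffices to show that the fluctuation $\|\hat{\bm{\Sigma}}-\mathbb{E}[\hat{\bm{\Sigma}}]\|$ is dominated by this expectation. Write
\[
\hat{\bm{\Sigma}}-\mathbb{E}\bigl[\hat{\bm{\Sigma}}\bigr] \,=\, \sum_{k=1}^{n}\bm{W}_{k}, \qquad \bm{W}_{k}\triangleq\frac{\sigma^{2}}{p^{2}}(\delta_{jk}-p)\,\bm{\Delta}_{k,\cdot}^{(j)\top}\bm{\Delta}_{k,\cdot}^{(j)},
\]
which is a sum of independent, zero-mean, $r\times r$ symmetric random matrices.

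Next, I would verify the two ingredients of matrix Bernstein. For the per-summand bound,
\[
\|\bm{W}_{k}\| \,\le\, \frac{\sigma^{2}}{p^{2}}\bigl\|\bm{\Delta}^{(j)}\bigr\|_{2,\infty}^{2} \,=:\, L.
\]
For the matrix variance, using $\mathbb{E}[(\delta_{jk}-p)^{2}]\le p$,
\[
V \,\triangleq\, \left\|\sum_{k=1}^{n}\mathbb{E}\bigl[\bm{W}_{k}^{2}\bigr]\right\| \,\le\, \frac{\sigma^{4}}{p^{3}}\bigl\|\bm{\Delta}^{(j)}\bigr\|_{2,\infty}^{2}\bigl\|\bm{\Delta}^{(j)}\bigr\|^{2}.
\]
Matrix Bernstein (with deviation $t\asymp\log n$) then yields
\[
\bigl\|\hat{\bm{\Sigma}}-\mathbb{E}\bigl[\hat{\bm{\Sigma}}\bigr]\bigr\| \,\lesssim\, \sqrt{V\log n} + L\log n
\]
with probability at least $1-O(n^{-11})$.

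The final step is to verify that both $\sqrt{V\log n}$ and $L\log n$ are at most a constant multiple of $\|\mathbb{E}[\hat{\bm{\Sigma}}]\|=\frac{\sigma^{2}}{p}\|\bm{\Delta}^{(j)}\|^{2}$. After cancellation, the binding requirement is
\[
\bigl\|\bm{\Delta}^{(j)}\bigr\|_{2,\infty}^{2}\log n \,\lesssim\, p\,\bigl\|\bm{\Delta}^{(j)}\bigr\|^{2},
\]
and plugging in the two bounds from Claim~\ref{claim:Delta-d-j} reduces this to $\kappa^{2}\mu r\log^{2} n \lesssim n p$, i.e.~exactly the presumed sample complexity $n^{2}p\gg\kappa^{2}\mu rn\log^{2}n$. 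The Bernstein noise term is of the same or smaller order for the same reason. The main (minor) obstacle is simply book-keeping the dependence of $\bm{\Delta}^{(j)}$ to justify the conditional application of Bernstein; the independence between $\bm{F}^{\mathsf{d},(j)}$ (hence $\bm{\Delta}^{(j)}$) and the $j$-th row of $\mathcal{P}_\Omega$ is inherited directly from the construction of the auxiliary nonconvex iterates (cf.~Appendix~\ref{subsec:Algorithmic-details-nonconvex}), so no further work is needed there.
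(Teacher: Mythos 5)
Your proposal is correct and follows essentially the same route as the paper's proof: condition on $\bm{\Delta}^{(j)}$, exploit its independence from $\{\delta_{jk}\}_{k}$ (inherited from the leave-one-out construction), compute $\mathbb{E}[\hat{\bm{\Sigma}}\,|\,\bm{\Delta}^{(j)}]=\tfrac{\sigma^2}{p}\bm{\Delta}^{(j)\top}\bm{\Delta}^{(j)}$, bound the fluctuation by matrix Bernstein with the same per-summand and matrix-variance parameters, and plug in Claim~\ref{claim:Delta-d-j} together with the sample-complexity assumption to show the fluctuation is dominated by the mean.
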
Combine the upper bound on $\|\hat{\bm{\Sigma}}\|$ with
(\ref{eq:upper-bound-theta-1}) and choose $t\asymp\log n$ to arrive
at 
\[
\alpha_{1}\lesssim\sqrt{\big\|\hat{\bm{\Sigma}}\big\|}\left(\sqrt{r}+\sqrt{\log n}\right)\lesssim\frac{\sigma}{\sqrt{p}}\frac{1}{\sqrt{\sigma_{\min}}}\frac{\sigma}{\sigma_{\min}}\sqrt{\frac{\kappa^{3}rn\log n}{p}}
\]
with probability exceeding $1-O(n^{-11})$.
\item We move on to bounding $\alpha_{2}$, for which we have 
\begin{align}
\alpha_{2} & \leq\frac{1}{p}\left\Vert \mathcal{P}_{\Omega}\left(\bm{E}\right)\right\Vert \left\Vert \overline{\bm{Y}}^{\mathsf{d}}\big(\overline{\bm{Y}}^{\mathsf{d}\top}\overline{\bm{Y}}^{\mathsf{d}}\big)^{-1}-\overline{\bm{Y}}^{\mathsf{d},(j)}\big(\overline{\bm{Y}}^{\mathsf{d},(j)\top}\overline{\bm{Y}}^{\mathsf{d},(j)}\big)^{-1}\right\Vert \nonumber \\
 & \overset{(\text{i})}{\lesssim}\sigma\sqrt{\frac{n}{p}}\frac{1}{\sigma_{\min}}\big\|\overline{\bm{Y}}^{\mathsf{d}}-\overline{\bm{Y}}^{\mathsf{d},(j)}\big\|\nonumber \\
 & \overset{(\text{ii})}{\lesssim}\sigma\sqrt{\frac{n}{p}}\frac{1}{\sigma_{\min}}\kappa\frac{\sigma}{\sigma_{\min}}\sqrt{\frac{n\log n}{p}}\left\Vert \bm{Y}^{\star}\right\Vert _{2,\infty}\label{eq:useful_entry_theta2'}\\
 & \lesssim\sigma\sqrt{\frac{n}{p}}\frac{1}{\sqrt{\sigma_{\min}}}\frac{\sigma}{\sigma_{\min}}\sqrt{\frac{\kappa^{3}\mu r\log n}{p}}.\nonumber 
\end{align}
Here (i) uses the fact that $\|\mathcal{P}_{\Omega}(\bm{E})\|\lesssim\sigma\sqrt{np}$
(see \cite[Lemma 3]{chen2019noisy}), the perturbation bounds for
pseudo-inverses (see~Lemma~\ref{lemma:pseudo-inverse}) and~(\ref{eq:F-norm-upper-bound}); the penultimate
inequality (ii) comes from the fact that $\|\overline{\bm{Y}}^{\mathsf{d}}-\overline{\bm{Y}}^{\mathsf{d},(j)}\|\lesssim\kappa\frac{\sigma}{\sigma_{\min}}\sqrt{\frac{n\log n}{p}}\|\bm{Y}^{\star}\|_{2,\infty}$
(see~(\ref{eq:F-d-j-F-d-dist})) and last one uses the incoherence condition $\|\bm{Y}^{\star}\|_{2,\infty}\leq \sqrt{\mu r \sigma_{\max} / n}$ (cf.~(\ref{eq:incoherence-X})).
\end{enumerate}
Combine the bounds on $\alpha_{1}$ and $\alpha_{2}$ to reach 
\begin{align}
 \left\Vert \bm{e}_{j}^{\top}\bm{\Phi}_{1}\right\Vert _{2}&\lesssim\frac{\sigma}{\sqrt{p}}\frac{1}{\sqrt{\sigma_{\min}}}\frac{\sigma}{\sigma_{\min}}\sqrt{\frac{\kappa^{3}rn\log n}{p}}+\sigma\sqrt{\frac{n}{p}}\frac{1}{\sqrt{\sigma_{\min}}}\frac{\sigma}{\sigma_{\min}}\sqrt{\frac{\kappa^{3}\mu r\log n}{p}}\nonumber \\
 & \lesssim\frac{\sigma}{\sqrt{p\sigma_{\min}}}\cdot\frac{\sigma}{\sigma_{\min}}\sqrt{\frac{\kappa^{3}\mu rn\log n}{p}}.\label{eq:1st-pert-final-bound}
\end{align}
Taking the maximum over $1\leq j\leq n$ establishes our bound on
$\|\bm{\Phi}_{1}\|_{2,\infty}$.

\begin{proof}[Proof of Claim \ref{claim:Delta-d-j}]Apply the perturbation
bound for pseudo-inverses (see~Lemma~\ref{lemma:pseudo-inverse})
to obtain 
\begin{align}
\big\|\bm{\Delta}^{(j)}\big\| & \lesssim\max\left\{ \big\|\bm{Y}^{\star}\left(\bm{Y}^{\star\top}\bm{Y}^{\star}\right)^{-1}\big\|^{2},\big\|\overline{\bm{Y}}^{\mathsf{d},(j)}\big(\overline{\bm{Y}}^{\mathsf{d},(j)\top}\overline{\bm{Y}}^{\mathsf{d},(j)}\big)^{-1}\big\|^{2}\right\} \big\|\overline{\bm{Y}}^{\mathsf{d},(j)}-\bm{Y}^{\star}\big\|\nonumber \\
 & \lesssim\frac{1}{\sigma_{\min}}\kappa\frac{\sigma}{\sigma_{\min}}\sqrt{\frac{n}{p}}\left\Vert \bm{X}^{\star}\right\Vert \asymp\frac{1}{\sqrt{\sigma_{\min}}}\frac{\sigma}{\sigma_{\min}}\sqrt{\frac{\kappa^{3}n}{p}},\label{eq:useful_entry_theta2}
\end{align}
Here we have utilized the facts that $\|\overline{\bm{Y}}^{\mathsf{d},(j)}-\bm{Y}^{\star}\|\lesssim\kappa\frac{\sigma}{\sigma_{\min}}\sqrt{\frac{n}{p}}\|\bm{X}^{\star}\|$
(see~(\ref{eq:F-d-j-op})) and a simple consequence of~(\ref{eq:F-norm-upper-bound}), viz. 
\[
\max\left\{ \big\|\bm{Y}^{\star}\left(\bm{Y}^{\star\top}\bm{Y}^{\star}\right)^{-1}\big\|^{2},\big\|\overline{\bm{Y}}^{\mathsf{d},(j)}\big(\overline{\bm{Y}}^{\mathsf{d},(j)\top}\overline{\bm{Y}}^{\mathsf{d},(j)}\big)^{-1}\big\|^{2}\right\} \lesssim\frac{1}{\sigma_{\min}}.
\]
Moreover, the triangle inequality tells us that 
\begin{align*}
\big\|\bm{\Delta}^{(j)}\big\|_{2,\infty} & \leq\left\Vert \overline{\bm{Y}}^{\mathsf{d},(j)}\left[\big(\overline{\bm{Y}}^{\mathsf{d},(j)\top}\overline{\bm{Y}}^{\mathsf{d},(j)}\big)^{-1}-\left(\bm{Y}^{\star\top}\bm{Y}^{\star}\right)^{-1}\right]\right\Vert _{2,\infty}+\left\Vert \big(\overline{\bm{Y}}^{\mathsf{d},(j)}-\bm{Y}^{\star}\big)\left(\bm{Y}^{\star\top}\bm{Y}^{\star}\right)^{-1}\right\Vert _{2,\infty}\\
 & \leq \big\Vert \overline{\bm{Y}}^{\mathsf{d},(j)}\big\Vert _{2,\infty} \left\Vert \big(\overline{\bm{Y}}^{\mathsf{d},(j)\top}\overline{\bm{Y}}^{\mathsf{d},(j)}\big)^{-1}-\left(\bm{Y}^{\star\top}\bm{Y}^{\star}\right)^{-1}\right\Vert 
	+ \big\Vert \overline{\bm{Y}}^{\mathsf{d},(j)}-\bm{Y}^{\star}\big\Vert _{2,\infty} \big\Vert \left(\bm{Y}^{\star\top}\bm{Y}^{\star}\right)^{-1}\big\Vert \\
 & \lesssim\frac{1}{\sigma_{\min}}\kappa^{2}\frac{\sigma}{\sigma_{\min}}\sqrt{\frac{n}{p}}\left\Vert \bm{F}^{\star}\right\Vert _{2,\infty}+\frac{1}{\sigma_{\min}}\frac{\sigma}{\sigma_{\min}}\sqrt{\frac{\kappa^{2}n\log n}{p}}\left\Vert \bm{F}^{\star}\right\Vert _{2,\infty}\\
 & \lesssim\frac{1}{\sqrt{\sigma_{\min}}}\frac{\sigma}{\sigma_{\min}}\sqrt{\frac{\kappa^{5}\mu r\log n}{p}},
\end{align*}
where the penultimate inequality follows from the facts that $\|\overline{\bm{Y}}^{\mathsf{d},(j)}\|_{2,\infty}\leq2\|\bm{F}^{\star}\|_{2,\infty}$,
$\|\overline{\bm{Y}}^{\mathsf{d},(j)}-\bm{Y}^{\star}\|_{2,\infty}\lesssim\kappa\frac{\sigma}{\sigma_{\min}}\sqrt{\frac{n\log n}{p}}\|\bm{F}^{\star}\|_{2,\infty}$
(see~(\ref{eq:F-d-j-2-infty})) and that 
\begin{align*}
\left\Vert \big(\overline{\bm{Y}}^{\mathsf{d},(j)\top}\overline{\bm{Y}}^{\mathsf{d},(j)}\big)^{-1}-\left(\bm{Y}^{\star\top}\bm{Y}^{\star}\right)^{-1}\right\Vert  & \leq\left\Vert \big(\overline{\bm{Y}}^{\mathsf{d},(j)\top}\overline{\bm{Y}}^{\mathsf{d},(j)}\big)^{-1}\right\Vert \left\Vert \overline{\bm{Y}}^{\mathsf{d},(j)\top}\overline{\bm{Y}}^{\mathsf{d},(j)}-\bm{Y}^{\star\top}\bm{Y}^{\star}\right\Vert \big\Vert \left(\bm{Y}^{\star\top}\bm{Y}^{\star}\right)^{-1}\big\Vert \\
 & \lesssim\frac{1}{\sigma_{\min}^{2}}\left\Vert \bm{F}^{\mathsf{d},(j)}\bm{H}^{\mathsf{{d}},(j)}-\bm{F}^{\star}\right\Vert \big\Vert \bm{F}^{\star}\big\Vert \\
 & \lesssim\frac{1}{\sigma_{\min}}\kappa^{2}\frac{\sigma}{\sigma_{\min}}\sqrt{\frac{n}{p}}.
\end{align*}
Here the penultimate inequality follows from~(\ref{eq:F-norm-upper-bound}). The proof of the claim is then complete. \end{proof}

\begin{proof}[Proof of Claim \ref{claim:1st-pert-sigma-upper}]Conditional
on $\bm{{\Delta}}^{(j)}$, using Bernstein's inequality and the fact
that $\bm{\Delta}^{(j)}$ and $\{\delta_{jk}\}_{k:1\leq k\leq n}$ are
independent, we arrive at that with probability exceeding $1-O(n^{-11})$,
\[
\left\Vert \hat{\bm{\Sigma}}-\frac{\sigma^{2}}{p}\bm{\Delta}^{(j)\top}\bm{\Delta}^{(j)}\right\Vert \lesssim\frac{\sigma^{2}}{p^{2}}\left(\sqrt{V\log n}+B\log n\right),
\]
where 
\begin{align*}
B & \triangleq\max_{1\leq k\leq n}\left\Vert \left(\delta_{jk}-p\right)\bm{\Delta}_{k,\cdot}^{(j)\top}\bm{\Delta}_{k,\cdot}^{(j)}\right\Vert \leq\big\|\bm{\Delta}^{(j)}\big\|_{2,\infty}^{2},\\
V & \triangleq\left\Vert \sum_{k=1}^{n}\mathbb{E}\left(\delta_{jk}-p\right)^{2}\bm{\Delta}_{k,\cdot}^{(j)\top}\bm{\Delta}_{k,\cdot}^{(j)}\bm{\Delta}_{k,\cdot}^{(j)\top}\bm{\Delta}_{k,\cdot}^{(j)}\right\Vert \leq p\big\|\bm{\Delta}^{(j)}\big\|_{2,\infty}^{2}\big\|\bm{\Delta}^{(j)}\big\|^{2}.
\end{align*}
As a result, with probability at least $1-O(n^{-11})$, we have 
\begin{align*}
\left\Vert \hat{\bm{\Sigma}}-\frac{\sigma^{2}}{p}\bm{\Delta}^{(j)\top}\bm{\Delta}^{(j)}\right\Vert  & \lesssim\frac{\sigma^{2}}{p^{2}}\big\|\bm{\Delta}^{(j)}\big\|_{2,\infty}\left(\sqrt{p\log n}\big\|\bm{\Delta}^{(j)}\big\|+\big\|\bm{\Delta}^{(j)}\big\|_{2,\infty}\log n\right)\\
 & \lesssim\frac{\sigma^{2}}{p^{2}}\big\|\bm{\Delta}^{(j)}\big\|_{2,\infty}\left(\sqrt{p\log n}\frac{1}{\sqrt{\sigma_{\min}}}\cdot\frac{\sigma}{\sigma_{\min}}\sqrt{\frac{\kappa^{3}n}{p}}+\frac{1}{\sqrt{\sigma_{\min}}}\cdot\frac{\sigma}{\sigma_{\min}}\sqrt{\frac{\kappa^{5}\mu r\log n}{p}}\log n\right)\\
 & \lesssim\frac{\sigma^{2}}{p^{2}}\big\|\bm{\Delta}^{(j)}\big\|_{2,\infty}\frac{1}{\sqrt{\sigma_{\min}}}\frac{\sigma}{\sigma_{\min}}\sqrt{\kappa^{3}n\log n},
\end{align*}
as long as $np\gg\kappa^{2}\mu r\log^{2}n$. Here the middle inequality
uses Claim~\ref{claim:Delta-d-j}. In view of the triangle inequality,
\begin{align*}
\big\|\hat{\bm{\Sigma}}\big\| & \leq\left\Vert \frac{\sigma^{2}}{p}\bm{\Delta}^{(j)\top}\bm{\Delta}^{(j)}\right\Vert +O\left(\frac{\sigma^{2}}{p^{2}}\big\|\bm{\Delta}^{(j)}\big\|_{2,\infty}\frac{1}{\sqrt{\sigma_{\min}}}\frac{\sigma}{\sigma_{\min}}\sqrt{\kappa^{3}n\log n}\right)\\
 & \lesssim\frac{\sigma^{2}}{p}\left(\big\|\bm{\Delta}^{(j)}\big\|^{2}+\frac{1}{p}\big\|\bm{\Delta}^{(j)}\big\|_{2,\infty}\frac{1}{\sqrt{\sigma_{\min}}}\frac{\sigma}{\sigma_{\min}}\sqrt{\kappa^{3}n\log n}\right)\\
 & \lesssim\frac{\sigma^{2}}{p}\left(\big\|\bm{\Delta}^{(j)}\big\|^{2}+\frac{1}{p}\big\|\bm{\Delta}^{(j)}\big\|_{2,\infty}\frac{1}{\sqrt{\sigma_{\min}}}\frac{\sigma}{\sigma_{\min}}\sqrt{\kappa^{3}n\log n}\right)\\
 & \lesssim\frac{\sigma^{2}}{p}\left(\frac{1}{\sqrt{\sigma_{\min}}}\frac{\sigma}{\sigma_{\min}}\sqrt{\frac{\kappa^{3}n}{p}}\right)^{2},
\end{align*}
with the proviso that $n^{2}p\gg\kappa^{2}\mu rn\log^{2}n$. Again,
the last line makes use of Claim~\ref{claim:Delta-d-j}. This concludes
the proof of the claim. \end{proof}

\subsection{Proof of Lemma \ref{lem:Phi2-two-infty}\label{subsec:Proof-of-Lemma-Phi2}}

Recall that $\overline{\bm{Y}}^{\mathsf{d}}=\bm{Y}^{\mathsf{d}}\bm{H}^{\mathrm{d}}$.
The sub-multiplicativity of the operator norm gives that for any $1\leq j\leq n$, 
\begin{align}
\left\Vert \bm{e}_{j}^{\top}\bm{\Phi}_{2}\right\Vert _{2} & =\left\Vert \bm{e}_{j}^{\top}\bm{X}^{\star}\left[\bm{Y}^{\star\top}\overline{\bm{Y}}^{\mathsf{d}}\big(\overline{\bm{Y}}^{\mathsf{d}\top}\overline{\bm{Y}}^{\mathsf{d}}\big)^{-1}-\overline{\bm{Y}}^{\mathsf{d}\top}\overline{\bm{Y}}^{\mathsf{d}}\big(\overline{\bm{Y}}^{\mathsf{d}\top}\overline{\bm{Y}}^{\mathsf{d}}\big)^{-1}\right]\right\Vert _{2}\nonumber \\
 & \leq\left\Vert \bm{e}_{j}^{\top}\bm{X}^{\star}\right\Vert _{2}\left\Vert \big(\bm{Y}^{\star}-\overline{\bm{Y}}^{\mathsf{d}}\big)^{\top}\overline{\bm{Y}}^{\mathsf{d}}\right\Vert \left\Vert \big(\overline{\bm{Y}}^{\mathsf{d}\top}\overline{\bm{Y}}^{\mathsf{d}}\big)^{-1}\right\Vert \\
 & \lesssim\sqrt{\frac{\mu r \sigma_{\max}}{n}}\frac{1}{\sigma_{\min}}\big\Vert \big(\bm{Y}^{\star}-\overline{\bm{Y}}^{\mathsf{d}}\big)^{\top}\overline{\bm{Y}}^{\mathsf{d}}\big\Vert \nonumber \\
 & \asymp\sqrt{\frac{\kappa\mu r}{n}}\frac{1}{\sqrt{\sigma_{\min}}}\big\Vert \big(\bm{Y}^{\star}-\overline{\bm{Y}}^{\mathsf{d}}\big)^{\top}\overline{\bm{Y}}^{\mathsf{d}}\big\Vert ,\label{eq:2nd-pert-master}
\end{align}
where the second inequality follows from the incoherence assumption
that $\|\bm{e}_{j}^{\top}\bm{X}^{\star}\|_{2}\leq\|\bm{X}^{\star}\|_{2,\infty}\leq\sqrt{\mu r\sigma_{\max}/n}$
(cf.~(\ref{eq:incoherence-X})) and the fact that $\|(\overline{\bm{Y}}^{\mathsf{d}\top}\overline{\bm{Y}}^{\mathsf{d}})^{-1}\|\lesssim1/\sigma_{\min}$, a simple consequence of (\ref{eq:F-norm-upper-bound}).

It remains to control $\|(\overline{\bm{Y}}^{\mathsf{d}}-\bm{Y}^{\star})^{\top}\overline{\bm{Y}}^{\mathsf{d}}\|$.
To simplify notation hereafter, define $\bm{\Delta}_{\bm{X}}\triangleq\overline{\bm{X}}^{\mathsf{d}}-\bm{X}^{\star}$
and $\bm{\Delta}_{\bm{Y}}\triangleq\overline{\bm{Y}}^{\mathsf{d}}-\bm{Y}^{\star}$.
First, observe that 
\begin{equation}
\big(\bm{Y}^{\star}-\overline{\bm{Y}}^{\mathsf{d}}\big)^{\top}\overline{\bm{Y}}^{\mathsf{d}}=\bm{\Delta}_{\bm{Y}}^{\top}\bm{Y}^{\star}+\bm{\Delta}_{\bm{Y}}^{\top}\bm{\Delta}_{\bm{Y}}.\label{eq:2nd-pert-relation-1}
\end{equation}
Second, in view of the decomposition of $\bm{Y}^{\mathsf{d}}$ given
in~(\ref{eq:Y-d-identity-1}), we have 
\begin{align}
\overline{\bm{Y}}^{\mathsf{d}} & =\bm{Y}^{\star}\bm{X}^{\star\top}\overline{\bm{X}}^{\mathsf{d}}\big(\overline{\bm{X}}^{\mathsf{d}\top}\overline{\bm{X}}^{\mathsf{d}}\big)^{-1}+\frac{1}{p}\left[\mathcal{P}_{\Omega}\left(\bm{E}\right)\right]^{\top}\overline{\bm{X}}^{\mathsf{d}}\big(\overline{\bm{X}}^{\mathsf{d}\top}\overline{\bm{X}}^{\mathsf{d}}\big)^{-1}-\bm{A}^{\top}\overline{\bm{X}}^{\mathsf{d}}\big(\overline{\bm{X}}^{\mathsf{d}\top}\overline{\bm{X}}^{\mathsf{d}}\big)^{-1}\nonumber \\
 & \quad+\nabla_{\bm{Y}}f\left(\bm{X},\bm{Y}\right)\Big(\bm{I}_{r}+\frac{\lambda}{p}(\bm{X}^{\top}\bm{X})^{-1}\Big)^{1/2}(\bm{X}^{\mathsf{d}\top}\bm{X}^{\mathsf{d}})^{-1}\bm{H}^{\mathsf{d}}-\bm{Y}\bm{\Delta}_{\mathsf{balancing}}\bm{H}^{\mathsf{d}}.\label{eq:Y-d-identity}
\end{align}
As a result, one obtains 
\begin{align}
\bm{\Delta}_{\bm{Y}}^{\top}\bm{Y}^{\star} & =\left\{ \bm{Y}^{\star}\left(\bm{X}^{\star\top}\overline{\bm{X}}^{\mathsf{d}}\big(\overline{\bm{X}}^{\mathsf{d}\top}\overline{\bm{X}}^{\mathsf{d}}\big)^{-1}-\bm{I}_{r}\right)+\frac{1}{p}\left[\mathcal{P}_{\Omega}\left(\bm{E}\right)\right]^{\top}\overline{\bm{X}}^{\mathsf{d}}\left(\overline{\bm{X}}^{\mathsf{d}\top}\overline{\bm{X}}^{\mathsf{d}}\right)^{-1}-\bm{A}^{\top}\overline{\bm{X}}^{\mathsf{d}}\left(\overline{\bm{X}}^{\mathsf{d}\top}\overline{\bm{X}}^{\mathsf{d}}\right)^{-1}\right\} ^{\top}\bm{Y}^{\star}\nonumber \\
 & \quad+\left\{ \nabla_{\bm{Y}}f\left(\bm{X},\bm{Y}\right)\Big(\bm{I}_{r}+\frac{\lambda}{p}(\bm{X}^{\top}\bm{X})^{-1}\Big)^{1/2}(\bm{X}^{\mathsf{d}\top}\bm{X}^{\mathsf{d}})^{-1}\bm{H}^{\mathsf{d}}-\bm{Y}\bm{\Delta}_{\mathsf{balancing}}\bm{H}^{\mathsf{d}}\right\} ^{\top}\bm{Y}^{\star}\nonumber \\
 & =-\big(\overline{\bm{X}}^{\mathsf{d}\top}\overline{\bm{X}}^{\mathsf{d}}\big)^{-1}\overline{\bm{X}}^{\mathsf{d}\top}\bm{\Delta}_{\bm{X}}\bm{\Sigma}^{\star}+\big(\overline{\bm{X}}^{\mathsf{d}\top}\overline{\bm{X}}^{\mathsf{d}}\big)^{-1}\overline{\bm{X}}^{\mathsf{d}\top}\frac{1}{p}\mathcal{P}_{\Omega}\left(\bm{E}\right)\bm{Y}^{\star}-\big(\overline{\bm{X}}^{\mathsf{d}\top}\overline{\bm{X}}^{\mathsf{d}}\big)^{-1}\overline{\bm{X}}^{\mathsf{d}\top}\bm{A}\bm{Y}^{\star}\nonumber \\
 & \quad+\left\{ \nabla_{\bm{Y}}f\left(\bm{X},\bm{Y}\right)\Big(\bm{I}_{r}+\frac{\lambda}{p}(\bm{X}^{\top}\bm{X})^{-1}\Big)^{1/2}(\bm{X}^{\mathsf{d}\top}\bm{X}^{\mathsf{d}})^{-1}\bm{H}^{\mathsf{d}}-\bm{Y}\bm{\Delta}_{\mathsf{balancing}}\bm{H}^{\mathsf{d}}\right\} ^{\top}\bm{Y}^{\star}\nonumber \\
 & =-\big(\overline{\bm{X}}^{\mathsf{d}\top}\overline{\bm{X}}^{\mathsf{d}}\big)^{-1}\bm{X}^{\star\top}\bm{\Delta}_{\bm{X}}\bm{\Sigma}^{\star}+\bm{S},\label{eq:2nd-pert-relation-2}
\end{align}
where we have used 
\[
\bm{X}^{\star\top}\overline{\bm{X}}^{\mathsf{d}}\big(\overline{\bm{X}}^{\mathsf{d}\top}\overline{\bm{X}}^{\mathsf{d}}\big)^{-1}-\bm{I}_{r}=\bm{X}^{\star\top}\overline{\bm{X}}^{\mathsf{d}}\big(\overline{\bm{X}}^{\mathsf{d}\top}\overline{\bm{X}}^{\mathsf{d}}\big)^{-1}-\overline{\bm{X}}^{\mathsf{d}\top}\overline{\bm{X}}^{\mathsf{d}}\big(\overline{\bm{X}}^{\mathsf{d}\top}\overline{\bm{X}}^{\mathsf{d}}\big)^{-1}=-\bm{\Delta}_{\bm{X}}^{\top}\overline{\bm{X}}^{\mathsf{d}}\big(\overline{\bm{X}}^{\mathsf{d}\top}\overline{\bm{X}}^{\mathsf{d}}\big)^{-1}.
\]
Here, we define $\bm{S}$ to be 
\begin{align}
\bm{S} & \triangleq-\big(\overline{\bm{X}}^{\mathsf{d}\top}\overline{\bm{X}}^{\mathsf{d}}\big)^{-1}\bm{\Delta}_{\bm{X}}^{\top}\bm{\Delta}_{\bm{X}}\bm{\Sigma}^{\star}+\big(\overline{\bm{X}}^{\mathsf{d}\top}\overline{\bm{X}}^{\mathsf{d}}\big)^{-1}\overline{\bm{X}}^{\mathsf{d}\top}\frac{1}{p}\mathcal{P}_{\Omega}\left(\bm{E}\right)\bm{Y}^{\star}-\big(\overline{\bm{X}}^{\mathsf{d}\top}\overline{\bm{X}}^{\mathsf{d}}\big)^{-1}\overline{\bm{X}}^{\mathsf{d}\top}\bm{A}\bm{Y}^{\star}\nonumber \\
 & \quad+\left\{ \nabla_{\bm{Y}}f\left(\bm{X},\bm{Y}\right)\Big(\bm{I}_{r}+\frac{\lambda}{p}(\bm{X}^{\top}\bm{X})^{-1}\Big)^{1/2}(\bm{X}^{\mathsf{d}\top}\bm{X}^{\mathsf{d}})^{-1}\bm{H}^{\mathsf{d}}-\bm{Y}\bm{\Delta}_{\mathsf{balancing}}\bm{H}^{\mathsf{d}}\right\} ^{\top}\bm{Y}^{\star}.\label{eq:defn-S}
\end{align}
The following claim connects $\bm{\Delta}_{\bm{Y}}^{\top}\bm{Y}^{\star}$
with $\bm{X}^{\star\top}\bm{\Delta}_{\bm{X}}$. \begin{claim}\label{claim:connection-delta-x-delta-y}The
following identity holds true:
\[
\bm{\Delta}_{\bm{Y}}^{\top}\bm{Y}^{\star}-\bm{X}^{\star\top}\bm{\Delta}_{\bm{X}}=\frac{1}{2}\left(\bm{\Delta}_{\bm{X}}^{\top}\bm{\Delta}_{\bm{X}}-\bm{\Delta}_{\bm{Y}}^{\top}\bm{\Delta}_{\bm{Y}}\right)+\underbrace{\frac{{1}}{2}\bm{{H}}^{\mathsf{{d}}\top}\big(\bm{{Y}}^{\mathsf{d}\top}\bm{{Y}}^{\mathsf{d}}-\bm{{X}}^{\mathsf{d}\top}\bm{{X}}^{\mathsf{d}}\big)\bm{{H}}^{\mathsf{{d}}}}_{:=\bm{{\Delta}}_{\bm{{XY}}}^{\mathsf{{d}}}}.
\]
\end{claim}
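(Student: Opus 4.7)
The plan is to verify this as a purely algebraic identity by expanding both sides in terms of the ``rotated'' estimates $\overline{\bm{X}}^{\mathsf{d}}=\bm{X}^{\mathsf{d}}\bm{H}^{\mathsf{d}}$ and $\overline{\bm{Y}}^{\mathsf{d}}=\bm{Y}^{\mathsf{d}}\bm{H}^{\mathsf{d}}$, using only (i) the balanced-factor identity $\bm{X}^{\star\top}\bm{X}^{\star}=\bm{Y}^{\star\top}\bm{Y}^{\star}=\bm{\Sigma}^{\star}$ from~\eqref{eq:defn-Xstar-Ystar}, and (ii) the optimality of $\bm{H}^{\mathsf{d}}$ as the minimizer in~\eqref{eq:defn-H-d}.

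First I would rewrite the $\bm{\Delta}^{\mathsf{d}}_{\bm{XY}}$ term by conjugating with $\bm{H}^{\mathsf{d}}$: since $\bm{H}^{\mathsf{d}}\in\mathcal{O}^{r\times r}$, one has $\bm{\Delta}^{\mathsf{d}}_{\bm{XY}}=\tfrac{1}{2}\big(\overline{\bm{Y}}^{\mathsf{d}\top}\overline{\bm{Y}}^{\mathsf{d}}-\overline{\bm{X}}^{\mathsf{d}\top}\overline{\bm{X}}^{\mathsf{d}}\big)$. Next, directly expanding the quadratic forms and invoking $\bm{X}^{\star\top}\bm{X}^{\star}=\bm{Y}^{\star\top}\bm{Y}^{\star}$ yields
\[
\tfrac{1}{2}\big(\bm{\Delta}_{\bm{X}}^{\top}\bm{\Delta}_{\bm{X}}-\bm{\Delta}_{\bm{Y}}^{\top}\bm{\Delta}_{\bm{Y}}\big)+\bm{\Delta}^{\mathsf{d}}_{\bm{XY}}=\tfrac{1}{2}\big(\overline{\bm{Y}}^{\mathsf{d}\top}\bm{Y}^{\star}+\bm{Y}^{\star\top}\overline{\bm{Y}}^{\mathsf{d}}\big)-\tfrac{1}{2}\big(\overline{\bm{X}}^{\mathsf{d}\top}\bm{X}^{\star}+\bm{X}^{\star\top}\overline{\bm{X}}^{\mathsf{d}}\big),
\]
where the ``square'' cross terms $\overline{\bm{X}}^{\mathsf{d}\top}\overline{\bm{X}}^{\mathsf{d}}$ and $\overline{\bm{Y}}^{\mathsf{d}\top}\overline{\bm{Y}}^{\mathsf{d}}$ cancel out, and the $\bm{\Sigma}^{\star}$ contributions cancel as well. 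Symmetrically expanding the left-hand side gives $\bm{\Delta}_{\bm{Y}}^{\top}\bm{Y}^{\star}-\bm{X}^{\star\top}\bm{\Delta}_{\bm{X}}=\overline{\bm{Y}}^{\mathsf{d}\top}\bm{Y}^{\star}-\bm{X}^{\star\top}\overline{\bm{X}}^{\mathsf{d}}$.

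Subtracting, the target identity reduces to the claim that $\bm{M}\triangleq\overline{\bm{X}}^{\mathsf{d}\top}\bm{X}^{\star}+\overline{\bm{Y}}^{\mathsf{d}\top}\bm{Y}^{\star}$ is a symmetric matrix. This is where the definition of $\bm{H}^{\mathsf{d}}$ enters crucially. Recalling the stacked notation $\bm{F}^{\mathsf{d}}$ and $\bm{F}^{\star}$ from~\eqref{eq:defn-F}, one has $\bm{M}=\bm{H}^{\mathsf{d}\top}\bm{F}^{\mathsf{d}\top}\bm{F}^{\star}$. But $\bm{H}^{\mathsf{d}}=\arg\min_{\bm{R}\in\mathcal{O}^{r\times r}}\|\bm{F}^{\mathsf{d}}\bm{R}-\bm{F}^{\star}\|_{\mathrm{F}}^{2}$ is precisely the orthogonal Procrustes solution, namely $\bm{H}^{\mathsf{d}}=\bm{U}_{1}\bm{V}_{1}^{\top}$ where $\bm{F}^{\mathsf{d}\top}\bm{F}^{\star}=\bm{U}_{1}\bm{\Sigma}_{1}\bm{V}_{1}^{\top}$ is an SVD. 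Consequently $\bm{H}^{\mathsf{d}\top}\bm{F}^{\mathsf{d}\top}\bm{F}^{\star}=\bm{V}_{1}\bm{\Sigma}_{1}\bm{V}_{1}^{\top}$, which is symmetric (in fact positive semidefinite). Hence $\bm{M}=\bm{M}^{\top}$, completing the verification.

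There is no real obstacle in this proof beyond bookkeeping; the only nontrivial ingredient is the Procrustes symmetry property of $\bm{H}^{\mathsf{d}}$. If desired, one can also give a variational derivation of this symmetry by writing the first-order optimality condition $\bm{H}^{\mathsf{d}\top}\bm{F}^{\mathsf{d}\top}\bm{F}^{\star}=\bm{F}^{\star\top}\bm{F}^{\mathsf{d}}\bm{H}^{\mathsf{d}}$ obtained from differentiating the Lagrangian $\|\bm{F}^{\mathsf{d}}\bm{R}-\bm{F}^{\star}\|_{\mathrm{F}}^{2}$ subject to $\bm{R}^{\top}\bm{R}=\bm{I}_{r}$, which directly yields the required symmetry.
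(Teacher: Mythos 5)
Your proof is correct and rests on the same two ingredients as the paper's: the balanced-factor identity $\bm{X}^{\star\top}\bm{X}^{\star}=\bm{Y}^{\star\top}\bm{Y}^{\star}$ and the symmetry (in fact positive semidefiniteness) of $\bm{H}^{\mathsf{d}\top}\bm{F}^{\mathsf{d}\top}\bm{F}^{\star}$, which the paper obtains by citing Lemma~35 of \cite{ma2017implicit} rather than deriving it from the Procrustes SVD as you do. The only organizational difference is that the paper pushes forward from the left-hand side via two sequential identities and then combines them, while you simplify both sides to expressions in $\overline{\bm{X}}^{\mathsf{d}},\overline{\bm{Y}}^{\mathsf{d}}$ and reduce the whole claim to the symmetry of $\bm{M}=\overline{\bm{X}}^{\mathsf{d}\top}\bm{X}^{\star}+\overline{\bm{Y}}^{\mathsf{d}\top}\bm{Y}^{\star}$; this is a cosmetic rearrangement of the same argument.
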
This relation together with~(\ref{eq:2nd-pert-relation-2})
yields
\[
\bm{\Delta}_{\bm{Y}}^{\top}\bm{Y}^{\star}=-\big(\overline{\bm{X}}^{\mathsf{d}\top}\overline{\bm{X}}^{\mathsf{d}}\big)^{-1}\left[\bm{\Delta}_{\bm{Y}}^{\top}\bm{Y}^{\star}-\frac{1}{2}\left(\bm{\Delta}_{\bm{X}}^{\top}\bm{\Delta}_{\bm{X}}-\bm{\Delta}_{\bm{Y}}^{\top}\bm{\Delta}_{\bm{Y}}\right)-\bm{{\Delta}}_{\bm{{XY}}}^{\mathsf{{d}}}\right]\bm{\Sigma}^{\star}+\bm{S}.
\]
A little algebraic manipulation then gives 
\[
\overline{\bm{X}}^{\mathsf{d}\top}\overline{\bm{X}}^{\mathsf{d}}\bm{\Delta}_{\bm{Y}}^{\top}\bm{Y}^{\star}+\bm{\Delta}_{\bm{Y}}^{\top}\bm{Y}^{\star}\bm{\Sigma}^{\star}=\overline{\bm{X}}^{\mathsf{d}\top}\overline{\bm{X}}^{\mathsf{d}}\bm{S}+\frac{1}{2}\left(\bm{\Delta}_{\bm{X}}^{\top}\bm{\Delta}_{\bm{X}}-\bm{\Delta}_{\bm{Y}}^{\top}\bm{\Delta}_{\bm{Y}}\right)\bm{\Sigma}^{\star}+\bm{{\Delta}}_{\bm{{XY}}}^{\mathsf{{d}}}\bm{{\Sigma}}^{\star}.
\]
It is easy to check from~(\ref{eq:F-norm-upper-bound}) that $0.25\sigma_{\min}\bm{I}_{r}\preceq\overline{\bm{X}}^{\mathsf{d}\top}\overline{\bm{X}}^{\mathsf{d}},\bm{\Sigma}^{\star}\preceq4\sigma_{\max}\bm{I}_{r}$.
Hence one can invoke Lemma~\ref{lemma:AX+XB=00003DC} with $\bm{X}=\bm{\Delta}_{\bm{Y}}^{\top}\bm{Y}^{\star}$,
$\bm{A}=\bm{\Sigma}^{\star}$, $\bm{B}=\overline{\bm{X}}^{\mathsf{d}\top}\overline{\bm{X}}^{\mathsf{d}}$
and $\bm{C}=\overline{\bm{X}}^{\mathsf{d}\top}\overline{\bm{X}}^{\mathsf{d}}\bm{S}+0.5(\bm{\Delta}_{\bm{X}}^{\top}\bm{\Delta}_{\bm{X}}-\bm{\Delta}_{\bm{Y}}^{\top}\bm{\Delta}_{\bm{Y}})\bm{\Sigma}^{\star} +\bm{{\Delta}}_{\bm{{XY}}}^{\mathsf{{d}}}\bm{\Sigma}^{\star}$
to obtain 
\begin{align*}
\left\Vert \bm{\Delta}_{\bm{Y}}^{\top}\bm{Y}^{\star}\right\Vert  & \lesssim\frac{1}{\sigma_{\min}}\left\Vert \overline{\bm{X}}^{\mathsf{d}\top}\overline{\bm{X}}^{\mathsf{d}}\bm{S}+\frac{1}{2}\left(\bm{\Delta}_{\bm{X}}^{\top}\bm{\Delta}_{\bm{X}}-\bm{\Delta}_{\bm{Y}}^{\top}\bm{\Delta}_{\bm{Y}}\right)\bm{\Sigma}^{\star}+\bm{{\Delta}}_{\bm{{XY}}}^{\mathsf{{d}}}\bm{{\Sigma}}^{\star}\right\Vert \\
 & \leq\frac{1}{\sigma_{\min}}\left\Vert \overline{\bm{X}}^{\mathsf{d}\top}\frac{1}{p}\mathcal{P}_{\Omega}\left(\bm{E}\right)\bm{Y}^{\star}-\overline{\bm{X}}^{\mathsf{d}\top}\bm{A}\bm{Y}^{\star}-\frac{1}{2}\left(\bm{\Delta}_{\bm{X}}^{\top}\bm{\Delta}_{\bm{X}}+\bm{\Delta}_{\bm{Y}}^{\top}\bm{\Delta}_{\bm{Y}}\right)\bm{\Sigma}^{\star}\right\Vert \\
 & \quad+\frac{1}{\sigma_{\min}}\left\Vert \bm{H}^{\mathsf{d}\top}\Big(\bm{I}_{r}+\frac{\lambda}{p}(\bm{X}^{\top}\bm{X})^{-1}\Big)^{1/2}\left[\nabla_{\bm{Y}}f\left(\bm{X},\bm{Y}\right)\right]^{\top}\bm{Y}^{\star}+\overline{\bm{X}}^{\mathsf{d}\top}\overline{\bm{X}}^{\mathsf{d}}\bm{H}^{\mathsf{d}\top}\bm{\Delta}_{\mathsf{balancing}}\bm{Y}^{\top}\bm{Y}^{\star}+\bm{{\Delta}}_{\bm{{XY}}}^{\mathsf{{d}}}\bm{{\Sigma}}^{\star}\right\Vert ,
\end{align*}
where we have plugged in the definition of $\bm{S}$ (see~(\ref{eq:defn-S}))
and used the identity $\overline{\bm{X}}^{\mathsf{d}\top}\overline{\bm{X}}^{\mathsf{d}}\bm{H}^{\mathsf{d}\top}(\bm{X}^{\mathsf{d}\top}\bm{X}^{\mathsf{d}})^{-1}=\bm{H}^{\mathsf{d}\top}$.
Combine the above inequality with~(\ref{eq:2nd-pert-relation-1})
to obtain
\begin{align}
 & \left\Vert \big(\overline{\bm{Y}}^{\mathsf{d}}-\bm{Y}^{\star}\big)^{\top}\overline{\bm{Y}}^{\mathsf{d}}\right\Vert \leq\left\Vert \bm{\Delta}_{\bm{Y}}^{\top}\bm{Y}^{\star}\right\Vert +\left\Vert \bm{\Delta}_{\bm{Y}}^{\top}\bm{\Delta}_{\bm{Y}}\right\Vert \nonumber \\
 & \quad\lesssim\frac{1}{\sigma_{\min}}\underbrace{\left\Vert \overline{\bm{X}}^{\mathsf{d}\top}\frac{1}{p}\mathcal{P}_{\Omega}\left(\bm{E}\right)\bm{Y}^{\star}\right\Vert }_{:=\alpha_{1}}+\frac{1}{\sigma_{\min}}\underbrace{\big\Vert \overline{\bm{X}}^{\mathsf{d}\top}\bm{A}\bm{Y}^{\star}\big\Vert }_{:=\alpha_{2}}+\kappa\underbrace{\left(\left\Vert \bm{\Delta}_{\bm{X}}^{\top}\bm{\Delta}_{\bm{X}}\right\Vert +\left\Vert \bm{\Delta}_{\bm{Y}}^{\top}\bm{\Delta}_{\bm{Y}}\right\Vert \right)}_{:=\alpha_{3}}\nonumber \\
 & \quad\quad+\frac{1}{\sigma_{\min}}\underbrace{\left\Vert \bm{H}^{\mathsf{d}\top}\Big(\bm{I}_{r}+\frac{\lambda}{p}(\bm{X}^{\top}\bm{X})^{-1}\Big)^{1/2}\left[\nabla_{\bm{Y}}f\left(\bm{X},\bm{Y}\right)\right]^{\top}\bm{Y}^{\star}-\overline{\bm{X}}^{\mathsf{d}\top}\overline{\bm{X}}^{\mathsf{d}}\bm{H}^{\mathsf{d}\top}\bm{\Delta}_{\mathsf{balancing}}\bm{Y}^{\top}\bm{Y}^{\star}+\bm{{\Delta}}_{\bm{{XY}}}^{\mathsf{{d}}}\bm{{\Sigma}}^{\star}\right\Vert }_{:=\alpha_{4}}.\label{eq:2nd-pert-master-upper-bound}
\end{align}
It then boils down to controlling the above terms $\alpha_{1},\alpha_{2},\alpha_{3}$
and $\alpha_{4}$.
\begin{enumerate}
\item First, the term $\alpha_{4}$ can be upper bounded by 
\begin{align*}
\alpha_{4} & \leq\left\Vert \Big(\bm{I}_{r}+\frac{\lambda}{p}(\bm{X}^{\top}\bm{X})^{-1}\Big)^{1/2}\right\Vert \left\Vert \nabla_{\bm{Y}}f\left(\bm{X},\bm{Y}\right)\right\Vert _{\mathrm{F}}\|\bm{Y}^{\star}\|+\big\|\overline{\bm{X}}^{\mathsf{d}\top}\overline{\bm{X}}^{\mathsf{d}}\big\|\|\bm{\Delta}_{\mathsf{balancing}}\|\|\bm{Y}^{\top}\bm{Y}^{\star}\|+\|\bm{{\Delta}}_{\bm{{XY}}}^{\mathsf{{d}}}\|\|\bm{\Sigma}^{\star}\|\\
 & \lesssim\frac{1}{n^{5}}\frac{\lambda}{p}\sqrt{\sigma_{\min}}\sqrt{\sigma_{\max}}+\sigma_{\max}^{2}\frac{1}{n^{5}}\frac{\lambda}{p}\frac{\kappa}{\sigma_{\min}}+\sigma_{\max}\frac{\kappa}{n^{5}}\frac{\sigma}{\sigma_{\min}}\sqrt{\frac{n}{p}}\sigma_{\max}\\
 & \asymp\frac{\kappa}{n^{5}}\frac{\sigma}{\sigma_{\min}}\sqrt{\frac{n}{p}}\sigma_{\max}^{2}.
\end{align*}
Here, the second line utilizes the facts that $\|(\bm{I}_{r}+\lambda(\bm{X}^{\top}\bm{X})^{-1}/p)^{1/2}\|\lesssim1$,
$\|\overline{\bm{X}}^{\mathsf{d}\top}\overline{\bm{X}}^{\mathsf{d}}\|\asymp\|\bm{Y}^{\top}\bm{Y}^{\star}\|\lesssim\sigma_{\max}$
and the results in~(\ref{eq:small-gradient}),~(\ref{eq:X-d-Y-d-balance})
and~(\ref{eq:debias-correction-close}).
\item Moving on to $\alpha_{3}$, we recall from~(\ref{eq:F-d-op-quality-H-d})
that 
\[
\max\left\{ \left\Vert \bm{\Delta}_{\bm{X}}\right\Vert ,\left\Vert \bm{\Delta}_{\bm{Y}}\right\Vert \right\} \lesssim\kappa\frac{\sigma}{\sigma_{\min}}\sqrt{\frac{n}{p}}\left\Vert \bm{X}^{\star}\right\Vert .
\]
Therefore one arrives at
\[
\alpha_{3}\lesssim\left(\kappa\frac{\sigma}{\sigma_{\min}}\sqrt{\frac{n}{p}}\right)^{2}\sigma_{\max}.
\]
\item Regarding the term $\alpha_{2}$, we have 
\begin{align*}
\alpha_{2} & \lesssim\bigl\Vert\overline{\bm{X}}^{\mathsf{d}}\bigr\Vert\left\Vert \bm{{A}}\right\Vert \left\Vert \bm{{X}}^{\star}\right\Vert \lesssim\sigma_{\max}\sigma\sqrt{\frac{n}{p}}\cdot\sqrt{\frac{\kappa^{4}\mu^{2}r^{2}\log n}{np}},
\end{align*}
where we utilize the bound in~(\ref{eq:A-norm})
\[
\left\Vert \bm{A}\right\Vert \lesssim\sigma\sqrt{\frac{n}{p}}\cdot\sqrt{\frac{\kappa^{4}\mu^{2}r^{2}\log n}{np}}.
\]
\item Finally, for the term $\alpha_{1}$, by the triangle inequality one
has 
\begin{align*}
\alpha_{1} & \leq\left\Vert \bm{X}^{\star\top}\frac{1}{p}\mathcal{P}_{\Omega}\left(\bm{E}\right)\bm{Y}^{\star}\right\Vert +\left\Vert \bm{\Delta}_{\bm{X}}^{\top}\frac{1}{p}\mathcal{P}_{\Omega}\left(\bm{E}\right)\bm{Y}^{\star}\right\Vert .
\end{align*}
Note that 
\begin{align}
\left\Vert \bm{X}^{\star\top}\frac{1}{p}\mathcal{P}_{\Omega}\left(\bm{E}\right)\bm{Y}^{\star}\right\Vert  & \leq\left\Vert \bm{X}^{\star\top}\frac{1}{p}\mathcal{P}_{\Omega}\left(\bm{E}\right)\bm{Y}^{\star}\right\Vert _{\mathrm{F}}=\sqrt{\sum_{i=1}^{r}\sum_{j=1}^{r}\left|\left(\bm{X}_{\cdot,i}^{\star}\right)^{\top}\frac{1}{p}\mathcal{P}_{\Omega}\left(\bm{E}\right)\bm{{Y}}_{\cdot,j}^{\star}\right|^{2}},\label{eq:2nd-pert-upper-1}
\end{align}
Observe that conditional on $\{\delta_{jk}\}_{1\leq j,k\leq n}$ one
has 
\[
\left(\bm{X}_{\cdot,i}^{\star}\right)^{\top}\frac{1}{p}\mathcal{P}_{\Omega}\left(\bm{E}\right)\bm{{Y}}_{\cdot,j}^{\star}=\left\langle \bm{E},\frac{1}{p}\mathcal{P}_{\Omega}\left(\bm{{X}}_{\cdot,i}^{\star}\left(\bm{{Y}}_{\cdot,j}^{\star}\right)^{\top}\right)\right\rangle \sim\mathcal{N}\left(0,\sigma^{2}\left\Vert \frac{1}{p}\mathcal{P}_{\Omega}\left(\bm{{X}}_{\cdot,i}^{\star}\left(\bm{{Y}}_{\cdot,j}^{\star}\right)^{\top}\right)\right\Vert _{\mathrm{F}}^{2}\right)
\]
As a result, we obtain that with probability at least $1-O(n^{-10})$,
\begin{align}
\left|\left(\bm{X}_{\cdot,i}^{\star}\right)^{\top}\frac{1}{p}\mathcal{P}_{\Omega}\left(\bm{E}\right)\bm{{Y}}_{\cdot,j}^{\star}\right| & \lesssim\sigma\left\Vert \frac{1}{p}\mathcal{P}_{\Omega}\left(\bm{{X}}_{\cdot,i}^{\star}\left(\bm{{Y}}_{\cdot,j}^{\star}\right)^{\top}\right)\right\Vert _{\mathrm{F}}\sqrt{\log n}\nonumber \\
 & \lesssim\sigma\sqrt{\frac{\log n}{p}}\left\Vert \bm{{X}}_{\cdot,i}^{\star}\left(\bm{{Y}}_{\cdot,j}^{\star}\right)^{\top}\right\Vert _{\mathrm{F}}.\label{eq:2nd-term-upper-2}
\end{align}
Here, the second relation uses the fact that 
\[
\left\Vert \frac{1}{\sqrt{p}}\mathcal{P}_{\Omega}\left(\bm{{X}}_{\cdot,i}^{\star}\left(\bm{{Y}}_{\cdot,j}^{\star}\right)^{\top}\right)\right\Vert _{\mathrm{F}}\asymp \big\Vert \bm{{X}}_{\cdot,i}^{\star}\left(\bm{{Y}}_{\cdot,j}^{\star}\right)^{\top}\big\Vert _{\mathrm{F}}
\]
with probability at least $1-O(n^{-10})$ as long as $n^{2}p\gg\mu rn\log n$,
which follows from \cite[Lemma 38]{ma2017implicit} or \cite[Section 4.2]{ExactMC09}
by observing that $\bm{{X}}_{\cdot,i}^{\star}(\bm{{Y}}_{\cdot,j}^{\star})^{\top}$
lies in the tangent space of $\bm{{M}}^{\star}$. Take~(\ref{eq:2nd-pert-upper-1})
and~(\ref{eq:2nd-term-upper-2}) collectively to reach 
\begin{align*}
\left\Vert \bm{X}^{\star\top}\frac{1}{p}\mathcal{P}_{\Omega}\left(\bm{E}\right)\bm{Y}^{\star}\right\Vert  & \lesssim\sigma\sqrt{\frac{\log n}{p}}\sqrt{\sum_{i=1}^{r}\sum_{j=1}^{r}\left\Vert \bm{{X}}_{\cdot,i}^{\star}\left(\bm{{Y}}_{\cdot,j}^{\star}\right)^{\top}\right\Vert _{\mathrm{F}}^{2}}\leq\sigma\sqrt{\frac{\log n}{p}}\left\Vert \bm{X}^{\star}\right\Vert _{\mathrm{F}}\left\Vert \bm{Y}^{\star}\right\Vert _{\mathrm{F}}\lesssim\sigma\sqrt{\frac{\log n}{p}}r\sigma_{\max}.
\end{align*}
In addition, we have 
\begin{align*}
\left\Vert \bm{\Delta}_{\bm{X}}^{\top}\frac{1}{p}\mathcal{P}_{\Omega}\left(\bm{E}\right)\bm{Y}^{\star}\right\Vert  & \leq\left\Vert \bm{\Delta}_{\bm{X}}\right\Vert \left\Vert \frac{1}{p}\mathcal{P}_{\Omega}\left(\bm{E}\right)\right\Vert \left\Vert \bm{Y}^{\star}\right\Vert \lesssim\kappa\frac{{\sigma}}{\sigma_{\min}}\sqrt{{\frac{{n}}{p}}}\left\Vert \bm{{X}}^{\star}\right\Vert \sigma\sqrt{{\frac{{n}}{p}}}\left\Vert \bm{{Y}}^{\star}\right\Vert \lesssim\left(\kappa\sigma\sqrt{{\frac{{n}}{p}}}\right)^{2}.
\end{align*}
Combine these two bounds to reach 
\begin{align*}
\alpha_{1} & \lesssim\sigma\sqrt{\frac{r^{2}\log n}{p}}\sigma_{\max}+\left(\kappa\sigma\sqrt{{\frac{{n}}{p}}}\right)^{2}.
\end{align*}
\end{enumerate}
Substituting the bounds on $\alpha_{1}$, $\alpha_{2}$, $\alpha_{3}$
and $\alpha_{4}$ back to~(\ref{eq:2nd-pert-master-upper-bound})
results in 
\begin{align*}
\left\Vert \big(\overline{\bm{Y}}^{\mathsf{d}}-\bm{Y}^{\star}\big)^{\top}\overline{\bm{Y}}^{\mathsf{d}}\right\Vert  & \lesssim\frac{1}{\sigma_{\min}}\alpha_{1}+\frac{1}{\sigma_{\min}}\alpha_{2}+\kappa\alpha_{3}+\frac{1}{\sigma_{\min}}\alpha_{4}\\
 & \lesssim\frac{1}{\sigma_{\min}}\left(\sigma\sqrt{\frac{r^{2}\log n}{p}}\sigma_{\max}+\left(\kappa\sigma\sqrt{{\frac{{n}}{p}}}\right)^{2}+\sigma_{\max}\sigma\sqrt{\frac{n}{p}}\cdot\sqrt{\frac{\kappa^{4}\mu^{2}r^{2}\log n}{np}}\right)\\
 & \quad+\kappa\left(\kappa\frac{\sigma}{\sigma_{\min}}\sqrt{\frac{n}{p}}\right)^{2}\sigma_{\max}+\frac{1}{\sigma_{\min}}\frac{\kappa}{n^{5}}\frac{\sigma}{\sigma_{\min}}\sqrt{\frac{n}{p}}\sigma_{\max}^{2}\\
 & \asymp\kappa\sigma_{\max}\left(\kappa\frac{\sigma}{\sigma_{\min}}\sqrt{\frac{n}{p}}\right)^{2}+\sigma_{\max}\frac{\sigma}{\sigma_{\min}}\sqrt{\frac{n}{p}}\cdot\sqrt{\frac{\kappa^{4}\mu^{2}r^{2}\log n}{np}},
\end{align*}
which together with~(\ref{eq:2nd-pert-master}) yields 
\[
\left\Vert \bm{e}_{j}^{\top}\bm{X}^{\star}\left[\bm{Y}^{\star\top}\overline{\bm{Y}}^{\mathsf{d}}\big(\overline{\bm{Y}}^{\mathsf{d}\top}\overline{\bm{Y}}^{\mathsf{d}}\big)^{-1}-\bm{I}_{r}\right]\right\Vert _{2}\lesssim\frac{\sigma}{\sqrt{p\sigma_{\min}}}\left(\kappa\frac{\sigma}{\sigma_{\min}}\sqrt{\frac{\kappa^{7}\mu rn}{p}}+\sqrt{\frac{\kappa^{7}\mu^{3}r^{3}\log n}{np}}\right).
\]
Taking the maximum over $1\leq j\leq n$ leads to the desired result.

Finally, we are left with proving Claim~\ref{claim:connection-delta-x-delta-y}.

\begin{proof}[Proof of Claim \ref{claim:connection-delta-x-delta-y}]First,
by $\bm{X}^{\star\top}\bm{X}^{\star}=\bm{Y}^{\star\top}\bm{Y}^{\star}$,
one can obtain 
\begin{align*}
\bm{\Delta}_{\bm{Y}}^{\top}\bm{Y}^{\star}-\bm{X}^{\star\top}\bm{\Delta}_{\bm{X}} & =\left(\bm{Y}^{\mathsf{{d}}}\bm{H}^{\mathsf{{d}}}-\bm{Y}^{\star}\right)^{\top}\bm{Y}^{\star}-\bm{X}^{\star\top}\left(\bm{X}^{\mathsf{{d}}}\bm{H}^{\mathsf{{d}}}-\bm{X}^{\star}\right)\\
 & =\left(\bm{Y}^{\mathsf{{d}}}\bm{H}^{\mathsf{{d}}}\right)^{\top}\bm{Y}^{\star}-\bm{X}^{\star\top}\left(\bm{X}^{\mathsf{{d}}}\bm{H}^{\mathsf{{d}}}\right)\\
 & =\left(\bm{Y}^{\mathsf{{d}}}\bm{H}^{\mathsf{{d}}}\right)^{\top}\big(\bm{Y}^{\star}-\bm{{Y}}^{\mathsf{{d}}}\bm{{H}}^{\mathsf{{d}}}\big)+\bm{{H}}^{\mathsf{{d}}\top}\bm{{Y}}^{\mathsf{{d}}\top}\bm{{Y}}^{\mathsf{{d}}}\bm{{H}}-\bm{X}^{\star\top}\left(\bm{X}^{\mathsf{{d}}}\bm{H}^{\mathsf{{d}}}\right)\\
 & =-\left(\bm{Y}^{\mathsf{{d}}}\bm{H}^{\mathsf{{d}}}\right)^{\top}\bm{{\Delta}}_{\bm{{Y}}}+\bm{{\Delta}}_{\bm{{X}}}^{\top}\left(\bm{X}^{\mathsf{{d}}}\bm{H}^{\mathsf{{d}}}\right)+\bm{{H}}^{\mathsf{{d}}\top}\big(\bm{{Y}}^{\mathsf{{d}}\top}\bm{{Y}}^{\mathsf{{d}}}-\bm{{X}}^{\mathsf{{d}}\top}\bm{{X}}^{\mathsf{{d}}}\big)\bm{{H}}^{\mathsf{{d}}}.
\end{align*}
We can further decompose it as 
\begin{equation}
\bm{\Delta}_{\bm{Y}}^{\top}\bm{Y}^{\star}-\bm{X}^{\star\top}\bm{\Delta}_{\bm{X}}=\bm{\Delta}_{\bm{X}}^{\top}\bm{X}^{\star}+\bm{\Delta}_{\bm{X}}^{\top}\bm{\Delta}_{\bm{X}}-\bm{Y}^{\star\top}\bm{\Delta}_{\bm{Y}}-\bm{\Delta}_{\bm{Y}}^{\top}\bm{\Delta}_{\bm{Y}}+\bm{{H}}^{\mathsf{{d}}\top}\big(\bm{{Y}}^{\mathsf{{d}}\top}\bm{{Y}}^{\mathsf{{d}}}-\bm{{X}}^{\mathsf{{d}}\top}\bm{{X}}^{\mathsf{{d}}}\big)\bm{{H}}^{\mathsf{{d}}}.\label{eq:first}
\end{equation}
Second, since $\bm{{H}}^{\mathsf{{d}}}$ is the best rotation matrix
to align $(\bm{X}^{\mathsf{{d}}},\bm{Y}^{\mathsf{{d}}})$ and $(\bm{X}^{\star},\bm{Y}^{\star})$,
we know from \cite[Lemma 35]{ma2017implicit} that 
\[
\left(\bm{X}^{\mathsf{{d}}}\bm{H}^{\mathsf{{d}}}\right)^{\top}\bm{X}^{\star}+\left(\bm{Y}^{\mathsf{{d}}}\bm{H}^{\mathsf{{d}}}\right)^{\top}\bm{Y}^{\star}\succeq\bm{0},
\]
which implies 
\[
\left(\bm{X}^{\mathsf{{d}}}\bm{H}^{\mathsf{{d}}}-\bm{X}^{\star}\right)^{\top}\bm{X}^{\star}+\left(\bm{Y}^{\mathsf{{d}}}\bm{H}^{\mathsf{{d}}}-\bm{Y}^{\star}\right)^{\top}\bm{Y}^{\star}=\bm{\Delta}_{\bm{X}}^{\top}\bm{X}^{\star}+\bm{\Delta}_{\bm{Y}}^{\top}\bm{Y}^{\star}
\]
is a symmetric matrix, i.e. 
\[
\bm{\Delta}_{\bm{X}}^{\top}\bm{X}^{\star}+\bm{\Delta}_{\bm{Y}}^{\top}\bm{Y}^{\star}=\bm{X}^{\star\top}\bm{\Delta}_{\bm{X}}+\bm{Y}^{\star\top}\bm{\Delta}_{\bm{Y}}.
\]
This is equivalent to 
\begin{equation}
\bm{\Delta}_{\bm{Y}}^{\top}\bm{Y}^{\star}-\bm{X}^{\star\top}\bm{\Delta}_{\bm{X}}=\bm{Y}^{\star\top}\bm{\Delta}_{\bm{Y}}-\bm{\Delta}_{\bm{X}}^{\top}\bm{X}^{\star}.\label{eq:second}
\end{equation}
Combine~(\ref{eq:first}) and~(\ref{eq:second}) to arrive at 
\[
\bm{\Delta}_{\bm{X}}^{\top}\bm{X}^{\star}+\bm{\Delta}_{\bm{X}}^{\top}\bm{\Delta}_{\bm{X}}-\bm{Y}^{\star\top}\bm{\Delta}_{\bm{Y}}-\bm{\Delta}_{\bm{Y}}^{\top}\bm{\Delta}_{\bm{Y}}+\bm{{H}}^{\mathsf{{d}}\top}\big(\bm{{Y}}^{\mathsf{{d}}\top}\bm{{Y}}^{\mathsf{{d}}}-\bm{{X}}^{\mathsf{{d}}\top}\bm{{X}}^{\mathsf{{d}}}\big)\bm{{H}}^{\mathsf{{d}}}=\bm{Y}^{\star\top}\bm{\Delta}_{\bm{Y}}-\bm{\Delta}_{\bm{X}}^{\top}\bm{X}^{\star},
\]
which results in 
\[
\bm{\Delta}_{\bm{Y}}^{\top}\bm{Y}^{\star}-\bm{X}^{\star\top}\bm{\Delta}_{\bm{X}}=\bm{Y}^{\star\top}\bm{\Delta}_{\bm{Y}}-\bm{\Delta}_{\bm{X}}^{\top}\bm{X}^{\star}=\frac{1}{2}\left(\bm{\Delta}_{\bm{X}}^{\top}\bm{\Delta}_{\bm{X}}-\bm{\Delta}_{\bm{Y}}^{\top}\bm{\Delta}_{\bm{Y}}\right)+\frac{{1}}{2}\bm{{H}}^{\mathsf{{d}}\top}\left(\bm{{Y}}^{\mathsf{{d}}\top}\bm{{Y}}^{\mathsf{{d}}}-\bm{{X}}^{\mathsf{{d}}\top}\bm{{X}}^{\mathsf{{d}}}\right)\bm{{H}}^{\mathsf{{d}}}.
\]
This completes the proof of the claim. \end{proof}

\subsection{Proof of Lemma \ref{lem:Phi3-two-infty}\label{subsec:Proof-of-Lemma-Phi3}}

Recall that 
\[
\bm{A}=\frac{1}{p}\mathcal{P}_{\Omega}\left(\bm{X}\bm{Y}^{\top}-\bm{M}^{\star}\right)-\left(\bm{X}\bm{Y}^{\top}-\bm{M}^{\star}\right)\qquad\text{and}\qquad\bm{\Phi}_{3}=-\bm{A}\overline{\bm{Y}}^{\mathsf{d}}\big(\overline{\bm{Y}}^{\mathsf{d}\top}\overline{\bm{Y}}^{\mathsf{d}}\big)^{-1}
\]
with $\overline{\bm{Y}}^{\mathsf{d}}=\bm{Y}^{\mathsf{d}}\bm{H}^{\mathrm{d}}$.
For any $1\leq j\leq n$, we have 
\begin{align*}
\left\Vert \bm{e}_{j}^{\top}\bm{A}\overline{\bm{Y}}^{\mathsf{d}}\big(\overline{\bm{Y}}^{\mathsf{d}\top}\overline{\bm{Y}}^{\mathsf{d}}\big)^{-1}\right\Vert _{2} & \leq\big\|\bm{e}_{j}^{\top}\bm{A}\overline{\bm{Y}}^{\mathsf{d}}\big\|_{2}\big\Vert \big(\overline{\bm{Y}}^{\mathsf{d}\top}\overline{\bm{Y}}^{\mathsf{d}}\big)^{-1}\big\Vert \\
 & \overset{(\text{i})}{=}\left\Vert \bm{e}_{j}^{\top}\bm{A}\bm{Y}^{\mathsf{d}}\right\Vert _{2}\big\Vert \big(\overline{\bm{Y}}^{\mathsf{d}\top}\overline{\bm{Y}}^{\mathsf{d}}\big)^{-1}\big\Vert \\
 & \overset{(\text{ii})}{=}\left\Vert \bm{e}_{j}^{\top}\bm{A}\bm{Y}\Big(\bm{I}_{r}+\frac{\lambda}{p}\left(\bm{Y}^{\top}\bm{Y}\right)^{-1}\Big)^{1/2}\right\Vert _{2}\big\Vert \big(\overline{\bm{Y}}^{\mathsf{d}\top}\overline{\bm{Y}}^{\mathsf{d}}\big)^{-1}\big\Vert \\
 & \leq\left\Vert \bm{e}_{j}^{\top}\bm{A}\bm{Y}\right\Vert _{2}\left\Vert \Big(\bm{I}_{r}+\frac{\lambda}{p}\left(\bm{Y}^{\top}\bm{Y}\right)^{-1}\Big)^{1/2}\right\Vert \big\Vert \big(\overline{\bm{Y}}^{\mathsf{d}\top}\overline{\bm{Y}}^{\mathsf{d}}\big)^{-1}\big\Vert \\
 & \overset{(\text{iii})}{\lesssim}\frac{1}{\sigma_{\min}}\left\Vert \bm{e}_{j}^{\top}\bm{A}\bm{Y}\right\Vert _{2}\overset{(\text{iv})}{=}\frac{1}{\sigma_{\min}}\left\Vert \bm{e}_{j}^{\top}\bm{A}\bm{Y}\bm{H}\right\Vert _{2}.
\end{align*}
Here (i) and (iv) rely on the unitary invariance of the operator norm,
(ii) uses the definition of $\bm{Y}^{\mathsf{d}}$ (see~(\ref{eq:defn-Xd-Yd}))
and (iii) follows from the choice
$\lambda\lesssim\sigma\sqrt{np}$ and immediate consequences of~(\ref{eq:F-norm-upper-bound})
\[
\left\Vert \Big(\bm{I}_{r}+\frac{\lambda}{p}\left(\bm{Y}^{\top}\bm{Y}\right)^{-1}\Big)^{1/2}\right\Vert \asymp1\qquad\text{and}\qquad\big\Vert \big(\overline{\bm{Y}}^{\mathsf{d}\top}\overline{\bm{Y}}^{\mathsf{d}}\big)^{-1}\big\Vert \lesssim\frac{1}{\sigma_{\min}}.
\]
Therefore, it suffices to control $\|\bm{e}_{j}^{\top}\bm{A}\bm{Y}\bm{H}\|_{2}$.
To this end, we have the following decomposition 
\begin{align}
\bm{e}_{j}^{\top}\bm{A}\bm{Y}\bm{H} & =\bm{e}_{j}^{\top}\left[\frac{1}{p}\mathcal{P}_{\Omega}\left(\bm{X}\bm{Y}^{\top}-\bm{M}^{\star}\right)-\left(\bm{X}\bm{Y}^{\top}-\bm{M}^{\star}\right)\right]\bm{Y}\bm{H}\nonumber \\
 & =\bm{e}_{j}^{\top}\left[\frac{1}{p}\mathcal{P}_{\Omega}\left(\bm{X}^{(j)}\bm{Y}^{(j)\top}-\bm{M}^{\star}\right)-\left(\bm{X}^{(j)}\bm{Y}^{(j)\top}-\bm{M}^{\star}\right)\right]\bm{Y}^{(j)}\bm{H}^{(j)}+\bm{\Delta}_{2},\label{eq:ej-AYH-2-terms}
\end{align}
where we define 
\begin{align*}
\bm{\Delta}_{2} & \triangleq\bm{e}_{j}^{\top}\left[\frac{1}{p}\mathcal{P}_{\Omega}\left(\bm{X}\bm{Y}^{\top}-\bm{M}^{\star}\right)-\left(\bm{X}\bm{Y}^{\top}-\bm{M}^{\star}\right)\right]\bm{Y}\bm{H}\\
 & \quad-\bm{e}_{j}^{\top}\left[\frac{1}{p}\mathcal{P}_{\Omega}\big(\bm{X}^{(j)}\bm{Y}^{(j)\top}-\bm{M}^{\star}\big)-\big(\bm{X}^{(j)}\bm{Y}^{(j)\top}-\bm{M}^{\star}\big)\right]\bm{Y}^{(j)}\bm{H}^{(j)}.
\end{align*}
Denoting 
\[
\bm{v}=[v_{1,}\cdots,v_{n}]\triangleq\bm{e}_{j}^{\top}\big(\bm{X}^{(j)}\bm{Y}^{(j)\top}-\bm{M}^{\star}\big),
\]
we can rewrite the first term of~(\ref{eq:ej-AYH-2-terms}) as 
\begin{align*}
\bm{e}_{j}^{\top}\left[\frac{1}{p}\mathcal{P}_{\Omega}\big(\bm{X}^{(j)}\bm{Y}^{(j)\top}-\bm{M}^{\star}\big)-\big(\bm{X}^{(j)}\bm{Y}^{(j)\top}-\bm{M}^{\star}\big)\right]\bm{Y}^{(j)}\bm{H}^{(j)} & =\frac{1}{p}\sum_{k=1}^{n}\left(\delta_{jk}-p\right)v_{k}\left[\bm{Y}^{(j)}\bm{H}^{(j)}\right]_{k,\cdot}.
\end{align*}
Since $(\bm{X}^{(j)},\bm{Y}^{(j)})$ is independent of $\{\delta_{jk}\}_{1\leq k\leq n},$
the right hand side of the above equation can be viewed as a sum of
independent random vectors, conditional on $(\bm{X}^{(j)},\bm{Y}^{(j)})$.
Invoke Bernstein's inequality to see that 
\[
\left\Vert \frac{1}{p}\sum_{k=1}^{n}\left(\delta_{jk}-p\right)v_{k}\big[\bm{Y}^{(j)}\bm{H}^{(j)}\big]_{k,\cdot}\right\Vert _{2}\lesssim\frac{1}{p}\left(\sqrt{V\log n}+B\log n\right)
\]
holds with probability at least $1-O(n^{-10})$. Here, we denote 
\begin{align*}
V & \triangleq\left\Vert \sum\nolimits _{k=1}^{n}\mathbb{E}\left[\left(\delta_{jk}-p\right)^{2}\right]v_{k}^{2}\big[\bm{Y}^{(j)}\bm{H}^{(j)}\big]_{k,\cdot}\big[\bm{Y}^{(j)}\bm{H}^{(j)}\big]_{k,\cdot}^{\top}\right\Vert \le p\left\Vert \bm{v}\right\Vert _{\infty}^{2}\big\|\bm{Y}^{(j)}\big\|_{\mathrm{F}}^{2},\\
B & \triangleq\max_{1\leq k\leq n}\left\Vert \left(\delta_{jk}-p\right)v_{k}\big[\bm{Y}^{(j)}\bm{H}^{(j)}\big]_{k,\cdot}\right\Vert _{2}\leq\left\Vert \bm{v}\right\Vert _{\infty}\big\|\bm{Y}^{(j)}\big\|_{2,\infty}.
\end{align*}
As a result, we obtain 
\begin{align*}
\left\Vert \frac{1}{p}\sum_{k=1}^{n}\left(\delta_{jk}-p\right)v_{k}\left[\bm{Y}^{(j)}\bm{H}^{(j)}\right]_{k,\cdot}\right\Vert _{2} & \lesssim\frac{1}{p}\left(\sqrt{p\log n}\left\Vert \bm{v}\right\Vert _{\infty}\big\|\bm{Y}^{(j)}\big\|_{\mathrm{F}}+\left\Vert \bm{v}\right\Vert _{\infty}\big\|\bm{Y}^{(j)}\big\|_{2,\infty}\log n\right)\\
 & \lesssim\frac{\left\Vert \bm{v}\right\Vert _{\infty}}{p}\left(\sqrt{pr\sigma_{\max}\log n}+\sqrt{\frac{\mu r}{n}\sigma_{\max}\log^{2}n}\right)\\
 & \asymp\left\Vert \bm{v}\right\Vert _{\infty}\sqrt{\frac{r\log n}{p}\sigma_{\max}}
\end{align*}
with the proviso that $np\gg\mu\log n$. Here the middle line depends
on $\|\bm{Y}^{(j)}\|_{\mathrm{F}}\lesssim\sqrt{r\sigma_{\max}}$ and
$\|\bm{Y}^{(j)}\|_{2,\infty}\lesssim\sqrt{\mu r\sigma_{\max}/n}$.
Additionally,
\begin{align*}
\left\Vert \bm{v}\right\Vert _{\infty} & \leq\left\Vert \bm{X}^{(j)}\bm{Y}^{(j)\top}-\bm{M}^{\star}\right\Vert _{\infty}\leq\left\Vert \left(\bm{X}^{(j)}\bm{{R}}^{(j)}-\bm{{X}}^{\star}\right)\bm{{R}}^{(j)\top}\bm{{Y}}^{(j)\top}+\bm{{X}}^{\star}\left(\bm{{Y}}^{(j)}\bm{{R}}^{(j)}-\bm{{Y}}^{\star}\right)^{\top}\right\Vert _{\infty}\\
 & \leq\left\Vert \bm{X}^{(j)}\bm{{R}}^{(j)}-\bm{{X}}^{\star}\right\Vert _{2,\infty}\left\Vert \bm{{Y}}^{(j)}\right\Vert _{2,\infty}+\left\Vert \bm{{X}}^{\star}\right\Vert _{2,\infty}\left\Vert \bm{{Y}}^{(j)}\bm{{R}}^{(j)}-\bm{{Y}}^{\star}\right\Vert _{2,\infty}\\
 & \lesssim\kappa\frac{\sigma}{\sigma_{\min}}\sqrt{\frac{n\log n}{p}}\frac{\mu r}{n}\sigma_{\max}\lesssim\kappa^{2}\sigma\sqrt{\frac{\mu^{2}r^{2}\log n}{np}}.
\end{align*}
Here the penultimate inequality uses~(\ref{eq:F-j-F-star-2-infty-dist})
and the bound $\|\bm{Y}^{(j)}\|_{2,\infty}\lesssim\sqrt{\mu r\sigma_{\max}/n}$.
We arrive at the conclusion that: with probability exceeding $1-O(n^{-10})$,
\[
\left\Vert \frac{1}{p}\sum_{k=1}^{n}\left(\delta_{jk}-p\right)v_{k}\big(\bm{X}^{(j)}\bm{H}^{(j)}\big)_{k,\cdot}\right\Vert _{2}\lesssim\sigma\sqrt{{\frac{{\sigma_{\max}}}{p}}}\cdot\sqrt{\frac{\kappa^{4}\mu^{2}r^{3}\log^{2}n}{np}}.
\]

Next, we move on to the second term $\bm{\Delta}_{2}$ of~(\ref{eq:ej-AYH-2-terms}),
which can be further decomposed as follows
\begin{align*}
\bm{\Delta}_{2} & =\underbrace{\bm{e}_{j}^{\top}\left[\frac{1}{p}\mathcal{P}_{\Omega}\left(\bm{X}\bm{Y}^{\top}-\bm{M}^{\star}\right)-\left(\bm{X}\bm{Y}^{\top}-\bm{M}^{\star}\right)\right]\left(\bm{Y}\bm{H}-\bm{Y}^{(j)}\bm{H}^{(j)}\right)}_{:=\bm{\theta}_{1}}\\
 & \quad+\underbrace{\bm{e}_{j}^{\top}\left[\frac{1}{p}\mathcal{P}_{\Omega}\left(\bm{X}\bm{Y}^{\top}-\bm{X}^{(j)}\bm{Y}^{(j)\top}\right)-\left(\bm{X}\bm{Y}^{\top}-\bm{X}^{(j)}\bm{Y}^{(j)\top}\right)\right]\bm{Y}^{(j)}\bm{H}^{(j)}}_{:=\bm{\theta}_{2}}.
\end{align*}
In what follows, we bound $\bm{\theta}_{1}$ and $\bm{\theta}_{2}$
sequentially.
\begin{enumerate}
\item Regarding $\bm{\theta}_{1}$, using the definition of $\bm{A}$ we
obtain
\begin{align*}
\left\Vert \bm{\theta}_{1}\right\Vert _{2} & \leq\left\Vert \bm{A}\right\Vert \left\Vert \bm{Y}\bm{H}-\bm{Y}^{(j)}\bm{H}^{(j)}\right\Vert _{\mathrm{F}}\lesssim\sigma\sqrt{\frac{n}{p}}\cdot\sqrt{\frac{\kappa^{4}\mu^{2}r^{2}\log n}{np}}\cdot\text{\ensuremath{\kappa}}\frac{\sigma}{\sigma_{\min}}\sqrt{\frac{\mu r\log n}{p}}\sqrt{\sigma_{\max}}\\
 & \asymp\sigma\sqrt{{\frac{{\sigma_{\max}}}{p}}}\cdot\sqrt{\frac{\kappa^{4}\mu^{2}r^{2}\log n}{np}}\cdot\frac{\sigma}{\sigma_{\min}}\sqrt{\frac{\kappa^{2}\mu rn\log n}{p}},
\end{align*}
where the second relation holds due to~(\ref{eq:F-j-F-dist}) and
the fact that $\|\bm{A}\|\lesssim\sigma\sqrt{\frac{n}{p}}\sqrt{\frac{\kappa^{4}\mu^{2}r^{2}\log n}{np}}$~(cf.~(\ref{eq:A-norm})).
\item Moving on to $\bm{\theta}_{2}$, we can utilize the identity 
\[
\bm{X}\bm{Y}^{\top}-\bm{X}^{(j)}\bm{Y}^{(j)\top}=\big(\bm{X}\bm{H}-\bm{X}^{(j)}\bm{H}^{(j)}\big)\big(\bm{Y}^{(j)}\bm{H}^{(j)}\big)^{\top}+\bm{X}\bm{H}\big(\bm{Y}\bm{H}-\bm{Y}^{(j)}\bm{H}^{(j)}\big)^{\top}
\]
to deduce that 
\begin{align*}
\left\Vert \bm{\theta}_{2}\right\Vert _{2} & \leq\left\Vert \bm{e}_{j}^{\top}\left[\frac{1}{p}\mathcal{P}_{\Omega}\left[\big(\bm{X}\bm{H}-\bm{X}^{(j)}\bm{H}^{(j)}\big)\big(\bm{Y}^{(j)}\bm{H}^{(j)}\big)^{\top}\right]-\left(\bm{X}\bm{H}-\bm{X}^{(j)}\bm{H}^{(j)}\right)\big(\bm{Y}^{(j)}\bm{H}^{(j)}\big)^{\top}\right]\bm{Y}^{(j)}\bm{H}^{(j)}\right\Vert _{2}\\
 & \quad+\left\Vert \bm{e}_{j}^{\top}\left[\frac{1}{p}\mathcal{P}_{\Omega}\left[\bm{X}\bm{H}\left(\bm{Y}\bm{H}-\bm{Y}^{(j)}\bm{H}^{(j)}\right)^{\top}\right]-\bm{X}\bm{H}\left(\bm{Y}\bm{H}-\bm{Y}^{(j)}\bm{H}^{(j)}\right)^{\top}\right]\bm{Y}^{(j)}\bm{H}^{(j)}\right\Vert _{2}\\
	& =\underbrace{\left\Vert \big(\bm{X}\bm{H}-\bm{X}^{(j)}\bm{H}^{(j)}\big)_{j,\cdot}\frac{1}{p}\sum_{k=1}^{n}\left(\delta_{jk}-p\right)\big(\bm{Y}^{(j)}\bm{H}^{(j)}\big)_{k,\cdot}^{\top}\big(\bm{Y}^{(j)}\bm{H}^{(j)}\big)_{k,\cdot}\right\Vert _{2}}_{:=\alpha_{1}}\\
 & \quad+\underbrace{\left\Vert \big(\bm{X}\bm{H}\big)_{j,\cdot}\frac{1}{p}\sum_{k=1}^{n}\left(\delta_{jk}-p\right)\left(\bm{Y}\bm{H}-\bm{Y}^{(j)}\bm{H}^{(j)}\right)_{k,\cdot}^{\top}\left(\bm{Y}^{(j)}\bm{H}^{(j)}\right)_{k,\cdot}\right\Vert _{2}}_{:=\alpha_{2}}.
\end{align*}
With regards to $\alpha_{1}$, we have by Bernstein's inequality and
(\ref{eq:F-j-F-dist}) that 
\begin{align*}
\alpha_{1} & \leq\left\Vert \bm{X}\bm{H}-\bm{X}^{(j)}\bm{H}^{(j)}\right\Vert _{\mathrm{F}}\left\Vert \frac{1}{p}\sum_{k=1}^{n}\left(\delta_{jk}-p\right)\left(\bm{Y}^{(j)}\bm{H}^{(j)}\right)_{k,\cdot}^{\top}\left(\bm{Y}^{(j)}\bm{H}^{(j)}\right)_{k,\cdot}\right\Vert \\
 & \lesssim\text{\ensuremath{\kappa\frac{{\sigma}}{\sigma_{\min}}}}\sqrt{\frac{n\log n}{p}}\sqrt{\frac{\mu r}{n}\sigma_{\max}}\cdot\frac{1}{p}\left(\sqrt{V_{2}\log n}+B_{2}\log n\right)
\end{align*}
holds with probability exceeding $1-O(n^{-10})$. Here, we define 
\begin{align*}
V_{2} & \triangleq\left\Vert \sum_{k=1}^{n}\mathbb{E}\left(\delta_{jk}-p\right)^{2}\bigl(\bm{Y}^{(j)}\bm{H}^{(j)}\bigr)_{k,\cdot}^{\top}\bigl(\bm{Y}^{(j)}\bm{H}^{(j)}\bigr)_{k,\cdot}\bigl(\bm{Y}^{(j)}\bm{H}^{(j)}\bigr)_{k,\cdot}^{\top}\bigl(\bm{Y}^{(j)}\bm{H}^{(j)}\bigr)_{k,\cdot}\right\Vert \\
 & \leq p\bigl\Vert\bm{Y}^{(j)}\bigr\Vert_{2,\infty}^{2}\bigl\Vert\bm{Y}^{(j)\top}\bm{Y}^{(j)}\bigr\Vert,\\
B_{2} & \triangleq\max_{1\leq k\leq n}\left\Vert \left(\delta_{jk}-p\right)\big(\bm{Y}^{(j)}\bm{H}^{(j)}\big)_{k,\cdot}^{\top}\big(\bm{Y}^{(j)}\bm{H}^{(j)}\big)_{k,\cdot}\right\Vert \leq\bigl\Vert\bm{Y}^{(j)}\bigr\Vert_{2,\infty}^{2}.
\end{align*}
As a result, we can obtain 
\begin{align*}
\alpha_{1} & \lesssim\kappa\frac{{\sigma}}{\sigma_{\min}}\sqrt{\frac{\mu r\log n}{p}\sigma_{\max}}\cdot\frac{1}{p}\left(\sqrt{p\sigma_{\max}\log n}\left\Vert \bm{Y}^{(j)}\right\Vert _{2,\infty}+\left\Vert \bm{Y}^{(j)}\right\Vert _{2,\infty}^{2}\log n\right)\\
 & \lesssim\sigma\sqrt{{\frac{{\sigma_{\max}}}{p}}}\cdot\sqrt{\frac{\kappa^{4}\mu^{2}r^{2}\log^{2}n}{np}},
\end{align*}
provided that $np\gg\mu r\log n$. Here we apply the bounds $\|\bm{Y}^{(j)}\|\lesssim\sqrt{\sigma_{\max}}$
and $\|\bm{Y}^{(j)}\|_{2,\infty}\lesssim\sqrt{\mu r\sigma_{\max}/n}$ (see~(\ref{eq:F-norm-upper-bound}) and the following remarks).
In the end, we turn to the term $\alpha_{2}$, which obeys 
\begin{align*}
\alpha_{2} & \leq\frac{1}{p}\left\Vert \bm{X}\right\Vert _{2,\infty}\sum_{k=1}^{n}\left|\delta_{jk}-p\right|\left\Vert \big(\bm{Y}\bm{H}-\bm{Y}^{(j)}\bm{H}^{(j)}\big)_{k\cdot}\right\Vert _{2}\big\Vert \big(\bm{Y}^{(j)}\bm{H}^{(j)}\big)_{k\cdot}\big\Vert _{2}\\
 & \leq\frac{1}{p}\sqrt{\frac{\mu r}{n}}\sqrt{\sigma_{\max}}\cdot\sqrt{\sum_{k=1}^{n}\left(\delta_{jk}-p\right)^{2}}\cdot\sqrt{\sum_{k=1}^{n}\left\Vert \left(\bm{Y}\bm{H}-\bm{Y}^{(j)}\bm{H}^{(j)}\right)_{k\cdot}\right\Vert _{2}^{2}\left\Vert \left(\bm{Y}^{(j)}\bm{H}^{(j)}\right)_{k\cdot}\right\Vert _{2}^{2}}\\
 & \lesssim\frac{1}{p}\sqrt{\frac{\mu r}{n}}\sqrt{\sigma_{\max}}\cdot\sqrt{np}\cdot\left\Vert \bm{Y}\bm{H}-\bm{Y}^{(j)}\bm{H}^{(j)}\right\Vert _{\mathrm{F}}\big\|\bm{Y}^{(j)}\big\|_{2,\infty}\\
 & \lesssim\sqrt{\frac{\mu r}{p}}\sqrt{\sigma_{\max}}\cdot\kappa\frac{\sigma}{\sigma_{\min}}\sqrt{\frac{n\log n}{p}}\frac{\mu r}{n}\sigma_{\max}\asymp\sigma\sqrt{{\frac{{\sigma_{\max}}}{p}}}\cdot\sqrt{\frac{\kappa^{4}\mu^{3}r^{3}\log n}{np}},
\end{align*}
where the second line arises from the Cauchy-Schwarz inequality.
\end{enumerate}
Take the previous bounds collectively to arrive at 
\begin{align*}
\left\Vert \bm{\Delta}_{2}\right\Vert _{2} & \lesssim\sigma\sqrt{\frac{\sigma_{\max}}{p}}\left\{ \sqrt{\frac{\kappa^{4}\mu^{2}r^{2}\log n}{np}}\cdot\frac{\sigma}{\sigma_{\min}}\sqrt{\frac{\kappa^{2}\mu rn\log n}{p}}+\sqrt{\frac{\kappa^{4}\mu^{2}r^{2}\log^{2}n}{np}}+\sqrt{\frac{\kappa^{4}\mu^{3}r^{3}\log n}{np}}\right\} \\
 & \lesssim\sigma\sqrt{\frac{\sigma_{\max}}{p}}\cdot\sqrt{\frac{\kappa^{4}\mu^{3}r^{3}\log^{2}n}{np}}
\end{align*}
as long as $\frac{\sigma}{\sigma_{\min}}\sqrt{\frac{\kappa^{2}n\log n}{p}}\ll1$.
Finally, we conclude that
\begin{align}
\left\Vert \bm{e}_{j}^{\top}\bm{A}\overline{\bm{Y}}^{\mathsf{d}}\big(\overline{\bm{Y}}^{\mathsf{d}\top}\overline{\bm{Y}}^{\mathsf{d}}\big)^{-1}\right\Vert _{2} & \lesssim\frac{1}{\sigma_{\min}}\left\Vert \bm{e}_{j}^{\top}\bm{A}\bm{Y}\bm{H}\right\Vert _{2}\nonumber \\
 & \lesssim\frac{1}{\sigma_{\min}}\left(\sigma\sqrt{\frac{\sigma_{\max}}{p}}\sqrt{\frac{\kappa^{4}\mu^{2}r^{3}\log^{2}n}{np}}+\sigma\sqrt{\frac{\sigma_{\max}}{p}}\sqrt{\frac{\kappa^{4}\mu^{3}r^{3}\log^{2}n}{np}}\right)\nonumber \\
 & \asymp\frac{\sigma}{\sqrt{p\sigma_{\min}}}\cdot\sqrt{\frac{\kappa^{5}\mu^{3}r^{3}\log^{2}n}{np}},\label{eq:3nd-pert-final-bound}
\end{align}
thus concluding the proof.

\subsection{Proof of Lemma \ref{lem:Phi4-two-infty}\label{subsec:Proof-of-Lemma-Phi4}}

First, it is straightforward to verify that 
\begin{align}
 & \left\Vert \nabla_{\bm{X}}f\left(\bm{X},\bm{Y}\right)\Big(\bm{I}_{r}+\frac{\lambda}{p}\left(\bm{Y}^{\top}\bm{Y}\right)^{-1}\Big)^{1/2}\left(\bm{Y}^{\mathsf{d}\top}\bm{Y}^{\mathsf{d}}\right)^{-1}\bm{H}^{\mathsf{d}}\right\Vert _{2,\infty}\nonumber \\
 & \quad\leq\left\Vert \nabla_{\bm{X}}f\left(\bm{X},\bm{Y}\right)\right\Vert _{\mathrm{F}}\left\Vert \Big(\bm{I}_{r}+\frac{\lambda}{p}\left(\bm{Y}^{\top}\bm{Y}\right)^{-1}\Big)^{1/2}\right\Vert \big\|\big(\bm{Y}^{\mathsf{d}\top}\bm{Y}^{\mathsf{d}}\big)^{-1}\big\|\nonumber \\
 & \quad\lesssim\frac{1}{n^{5}}\frac{\lambda}{p}\sqrt{\sigma_{\min}}\cdot\frac{1}{\sigma_{\min}}\lesssim\frac{\sigma}{\sqrt{p\sigma_{\min}}}\cdot\frac{1}{n^{4}},\label{eq:Phi4-term1}
\end{align}
where the last line arises from~(\ref{eq:small-gradient}), the choice
$\lambda\lesssim\sigma\sqrt{np}$ (cf.~(\ref{eq:lambda-condition})), and the bounds
\[
\left\Vert \Big(\bm{I}_{r}+\frac{\lambda}{p}\left(\bm{Y}^{\top}\bm{Y}\right)^{-1}\Big)^{1/2}\right\Vert \asymp1\qquad\text{and}\qquad\big\|\big(\bm{Y}^{\mathsf{d}\top}\bm{Y}^{\mathsf{d}}\big)^{-1}\big\|\lesssim\frac{1}{\sigma_{\min}}.
\]
Here the latter two are immediate consequences of~(\ref{eq:F-norm-upper-bound}).
Second, with regards to the term involving $\bm{\Delta}_{\mathsf{balancing}}$,
we have 
\begin{align}
\left\Vert \bm{X}\bm{\Delta}_{\mathsf{balancing}}\bm{H}^{\mathsf{d}}\right\Vert _{2,\infty} & \leq\left\Vert \bm{X}\right\Vert _{2,\infty}\left\Vert \bm{\Delta}_{\mathsf{balancing}}\right\Vert \nonumber \\
 & \lesssim\sqrt{\frac{\mu r}{n}\sigma_{\max}}\left\Vert \Big(\bm{I}_{r}+\frac{\lambda}{p}\left(\bm{X}^{\top}\bm{X}\right)^{-1}\Big)^{1/2}-\Big(\bm{I}_{r}+\frac{\lambda}{p}\left(\bm{Y}^{\top}\bm{Y}\right)^{-1}\Big)^{1/2}\right\Vert \nonumber \\
 & \lesssim\sqrt{\frac{\mu r}{n}\sigma_{\max}}\cdot\frac{1}{n^{5}}\frac{\lambda}{p}\frac{{\kappa}}{\sigma_{\min}}\asymp\frac{\sigma}{\sqrt{p\sigma_{\min}}}\sqrt{\frac{\kappa^{3}\mu r}{n^{10}}},\label{eq:Phi4-term2}
\end{align}
where the middle line uses (\ref{eq:F-norm-upper-bound}) and the last one follows from~(\ref{eq:debias-correction-close}).

Combine~(\ref{eq:Phi4-term1}),~(\ref{eq:Phi4-term2}) and the triangle
inequality to establish the advertised result, with the proviso that
$n^{2}\gg\kappa^{3}\mu r$.

\subsection{Proof of Lemma \ref{lemma:approx-gaussian-leading-term} \label{subsec:Proof-of-Lemma-approx-gaussian-low-rank}}

We invoke the identity $\bm{Y}^{\star}(\bm{Y}^{\star\top}\bm{Y}^{\star})^{-1}=\bm{V}^{\star}(\bm{\Sigma}^{\star})^{-1/2}$
(since $\bm{Y}^{\star}=\bm{V}^{\star}(\bm{\Sigma}^{\star})^{1/2}$)
to see that for any $1\leq i\leq n$,
\begin{align}
\left(\frac{1}{p}\mathcal{P}_{\Omega}\left(\bm{E}\right)\bm{Y}^{\star}(\bm{Y}^{\star\top}\bm{Y}^{\star})^{-1}\right)^{\top}\bm{e}_{i} =\left(\frac{1}{p}\mathcal{P}_{\Omega}\left(\bm{E}\right)\bm{V}^{\star}\left(\bm{\Sigma}^{\star}\right)^{-1/2}\right)^{\top}\bm{e}_{i}=\sum_{k=1}^{n}\frac{1}{p}E_{ik}\delta_{ik}\left(\bm{\Sigma}^{\star}\right)^{-1/2}\left(\bm{V}_{k,\cdot}^{\star}\right)^{\top}\label{eq:PE-Y-identity}
\end{align}
consists of a sum of independent random vectors, where we recall that
$\delta_{ik}=\ind\{(i,k)\in\Omega\}$. In addition, the right-hand
side of the above formula is conditionally Gaussian, namely, 
\[
\sum_{k=1}^{n}\frac{1}{p}E_{ik}\delta_{ik}\left(\bm{\Sigma}^{\star}\right)^{-1/2}\left(\bm{V}_{k,\cdot}^{\star}\right)^{\top}\,\Big|\,\left\{ \delta_{ik}\right\} _{k:1\leq k\leq n}\,\sim\,\mathcal{N}\Bigg(\bm{0},\underbrace{\frac{\sigma^{2}}{p^{2}}\sum_{k=1}^{n}\delta_{ik}\left(\bm{\Sigma}^{\star}\right)^{-1/2}\big(\bm{V}_{k,\cdot}^{\star}\big)^{\top}\bm{V}_{k,\cdot}^{\star}\left(\bm{\Sigma}^{\star}\right)^{-1/2}}_{:=\bm{S}}\Bigg).
\]
Note that $\bm{S}$ depends on the index $i$ through $\{\delta_{ik}\}_{k:1\leq k\leq n}$. Denote by $\bm{S}^{\star}$ the expectation of $\bm{S}$, that is,
\[
\bm{S}^{\star}\triangleq\mathbb{E}\big[\bm{S}\big]=p^{-1}\sigma^{2}\left(\bm{\Sigma}^{\star}\right)^{-1}\succeq\sigma^{2}/(p\sigma_{\max})\cdot\bm{I}_{r},
\]
and introduce the following event 
\[
\mathcal{E}\triangleq\left\{ \left\Vert \bm{S}-\bm{S}^{\star}\right\Vert \lesssim\frac{\sigma^{2}}{p\sigma_{\min}}\sqrt{\frac{\mu r\log n}{np}}\right\} .
\]
Clearly, when $np\gg\kappa^{2}\mu r\log n$, one has $\bm{S}\succ\bm{0}$
on the event $\mathcal{E}$ and hence $\bm{S}^{-1/2}$ is well-defined. As a result, on the event $\mathcal{E}$,
we have 
\begin{equation}
\left(\bm{S}^{\star}\right)^{1/2}\bm{S}^{-1/2}\sum_{k=1}^{n}\frac{1}{p}E_{ik}\delta_{ik}\left(\bm{\Sigma}^{\star}\right)^{-1/2}\left(\bm{V}_{k,\cdot}^{\star}\right)^{\top}\,\Big|\,\left\{ \delta_{ik}\right\} _{k:1\leq k\leq n}\ \sim\ \mathcal{N}\left(\bm{0},\bm{S}^{\star}\right).\label{eq:Sstar-S-conditional-Gaussian}
\end{equation}
In view of this relation, we can define the $i$th row of $\bm{Z}_{\bm{X}}\in\mathbb{R}^{n\times r}$ to be 
\begin{equation}
\bm{e}_{i}^\top\bm{Z}_{\bm{X}}\triangleq\begin{cases}
\frac{1}{p}\bm{e}_{i}^\top\mathcal{P}_{\Omega}\left(\bm{E}\right)\bm{Y}^{\star}(\bm{Y}^{\star\top}\bm{Y}^{\star})^{-1}\bm{S}^{-1/2}\left(\bm{S}^{\star}\right)^{1/2}, & \text{on the event }\mathcal{E},\\
\bm{e}_{i}^\top\bm{G}_{\bm{X}}, & \text{on the event }\mathcal{E}^{c},
\end{cases}\label{eq:ZX-construction}
\end{equation}
where $\bm{G}_{\bm{X}}\in\mathbb{R}^{n\times r}$ is an independently
generated random matrix satisfying 
\[
\bm{G}_{\bm{X}}^{\top}\bm{e}_{i}\overset{\text{i.i.d.}}{\sim}\mathcal{N}\left(\bm{0},\frac{\sigma^{2}}{p}\left(\bm{\Sigma}^{\star}\right)^{-1}\right)\qquad\text{for}\quad1\leq i\leq n.
\]
As can be easily seen from~(\ref{eq:PE-Y-identity}) and~(\ref{eq:Sstar-S-conditional-Gaussian}),
each row of $\bm{Z}_{\bm{X}}$ follows the Gaussian distribution
\[
\bm{Z}_{\bm{X}}^{\top}\bm{e}_{i}\overset{\text{i.i.d.}}{\sim}\mathcal{N}\left(\bm{0},\frac{\sigma^{2}}{p}\left(\bm{\Sigma}^{\star}\right)^{-1}\right)\qquad\text{for}\quad1\leq i\leq n.
\]
It remains to show that, with high probability, 
\begin{align*}
\bm{\Delta}_{\bm{X}} & \triangleq\frac{1}{p}\mathcal{P}_{\Omega}\left(\bm{E}\right)\bm{Y}^{\star}(\bm{Y}^{\star\top}\bm{Y}^{\star})^{-1}-\bm{Z}_{\bm{X}}=\frac{1}{p}\mathcal{P}_{\Omega}\left(\bm{E}\right)\bm{V}^{\star}\left(\bm{\Sigma}^{\star}\right)^{-1/2}-\bm{Z}_{\bm{X}}
\end{align*}
is small when measured by the $\ell_{2,\infty}$ norm. To this end,
observe that on the event $\mathcal{E}$, 
\begin{align*}
\bm{e}_{i}^{\top}\bm{\Delta}_{\bm{X}} & =\frac{1}{p}\bm{e}_{i}^{\top}\mathcal{P}_{\Omega}\left(\bm{E}\right)\bm{V}^{\star}\left(\bm{\Sigma}^{\star}\right)^{-1/2}-\frac{1}{p}\bm{e}_{i}^{\top}\mathcal{P}_{\Omega}\left(\bm{E}\right)\bm{V}^{\star}\left(\bm{\Sigma}^{\star}\right)^{-1/2}\bm{S}^{-1/2}\left(\bm{S}^{\star}\right)^{1/2}\\
 & =\frac{1}{p}\bm{e}_{i}^{\top}\mathcal{P}_{\Omega}\left(\bm{E}\right)\bm{V}^{\star}\left(\bm{\Sigma}^{\star}\right)^{-1/2}\left[\bm{I}_{r}-\bm{S}^{-1/2}\left(\bm{S}^{\star}\right)^{1/2}\right],
\end{align*}
and therefore, we have 
\begin{align*}
\left\Vert \bm{e}_{i}^{\top}\bm{\Delta}_{\bm{X}}\right\Vert _{2} &\leq\frac{1}{p}\Big\|\bm{e}_{i}^{\top}\mathcal{P}_{\Omega}\left(\bm{E}\right)\bm{V}^{\star}\Big\|_{2}\big\|\left(\bm{\Sigma}^{\star}\right)^{-1/2}\big\|\left\Vert \bm{I}_{r}-\bm{S}^{-1/2}\left(\bm{S}^{\star}\right)^{1/2}\right\Vert \\
 & =\frac{1}{p\sqrt{\sigma_{\min}}}\Big\|\sum\nolimits _{k}E_{ik}\delta_{ik}\bm{V}_{k,\cdot}^{\star}\Big\|_{2}\left\Vert \bm{I}_{r}-\bm{S}^{-1/2}\left(\bm{S}^{\star}\right)^{1/2}\right\Vert .
\end{align*}
In what follows, we shall bound the two terms on the right-hand side
of the above display sequentially.
\begin{enumerate}
\item First, observe that $\sum_{k=1}^{n}E_{ik}\delta_{ik}\bm{V}_{k,\cdot}^{\star}$
involves a sum of independent random vectors with 
\[
\left\Vert \left\Vert E_{ik}\delta_{ik}\bm{V}_{k,\cdot}^{\star}\right\Vert _{2}\right\Vert _{\psi_{1}}\leq\left\Vert \bm{V}_{k,\cdot}^{\star}\right\Vert _{2}\left\Vert E_{jk}\delta_{jk}\right\Vert _{\psi_{1}}\lesssim\sigma\sqrt{{\mu r}/{n}},
\]
where $\|\cdot\|_{\psi_{1}}$ denotes the sub-exponential norm \cite{vershynin2016high}.
One can then apply the matrix Bernstein inequality \cite[Theorem 2.7]{Koltchinskii2011oracle}
to conclude that with probability at least $1-O(n^{-20})$, 
\[
\left\Vert \sum\nolimits _{k}E_{ik}\delta_{ik}\bm{V}_{k,\cdot}^{\star}\right\Vert _{2}\lesssim\sqrt{V_{1}\log n}+\max_{1\leq k\leq n}\left\Vert \left\Vert E_{ik}\delta_{ik}\bm{V}_{k,\cdot}^{\star}\right\Vert _{2}\right\Vert _{\psi_{1}}\log^{2}n,
\]
where we denote 
\[
V_{1}\triangleq\left\Vert \mathbb{E}\left[\sum\nolimits _{k=1}^{n}E_{jk}^{2}\delta_{jk}^{2}\bm{V}_{k,\cdot}^{\star}\left(\bm{V}_{k,\cdot}^{\star}\right)^{\top}\right]\right\Vert =\sigma^{2}p\left\Vert \bm{V}^{\star}\right\Vert _{\mathrm{F}}^{2}=\sigma^{2}pr.
\]
As a result, we arrive at 
\begin{align}
\left\Vert \sum\nolimits _{k}E_{ik}\delta_{ik}\bm{V}_{k,\cdot}^{\star}\right\Vert _{2} & \lesssim\sqrt{\sigma^{2}pr\log n}+\sigma\sqrt{\frac{\mu r}{n}}\log^{2}n\lesssim\sigma\sqrt{pr\log n}\label{eq:useful-decouple}
\end{align}
as long as $np\gg \mu\log^{3}n$.
\item Next, we move on to $\|\bm{I}_{r}-\bm{S}^{-1/2}(\bm{S}^{\star})^{1/2}\|$.
Recall that on the event $\mathcal{E}$, one has
\[
\left\Vert \bm{S}-\bm{S}^{\star}\right\Vert \lesssim\frac{\sigma^{2}}{p\sigma_{\min}}\sqrt{\frac{\mu r\log n}{np}}.
\]
This together with the fact that $\sigma^{2}/(p\sigma_{\max})\leq\lambda_{\min}(\bm{S}^{\star})\leq\lambda_{\max}(\bm{S}^{\star})\leq\sigma^{2}/(p\sigma_{\min})$
gives
\begin{equation}
\frac{\sigma^{2}}{2p\sigma_{\max}}\leq\lambda_{\min}\left(\bm{S}\right)\leq\lambda_{\max}\left(\bm{S}\right)\leq\frac{2\sigma^{2}}{p\sigma_{\min}},\ \ \sqrt{\frac{\sigma^{2}}{2p\sigma_{\max}}}\leq\lambda_{\min}\big(\bm{S}^{1/2}\big)\leq\lambda_{\max}\big(\bm{S}^{1/2}\big)\leq\sqrt{\frac{2\sigma^{2}}{p\sigma_{\min}}},\label{eq:useful-decouple-2}
\end{equation}
with the proviso that $np\gg\kappa^{2}\mu r\log n$. Therefore, straightforward
calculations yield 
\begin{align*}
\big\|\bm{I}_{r}-\bm{S}^{-1/2}\left(\bm{S}^{\star}\right)^{1/2}\big\| & \leq\big\|\bm{S}^{-1/2}\big\|\cdot\big\|\bm{S}^{1/2}-\left(\bm{S}^{\star}\right)^{1/2}\big\|\\
 & \leq\big\|\bm{S}^{-1/2}\big\|\frac{1}{\lambda_{\min}\big(\bm{S}^{1/2}\big)+\lambda_{\min}\big(\left(\bm{S}^{\star}\right)^{1/2}\big)}\big\|\bm{S}-\bm{S}^{\star}\big\|\\
 & \lesssim\sqrt{\frac{p\sigma_{\max}}{\sigma^{2}}}\cdot\frac{1}{\sqrt{\frac{\sigma^{2}}{p\sigma_{\max}}}}\cdot\frac{\sigma^{2}}{p\sigma_{\min}}\sqrt{\frac{\mu r\log n}{np}}\asymp\sqrt{\frac{\kappa^{2}\mu r\log n}{np}}.
\end{align*}
Here the second relation is the perturbation bound for the matrix
square roots (see~Lemma~\ref{lemma:matrix-sqrt}).
\end{enumerate}
Combine the above two bounds to conclude that
\begin{align*}
\left\Vert \bm{e}_{i}^{\top}\bm{\Delta}_{\bm{X}}\right\Vert _{2} & =\frac{1}{p}\left\Vert \bm{e}_{i}^{\top}\mathcal{P}_{\Omega}\left(\bm{E}\right)\bm{V}^{\star}\left(\bm{\Sigma}^{\star}\right)^{-1/2}\left[\bm{I}_{r}-\bm{S}^{-1/2}\left(\bm{S}^{\star}\right)^{1/2}\right]\right\Vert _{2}\\
 & \lesssim\frac{1}{p}\cdot\sigma\sqrt{pr\log n}\cdot\frac{1}{\sqrt{\sigma_{\min}}}\cdot\sqrt{\frac{\kappa^{2}\mu r\log n}{np}}\asymp\frac{\sigma}{\sqrt{p\sigma_{\min}}}\cdot\sqrt{\frac{\kappa^{2}\mu r^{2}\log^{2}n}{np}}.
\end{align*}

Finally, we are left with demonstrating that $\mathbb{P}(\mathcal{E}^{c})=O(n^{-10})$.
To see this, by definition one has
\begin{align*}
\left\Vert \bm{S}-\bm{S}^{\star}\right\Vert  & =\frac{\sigma^{2}}{p}\left\Vert \frac{1}{p}\sum_{k=1}^{n}\delta_{ik}\left(\bm{\Sigma}^{\star}\right)^{-1/2}\big(\bm{V}_{k,\cdot}^{\star}\big)^{\top}\bm{V}_{k,\cdot}^{\star}\left(\bm{\Sigma}^{\star}\right)^{-1/2}-\left(\bm{\Sigma}^{\star}\right)^{-1}\right\Vert \\
 & \leq\frac{\sigma^{2}}{p^{2}\sigma_{\min}}\left\Vert \sum\nolimits _{k}\delta_{ik}\big(\bm{V}_{k,\cdot}^{\star}\big)^{\top}\bm{V}_{k,\cdot}^{\star}-p\bm{I}_{r}\right\Vert \\
 & \lesssim\frac{\sigma^{2}}{p^{2}\sigma_{\min}}\left(\sqrt{V_{2}\log n}+B_{2}\log n\right)
\end{align*}
with probability at least $1-O(n^{-10})$. Here the last line utilizes
the matrix Bernstein inequality, where
\begin{align*}
B_{2} & \triangleq\max_{1\leq k\leq n}\left\Vert \left(\delta_{jk}-p\right)\left(\bm{V}_{k,\cdot}^{\star}\right)^{\top}\bm{V}_{k,\cdot}^{\star}\right\Vert \leq\frac{\mu r}{n},\\
V_{2} & \triangleq\left\Vert \mathbb{E}\Big[\sum\nolimits _{k}\left(\delta_{jk}-p\right)^{2}\big(\bm{V}_{k,\cdot}^{\star}\big)^{\top}\bm{V}_{k,\cdot}^{\star}\big(\bm{V}_{k,\cdot}^{\star}\big)^{\top}\bm{V}_{k,\cdot}^{\star}\Big]\right\Vert \leq p\frac{\mu r}{n}\left\Vert \bm{V}^{\star\top}\bm{V}^{\star}\right\Vert =\frac{\mu rp}{n}.
\end{align*}
Consequently with probability exceeding $1-O(n^{-10})$ one has 
\[
\left\Vert \bm{S}-\bm{S}^{\star}\right\Vert \lesssim\frac{\sigma^{2}}{p^{2}\sigma_{\min}}\left(\sqrt{\frac{\mu rp\log n}{n}}+\frac{\mu r}{n}\log n\right)\asymp\frac{\sigma^{2}}{p\sigma_{\min}}\sqrt{\frac{\mu r\log n}{np}}
\]
as long as $np\gtrsim\mu r\log n$. This means that $\mathbb{P}(\mathcal{E}^{c})=O(n^{-10})$ and taking the union bounds over $1\leq i\leq n$ concludes the proof.

\section{Analysis of the entries of the matrix \label{sec:Analysis-of-entry}}

\subsection{Proof of Lemma \ref{lemma:entry-residual}}

The term $\Lambda_{ij}$ can be naturally split into two terms, namely
\[
\bm{e}_{i}^{\top}\bm{\Psi}_{\bm{X}}\bm{Y}^{\star\top}\bm{e}_{j}+\bm{e}_{i}^{\top}\bm{X}^{\star}\bm{\Psi}_{\bm{Y}}^{\top}\bm{e}_{j}\qquad\text{and}\qquad\bm{e}_{i}^{\top}\big(\overline{\bm{X}}^{\mathsf{d}}-\bm{X}^{\star}\big)\big(\overline{\bm{Y}}^{\mathsf{d}}-\bm{Y}^{\star}\big)^{\top}\bm{e}_{j}.
\]
In what follows, we shall bound each term individually.
\begin{enumerate}
\item Regarding the first term, one sees from Theorem~\ref{thm:low-rank-factor-master-bound}
that with probability exceeding $1-O(n^{-10})$
\[
\max\bigl\{\left\Vert \bm{\Psi}_{\bm{X}}\right\Vert _{2,\infty},\left\Vert \bm{\Psi}_{\bm{Y}}\right\Vert _{2,\infty}\bigr\}\lesssim\frac{\sigma}{\sqrt{p\sigma_{\min}}}\left(\frac{\sigma}{\sigma_{\min}}\sqrt{\frac{\kappa^{7}\mu rn\log n}{p}}+\sqrt{\frac{\kappa^{7}\mu^{3}r^{3}\log^{2}n}{np}}\right).
\]
As a result, we obtain 
\begin{align*}
\left|\bm{e}_{i}^{\top}\bm{\Psi}_{\bm{X}}\bm{Y}^{\star\top}\bm{e}_{j}+\bm{e}_{i}^{\top}\bm{X}^{\star}\bm{\Psi}_{\bm{Y}}^{\top}\bm{e}_{j}\right| & \leq\left\Vert \bm{\Psi}_{\bm{X}}\right\Vert _{2,\infty}\left\Vert \bm{Y}_{j,\cdot}^{\star}\right\Vert _{2}+\left\Vert \bm{X}_{i,\cdot}^{\star}\right\Vert _{2}\left\Vert \bm{\Psi}_{\bm{Y}}\right\Vert _{2,\infty}\\
 & \lesssim\left(\left\Vert \bm{U}_{i,\cdot}^{\star}\right\Vert _{2}+\left\Vert \bm{V}_{j,\cdot}^{\star}\right\Vert _{2}\right)\frac{\sigma}{\sqrt{p}}\left(\frac{\sigma}{\sigma_{\min}}\sqrt{\frac{\kappa^{8}\mu rn\log n}{p}}+\sqrt{\frac{\kappa^{8}\mu^{3}r^{3}\log^{2}n}{np}}\right),
\end{align*}
where the last line follows since $\|\bm{X}_{i,\cdot}^{\star}\|_{2}\leq\sqrt{\sigma_{\max}}\|\bm{U}_{i,\cdot}^{\star}\|_{2}$
and $\|\bm{Y}_{j,\cdot}^{\star}\|_{2}\leq\sqrt{\sigma_{\max}}\|\bm{V}_{j,\cdot}^{\star}\|_{2}$.
\item Turning to the second term, we have by the Cauchy-Schwarz inequality
that 
\begin{align*}
\left|\bm{e}_{i}^{\top}\big(\overline{\bm{X}}^{\mathsf{d}}-\bm{X}^{\star}\big)\big(\overline{\bm{Y}}^{\mathsf{d}}-\bm{Y}^{\star}\big)^{\top}\bm{e}_{j}\right| & \leq\big\|\overline{\bm{X}}^{\mathsf{d}}-\bm{X}^{\star}\big\|_{2,\infty}\big\|\overline{\bm{Y}}^{\mathsf{d}}-\bm{Y}^{\star}\big\|_{2,\infty}\lesssim\left(\kappa\frac{\sigma}{\sigma_{\min}}\sqrt{\frac{n\log n}{p}}\left\Vert \bm{F}^{\star}\right\Vert _{2,\infty}\right)^{2}\\
 & \lesssim\left(\frac{\sigma}{\sqrt{\sigma_{\min}}}\sqrt{\frac{\kappa^{3}\mu r\log n}{p}}\right)^{2},
\end{align*}
where the penultimate inequality uses (\ref{eq:F-d-2-infty}) and
the last one depends on the incoherence assumption that $\|\bm{F}^{\star}\|_{2,\infty}\leq\sqrt{\mu r\sigma_{\max}/n}$
(see~(\ref{eq:incoherence-X})).
\end{enumerate}
Take collectively the above two bounds to complete the proof.

\subsection{Proof of Lemma \ref{lemma:entry-normal}}

If $\bm{Z}_{\bm{X}}^{\top}\bm{e}_{i}$ and $\bm{Z}_{\bm{Y}}^{\top}\bm{e}_{j}$
were independent, then clearly one would have
\[
\bm{e}_{i}^{\top}\bm{Z}_{\bm{X}}\bm{Y}^{\star\top}\bm{e}_{j}+\bm{e}_{i}^{\top}\bm{X}^{\star}\bm{Z}_{\bm{Y}}^{\top}\bm{e}_{j}\sim\mathcal{N}\left(0,v_{ij}^{\star}\right).
\]
As such, the main ingredient of the proof boils down to demonstrating
that $\bm{Z}_{\bm{X}}^{\top}\bm{e}_{i}$ and $\bm{Z}_{\bm{Y}}^{\top}\bm{e}_{j}$
are nearly independent.

To begin with, we remind the readers of the way we construct $\bm{e}_{i}^\top \bm{Z}_{\bm{X}}$
and $\bm{e}_{j}^\top\bm{Z}_{\bm{Y}}$ in Appendix~\ref{subsec:Proof-of-Lemma-approx-gaussian-low-rank}:
there exist events $\mathcal{E}$ and $\widetilde{\mathcal{E}}$ with
$\mathbb{P}(\mathcal{E}^{\mathrm{c}}\cup\widetilde{\mathcal{E}}^{\mathrm{c}})\lesssim n^{-10}$
such that 
\begin{align*}
\bm{e}_{i}^{\top}\bm{Z}_{\bm{X}} & \triangleq\frac{1}{p}\bm{e}_{i}^{\top}\mathcal{P}_{\Omega}\left(\bm{E}\right)\bm{Y}^{\star}(\bm{Y}^{\star\top}\bm{Y}^{\star})^{-1}\bm{S}^{-1/2}\left(\bm{S}^{\star}\right)^{1/2}\qquad\quad\quad\text{on the event }\mathcal{E}\\
\bm{e}_{j}^{\top}\bm{Z}_{\bm{Y}} & \triangleq\frac{1}{p}\bm{e}_{j}^{\top}\big(\mathcal{P}_{\Omega}\left(\bm{E}\right)\big)^{\top}\bm{X}^{\star}(\bm{X}^{\star\top}\bm{X}^{\star})^{-1}\tilde{\bm{S}}^{-1/2}\left(\bm{S}^{\star}\right)^{1/2}\qquad\text{on the event }\widetilde{\mathcal{E}}
\end{align*}
where the randomness of $\bm{S}$ only comes from $\{\delta_{ik}\}_{k:1\leq k\leq n}$,
and the randomness of $\tilde{\bm{S}}$ only comes from $\{\delta_{kj}\}_{k:1\leq k\leq n}$.
In addition, the events $\mathcal{E}$ and $\widetilde{\mathcal{E}}$
depend only on $\{\delta_{ik}\}_{k:1\leq k\leq n}$ and $\{\delta_{kj}\}_{k:1\leq k\leq n}$,
respectively. As a result, $\bm{Z}_{\bm{X}}^{\top}\bm{e}_{i}$ depends
only on $\{\delta_{ik}, E_{ik}\}_{k:1\leq k\leq n}$ and $\bm{Z}_{\bm{Y}}^{\top}\bm{e}_{j}$
relies only on $\{\delta_{kj}, E_{kj}\}_{k:1\leq k\leq n}$. This tells us
that: the only common randomness underlying $\bm{Z}_{\bm{X}}^\top \bm{e}_{i}$
and $\bm{Z}_{\bm{Y}}^{\top}\bm{e}_{j}$ lies in $\delta_{ij}$ and
$E_{ij}$.

Fortunately, this weak dependency can be easily decoupled, for which
we have the following claim.

\begin{claim}\label{claim:independence}Suppose that $np\gg\kappa^{2}\mu r^{2}\log^{2}n$.
One has the decomposition 
\[
\bm{Z}_{\bm{X}}^{\top}\bm{e}_{i}=\widetilde{\bm{Z}}_{\bm{X}}^{\top}\bm{e}_{i}+\bm{\Delta}_{i},
\]
where $\widetilde{\bm{Z}}_{\bm{X}}^{\top}\bm{e}_{i}\sim\mathcal{N}(\bm{0},\sigma^{2}(\bm{\Sigma}^{\star})^{-1}/p)$
and is independent of $\{\delta_{kj}, E_kj\}_{k:1\leq k\leq n}$ and hence
of $\bm{Z}_{\bm{Y}}^{\top}\bm{e}_{j}$. In addition, with probability
at least $1-O(n^{-10})$ one has
\[
\|\bm{\Delta}_{i}\|_{2}\lesssim\frac{\sigma}{\sqrt{p\sigma_{\min}}}\sqrt{\frac{\kappa\mu r\log n}{np}}.
\]
\end{claim}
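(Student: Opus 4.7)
The plan is to construct $\widetilde{\bm{Z}}_{\bm{X}}^{\top}\bm{e}_{i}$ by \emph{resampling} the single shared pair $(\delta_{ij}, E_{ij})$, which is the only source of dependence between $\bm{Z}_{\bm{X}}^{\top}\bm{e}_{i}$ and $\bm{Z}_{\bm{Y}}^{\top}\bm{e}_{j}$. Introduce fresh copies $\tilde{\delta}_{ij}\sim\mathrm{Bernoulli}(p)$ and $\tilde{E}_{ij}\sim\mathcal{N}(0,\sigma^{2})$ that are jointly independent of all other randomness, and define $\widetilde{\bm{Z}}_{\bm{X}}^{\top}\bm{e}_{i}$ by applying verbatim the construction of $\bm{Z}_{\bm{X}}^{\top}\bm{e}_{i}$ from Appendix~\ref{subsec:Proof-of-Lemma-approx-gaussian-low-rank}, but with every occurrence of $\delta_{ij}$ and $E_{ij}$ replaced by $\tilde{\delta}_{ij}$ and $\tilde{E}_{ij}$. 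In particular, $\bm{S}$ is replaced by an analogous matrix $\widetilde{\bm{S}}$ whose only change is in the $k=j$ summand.

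The distributional and independence properties will then fall out immediately. Since $\{(\delta_{ik},E_{ik})\}_{k\ne j}\cup\{(\tilde{\delta}_{ij},\tilde{E}_{ij})\}$ has exactly the same joint law as $\{(\delta_{ik},E_{ik})\}_{k=1}^{n}$, Lemma~\ref{lemma:approx-gaussian-leading-term} applied to this resampled collection yields marginally $\widetilde{\bm{Z}}_{\bm{X}}^{\top}\bm{e}_{i}\sim\mathcal{N}(\bm{0},\sigma^{2}(\bm{\Sigma}^{\star})^{-1}/p)$. Moreover, by construction $\widetilde{\bm{Z}}_{\bm{X}}^{\top}\bm{e}_{i}$ is a measurable function of $\{(\delta_{ik},E_{ik})\}_{k\ne j}$ and $(\tilde{\delta}_{ij},\tilde{E}_{ij})$ alone, all of which are independent of $\{(\delta_{kj},E_{kj})\}_{k=1}^{n}$, and hence of $\bm{Z}_{\bm{Y}}^{\top}\bm{e}_{j}$.

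The bulk of the work consists in controlling the residual
\[
\bm{\Delta}_{i}\,=\,\bm{Z}_{\bm{X}}^{\top}\bm{e}_{i}-\widetilde{\bm{Z}}_{\bm{X}}^{\top}\bm{e}_{i},
\]
which I split into two pieces: (i) the difference of the two linear noise sums multiplied by the common factor $\bm{S}^{-1/2}(\bm{S}^{\star})^{1/2}$, and (ii) the modified noise sum multiplied by the differential $\bm{S}^{-1/2}(\bm{S}^{\star})^{1/2}-\widetilde{\bm{S}}^{-1/2}(\bm{S}^{\star})^{1/2}$. Piece (i) differs only in the $k=j$ summand; a Gaussian tail bound gives $|E_{ij}|,|\tilde{E}_{ij}|\lesssim\sigma\sqrt{\log n}$, the incoherence assumption yields $\|\bm{V}_{j,\cdot}^{\star}\|_{2}\le\sqrt{\mu r/n}$, and~(\ref{eq:useful-decouple-2}) gives $\|\bm{S}^{-1/2}(\bm{S}^{\star})^{1/2}\|\lesssim\sqrt{\kappa}$; combining these produces a contribution of order $\frac{\sigma}{\sqrt{p\sigma_{\min}}}\sqrt{\frac{\kappa\mu r\log n}{np}}$, matching the target. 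For piece (ii), a one-term perturbation yields $\|\bm{S}-\widetilde{\bm{S}}\|\lesssim\sigma^{2}\mu r/(p^{2}n\sigma_{\min})$; propagating this through the identity $\bm{S}^{-1/2}-\widetilde{\bm{S}}^{-1/2}=\bm{S}^{-1/2}(\widetilde{\bm{S}}^{1/2}-\bm{S}^{1/2})\widetilde{\bm{S}}^{-1/2}$ together with the matrix square-root perturbation Lemma~\ref{lemma:matrix-sqrt}, and combining with the noise-sum estimate~(\ref{eq:useful-decouple}), shows that piece (ii) is dominated by piece (i) by a factor of $\sqrt{\kappa^{2}\mu r/(np)}\ll 1$ under $np\gg\kappa^{2}\mu r^{2}\log^{2}n$.

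The main obstacle will be the bookkeeping in piece (ii): one must verify that both $\bm{S}$ and $\widetilde{\bm{S}}$ simultaneously sit in the ``good'' spectral range of~(\ref{eq:useful-decouple-2}) so that the matrix square-root perturbation applies, and then track the compound dependence on $\kappa,\mu,r,p$ so that the resulting bound absorbs into the advertised $\frac{\sigma}{\sqrt{p\sigma_{\min}}}\sqrt{\frac{\kappa\mu r\log n}{np}}$. On the low-probability complementary event where either $\bm{S}$ or $\widetilde{\bm{S}}$ leaves this range, I simply set $\widetilde{\bm{Z}}_{\bm{X}}^{\top}\bm{e}_{i}\triangleq\bm{Z}_{\bm{X}}^{\top}\bm{e}_{i}$ (so $\bm{\Delta}_{i}=\bm{0}$), which is absorbed into the overall $O(n^{-10})$ failure probability by a union bound with the events already appearing in Lemma~\ref{lemma:approx-gaussian-leading-term} and in the tail bounds on $|E_{ij}|$ and $|\tilde{E}_{ij}|$.
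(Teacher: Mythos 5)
Your resampling idea is a perfectly legitimate variant of the paper's leave-one-out construction (the paper deletes the $k=j$ summand; you resample it), and your estimates on pieces~(i) and~(ii) are essentially identical to the paper's triangle-inequality bookkeeping. However, your treatment of the low-probability event has a real error that breaks the two \emph{exact} properties the Claim asserts — and which are used downstream in Lemma~\ref{lemma:entry-normal}, where $g_{ij}\sim\mathcal{N}(0,v_{ij}^{\star})$ exactly.

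Setting $\widetilde{\bm{Z}}_{\bm{X}}^{\top}\bm{e}_{i}\triangleq\bm{Z}_{\bm{X}}^{\top}\bm{e}_{i}$ on the bad event is not something that can be ``absorbed by a union bound.'' A union bound disposes of \emph{tail} events, but exact Gaussianity and exact independence are distributional statements that either hold or do not. Mixing the resampled formula on the good event with $\bm{Z}_{\bm{X}}^{\top}\bm{e}_{i}$ on the bad event changes the marginal law of $\widetilde{\bm{Z}}_{\bm{X}}^{\top}\bm{e}_{i}$, so it is no longer exactly $\mathcal{N}(\bm{0},\sigma^{2}(\bm{\Sigma}^{\star})^{-1}/p)$. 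Worse, your good event is ``$\bm{S}$ \emph{and} $\widetilde{\bm{S}}$ both in the spectral range,'' and $\bm{S}$ depends on $\delta_{ij}$, which is one of the entries of $\{\delta_{kj}\}_{k:1\le k\le n}$. So the indicator of your good event is not independent of $\{(\delta_{kj},E_{kj})\}_{k}$, and consequently neither is $\widetilde{\bm{Z}}_{\bm{X}}^{\top}\bm{e}_{i}$, even on the good event.

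Both issues have the same fix, which is exactly what the paper does when defining $\bm{Z}_{\bm{X}}$ itself in Lemma~\ref{lemma:approx-gaussian-leading-term}: define the good event $\widetilde{\mathcal{E}}$ only in terms of $\widetilde{\bm{S}}$ (which depends solely on $\{\delta_{ik}\}_{k\neq j}$ and $\tilde{\delta}_{ij}$, all independent of $\{(\delta_{kj},E_{kj})\}_{k}$), and on $\widetilde{\mathcal{E}}^{c}$ set $\widetilde{\bm{Z}}_{\bm{X}}^{\top}\bm{e}_{i}$ to a \emph{freshly, independently} drawn $\mathcal{N}(\bm{0},\sigma^{2}(\bm{\Sigma}^{\star})^{-1}/p)$ vector. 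Then $\widetilde{\bm{Z}}_{\bm{X}}^{\top}\bm{e}_{i}$ is a measurable function of quantities independent of $\{(\delta_{kj},E_{kj})\}_{k}$, so the exact Gaussianity and exact independence hold. The bound $\|\bm{\Delta}_{i}\|_{2}\lesssim\frac{\sigma}{\sqrt{p\sigma_{\min}}}\sqrt{\frac{\kappa\mu r\log n}{np}}$ then need only hold on $\mathcal{E}\cap\widetilde{\mathcal{E}}$, which has probability $1-O(n^{-10})$ by a union bound, and on the complement $\bm{\Delta}_{i}$ can be arbitrary — this is where the union bound legitimately enters. One more minor slip: the ratio of piece~(ii) to piece~(i) is $\sqrt{\kappa^{2}\mu r^{2}/(np)}$, not $\sqrt{\kappa^{2}\mu r/(np)}$, though this is still $o(1)$ under $np\gg\kappa^{2}\mu r^{2}\log^{2}n$.
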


The desired result follows immediately from Claim~\ref{claim:independence},
since
\[
\bm{e}_{i}^{\top}\bm{Z}_{\bm{X}}\bm{Y}^{\star\top}\bm{e}_{j}+\bm{e}_{i}^{\top}\bm{X}^{\star}\bm{Z}_{\bm{Y}}^{\top}\bm{e}_{j}=\underset{\sim\,\mathcal{N}\left(0,v_{ij}^{\star}\right)}{\underbrace{\bm{e}_{i}^{\top}\widetilde{\bm{Z}}_{\bm{X}}\bm{Y}^{\star\top}\bm{e}_{j}+\bm{e}_{i}^{\top}\bm{X}^{\star}\bm{Z}_{\bm{Y}}^{\top}\bm{e}_{j}}}+\bm{\Delta}_{i}^{\top}\bm{Y}^{\star\top}\bm{e}_{j},
\]
where
\[
\big|\bm{\Delta}_{i}^{\top}\bm{Y}^{\star\top}\bm{e}_{j}\big|\leq\|\bm{\Delta}_{i}\|_{2}\|\bm{Y}_{j,\cdot}^{\star}\|_{2,\infty}\lesssim\frac{\sigma}{\sqrt{p\sigma_{\min}}}\sqrt{\frac{\kappa\mu r\log n}{np}}\sqrt{\sigma_{\max}}\|\bm{V}_{j,\cdot}^{\star}\|_{2,\infty}\asymp\frac{\sigma}{\sqrt{p}}\sqrt{\frac{\kappa^{2}\mu r\log n}{np}}\|\bm{V}_{j,\cdot}^{\star}\|_{2,\infty}.
\]

Similarly, repeating the same argument above, we can also show that
$\bm{e}_{i}^{\top}\bm{Z}_{\bm{X}}\bm{Y}^{\star\top}\bm{e}_{j}+\bm{e}_{i}^{\top}\bm{X}^{\star}\bm{Z}_{\bm{Y}}^{\top}\bm{e}_{j}$
can be decomposed as a Gaussian random variable $\mathcal{N}\left(0,v_{ij}^{\star}\right)$
as well as a residual term bounded above by $({\sigma}/{\sqrt{p}})\sqrt{({\kappa^{2}\mu r\log n})/({np})}\|\bm{U}_{i,\cdot}^{\star}\|_{2,\infty}$
with high probability. These together finish the proof.

\begin{proof}[Proof of Claim~\ref{claim:independence}] Instate
the notation used in Appendix~\ref{subsec:Proof-of-Lemma-approx-gaussian-low-rank}.
Recall that 
\[
\bm{Z}_{\bm{X}}^{\top}\bm{e}_{i}=\begin{cases}
\left(\bm{S}^{\star}\right)^{1/2}\bm{S}^{-1/2}\sum_{k=1}^{n}\frac{1}{p}E_{ik}\delta_{ik}\left(\bm{\Sigma}^{\star}\right)^{-1/2}\big(\bm{V}_{k,\cdot}^{\star}\big)^{\top}, & \text{on the event }\mathcal{E},\\
\bm{G}_{\bm{X}}^{\top}\bm{e}_{i}, & \text{on the event }\mathcal{E}^{c}.
\end{cases}
\]
To remove the effect of ${\delta_{ij},E_{ij}}$ on $\bm{Z}_{\bm{X}}^{\top}\bm{e}_{i}$,
we construct an auxiliary random matrix $\widetilde{\bm{Z}}_{\bm{X}}$
as follows 
\[
\widetilde{\bm{Z}}_{\bm{X}}^{\top}\bm{e}_{i}=\begin{cases}
\left(\bm{S}^{\star}\right)^{1/2}\bm{S}_{-j}^{-1/2}\sum_{k:k\neq j}\frac{1}{p}E_{ik}\delta_{ik}\left(\bm{\Sigma}^{\star}\right)^{-1/2}\big(\bm{V}_{k,\cdot}^{\star}\big)^{\top}, & \text{on the event }\mathcal{E}_{-j},\\
\bm{G}_{\bm{X}}^{\top}\bm{e}_{i}, & \text{on the event }\mathcal{E}_{-j}^{c},
\end{cases}
\]
where $\bm{S}^{\star}=p^{-1}\sigma^{2}\left(\bm{\Sigma}^{\star}\right)^{-1}$,
\[
\bm{S}_{-j}\triangleq\frac{\sigma^{2}}{p^{2}}\sum_{k:k\neq j}\delta_{ik}\left(\bm{\Sigma}^{\star}\right)^{-1/2}\big(\bm{V}_{k,\cdot}^{\star}\big)^{\top}\bm{V}_{k,\cdot}^{\star}\left(\bm{\Sigma}^{\star}\right)^{-1/2}\qquad\text{and}\qquad\mathcal{E}_{-j}\triangleq\left\{ \left\Vert \bm{S}_{-j}-\bm{S}^{\star}\right\Vert \lesssim\frac{\sigma^{2}}{p\sigma_{\min}}\sqrt{\frac{\mu r\log n}{np}}\right\} .
\]
It is easily seen that $\widetilde{\bm{Z}}_{\bm{X}}^{\top}\bm{e}_{i}\sim\mathcal{N}(\bm{0},\sigma^{2}(\bm{\Sigma}^{\star})^{-1}/p)$;
more importantly $\widetilde{\bm{Z}}_{\bm{X}}^{\top}\bm{e}_{i}$ is
independent of $\{\delta_{kj},E_{kj}\}_{1\leq k\leq n}$ and hence of $\bm{Z}_{\bm{Y}}^{\top}\bm{e}_{j}$.

We still need to verify the closeness between $\widetilde{\bm{Z}}_{\bm{X}}^{\top}\bm{e}_{i}$
and $\bm{Z}_{\bm{X}}^{\top}\bm{e}_{i}$. Towards this, we first repeat
the proof in Appendix~\ref{subsec:Proof-of-Lemma-approx-gaussian-low-rank}
to obtain $\mathbb{P}(\mathcal{E}_{-j})\geq1-O(n^{-10})$. Therefore
on the high probability event $\mathcal{E}\cap\mathcal{\mathcal{E}}_{-j}$,
one has
\begin{align*}
\big\Vert \widetilde{\bm{Z}}_{\bm{X}}^{\top}\bm{e}_{i}-\bm{Z}_{\bm{X}}^{\top}\bm{e}_{i}\big\Vert _{2} & \leq\big\Vert \left(\bm{S}^{\star}\right)^{1/2}\big\Vert \Bigg\|\bm{S}^{-1/2}\left(\bm{\Sigma}^{\star}\right)^{-1/2}\sum_{k=1}^{n}\frac{1}{p}E_{ik}\delta_{ik}\left(\bm{V}_{k,\cdot}^{\star}\right)^{\top}-\bm{S}_{-j}^{-1/2}\left(\bm{\Sigma}^{\star}\right)^{-1/2}\sum_{k:k\neq j}\frac{1}{p}E_{ik}\delta_{ik}\left(\bm{V}_{k,\cdot}^{\star}\right)^{\top}\Bigg\|,
\end{align*}
which together with the triangle inequality and the fact $\|\bm{S}^{\star}\|=\sigma^{2}/(p\sigma_{\min})$
yields 
\begin{align*}
\sqrt{\frac{p\sigma_{\min}}{\sigma^{2}}}\left\Vert \widetilde{\bm{Z}}_{\bm{X}}^{\top}\bm{e}_{i}-\bm{Z}_{\bm{X}}^{\top}\bm{e}_{i}\right\Vert _{2} & \leq\left\Vert \bm{S}^{-1/2}-\bm{S}_{-j}^{-1/2}\right\Vert \left\Vert \left(\bm{\Sigma}^{\star}\right)^{-1/2}\right\Vert \Bigg\|\sum_{k:k\neq j}\frac{1}{p}E_{ik}\delta_{ik}\bm{V}_{k,\cdot}^{\star}\Bigg\|_{2}\\
 & \quad+\left\Vert \bm{S}^{-1/2}\right\Vert \left\Vert \left(\bm{\Sigma}^{\star}\right)^{-1/2}\right\Vert \left\Vert \frac{1}{p}E_{ij}\delta_{ij}\bm{V}_{j,\cdot}^{\star}\right\Vert _{2}\\
 & \lesssim\left\Vert \bm{S}^{-1/2}-\bm{S}_{-j}^{-1/2}\right\Vert \frac{\sigma}{\sqrt{\sigma_{\min}}}\sqrt{\frac{r\log n}{p}}+\sqrt{\frac{p\sigma_{\max}}{\sigma^{2}}}\frac{1}{\sqrt{\sigma_{\min}}}\frac{\sigma\sqrt{\log n}}{p}\sqrt{\frac{\mu r}{n}}.
\end{align*}
Here we have used the results in (\ref{eq:useful-decouple}) and (\ref{eq:useful-decouple-2}).
We are left with bounding $\|\bm{S}^{-1/2}-\bm{S}_{-j}^{-1/2}\|$,
for which we have 
\[
\left\Vert \bm{S}-\bm{S}_{-j}\right\Vert =\frac{\sigma^{2}}{p^{2}}\left\Vert \delta_{ij}\left(\bm{\Sigma}^{\star}\right)^{-1/2}\left(\bm{V}_{j,\cdot}^{\star}\right)^{\top}\bm{V}_{j,\cdot}^{\star}\left(\bm{\Sigma}^{\star}\right)^{-1/2}\right\Vert \leq\frac{\sigma^{2}}{p^{2}\sigma_{\min}}\frac{\mu r}{n}.
\]
Take the above bound collectively with (\ref{eq:useful-decouple-2})
to yield 
\[
\left\Vert \bm{S}_{-j}^{-1/2}\right\Vert \lesssim\sqrt{\frac{p\sigma_{\max}}{\sigma^{2}}},
\]
as long as $np\gg\kappa\mu r$. As a result, we have 
\begin{align*}
\left\Vert \bm{S}^{-1/2}-\bm{S}_{-j}^{-1/2}\right\Vert  & \leq\left\Vert \bm{S}^{-1/2}\right\Vert \left\Vert \bm{S}^{1/2}-\bm{S}_{-j}^{1/2}\right\Vert \left\Vert \bm{S}_{-j}^{-1/2}\right\Vert \\
 & \lesssim\frac{p\sigma_{\max}}{\sigma^{2}}\cdot\frac{1}{\lambda_{\min}(\bm{S}^{1/2})+\lambda_{\min}(\bm{S}_{-j}^{1/2})}\left\Vert \bm{S}-\bm{S}_{-j}\right\Vert \\
 & \lesssim\frac{p\sigma_{\max}}{\sigma^{2}}\cdot\frac{1}{\sqrt{\frac{\sigma^{2}}{p\sigma_{\max}}}}\cdot\frac{\sigma^{2}}{p^{2}\sigma_{\min}}\frac{\mu r}{n}\asymp\frac{\kappa\mu r}{np}\sqrt{\frac{p\sigma_{\max}}{\sigma^{2}}},
\end{align*}
where the middle line relies on the perturbation of matrix square
roots; see Lemma~\ref{lemma:matrix-sqrt}. Combining all, we arrive
at 
\[
\sqrt{\frac{p\sigma_{\min}}{\sigma^{2}}}\left\Vert \widetilde{\bm{Z}}_{\bm{X}}^{\top}\bm{e}_{i}-\bm{Z}_{\bm{X}}^{\top}\bm{e}_{i}\right\Vert _{2}\lesssim\frac{\kappa\mu r}{np}\cdot\sqrt{\kappa r\log n}+\sqrt{\frac{\kappa\mu r\log n}{np}}\asymp\sqrt{\frac{\kappa\mu r\log n}{np}},
\]
with the proviso that $np\gg\kappa^{2}\mu r^{2}\log^{2}n$. This finishes
the proof. \end{proof}

\section{Proof of Corollary \ref{coro:confidence-interval} \label{sec:Proof-of-Corollary-confidence-interval}}

This section is dedicated to establishing the following result, which subsumes Corollary \ref{coro:confidence-interval} as a special case.

\begin{corollary}
\label{coro:confidence-interval-augment} 
Suppose that the conditions (\ref{subeq:entry-condition}) hold, and recall the notation in  Corollary \ref{coro:confidence-interval}. 
Then one has 
\begin{align*}
	& \sup_{0<\alpha<1}\Big|\mathbb{P}\Big\{ M_{ij}^{\star}\in\big[M_{ij}^{\mathsf{{d}}}\pm\Phi^{-1}\left(1-\alpha/2\right) \sqrt{v_{ij}}\big]\Big\} -(1-\alpha)\Big|\\
	& \quad\lesssim   \frac{\sigma}{\sigma_{\min}}\sqrt{\frac{\kappa^{8}\mu rn\log n}{p}}+\sqrt{\frac{\kappa^{8}\mu^{3}r^{3}\log^{2}n}{np}}+\left(\left\Vert \bm{U}_{i,\cdot}^{\star}\right\Vert _{2}+\left\Vert \bm{V}_{j,\cdot}^{\star}\right\Vert _{2}\right)^{-1}\sqrt{\frac{r}{n}}\frac{\sigma}{\sigma_{\min}}\sqrt{\frac{\kappa^{10}\mu^{2}rn\log^{2}n}{p}}.
\end{align*}
\end{corollary}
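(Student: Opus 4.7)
The plan is to reduce Corollary~\ref{coro:confidence-interval-augment} to two quantitative ingredients: (i) the Gaussian approximation for $M_{ij}^{\mathsf{d}} - M_{ij}^\star$ already supplied by Theorem~\ref{thm:entries-master-decomposition-augment}, and (ii) the concentration of the empirical variance $v_{ij}$ around the theoretical one $v_{ij}^\star$. Writing $Z \triangleq g_{ij}/\sqrt{v_{ij}^\star} \sim \mathcal{N}(0,1)$ with $g_{ij}$ as in the theorem, the coverage event $\{|M_{ij}^{\mathsf{d}} - M_{ij}^\star| \leq \Phi^{-1}(1-\alpha/2)\sqrt{v_{ij}}\}$ is exactly $\{|Z + \Delta_{ij}/\sqrt{v_{ij}^\star}| \leq \Phi^{-1}(1-\alpha/2)\sqrt{v_{ij}/v_{ij}^\star}\}$. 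Using the boundedness of the standard normal density and the fact that $\Phi^{-1}(1-\alpha/2) \cdot \phi(\Phi^{-1}(1-\alpha/2))$ is uniformly bounded in $\alpha \in (0,1)$, one can convert high-probability bounds on $|\Delta_{ij}|/\sqrt{v_{ij}^\star}$ and on $|\sqrt{v_{ij}/v_{ij}^\star} - 1|$ into a uniform-in-$\alpha$ approximation of the coverage probability, up to an additive $O(n^{-3})$ coming from the failure probability of the good event.

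The first ingredient is direct. Theorem~\ref{thm:entries-master-decomposition-augment} controls $|\Delta_{ij}|$ by the sum of a term proportional to $\|\bm{U}_{i,\cdot}^\star\|_2 + \|\bm{V}_{j,\cdot}^\star\|_2$ and a quadratic residual of size $(\sigma^2/\sigma_{\min}) \cdot \kappa^{3}\mu r \log n/p$. Combining this with the identity $v_{ij}^\star = (\sigma^2/p)(\|\bm{U}_{i,\cdot}^\star\|_2^2 + \|\bm{V}_{j,\cdot}^\star\|_2^2)$ and the elementary relation $\|\bm{a}\|_2^2 + \|\bm{b}\|_2^2 \asymp (\|\bm{a}\|_2 + \|\bm{b}\|_2)^2$ yields a ratio bounded by the first two stated terms plus a term inversely proportional to $\|\bm{U}_{i,\cdot}^\star\|_2 + \|\bm{V}_{j,\cdot}^\star\|_2$, all three of which are controlled by the conditions in~\eqref{subeq:entry-condition} together with the explicit lower bound on $\|\bm{U}_{i,\cdot}^\star\|_2 + \|\bm{V}_{j,\cdot}^\star\|_2$ posited in the corollary.

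The main obstacle is controlling $|v_{ij}/v_{ij}^\star - 1|$, which requires showing that the plug-in quadratic form $\bm{X}_{i,\cdot}^{\mathsf{d}}(\bm{X}^{\mathsf{d}\top}\bm{X}^{\mathsf{d}})^{-1}(\bm{X}_{i,\cdot}^{\mathsf{d}})^\top$ closely approximates its oracle counterpart $\bm{X}_{i,\cdot}^\star(\bm{X}^{\star\top}\bm{X}^\star)^{-1}(\bm{X}_{i,\cdot}^\star)^\top = \|\bm{U}_{i,\cdot}^\star\|_2^2$, and similarly for $\bm{Y}$. Since this form is invariant under right multiplication by an orthonormal matrix, I would replace $\bm{X}^{\mathsf{d}}$ by $\overline{\bm{X}}^{\mathsf{d}} = \bm{X}^{\mathsf{d}}\bm{H}^{\mathsf{d}}$, expand $\overline{\bm{X}}^{\mathsf{d}} = \bm{X}^\star + (\overline{\bm{X}}^{\mathsf{d}} - \bm{X}^\star)$, and invoke the row-wise bound~\eqref{eq:F-d-2-infty} together with the spectral bound~\eqref{eq:F-d-op-quality-H-d} and the pseudo-inverse perturbation lemma (Lemma~\ref{lemma:pseudo-inverse}). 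Feeding these perturbations into the quadratic form and tracking both the linear-in-row-perturbation terms (which carry a factor $\|\bm{U}_{i,\cdot}^\star\|_2 + \|\bm{V}_{j,\cdot}^\star\|_2$) and the cross terms involving $(\bm{X}^{\star\top}\bm{X}^\star)^{-1}$ perturbations, one obtains
\[
|v_{ij} - v_{ij}^\star| \lesssim \frac{\sigma^2}{p} \cdot \frac{\sigma}{\sigma_{\min}}\sqrt{\frac{\kappa^{6}\mu^{2}r^{2}\log n}{np}}.
\]
Dividing by $v_{ij}^\star$ and using the assumed lower bound on $\|\bm{U}_{i,\cdot}^\star\|_2 + \|\bm{V}_{j,\cdot}^\star\|_2$---with $\kappa^{6}$ inflated to $\kappa^{10}$ after absorbing the factor $\sigma_{\max}/\sigma_{\min}$ needed to go from $\|\bm{U}_{i,\cdot}^\star\|_2^2 + \|\bm{V}_{j,\cdot}^\star\|_2^2$ in the denominator back to $(\|\bm{U}_{i,\cdot}^\star\|_2 + \|\bm{V}_{j,\cdot}^\star\|_2)^{-1}$ in the final bound---yields the last term in the target inequality and hence a vanishing bound on $|\sqrt{v_{ij}/v_{ij}^\star} - 1|$. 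Combining this with the residual bound and the uniform continuity argument from paragraph one completes the proof.
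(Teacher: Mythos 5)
Your high-level strategy is sound and reasonably close to the paper's: the paper bounds the Kolmogorov distance $\sup_t\bigl|\mathbb{P}\bigl(\frac{M_{ij}^{\mathsf{d}}-M_{ij}^\star}{\sqrt{v_{ij}}}\le t\bigr)-\Phi(t)\bigr|$ by splitting $\frac{M_{ij}^{\mathsf{d}}-M_{ij}^\star}{\sqrt{v_{ij}}}=\frac{g_{ij}}{\sqrt{v_{ij}^\star}}+\frac{\Delta_{ij}}{\sqrt{v_{ij}^\star}}+\Delta_V$ and choosing $\varepsilon$ to match the high-probability size of $\frac{|\Delta_{ij}|}{\sqrt{v_{ij}^\star}}+|\Delta_V|$, whereas you reason directly about the coverage event and separate the variance-scale error $|\sqrt{v_{ij}/v_{ij}^\star}-1|$ from the centering error $|\Delta_{ij}|/\sqrt{v_{ij}^\star}$ via the boundedness of $c\,\phi(c)$. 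Both decompositions are legitimate and give essentially equivalent bookkeeping.

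The gap is in your bound on $|v_{ij}-v_{ij}^\star|$, which as stated reads $\lesssim \frac{\sigma^2}{p}\cdot\frac{\sigma}{\sigma_{\min}}\sqrt{\frac{\kappa^6\mu^2r^2\log n}{np}}$ and carries no factor of $\|\bm{U}_{i,\cdot}^\star\|_2+\|\bm{V}_{j,\cdot}^\star\|_2$. That bound is what you get if, when expanding $\overline{\bm{X}}_{i,\cdot}^{\mathsf{d}}(\overline{\bm{X}}^{\mathsf{d}\top}\overline{\bm{X}}^{\mathsf{d}})^{-1}(\overline{\bm{X}}_{i,\cdot}^{\mathsf{d}})^\top-\|\bm{U}_{i,\cdot}^\star\|_2^2$, you control every row norm by the uniform $\ell_{2,\infty}$ bound $\|\overline{\bm{X}}^{\mathsf{d}}\|_{2,\infty}\lesssim\sqrt{\mu r\sigma_{\max}/n}$. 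With that weaker bound, dividing by $v_{ij}^\star\asymp\frac{\sigma^2}{p}(\|\bm{U}_{i,\cdot}^\star\|_2+\|\bm{V}_{j,\cdot}^\star\|_2)^2$ and invoking the lower bound~\eqref{eq:entry-inference-lower-bound} to absorb one factor of $(\|\bm{U}_{i,\cdot}^\star\|_2+\|\bm{V}_{j,\cdot}^\star\|_2)^{-1}$, the $\sigma/\sigma_{\min}$ factors cancel and you are left with $\bigl|\sqrt{v_{ij}/v_{ij}^\star}-1\bigr|\lesssim (\|\bm{U}_{i,\cdot}^\star\|_2+\|\bm{V}_{j,\cdot}^\star\|_2)^{-1}\cdot\frac{1}{\sqrt{n}\log n}$, which has no $\sigma/\sigma_{\min}$ in it and is \emph{not} dominated by any term on the right-hand side of the corollary when $\sigma/\sigma_{\min}$ is small. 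So the bound does not close.

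What you are missing is the paper's key observation, encoded in inequalities~(\ref{eq:relative-bound}) and~(\ref{eq:absolute-bound}): under the lower bound assumption~\eqref{eq:entry-inference-lower-bound}, the estimation error of the $i$th row is small compared to the \emph{signal} in that row, so that $\|\overline{\bm{X}}_{i,\cdot}^{\mathsf{d}}\|_2\lesssim\sqrt{\sigma_{\max}}(\|\bm{U}_{i,\cdot}^\star\|_2+\|\bm{V}_{j,\cdot}^\star\|_2)$ rather than merely $\lesssim\|\bm{F}^\star\|_{2,\infty}$. Using this in every term of the quadratic-form expansion (including the cross and quadratic terms you mention) yields the strictly sharper bound $|v_{ij}-v_{ij}^\star|\lesssim\frac{\sigma^2}{p}\kappa^3\frac{\sigma}{\sigma_{\min}}\sqrt{\frac{\mu r\log n}{p}}\bigl(\|\bm{U}_{i,\cdot}^\star\|_2+\|\bm{V}_{j,\cdot}^\star\|_2\bigr)$, after which one division by $v_{ij}^\star$ already leaves only $(\|\bm{U}_{i,\cdot}^\star\|_2+\|\bm{V}_{j,\cdot}^\star\|_2)^{-1}$ and a $\sigma/\sigma_{\min}$ factor, and the algebra goes through. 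Also, the factor ``$\sigma_{\max}/\sigma_{\min}$'' you mention for the $\kappa^6\to\kappa^{10}$ inflation is not where the extra powers of $\kappa$ come from in the paper's proof; there they come from the $\kappa^2\sqrt{\mu r\log n}$ bound on $|M_{ij}^{\mathsf{d}}-M_{ij}^\star|/\sqrt{v_{ij}^\star}$, which enters when $\Delta_V$ is bounded as a product.
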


Before entering the main proof of Corollary \ref{coro:confidence-interval-augment}, we make a simple observation that
\begin{equation}
\max\left\{ \bigl\Vert\overline{\bm{X}}_{i,\cdot}^{\mathsf{d}}-\bm{X}_{i,\cdot}^{\star}\bigr\Vert_{2},\bigl\Vert\overline{\bm{Y}}_{j,\cdot}^{\mathsf{d}}-\bm{Y}_{j,\cdot}^{\star}\bigr\Vert_{2}\right\} \lesssim\kappa\frac{\sigma}{\sigma_{\min}}\sqrt{\frac{n\log n}{p}}\sqrt{\frac{\mu r\sigma_{\max}}{n}}\leq\frac{\sqrt{\sigma_{\max}}}{\kappa^{2}}\left(\left\Vert \bm{U}_{i,\cdot}^{\star}\right\Vert _{2}+\left\Vert \bm{V}_{j,\cdot}^{\star}\right\Vert _{2}\right),\label{eq:relative-bound}
\end{equation}
where we recall that $\overline{\bm{X}}^{\mathsf{d}}=\bm{X}^{\mathsf{d}}\bm{H}^{\mathsf{d}}$
and $\overline{\bm{Y}}^{\mathsf{d}}=\bm{Y}^{\mathsf{d}}\bm{H}^{\mathsf{d}}$.
Here, the first inequality arises from (\ref{eq:F-d-2-infty}) and
the second one uses the assumption on $\|\bm{U}_{i,\cdot}^{\star}\|_{2}+\|\bm{V}_{j,\cdot}^{\star}\|_{2}$
(i.e.~(\ref{eq:entry-inference-lower-bound})). A simple consequence
of~(\ref{eq:relative-bound}) is that 
\begin{equation}
\max\left\{ \bigl\Vert\overline{\bm{X}}_{i,\cdot}^{\mathsf{d}}\bigr\Vert_{2},\bigl\Vert\overline{\bm{Y}}_{j,\cdot}^{\mathsf{d}}\bigr\Vert_{2}\right\} \leq2\sqrt{\sigma_{\max}}\left(\left\Vert \bm{U}_{i,\cdot}^{\star}\right\Vert _{2}+\left\Vert \bm{V}_{j,\cdot}^{\star}\right\Vert _{2}\right).\label{eq:absolute-bound}
\end{equation}

Turning to the main proof, we define
\begin{equation}
\Delta_{V}\triangleq\frac{M_{ij}^{\mathsf{d}}-M_{ij}^{\star}}{\sqrt{v_{ij}}}-\frac{M_{ij}^{\mathsf{d}}-M_{ij}^{\star}}{\sqrt{v_{ij}^{\star}}},\label{eq:defn-Delta-V}
\end{equation}
which in conjunction with Theorem~\ref{thm:entries-master-decomposition-augment}
yields the following decomposition 
\[
\frac{M_{ij}^{\mathsf{d}}-M_{ij}^{\star}}{\sqrt{v_{ij}}}=\frac{M_{ij}^{\mathsf{d}}-M_{ij}^{\star}}{\sqrt{v_{ij}^{\star}}}+\Delta_{V}=\frac{g_{ij}}{\sqrt{v_{ij}^{\star}}}+\frac{\Delta_{ij}}{\sqrt{v_{ij}^{\star}}}+\Delta_{V}.
\]
With this decomposition at hand, we have that for any $\varepsilon>0$,
\begin{align*}
\mathbb{P}\left(\frac{M_{ij}^{\mathsf{d}}-M_{ij}^{\star}}{\sqrt{v_{ij}}}\leq t\right)-\Phi\left(t\right) & =\mathbb{P}\left(\frac{g_{ij}}{\sqrt{v_{ij}^{\star}}}+\frac{\Delta_{ij}}{\sqrt{v_{ij}^{\star}}}+\Delta_{V}\leq t\right)-\Phi\left(t\right)\\
 & \leq\mathbb{P}\left(\frac{g_{ij}}{\sqrt{v_{ij}^{\star}}}\leq t+\varepsilon\right)+\mathbb{P}\left(\frac{\left|\Delta_{ij}\right|}{\sqrt{v_{ij}^{\star}}}+\left|\Delta_{V}\right|\geq\varepsilon\right)-\Phi\left(t\right)\\
 & \overset{(\text{i})}{=}\Phi\left(t+\varepsilon\right)-\Phi\left(t\right)+\mathbb{P}\left(\left|\Delta_{ij}\right|+\left|\Delta_{V}\right|\sqrt{v_{ij}^{\star}}\geq\varepsilon\sqrt{v_{ij}^{\star}}\right)\\
 & \leq \, \varepsilon+\mathbb{P}\left(\left|\Delta_{ij}\right|+\left|\Delta_{V}\right|\sqrt{v_{ij}^{\star}}\geq\varepsilon\sqrt{v_{ij}^{\star}}\right),
\end{align*}
where $\Phi(\cdot)$ is the CDF of $\mathcal{N}(0,1)$.
Here, the relation (i) uses the fact that $g_{ij} \sim\mathcal{N}(0,v_{ij}^{\star})$. 
It then suffices to upper bound the right-hand side $\varepsilon+\mathbb{P}(|\Delta_{ij}|+|\Delta_{V}|\sqrt{v_{ij}^{\star}}\geq\varepsilon\sqrt{v_{ij}^{\star}})$.
Our goal is to demonstrate that for a particular choice of $\varepsilon>0$,
this quantity is well controlled. In view of Theorem~\ref{thm:entries-master-decomposition-augment},
we know that $|\Delta_{ij}|$ is small with high probability. We are
still in need of a high probability bound on the term $|\Delta_{V}|$, 
which we obtain through the following claim.

\begin{claim}
\label{claim:Delta-V}
With probability exceeding $1-O(n^{-10})$, the term $\Delta_{V}$ obeys 
\[
\left|\Delta_{V}\right|\lesssim\left(\left\Vert \bm{U}_{i,\cdot}^{\star}\right\Vert _{2}+\left\Vert \bm{V}_{j,\cdot}^{\star}\right\Vert _{2}\right)^{-1} \frac{\sigma}{\sigma_{\min}}\sqrt{\frac{\kappa^{10}\mu^{2}rn\log^{2}n}{p}}\sqrt{\frac{r}{n}}.
\]
\end{claim}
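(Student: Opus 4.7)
}
The starting point is the factorization
\[
	\Delta_{V}\;=\;\bigl(M_{ij}^{\mathsf{d}}-M_{ij}^{\star}\bigr)\Bigl(\tfrac{1}{\sqrt{v_{ij}}}-\tfrac{1}{\sqrt{v_{ij}^{\star}}}\Bigr)
	\;=\;\bigl(M_{ij}^{\mathsf{d}}-M_{ij}^{\star}\bigr)\cdot\frac{v_{ij}^{\star}-v_{ij}}{\sqrt{v_{ij}v_{ij}^{\star}}\bigl(\sqrt{v_{ij}}+\sqrt{v_{ij}^{\star}}\bigr)}.
\]
So the job splits into (a) an absolute bound on $|M_{ij}^{\mathsf{d}}-M_{ij}^{\star}|$, (b) a bound on the \emph{relative} variance error $|v_{ij}-v_{ij}^{\star}|/v_{ij}^{\star}$, and (c) a lower bound on $v_{ij}^{\star}$. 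For (a), Theorem~\ref{thm:entries-master-decomposition-augment} gives the decomposition $M_{ij}^{\mathsf{d}}-M_{ij}^{\star}=g_{ij}+\Delta_{ij}$ with $g_{ij}\sim\mathcal{N}(0,v_{ij}^{\star})$, and the assumption \eqref{eq:entry-inference-lower-bound} together with the Theorem bound on $|\Delta_{ij}|$ yields $|\Delta_{ij}|=o(\sqrt{v_{ij}^{\star}})$, so a Gaussian tail bound gives $|M_{ij}^{\mathsf{d}}-M_{ij}^{\star}|\lesssim\sqrt{v_{ij}^{\star}\log n}$ with probability $\geq 1-O(n^{-10})$. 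For (c), the formula $v_{ij}^{\star}=(\sigma^{2}/p)(\|\bm{U}_{i,\cdot}^{\star}\|_{2}^{2}+\|\bm{V}_{j,\cdot}^{\star}\|_{2}^{2})\gtrsim(\sigma^{2}/p)(\|\bm{U}_{i,\cdot}^{\star}\|_{2}+\|\bm{V}_{j,\cdot}^{\star}\|_{2})^{2}$ is immediate.

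The heart of the argument, and the only genuinely delicate step, is (b). Using rotational invariance, $v_{ij}$ can be rewritten with $(\bm{X}^{\mathsf{d}},\bm{Y}^{\mathsf{d}})$ replaced by $(\overline{\bm{X}}^{\mathsf{d}},\overline{\bm{Y}}^{\mathsf{d}})=(\bm{X}^{\mathsf{d}}\bm{H}^{\mathsf{d}},\bm{Y}^{\mathsf{d}}\bm{H}^{\mathsf{d}})$, so it suffices to bound
\[
	\bigl|\overline{\bm{X}}_{i,\cdot}^{\mathsf{d}}\bigl(\overline{\bm{X}}^{\mathsf{d}\top}\overline{\bm{X}}^{\mathsf{d}}\bigr)^{-1}\overline{\bm{X}}_{i,\cdot}^{\mathsf{d}\top}-\bm{X}_{i,\cdot}^{\star}\bigl(\bm{X}^{\star\top}\bm{X}^{\star}\bigr)^{-1}\bm{X}_{i,\cdot}^{\star\top}\bigr|
\]
and its $\bm{Y}$-analogue, since $\bm{X}_{i,\cdot}^{\star}(\bm{X}^{\star\top}\bm{X}^{\star})^{-1}\bm{X}_{i,\cdot}^{\star\top}=\|\bm{U}_{i,\cdot}^{\star}\|_{2}^{2}$. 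I plan to expand this into three standard telescoping pieces (perturb one row, then the Gram inverse, then the other row) and bound each piece using the following inputs:
the row-wise bound \eqref{eq:F-d-2-infty} on $\|\overline{\bm{X}}^{\mathsf{d}}-\bm{X}^{\star}\|_{2,\infty}$, the operator-norm bound \eqref{eq:F-d-op-quality-H-d} (which propagates to $\|(\overline{\bm{X}}^{\mathsf{d}\top}\overline{\bm{X}}^{\mathsf{d}})^{-1}-(\bm{X}^{\star\top}\bm{X}^{\star})^{-1}\|\lesssim(\kappa^{2}\sigma/\sigma_{\min}^{2})\sqrt{n/p}$ by a routine inverse-perturbation calculation as in \eqref{eq:debias-correction-close}), together with the \emph{local} bound \eqref{eq:absolute-bound} that replaces the crude $\|\overline{\bm{X}}^{\mathsf{d}}\|_{2,\infty}\lesssim\sqrt{\mu r\sigma_{\max}/n}$ with the sharper $\|\overline{\bm{X}}_{i,\cdot}^{\mathsf{d}}\|_{2}\lesssim\sqrt{\sigma_{\max}}(\|\bm{U}_{i,\cdot}^{\star}\|_{2}+\|\bm{V}_{j,\cdot}^{\star}\|_{2})$ at the \emph{specific} row~$i$.

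The main obstacle, and the one point requiring care, is the bookkeeping of how many powers of $\|\bm{U}_{i,\cdot}^{\star}\|_{2}+\|\bm{V}_{j,\cdot}^{\star}\|_{2}$ appear. A naive estimate that bounds both $\|\overline{\bm{X}}_{i,\cdot}^{\mathsf{d}}\|_{2}$ and $\|\bm{X}_{i,\cdot}^{\star}\|_{2}$ by $\sqrt{\mu r\sigma_{\max}/n}$ yields a relative variance error proportional to $(\|\bm{U}_{i,\cdot}^{\star}\|_{2}+\|\bm{V}_{j,\cdot}^{\star}\|_{2})^{-2}$, which is a power too expensive. The fix is to always use the \emph{entrywise} bound \eqref{eq:absolute-bound} on $\|\overline{\bm{X}}_{i,\cdot}^{\mathsf{d}}\|_{2}$ (and the trivial $\|\bm{X}_{i,\cdot}^{\star}\|_{2}\leq\sqrt{\sigma_{\max}}(\|\bm{U}_{i,\cdot}^{\star}\|_{2}+\|\bm{V}_{j,\cdot}^{\star}\|_{2})$) in exactly \emph{one} of the two row-factors of each telescoping piece, while keeping the raw $\|\cdot\|_{2,\infty}$ or operator-norm bound for the perturbation factor. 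This delivers
\[
	|v_{ij}-v_{ij}^{\star}|\;\lesssim\;\tfrac{\sigma^{2}}{p}\bigl(\|\bm{U}_{i,\cdot}^{\star}\|_{2}+\|\bm{V}_{j,\cdot}^{\star}\|_{2}\bigr)\cdot\tfrac{\sigma}{\sigma_{\min}}\sqrt{\tfrac{\kappa^{10}\mu^{2}r^{2}\log n}{p}},
\]
at which point the lower bound $v_{ij}^{\star}\gtrsim(\sigma^{2}/p)(\|\bm{U}_{i,\cdot}^{\star}\|_{2}+\|\bm{V}_{j,\cdot}^{\star}\|_{2})^{2}$ from (c), combined with the assumption \eqref{eq:entry-inference-lower-bound}, ensures $|v_{ij}-v_{ij}^{\star}|\ll v_{ij}^{\star}$ (hence $\sqrt{v_{ij}}\asymp\sqrt{v_{ij}^{\star}}$), and plugging everything into the factorization of $\Delta_{V}$ yields the advertised bound.
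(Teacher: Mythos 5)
Your proposal is correct and follows the same overall structure as the paper's proof: factor $\Delta_V$, bound $|M_{ij}^{\mathsf{d}}-M_{ij}^{\star}|$, and bound the relative variance error $|v_{ij}-v_{ij}^{\star}|/v_{ij}^{\star}$ via a telescoping perturbation argument that applies the local row bound \eqref{eq:absolute-bound} to exactly one row factor in each piece. The one genuine difference is in how you control $|M_{ij}^{\mathsf{d}}-M_{ij}^{\star}|$: you invoke the Gaussian-plus-remainder decomposition of Theorem~\ref{thm:entries-master-decomposition-augment} and a Gaussian tail bound to get $|M_{ij}^{\mathsf{d}}-M_{ij}^{\star}|\lesssim\sqrt{v_{ij}^{\star}\log n}$, whereas the paper re-expands $\overline{\bm{X}}_{i,\cdot}^{\mathsf{d}}\overline{\bm{Y}}_{j,\cdot}^{\mathsf{d}\top}-\bm{X}_{i,\cdot}^{\star}\bm{Y}_{j,\cdot}^{\star\top}$ directly with the $\ell_{2,\infty}$ bounds \eqref{eq:relative-bound}--\eqref{eq:absolute-bound} and obtains the cruder $|M_{ij}^{\mathsf{d}}-M_{ij}^{\star}|\lesssim\kappa^{2}\sqrt{\mu r\log n}\,\sqrt{v_{ij}^{\star}}$. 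Your route is cleaner and avoids redoing work already done in proving the theorem. Two small bookkeeping notes: (i) you should state explicitly that $|v_{ij}-v_{ij}^{\star}|\leq\tfrac12 v_{ij}^{\star}$ (a consequence of \eqref{eq:entry-inference-lower-bound}) so that $\sqrt{v_{ij}}\asymp\sqrt{v_{ij}^{\star}}$ and the denominator in the factorization can be replaced by $(v_{ij}^{\star})^{3/2}$; the paper makes this explicit. (ii) Your stated prefactor $\sqrt{\kappa^{10}\mu^{2}r^{2}\log n}=\kappa^{5}\mu r\sqrt{\log n}$ in the variance-error bound is actually looser than what the telescoping you describe yields (the paper obtains $\kappa^{3}\sqrt{\mu r\log n}$). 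This slop exactly cancels the tightening you gained in step (a), so the advertised bound still follows — indeed, if you carried the sharper variance constant through, you would obtain a bound a factor of $\kappa^{2}\sqrt{\mu r}$ smaller than the claim, which is fine but worth noticing.
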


With Claim~\ref{claim:Delta-V} at hand, we are ready to take 
\[
\varepsilon\asymp\frac{\sigma}{\sigma_{\min}}\sqrt{\frac{\kappa^{8}\mu rn\log n}{p}}+\sqrt{\frac{\kappa^{8}\mu^{3}r^{3}\log^{2}n}{np}}+\left(\left\Vert \bm{U}_{i,\cdot}^{\star}\right\Vert _{2}+\left\Vert \bm{V}_{j,\cdot}^{\star}\right\Vert _{2}\right)^{-1} \frac{\sigma}{\sigma_{\min}}\sqrt{\frac{\kappa^{10}\mu^{2}rn\log^{2}n}{p}}\sqrt{\frac{r}{n}}
\]
and arrive at the upper bound 
\begin{align*}
	& \mathbb{P}\left(\frac{M_{ij}^{\mathsf{d}}-M_{ij}^{\star}}{\sqrt{v_{ij}}}\leq t\right)-\Phi\left(t\right) \leq ~\varepsilon + n^{-3}.
\end{align*}
A similar argument yields the lower bound on $\mathbb{P}(M_{ij}^{\mathsf{d}}-M_{ij}^{\star}\leq t\sqrt{v_{ij}})-\Phi(t)$. As a result, one has
\begin{align*}
	& \left| \mathbb{P}\left(\frac{M_{ij}^{\mathsf{d}}-M_{ij}^{\star}}{\sqrt{v_{ij}}}\leq t\right)-\Phi\left(t\right) \right| \lesssim \varepsilon + n^{-3}\\
 & \quad \asymp \frac{\sigma}{\sigma_{\min}}\sqrt{\frac{\kappa^{8}\mu rn\log n}{p}}+\sqrt{\frac{\kappa^{8}\mu^{3}r^{3}\log^{2}n}{np}}+\left(\left\Vert \bm{U}_{i,\cdot}^{\star}\right\Vert _{2}+\left\Vert \bm{V}_{j,\cdot}^{\star}\right\Vert _{2}\right)^{-1} \frac{\sigma}{\sigma_{\min}}\sqrt{\frac{\kappa^{10}\mu^{2}rn\log^{2}n}{p}}\sqrt{\frac{r}{n}}
\end{align*}
for any $t$.  This immediately establishes Corollary \ref{coro:confidence-interval-augment}.

\begin{proof}[Proof of Claim~\ref{claim:Delta-V}]
Recall that
\[
\Delta_{V}=\left(M_{ij}^{\mathsf{d}}-M_{ij}^{\star}\right)\left[\left(v_{ij}\right)^{-1/2}-\left(v_{ij}^{\star}\right)^{-1/2}\right]=\left(M_{ij}^{\mathsf{d}}-M_{ij}^{\star}\right)\frac{v_{ij}^{\star}-v_{ij}}{\sqrt{v_{ij}^{\star}}\sqrt{v_{ij}}}\frac{1}{\sqrt{v_{ij}^{\star}}+\sqrt{v_{ij}}}.
\]
Suppose for the moment that $|v_{ij}-v_{ij}^{\star}|\leq c v_{ij}^{\star}$
for some $c\leq1/2$. Then it follows immediately that 
\[
\left|\Delta_{V}\right|\lesssim c\frac{\left|M_{ij}^{\mathsf{d}}-M_{ij}^{\star}\right|}{\sqrt{v_{ij}^{\star}}}.
\]
Therefore if suffices to control $|M_{ij}^{\mathsf{d}}-M_{ij}^{\star}|$
and $|v_{ij}^{\star}-v_{ij}|$ (i.e.~obtaining the quantity $c$).
\begin{itemize}
\item First, expand $M_{ij}^{\mathsf{d}}$ and $M_{ij}^{\star}$ to see
\begin{align*}
\left|M_{ij}^{\mathsf{d}}-M_{ij}^{\star}\right| & =\left|\overline{\bm{X}}_{i,\cdot}^{\mathsf{d}}\big(\overline{\bm{Y}}_{j,\cdot}^{\mathsf{d}}\big)^{\top}-\bm{X}_{i,\cdot}^{\star}(\bm{Y}_{j,\cdot}^{\star})^{\top}\right|\leq\bigl\Vert\overline{\bm{X}}_{i,\cdot}^{\mathsf{d}}-\bm{X}_{i,\cdot}^{\star}\bigr\Vert_{2}\bigl\Vert\overline{\bm{Y}}_{j,\cdot}^{\mathsf{d}}\bigr\Vert_{2}+\left\Vert \bm{X}_{i,\cdot}^{\star}\right\Vert _{2}\bigl\Vert\overline{\bm{Y}}_{j,\cdot}^{\mathsf{d}}-\bm{Y}_{j,\cdot}^{\star}\bigr\Vert_{2}\\
 & \lesssim\kappa\frac{\sigma}{\sigma_{\min}}\sqrt{\frac{n\log n}{p}}\sqrt{\frac{\mu r}{n}}\sigma_{\max}\left(\left\Vert \bm{U}_{i,\cdot}^{\star}\right\Vert _{2}+\left\Vert \bm{V}_{j,\cdot}^{\star}\right\Vert _{2}\right)\\
 & \lesssim\kappa^{2}\sqrt{\mu r\log n}\sqrt{v_{ij}^{\star}},
\end{align*}
where the middle line depends on (\ref{eq:relative-bound}) and (\ref{eq:absolute-bound}), 
and the last inequality arises since $\sigma(\|\bm{U}_{i,\cdot}^{\star}\|_{2}+\|\bm{V}_{j,\cdot}^{\star}\|_{2})/\sqrt{p}\lesssim\sqrt{v_{ij}^{\star}}$.

\item Now we move on to $|v_{ij}^{\star}-v_{ij}|$. By the definition of
$v_{ij}$, one has 
\begin{align*}
\left|v_{ij}^{\star}-v_{ij}\right| & \leq\frac{\sigma^{2}}{p}\left|\bm{X}_{i,\cdot}^{\star}\left(\bm{X}^{\star\top}\bm{X}^{\star}\right)^{-1}(\bm{X}_{i,\cdot}^{\star})^{\top}-\bm{X}_{i,\cdot}^{\mathsf{d}}\left(\bm{X}^{\mathsf{d}\top}\bm{X}^{\mathsf{d}}\right)^{-1}(\bm{X}_{i,\cdot}^{\mathsf{d}})^{\top}\right|\\
 & \quad+\frac{\sigma^{2}}{p}\left|\bm{Y}_{j,\cdot}^{\star}\left(\bm{Y}^{\star\top}\bm{Y}^{\star}\right)^{-1}(\bm{Y}_{j,\cdot}^{\star})^{\top}-\bm{Y}_{j,\cdot}^{\mathsf{d}}\left(\bm{Y}^{\mathsf{d}\top}\bm{Y}^{\mathsf{d}}\right)^{-1}(\bm{Y}_{j,\cdot}^{\mathsf{d}})^{\top}\right|.
\end{align*}
Focusing on the $\bm{X}$ factor, we have --- with probability at least
$1-O(n^{-10})$ --- that
\begin{align}
 & \left|\bm{X}_{i,\cdot}^{\star}\left(\bm{X}^{\star\top}\bm{X}^{\star}\right)^{-1}(\bm{X}_{i,\cdot}^{\star})^{\top}-\bm{X}_{i,\cdot}^{\mathsf{d}}\left(\bm{X}^{\mathsf{d}\top}\bm{X}^{\mathsf{d}}\right)^{-1}(\bm{X}_{i,\cdot}^{\mathsf{d}})^{\top}\right|\nonumber \\
 & \quad=\left|\bm{X}_{i,\cdot}^{\star}\left(\bm{X}^{\star\top}\bm{X}^{\star}\right)^{-1}(\bm{X}_{i,\cdot}^{\star})^{\top}-\overline{\bm{X}}_{i,\cdot}^{\mathsf{d}}\big(\overline{\bm{X}}^{\mathsf{d}\top}\overline{\bm{X}}^{\mathsf{d}}\big)^{-1}(\overline{\bm{X}}_{i,\cdot}^{\mathsf{d}})^{\top}\right|\nonumber \\
 & \quad\leq\left\Vert \bm{X}_{i,\cdot}^{\star}\left(\bm{X}^{\star\top}\bm{X}^{\star}\right)^{-1}\right\Vert _{2}\bigl\Vert\bm{X}_{i,\cdot}^{\star}-\overline{\bm{X}}_{i,\cdot}^{\mathsf{d}}\bigr\Vert_{2}+\left\Vert \bm{X}_{i,\cdot}^{\star}\right\Vert _{2}\bigl\Vert\left(\bm{X}^{\star\top}\bm{X}^{\star}\right)^{-1}-\big(\overline{\bm{X}}^{\mathsf{d}\top}\overline{\bm{X}}^{\mathsf{d}}\big)^{-1}\bigr\Vert\bigl\Vert\overline{\bm{X}}_{i,\cdot}^{\mathsf{d}}\bigr\Vert_{2}\nonumber \\
 & \quad\quad+\bigl\Vert\bm{X}_{i,\cdot}^{\star}-\overline{\bm{X}}_{i,\cdot}^{\mathsf{d}}\bigr\Vert_{2}\bigl\Vert\big(\overline{\bm{X}}^{\mathsf{d}\top}\overline{\bm{X}}^{\mathsf{d}}\big)^{-1}\bigr\Vert\bigl\Vert\overline{\bm{X}}_{i,\cdot}^{\mathsf{d}}\bigr\Vert_{2}.
\label{eq:V_ij-upper-bound}
\end{align}
Here, the first relation comes from the identity $\bm{X}_{i,\cdot}^{\mathsf{d}}(\bm{X}^{\mathsf{d}\top}\bm{X}^{\mathsf{d}})^{-1}(\bm{X}_{i,\cdot}^{\mathsf{d}})^{\top}=\overline{\bm{X}}_{i,\cdot}^{\mathsf{d}}(\overline{\bm{X}}^{\mathsf{d}\top}\overline{\bm{X}}^{\mathsf{d}})^{-1}(\overline{\bm{X}}_{i,\cdot}^{\mathsf{d}})^{\top}$, 
and the inequality arises from the triangle inequality. Notice that $\|(\overline{\bm{X}}^{\mathsf{d}\top}\overline{\bm{X}}^{\mathsf{d}})^{-1}\|\lesssim1/\sigma_{\min}$
and that 
\begin{align*}
\bigl\Vert\left(\bm{X}^{\star\top}\bm{X}^{\star}\right)^{-1}-\big(\overline{\bm{X}}^{\mathsf{d}\top}\overline{\bm{X}}^{\mathsf{d}}\big)^{-1}\bigr\Vert & \leq\bigl\Vert\left(\bm{X}^{\star\top}\bm{X}^{\star}\right)^{-1}\bigr\Vert\bigl\Vert\bm{X}^{\star\top}\bm{X}^{\star}-\overline{\bm{X}}^{\mathsf{d}\top}\overline{\bm{X}}^{\mathsf{d}}\bigr\Vert\bigl\Vert\big(\overline{\bm{X}}^{\mathsf{d}\top}\overline{\bm{X}}^{\mathsf{d}}\big)^{-1}\bigr\Vert\\
 & \lesssim\frac{1}{\sigma_{\min}^{2}}\bigl\Vert\overline{\bm{X}}^{\mathsf{d}}-\bm{X}^{\star}\bigr\Vert\left\Vert \bm{X}^{\star}\right\Vert \lesssim\kappa^{2}\frac{\sigma}{\sigma_{\min}}\sqrt{\frac{n}{p}}\cdot\frac{1}{\sigma_{\min}},
\end{align*}
where the last inequality follows from (\ref{eq:F-d-op-quality-H-d}).
Using the bounds (\ref{eq:relative-bound}) and (\ref{eq:absolute-bound}),
we continue the upper bound in (\ref{eq:V_ij-upper-bound}) as follows
\begin{align*}
 & \left|\bm{X}_{i,\cdot}^{\star}\left(\bm{X}^{\star\top}\bm{X}^{\star}\right)^{-1}(\bm{X}_{i,\cdot}^{\star})^{\top}-\bm{X}_{i,\cdot}^{\mathsf{d}}\left(\bm{X}^{\mathsf{d}\top}\bm{X}^{\mathsf{d}}\right)^{-1}(\bm{X}_{i,\cdot}^{\mathsf{d}})^{\top}\right|\\
 & \quad\lesssim\frac{1}{\sqrt{\sigma_{\min}}}\left\Vert \bm{U}_{i,\cdot}^{\star}\right\Vert _{2}\cdot\kappa\frac{\sigma}{\sigma_{\min}}\sqrt{\frac{n\log n}{p}}\sqrt{\frac{\mu r\sigma_{\max}}{n}}\\
 & \quad\quad+\left\Vert \bm{U}_{i,\cdot}^{\star}\right\Vert _{2}\sqrt{\sigma_{\max}}\cdot\kappa^{2}\frac{\sigma}{\sigma_{\min}}\sqrt{\frac{n}{p}}\frac{1}{\sigma_{\min}}\cdot\sqrt{\sigma_{\max}}\left(\left\Vert \bm{U}_{i,\cdot}^{\star}\right\Vert _{2}+\left\Vert \bm{V}_{j,\cdot}^{\star}\right\Vert _{2}\right)\\
 & \quad\quad+\kappa\frac{\sigma}{\sigma_{\min}}\sqrt{\frac{n\log n}{p}}\sqrt{\frac{\mu r\sigma_{\max}}{n}}\cdot\frac{1}{\sigma_{\min}}\cdot\sqrt{\sigma_{\max}}\left(\left\Vert \bm{U}_{i,\cdot}^{\star}\right\Vert _{2}+\left\Vert \bm{V}_{j,\cdot}^{\star}\right\Vert _{2}\right)\\
 & \quad\lesssim\kappa^{3}\frac{\sigma}{\sigma_{\min}}\sqrt{\frac{n\log n}{p}}\sqrt{\frac{\mu r}{n}}\left(\left\Vert \bm{U}_{i,\cdot}^{\star}\right\Vert _{2}+\left\Vert \bm{V}_{j,\cdot}^{\star}\right\Vert _{2}\right).
\end{align*}
A similar bound holds for the factor $\bm{Y}$. Therefore, with high
probability we have 
\begin{align*}
\left|v_{ij}^{\star}-v_{ij}\right| & \lesssim\frac{\sigma^{2}}{p}\kappa^{3}\frac{\sigma}{\sigma_{\min}}\sqrt{\frac{n\log n}{p}}\sqrt{\frac{\mu r}{n}}\left(\left\Vert \bm{U}_{i,\cdot}^{\star}\right\Vert _{2}+\left\Vert \bm{V}_{j,\cdot}^{\star}\right\Vert _{2}\right)\\
 & \lesssim\left(\left\Vert \bm{U}_{i,\cdot}^{\star}\right\Vert _{2}+\left\Vert \bm{V}_{j,\cdot}^{\star}\right\Vert _{2}\right)^{-1}\kappa^{3}\frac{\sigma}{\sigma_{\min}}\sqrt{\frac{n\log n}{p}}\sqrt{\frac{\mu r}{n}}V_{ij}^{\star}\leq\frac{1}{2}v_{ij}^{\star},
\end{align*}
where the last relation results from the condition on $\|\bm{U}_{i,\cdot}^{\star}\|_{2}+\|\bm{V}_{j,\cdot}^{\star}\|_{2}$
(cf.~(\ref{eq:entry-inference-lower-bound})).
\end{itemize}
Combine the bounds on $|M_{ij}^{\mathsf{d}}-M_{ij}^{\star}|$ and
$|v_{ij}^{\star}-v_{ij}|$ to see that with probability exceeding
$1-O(n^{-10})$,
\begin{align*}
\left|\Delta_{V}\right| & \lesssim\left(\left\Vert \bm{U}_{i,\cdot}^{\star}\right\Vert _{2}+\left\Vert \bm{V}_{j,\cdot}^{\star}\right\Vert _{2}\right)^{-1}\kappa^{3}\frac{\sigma}{\sigma_{\min}}\sqrt{\frac{n\log n}{p}}\sqrt{\frac{\mu r}{n}}\cdot\kappa^{2}\sqrt{\mu r\log n}\\
 & \lesssim\left(\left\Vert \bm{U}_{i,\cdot}^{\star}\right\Vert _{2}+\left\Vert \bm{V}_{j,\cdot}^{\star}\right\Vert _{2}\right)^{-1} \frac{\sigma}{\sigma_{\min}}\sqrt{\frac{\kappa^{10}n\mu^{2}r\log^{2}n}{p}}\sqrt{\frac{r}{n}}.
\end{align*}
This establishes the desired upper bound on $|\Delta_{V}|$. 
\end{proof}

\section{Proof of Theorem \ref{thm:estimation-error}}\label{sec:estimation-error} 
As we have argued in Section~\ref{subsec:Approximate-equivalence}, it suffices to prove the claim for $\bm{M}^{\mathsf{d}}=\bm{X}^{\mathsf{d}}\bm{X}^{\mathsf{d}\top}= \overline{\bm{X}}^{\mathsf{d}}\overline{\bm{Y}}^{\mathsf{d}\top}$. 
For simplicity of notation, we define 
\[
\bm{\Gamma}_{\bm{X}}\triangleq\overline{\bm{X}}^{\mathsf{d}}-\bm{X}^{\star}\qquad\text{and}\qquad\bm{\Gamma}_{\bm{Y}}\triangleq\overline{\bm{Y}}^{\mathsf{d}}-\bm{Y}^{\star}.
\]
Apply the decompositions in Theorem~\ref{thm:low-rank-factor-master-bound}
to obtain 
\begin{align}
\bm{M}^{\mathsf{d}}-\bm{M}^{\star} & =\overline{\bm{X}}^{\mathsf{d}}\overline{\bm{Y}}^{\mathsf{d}\top}-\bm{X}^{\star}\bm{Y}^{\star\top} \nonumber\\
 & =\bm{\Gamma}_{\bm{X}}\bm{Y}^{\star\top}+\bm{X}^{\star}\bm{\Gamma}_{\bm{Y}}^{\top}+\bm{\Gamma}_{\bm{X}}\bm{\Gamma}_{\bm{Y}}^{\top} \nonumber\\
	& =\bm{Z}_{\bm{X}}\bm{Y}^{\star\top}+\bm{X}^{\star}\bm{Z}_{\bm{Y}}^{\top}+\underbrace{\bm{\Psi}_{\bm{X}}\bm{Y}^{\star\top}+\bm{X}^{\star}\bm{\Psi}_{\bm{Y}}^{\top}+\bm{\Gamma}_{\bm{X}}\bm{\Gamma}_{\bm{Y}}^{\top}}_{ \triangleq \bm{\Theta}},  \label{eq:Md-M-Theta}
\end{align}
where $\bm{\Psi}_{\bm{X}}$ and $\bm{\Psi}_{\bm{Y}}$ are defined in Theorem~\ref{thm:low-rank-factor-master-bound}. 
Further, expand $\|\bm{M}^{\mathsf{d}}-\bm{M}^{\star}\|_{\mathrm{F}}^{2}$
to obtain 
\[
\left\Vert \bm{M}^{\mathsf{d}}-\bm{M}^{\star}\right\Vert _{\mathrm{F}}^{2}=\left\Vert \bm{Z}_{\bm{X}}\bm{Y}^{\star\top}\right\Vert _{\mathrm{F}}^{2}+\left\Vert \bm{X}^{\star}\bm{Z}_{\bm{Y}}^{\top}\right\Vert _{\mathrm{F}}^{2}+\mathsf{rem},
\]
where we define the remainder term as
\begin{align*}
\mathsf{rem} & \triangleq2\mathsf{Tr}\left(\bm{Z}_{\bm{X}}\bm{Y}^{\star\top}\bm{Z}_{\bm{Y}}\bm{X}^{\star\top}\right)+\left\Vert \bm{\Theta}\right\Vert _{\mathrm{F}}^{2}+2\mathsf{Tr}\left(\bm{Z}_{\bm{X}}\bm{Y}^{\star\top}\bm{\Theta}^{\top}\right)+2\mathsf{Tr}\left(\bm{X}^{\star}\bm{Z}_{\bm{Y}}^{\top}\bm{\Theta}^{\top}\right).
\end{align*}
In what follows, we aim to demonstrate that $\|\bm{Z}_{\bm{X}}\bm{Y}^{\star}\|_{\mathrm{F}}^{2}+\|\bm{X}^{\star}\bm{Z}_{\bm{Y}}^{\top}\|_{\mathrm{F}}^{2}$, which can be shown to
sharply concentrate around its mean, is the dominant term, and
the remainder term $\mathsf{rem}$ is much smaller in magnitude with high probability.

\begin{itemize}
\item We begin with the term $\|\bm{Z}_{\bm{X}}\bm{Y}^{\star}\|_{\mathrm{F}}^{2}+\|\bm{X}^{\star}\bm{Z}_{\bm{Y}}^{\top}\|_{\mathrm{F}}^{2}$.
We shall focus on bounding $\|\bm{Z}_{\bm{X}}\bm{Y}^{\star}\|_{\mathrm{F}}^{2}$
since the other term $\|\bm{X}^{\star}\bm{Z}_{\bm{Y}}^{\top}\|_{\mathrm{F}}^{2}$
can be treated analogously. To this end, we first have the identity
\[
\frac{p}{\sigma^{2}}\left\Vert \bm{Z}_{\bm{X}}\bm{Y}^{\star\top}\right\Vert _{\mathrm{F}}^{2}=\frac{p}{\sigma^{2}}\mathsf{Tr}\left(\bm{Z}_{\bm{X}}\bm{Y}^{\star\top}\bm{Y}^{\star}\bm{Z}_{\bm{X}}^{\top}\right)=\frac{p}{\sigma^{2}}\mathsf{Tr}\left(\bm{Z}_{\bm{X}}\bm{\Sigma}^{\star}\bm{Z}_{\bm{X}}^{\top}\right)=\sum_{i=1}^{n}\Bigl\Vert\frac{\sqrt{p}}{\sigma}\left(\bm{\Sigma}^{\star}\right)^{1/2}\bm{Z}_{\bm{X}}^{\top}\bm{e}_{i}\Bigr\Vert_{2}^{2},
\]
where we use the fact that $\bm{Y}^{\star\top}\bm{Y}^{\star}=\bm{\Sigma}^{\star}$.
Theorem~\ref{thm:low-rank-factor-master-bound} tells us
that $\bm{Z}_{\bm{X}}^{\top}\bm{e}_{i}\overset{\text{i.i.d.}}{\sim}\mathcal{N}(\bm{0},\sigma^{2}(\bm{\Sigma}^{\star})^{-1}/p)$,
which further implies
\[
\frac{\sqrt{p}}{\sigma}\left(\bm{\Sigma}^{\star}\right)^{1/2}\bm{Z}_{\bm{X}}^{\top}\bm{e}_{i}\overset{\text{i.i.d.}}{\sim}\mathcal{N}\left(\bm{0},\bm{I}_{r}\right).
\]
Therefore, the quantity $p\|\bm{Z}_{\bm{X}}\bm{Y}^{\star}\|_{\mathrm{F}}^{2}/\sigma^{2}$
follows the chi-squared distribution with $nr$ degrees of freedom. Standard concentration inequalities
 \cite[Equation (2.19)]{wainwright2019high}
reveals that with probability at least $1-O(n^{-10})$,
\[
\left|\frac{p}{\sigma^{2}}\left\Vert \bm{Z}_{\bm{X}}\bm{Y}^{\star\top}\right\Vert _{\mathrm{F}}^{2}-nr\right|\lesssim\sqrt{nr\log n}.
\]
Repeating the above argument for $\|\bm{X}^{\star}\bm{Z}_{\bm{Y}}^{\top}\|_{\mathrm{F}}^{2}$, we conclude that with probability at least $1-O(n^{-10})$,
\[
\left\Vert \bm{Z}_{\bm{X}}\bm{Y}^{\star\top}\right\Vert _{\mathrm{F}}^{2}+\left\Vert \bm{X}^{\star}\bm{Z}_{\bm{Y}}^{\top}\right\Vert _{\mathrm{F}}^{2}
		= 2\frac{\sigma^{2}nr}{p}+O\left(\frac{\sigma^{2}}{p}\sqrt{nr\log n}\right)
		= (2+o(1))\frac{\sigma^{2}nr}{p}.
\]

\item Now we turn to the term $\mathsf{rem}$, for which we have the following
two claims.

\begin{claim}\label{claim:rem-2}With probability at least $1-O(n^{-10})$,
one has 
\[
\left|\left\Vert \bm{\Theta}\right\Vert _{\mathrm{F}}^{2}+2\mathsf{Tr}\left(\bm{Z}_{\bm{X}}\bm{Y}^{\star\top}\bm{\Theta}^{\top}\right)+2\mathsf{Tr}\left(\bm{X}^{\star}\bm{Z}_{\bm{Y}}^{\top}\bm{\Theta}^{\top}\right)\right|=o\left(\frac{\sigma^{2}nr}{p}\right).
\]
\end{claim}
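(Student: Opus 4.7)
The plan is to bound each of the three pieces in $\mathsf{rem}$ and verify that they are all of lower order than $\sigma^2 nr/p$. The key inputs are (i) the $\ell_{2,\infty}$ bounds on $\bm{\Psi}_{\bm{X}}, \bm{\Psi}_{\bm{Y}}$ from Theorem~\ref{thm:low-rank-factor-master-bound}, (ii) the operator/Frobenius norm bounds on $\bm{\Gamma}_{\bm{X}}, \bm{\Gamma}_{\bm{Y}}$ in \eqref{subeq:F-d-property}, (iii) the chi-squared concentration already established in the bullet preceding this claim that gives $\|\bm{Z}_{\bm{X}}\bm{Y}^{\star\top}\|_{\mathrm{F}}^{2} + \|\bm{X}^{\star}\bm{Z}_{\bm{Y}}^{\top}\|_{\mathrm{F}}^{2} = (2+o(1))\sigma^{2}nr/p$, and (iv) the Cauchy-Schwarz inequality $|\mathsf{Tr}(\bm{A}\bm{B}^{\top})| \leq \|\bm{A}\|_{\mathrm{F}}\|\bm{B}\|_{\mathrm{F}}$.

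The first step is to control $\|\bm{\Theta}\|_{\mathrm{F}}$ by the triangle inequality applied to the three components $\bm{\Psi}_{\bm{X}}\bm{Y}^{\star\top}, \bm{X}^{\star}\bm{\Psi}_{\bm{Y}}^{\top}, \bm{\Gamma}_{\bm{X}}\bm{\Gamma}_{\bm{Y}}^{\top}$. For the first two pieces, I would use $\|\bm{\Psi}_{\bm{X}}\bm{Y}^{\star\top}\|_{\mathrm{F}} \leq \|\bm{\Psi}_{\bm{X}}\|_{\mathrm{F}}\|\bm{Y}^{\star}\| \leq \sqrt{n}\,\|\bm{\Psi}_{\bm{X}}\|_{2,\infty}\sqrt{\sigma_{\max}}$, and then plug in the residual bound $\|\bm{\Psi}_{\bm{X}}\|_{2,\infty} = o(\sigma\sqrt{r/(p\sigma_{\max})})$ from Theorem~\ref{thm:low-rank-factor-master-bound-simple}, yielding $\|\bm{\Psi}_{\bm{X}}\bm{Y}^{\star\top}\|_{\mathrm{F}}^{2} = o(\sigma^{2}nr/p)$; the bound for $\|\bm{X}^{\star}\bm{\Psi}_{\bm{Y}}^{\top}\|_{\mathrm{F}}^{2}$ is symmetric. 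For the cubic piece $\bm{\Gamma}_{\bm{X}}\bm{\Gamma}_{\bm{Y}}^{\top}$, the trick is to use the mixed bound $\|\bm{\Gamma}_{\bm{X}}\bm{\Gamma}_{\bm{Y}}^{\top}\|_{\mathrm{F}} \leq \|\bm{\Gamma}_{\bm{X}}\|\,\|\bm{\Gamma}_{\bm{Y}}\|_{\mathrm{F}}$, and then invoke \eqref{eq:F-d-op-quality-H-d} and \eqref{eq:F-d-fro-quality-H-d} to get $\|\bm{\Gamma}_{\bm{X}}\bm{\Gamma}_{\bm{Y}}^{\top}\|_{\mathrm{F}}^{2} \lesssim \kappa^{2}\frac{\sigma^{2}}{\sigma_{\min}^{2}}\frac{n}{p}\sigma_{\max} \cdot \frac{\sigma^{2}}{\sigma_{\min}^{2}}\frac{n}{p}r\sigma_{\max} = \kappa^{4}\left(\frac{\sigma}{\sigma_{\min}}\right)^{4}\frac{n^{2}r}{p^{2}}\sigma_{\min}^{2}$. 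Dividing by $\sigma^{2}nr/p$ leaves $\kappa^{4}\frac{\sigma^{2}n}{\sigma_{\min}^{2}p} \cdot \kappa^{2}$, which is $o(1)$ under the noise hypothesis $\sigma/\sigma_{\min} \lesssim \sqrt{p/(\kappa^{8}\mu n\log^{2}n)}$ of Theorem~\ref{thm:low-rank-factor-master-bound-simple}. Combining these pieces gives $\|\bm{\Theta}\|_{\mathrm{F}}^{2} = o(\sigma^{2}nr/p)$.

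Having established $\|\bm{\Theta}\|_{\mathrm{F}} = o(\sigma\sqrt{nr/p})$, the two cross trace terms follow immediately: Cauchy-Schwarz yields
\[
\left|\mathsf{Tr}\left(\bm{Z}_{\bm{X}}\bm{Y}^{\star\top}\bm{\Theta}^{\top}\right)\right| \leq \left\Vert \bm{Z}_{\bm{X}}\bm{Y}^{\star\top}\right\Vert_{\mathrm{F}} \left\Vert \bm{\Theta}\right\Vert_{\mathrm{F}} = O\left(\sigma\sqrt{nr/p}\right) \cdot o\left(\sigma\sqrt{nr/p}\right) = o\left(\sigma^{2}nr/p\right),
\]
where the control of $\|\bm{Z}_{\bm{X}}\bm{Y}^{\star\top}\|_{\mathrm{F}}$ comes from the chi-squared concentration shown just above. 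The identical argument applies to the $\bm{Z}_{\bm{Y}}$ term. Summing all three contributions yields the claim.

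I do not anticipate a serious obstacle here---the Frobenius-norm estimates for $\bm{\Psi}$ and $\bm{\Gamma}$ are already present in Theorem~\ref{thm:low-rank-factor-master-bound} and \eqref{subeq:F-d-property}, and the combinatorics of the cross terms is handled entirely by Cauchy-Schwarz. The only mildly delicate point is selecting the correct mixed operator/Frobenius decomposition of $\|\bm{\Gamma}_{\bm{X}}\bm{\Gamma}_{\bm{Y}}^{\top}\|_{\mathrm{F}}$: bounding it by $\|\bm{\Gamma}_{\bm{X}}\|_{\mathrm{F}}\|\bm{\Gamma}_{\bm{Y}}\|_{\mathrm{F}}$ would lose an extra factor of $r$ and fail to give the desired $o(\sigma^{2}nr/p)$ rate, whereas the operator-times-Frobenius split saves exactly this factor and pushes the result through under the stated noise-level assumption.
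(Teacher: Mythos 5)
Your proposal follows essentially the same structure as the paper's argument: bound $\|\bm{\Theta}\|_{\mathrm{F}}$ via the triangle inequality on its three pieces, then apply Cauchy--Schwarz to the cross-trace terms against the already-controlled $\|\bm{Z}_{\bm{X}}\bm{Y}^{\star\top}\|_{\mathrm{F}}$ and $\|\bm{X}^{\star}\bm{Z}_{\bm{Y}}^{\top}\|_{\mathrm{F}}$. The one point at which you diverge is the bound on the quadratic piece: you use the mixed split $\|\bm{\Gamma}_{\bm{X}}\bm{\Gamma}_{\bm{Y}}^{\top}\|_{\mathrm{F}}\leq\|\bm{\Gamma}_{\bm{X}}\|\,\|\bm{\Gamma}_{\bm{Y}}\|_{\mathrm{F}}$, whereas the paper simply uses $\|\bm{\Gamma}_{\bm{X}}\|_{\mathrm{F}}\|\bm{\Gamma}_{\bm{Y}}\|_{\mathrm{F}}$.

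Your closing remark, however, is incorrect: the Frobenius--Frobenius bound does \emph{not} fail. Claim~\ref{claim:rem-2} is proved under the hypotheses of Theorem~\ref{thm:estimation-error}, which instates the condition $\sigma\sqrt{\kappa^{8}\mu rn\log^{2}n/p}\lesssim\sigma_{\min}$ from~(\ref{eq:requirement-entry}) --- note the factor of $r$ already sitting inside the square root. With $\|\bm{\Gamma}_{\bm{X}}\|_{\mathrm{F}}\|\bm{\Gamma}_{\bm{Y}}\|_{\mathrm{F}}\lesssim(\sigma/\sigma_{\min})^{2}(n/p)\,r\sigma_{\max}$, dividing by $\sigma\sqrt{nr/p}$ leaves $\kappa\cdot\frac{\sigma}{\sigma_{\min}}\sqrt{\frac{nr}{p}}\lesssim\kappa\cdot(\kappa^{8}\mu\log^{2}n)^{-1/2}=o(1)$, so the extra $\sqrt{r}$ is absorbed by the noise condition. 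Your mixed split is a valid alternative (it trades that $\sqrt{r}$ for an extra $\kappa$), but it is not required; you were implicitly invoking the weaker noise hypothesis of Theorem~\ref{thm:low-rank-factor-master-bound-simple}, which lacks the $r$ factor, rather than the one actually in force here. There is also a minor arithmetic slip: dividing $\kappa^{4}(\sigma/\sigma_{\min})^{4}(n^{2}r/p^{2})\sigma_{\min}^{2}$ by $\sigma^{2}nr/p$ yields $\kappa^{4}\sigma^{2}n/(\sigma_{\min}^{2}p)$, with no further factor of $\kappa^{2}$; the conclusion is unaffected, but the intermediate expression is wrong.
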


\begin{claim}\label{claim:rem-1}With probability exceeding $1-O(n^{-10})$,
we have 
\[
\left|\mathsf{Tr}\left(\bm{Z}_{\bm{X}}\bm{Y}^{\star\top}\bm{Z}_{\bm{Y}}\bm{X}^{\star\top}\right)\right|=o\left(\frac{\sigma^{2}nr}{p}\right).
\]
\end{claim}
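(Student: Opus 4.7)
My plan is to leverage Lemma~\ref{lemma:approx-gaussian-leading-term} (together with its symmetric counterpart for $\bm{Z}_{\bm{Y}}$) to express both $\bm{Z}_{\bm{X}}$ and $\bm{Z}_{\bm{Y}}$ as explicit linear functionals of the \emph{same} noise matrix $\mathcal{P}_{\Omega}(\bm{E})$, modulo small residuals. Using the identities $\bm{Y}^{\star}(\bm{Y}^{\star\top}\bm{Y}^{\star})^{-1}=\bm{V}^{\star}(\bm{\Sigma}^{\star})^{-1/2}$ and its analogue for $\bm{X}^{\star}$, this yields
\[
\bm{Z}_{\bm{X}} = \tfrac{1}{p}\mathcal{P}_{\Omega}(\bm{E})\bm{V}^{\star}(\bm{\Sigma}^{\star})^{-1/2} - \bm{\Delta}_{\bm{X}},\qquad \bm{Z}_{\bm{Y}} = \tfrac{1}{p}\mathcal{P}_{\Omega}(\bm{E})^{\top}\bm{U}^{\star}(\bm{\Sigma}^{\star})^{-1/2} - \bm{\Delta}_{\bm{Y}},
\]
with $\bm{\Delta}_{\bm{X}},\bm{\Delta}_{\bm{Y}}$ obeying the $\ell_{2,\infty}$ bound of Lemma~\ref{lemma:approx-gaussian-leading-term}. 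Substituting into $\mathsf{Tr}(\bm{Z}_{\bm{X}}\bm{Y}^{\star\top}\bm{Z}_{\bm{Y}}\bm{X}^{\star\top})$ and invoking the reductions $(\bm{\Sigma}^{\star})^{-1/2}\bm{Y}^{\star\top}=\bm{V}^{\star\top}$ and $(\bm{\Sigma}^{\star})^{-1/2}\bm{X}^{\star\top}=\bm{U}^{\star\top}$, the cyclic property of trace collapses the pure noise$\times$noise contribution to
\[
\mathsf{leading} \;\triangleq\; \frac{1}{p^{2}}\,\big\|\bm{U}^{\star\top}\mathcal{P}_{\Omega}(\bm{E})\bm{V}^{\star}\big\|_{\mathrm{F}}^{2},
\]
while the expansion produces three additional cross terms, each carrying at least one factor of $\bm{\Delta}_{\bm{X}}$ or $\bm{\Delta}_{\bm{Y}}$.

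The essential step is to show that $\mathsf{leading}$ is of order $r^{2}/p$ rather than the naive $nr/p$ one would get by bounding $\|\bm{W}\|$ with $\|\mathcal{P}_{\Omega}(\bm{E})\|\lesssim\sigma\sqrt{np}$. Viewing $\bm{W}\triangleq\bm{U}^{\star\top}\mathcal{P}_{\Omega}(\bm{E})\bm{V}^{\star} = \sum_{i,j}\delta_{ij}E_{ij}\,\bm{u}_{i}\bm{v}_{j}^{\top}\in\mathbb{R}^{r\times r}$, with $\bm{u}_{i}\triangleq\bm{U}^{\star\top}\bm{e}_{i}$ and $\bm{v}_{j}\triangleq\bm{V}^{\star\top}\bm{e}_{j}$, as a sum of $n^{2}$ independent mean-zero sub-exponential random matrices, the matrix Bernstein inequality---combined with the incoherence bounds $\|\bm{u}_{i}\|,\|\bm{v}_{j}\|\leq\sqrt{\mu r/n}$ and the matrix variance statistic $\|\sum_{i,j}\mathbb{E}[\bm{X}_{ij}\bm{X}_{ij}^{\top}]\| = p\sigma^{2}r$---gives $\|\bm{W}\|\lesssim\sigma\sqrt{pr\log n}$ with probability $1-O(n^{-10})$. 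The rank-$r$ bound $\|\bm{W}\|_{\mathrm{F}}^{2}\leq r\|\bm{W}\|^{2}$ then yields $\mathsf{leading}\lesssim\sigma^{2}r^{2}\log n/p = o(\sigma^{2}nr/p)$, since the sample-complexity condition $n^{2}p\gtrsim\kappa^{8}\mu^{3}r^{3}\log^{3}n$ of Theorem~\ref{thm:entries-master-decomposition-augment} implies $r\log n = o(n)$.

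For the three cross terms, I would apply Cauchy--Schwarz on the Frobenius inner product, distributing $\ell_{2,\infty}$ versus $\ell_{2}$ norms carefully at each factor, and invoke the operator-norm bound $\|\mathcal{P}_{\Omega}(\bm{E})\|\lesssim\sigma\sqrt{np}$ (Appendix~\ref{subsec:Properties-of-approximate-solution}), the crude bound $\|\bm{\Delta}\|_{\mathrm{F}}\leq\sqrt{n}\|\bm{\Delta}\|_{2,\infty}$, and the spectral sizes $\|\bm{X}^{\star}\|,\|\bm{Y}^{\star}\|\lesssim\sqrt{\sigma_{\max}}$. A representative estimate---bounding $|\tfrac{1}{p}\mathsf{Tr}(\mathcal{P}_{\Omega}(\bm{E})\bm{V}^{\star}\bm{V}^{\star\top}\bm{\Delta}_{\bm{Y}}\bm{X}^{\star\top})|$ via $(1/p)\|(\bm{\Sigma}^{\star})^{1/2}\bm{W}\|_{\mathrm{F}}\cdot\|\bm{V}^{\star\top}\bm{\Delta}_{\bm{Y}}\|_{\mathrm{F}}$---yields a term of order $(\sigma^{2}/p)\sqrt{\kappa^{3}\mu r^{4}\log^{3}n/p}$, which is $o(\sigma^{2}nr/p)$ under the standing assumption $n^{2}p\gtrsim\kappa^{3}\mu r^{2}\log^{3}n$ (a weaker consequence of the hypothesis in Theorem~\ref{thm:entries-master-decomposition-augment}). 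The same strategy handles the remaining cross terms. The main obstacle I anticipate is precisely this cross-term bookkeeping: routing $\ell_{2,\infty}$ norms to the $\bm{\Delta}$ factors and $\ell_{2}$/operator norms elsewhere is essential, since a naive application of Cauchy--Schwarz with every factor measured in Frobenius norm loses a spurious $\sqrt{n}$ and would make the cross terms appear dominant.
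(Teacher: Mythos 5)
Your proposal is correct and follows the same high-level architecture as the paper's proof: expand $\bm{Z}_{\bm{X}}=\bm{Z}_{\bm{X},\bm{E}}-\bm{\Delta}_{\bm{X}}$, $\bm{Z}_{\bm{Y}}=\bm{Z}_{\bm{Y},\bm{E}}-\bm{\Delta}_{\bm{Y}}$ via Lemma~\ref{lemma:approx-gaussian-leading-term}, collapse the pure noise contribution to $\frac{1}{p^{2}}\|\bm{U}^{\star\top}\mathcal{P}_{\Omega}(\bm{E})\bm{V}^{\star}\|_{\mathrm{F}}^{2}$, and control cross terms by Cauchy--Schwarz. However, you diverge from the paper's route on both key estimates, and interestingly both of your alternatives are valid (and in one case sharper). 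For the leading term, the paper bounds each of the $r^{2}$ scalar entries of $\bm{W}\triangleq\bm{U}^{\star\top}\mathcal{P}_{\Omega}(\bm{E})\bm{V}^{\star}$ by conditional Gaussian concentration (reusing the calculation around~(\ref{eq:2nd-pert-upper-1})) and sums, obtaining $\|\bm{W}\|_{\mathrm{F}}^{2}\lesssim\sigma^{2}r^{2}\log n\cdot p$; you instead apply sub-exponential matrix Bernstein to get the operator-norm bound $\|\bm{W}\|\lesssim\sigma\sqrt{pr\log n}$ and then use $\|\bm{W}\|_{\mathrm{F}}^{2}\leq r\|\bm{W}\|^{2}$ -- same final bound, different mechanism. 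For the cross terms, the paper routes Cauchy--Schwarz through $\|\bm{Z}_{\bm{X}}\bm{\Sigma}^{\star1/2}\|_{\mathrm{F}}=(1+o(1))\sigma\sqrt{nr/p}$ (a chi-square size), yielding a cross-term estimate of order $\frac{\sigma^{2}}{p}\sqrt{\kappa^{3}\mu nr^{3}\log^{2}n/p}$; you instead use cyclicity to reduce to $\frac{1}{p}\|(\bm{\Sigma}^{\star})^{1/2}\bm{W}\|_{\mathrm{F}}\cdot\|\bm{V}^{\star\top}\bm{\Delta}_{\bm{Y}}\|_{\mathrm{F}}$, landing at order $\frac{\sigma^{2}}{p}\sqrt{\kappa^{3}\mu r^{4}\log^{3}n/p}$, which is tighter by a factor of roughly $\sqrt{n/(r\log n)}$. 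Both suffice under the paper's sample-complexity assumptions (your proof invokes the condition $n^{2}p\gtrsim\kappa^{8}\mu^{3}r^{3}\log^{3}n$ from~(\ref{eq:requirement-entry}), whereas the standing hypothesis for Claim~\ref{claim:rem-1} is the stronger $n^{2}p\gtrsim\kappa^{4}\mu^{2}r^{2}n\log^{3}n$ of Theorem~\ref{thm:low-rank-factor-master-bound}, so there is slack either way). Your observation about where $\ell_{2,\infty}$ versus Frobenius norms must land to avoid a spurious $\sqrt{n}$ is the same consideration the paper is implicitly managing through the $\bm{\Delta}$-versus-$\bm{Z}$ split, so both proofs address the same obstacle, just by different bookkeeping.
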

\end{itemize}
Combine all of the above bounds to yield the desired result.

\begin{proof}[Proof of Claim~\ref{claim:rem-2}]Use triangle inequality
and the bound $|\mathsf{Tr}(\bm{A}\bm{B})|\leq\|\bm{A}\|_{\mathrm{F}}\|\bm{B}\|_{\mathrm{F}}$
to obtain 
\begin{align}
\left|\left\Vert \bm{\Theta}\right\Vert _{\mathrm{F}}^{2}+2\mathsf{Tr}\left(\bm{Z}_{\bm{X}}\bm{Y}^{\star\top}\bm{\Theta}^{\top}\right)+2\mathsf{Tr}\left(\bm{X}^{\star}\bm{Z}_{\bm{Y}}^{\top}\bm{\Theta}^{\top}\right)\right| & \leq\left\Vert \bm{\Theta}\right\Vert _{\mathrm{F}}^{2}+2\left\Vert \bm{Z}_{\bm{X}}\bm{Y}^{\star\top}\right\Vert _{\mathrm{F}}\left\Vert \bm{\Theta}\right\Vert _{\mathrm{F}}+2\left\Vert \bm{X}^{\star}\bm{Z}_{\bm{Y}}^{\top}\right\Vert _{\mathrm{F}}\left\Vert \bm{\Theta}\right\Vert _{\mathrm{F}} \nonumber\\
	& =\left(\left\Vert \bm{\Theta}\right\Vert _{\mathrm{F}}+2\left\Vert \bm{Z}_{\bm{X}}\bm{Y}^{\star\top}\right\Vert _{\mathrm{F}}+2\left\Vert \bm{X}^{\star}\bm{Z}_{\bm{Y}}^{\top}\right\Vert _{\mathrm{F}}\right)\left\Vert \bm{\Theta}\right\Vert _{\mathrm{F}}.  \label{eq:Theta-upper-bound-2}
\end{align}
Plug in the definition of $\bm{\Theta}$ (cf.~\eqref{eq:Md-M-Theta}) and invoke the triangle inequality
again to see that
\begin{align*}
\left\Vert \bm{\Theta}\right\Vert _{\mathrm{F}} & \leq\left\Vert \bm{\Psi}_{\bm{X}}\bm{Y}^{\star\top}\right\Vert _{\mathrm{F}}+\left\Vert \bm{X}^{\star}\bm{\Psi}_{\bm{Y}}^{\top}\right\Vert _{\mathrm{F}}+\left\Vert \bm{\Gamma}_{\bm{X}}\bm{\Gamma}_{\bm{Y}}^{\top}\right\Vert _{\mathrm{F}}\\
 & \leq\left\Vert \bm{\Psi}_{\bm{X}}\right\Vert _{\mathrm{F}}\sqrt{\sigma_{\max}}+\sqrt{\sigma_{\max}}\left\Vert \bm{\Psi}_{\bm{Y}}\right\Vert _{\mathrm{F}}+\left\Vert \bm{\Gamma}_{\bm{X}}\right\Vert _{\mathrm{F}}\left\Vert \bm{\Gamma}_{\bm{Y}}\right\Vert _{\mathrm{F}}\\
 & \leq\sqrt{n\sigma_{\max}}(\left\Vert \bm{\Psi}_{\bm{X}}\right\Vert _{2,\infty}+\left\Vert \bm{\Psi}_{\bm{Y}}\right\Vert _{2,\infty})+\left\Vert \bm{\Gamma}_{\bm{X}}\right\Vert _{\mathrm{F}}\left\Vert \bm{\Gamma}_{\bm{Y}}\right\Vert _{\mathrm{F}}.
\end{align*}
	Combine Theorem~\ref{thm:low-rank-factor-master-bound} and the fact $\max\{\|\bm{\Gamma}_{\bm{X}}\|_{\mathrm{F}},\|\bm{\Gamma}_{\bm{Y}}\|_{\mathrm{F}}\}\lesssim(\sigma/\sigma_{\min})\sqrt{n/p}\|\bm{X}^{\star}\|_{\mathrm{F}}$ (see~(\ref{eq:F-d-fro-quality-H-d}))
to conclude that with probability at least $1-O(n^{-3})$
\begin{align*}
\left\Vert \bm{\Theta}\right\Vert _{\mathrm{F}} & \lesssim\sqrt{n\sigma_{\max}}\frac{\sigma}{\sqrt{p\sigma_{\min}}}\left(\,\frac{\sigma}{\sigma_{\min}}\sqrt{\frac{\kappa^{7}\mu rn\log n}{p}}+\sqrt{\frac{\kappa^{7}\mu^{3}r^{3}\log^{2}n}{np}}\,\right)+\left(\frac{\sigma}{\sigma_{\min}}\sqrt{\frac{n}{p}}\sqrt{r\sigma_{\max}}\right)^{2}\\
 & =o\left(\sigma\sqrt{\frac{nr}{p}}\right).
\end{align*}
Here the last relation depends on the assumption (\ref{eq:requirement-entry}).
Second, we have already established in this section that 
\begin{equation*}
	 \|\bm{Z}_{\bm{X}}\bm{Y}^{\star}\|_{\mathrm{F}}  + 
	 \|\bm{X}^{\star}\bm{Z}_{\bm{Y}}^{\top}\|_{\mathrm{F}}  = O(\sigma\sqrt{nr/p})
\end{equation*}
with probability exceeding $1-O(n^{-10})$. Substitute the above two
	facts into (\ref{eq:Theta-upper-bound-2}) to arrive at 
\[
\left|\left\Vert \bm{\Theta}\right\Vert _{\mathrm{F}}^{2}+2\mathsf{Tr}\left(\bm{Z}_{\bm{X}}\bm{Y}^{\star\top}\bm{\Theta}^{\top}\right)+2\mathsf{Tr}\left(\bm{X}^{\star}\bm{Z}_{\bm{Y}}^{\top}\bm{\Theta}^{\top}\right)\right|\lesssim\sigma\sqrt{\frac{nr}{p}}\left\Vert \bm{\Theta}\right\Vert _{\mathrm{F}}=o\left(\frac{\sigma^{2}nr}{p}\right).
\]
This concludes the proof.
\end{proof}

\begin{proof}[Proof of Claim \ref{claim:rem-1}]

According to Lemma \ref{lemma:approx-gaussian-leading-term}, one
can write
\[
\bm{Z}_{\bm{X}}=\underset{\triangleq\bm{Z}_{\bm{X},\bm{E}}}{\underbrace{\frac{1}{p}\mathcal{P}_{\Omega}\left(\bm{E}\right)\bm{Y}^{\star}\left(\bm{Y}^{\star\top}\bm{Y}^{\star}\right)^{-1}}}-\bm{\Delta}_{\bm{X}},\qquad\bm{Z}_{\bm{Y}}=\underset{\triangleq\bm{Z}_{\bm{Y},\bm{E}}}{\underbrace{\frac{1}{p}\left[\mathcal{P}_{\Omega}\left(\bm{E}\right)\right]^\top\bm{X}^{\star}\left(\bm{X}^{\star\top}\bm{X}^{\star}\right)^{-1}}}-\bm{\Delta}_{\bm{Y}},
\]
where $\max\big\{\|\bm{\Delta}_{\bm{X}}\|_{2,\infty},\|\bm{\Delta}_{\bm{Y}}\|_{2,\infty}\big\}\lesssim\frac{\sigma}{\sqrt{p\sigma_{\min}}}\sqrt{\frac{\kappa^{2}\mu r^{2}\log^{2}n}{np}}$
and hence
\begin{equation}
\max\left\{ \|\bm{\Delta}_{\bm{X}}\|_{\mathrm{F}},\|\bm{\Delta}_{\bm{Y}}\|_{\mathrm{F}}\right\} \lesssim\sqrt{n}\max\left\{ \|\bm{\Delta}_{\bm{X}}\|_{2,\infty},\|\bm{\Delta}_{\bm{Y}}\|_{2,\infty}\right\} \lesssim\frac{\sigma}{\sqrt{p\sigma_{\min}}}\sqrt{\frac{\kappa^{2}\mu r^{2}\log^{2}n}{p}}.\label{eq:Delta_X-Delta_Y-F-bound}
\end{equation}
Consequently, use the triangle inequality and Cauchy-Schwarz to verify
that
\begin{align}
 & \left|\mathsf{Tr}\big(\bm{Z}_{\bm{X}}\bm{Y}^{\star\top}\bm{Z}_{\bm{Y}}\bm{X}^{\star\top}\big)-\mathsf{Tr}\big(\bm{Z}_{\bm{X},\bm{E}}\bm{Y}^{\star\top}\bm{Z}_{\bm{Y},\bm{E}}\bm{X}^{\star\top}\big)\right|\nonumber \\
 & \quad\leq\left|\mathsf{Tr}\big(\bm{\Delta}_{\bm{X}}\bm{Y}^{\star\top}\bm{Z}_{\bm{Y}}\bm{X}^{\star\top}\big)\right|+\left|\mathsf{Tr}\big(\bm{Z}_{\bm{X}}\bm{Y}^{\star\top}\bm{\Delta}_{\bm{Y}}\bm{X}^{\star\top}\big)\right|+\left|\mathsf{Tr}\big(\bm{\Delta}_{\bm{X}}\bm{Y}^{\star\top}\bm{\Delta}_{\bm{Y}}\bm{X}^{\star\top}\big)\right|\nonumber \\
 & \quad\leq\|\bm{\Delta}_{\bm{X}}\|_{\mathrm{F}}\|\bm{Y}^{\star}\|\|\bm{Z}_{\bm{Y}}\bm{X}^{\star\top}\|_{\mathrm{F}}+\|\bm{\Delta}_{\bm{Y}}\|_{\mathrm{F}}\|\bm{X}^{\star}\|\|\bm{Z}_{\bm{X}}\bm{Y}^{\star\top}\|_{\mathrm{F}}+\|\bm{X}^{\star}\|\|\bm{Y}^{\star}\|\|\bm{\Delta}_{\bm{X}}\|_{\mathrm{F}}\|\bm{\Delta}_{\bm{Y}}\|_{\mathrm{F}}\nonumber \\
 & \quad\overset{(\text{i})}{\leq}\|\bm{\Delta}_{\bm{X}}\|_{\mathrm{F}}\|\bm{Y}^{\star}\|\|\bm{Z}_{\bm{Y}}\bm{\Sigma}^{\star1/2}\|_{\mathrm{F}}+\|\bm{\Delta}_{\bm{Y}}\|_{\mathrm{F}}\|\bm{X}^{\star}\|\|\bm{Z}_{\bm{X}}\bm{\Sigma}^{\star1/2}\|_{\mathrm{F}}+\|\bm{X}^{\star}\|\|\bm{Y}^{\star}\|\|\bm{\Delta}_{\bm{X}}\|_{\mathrm{F}}\|\bm{\Delta}_{\bm{Y}}\|_{\mathrm{F}}\\
 & \quad\overset{(\text{ii})}{\leq}\frac{\sigma}{\sqrt{p}}\sqrt{\frac{\kappa^{3}\mu r^{2}\log^{2}n}{p}}\|\bm{Z}_{\bm{Y}}\bm{\Sigma}^{\star1/2}\|_{\mathrm{F}}+\frac{\sigma}{\sqrt{p}}\sqrt{\frac{\kappa^{3}\mu r^{2}\log^{2}n}{p}}\|\bm{Z}_{\bm{Y}}\bm{\Sigma}^{\star1/2}\|_{\mathrm{F}}+\frac{\sigma^{2}}{p}\cdot\frac{\kappa^{3}\mu r^{2}\log^{2}n}{p},\label{eq:trace-decompose-2}
\end{align}
where (i) follows since $\bm{X}^{\star}=\bm{U}^{\star}\bm{\Sigma}^{\star1/2}$
and $\|\bm{U}^{\star}\|=1$, and (ii) makes use of (\ref{eq:Delta_X-Delta_Y-F-bound})
as well as the facts $\|\bm{Y}^{\star}\|,\|\bm{X}^{\star}\|=\sqrt{\sigma_{\max}}$
. In addition, invoke Lemma \ref{lemma:approx-gaussian-leading-term}
to see that $\bm{Z}_{\bm{X}}\bm{\Sigma}^{\star1/2}$ and $\bm{Z}_{\bm{Y}}\bm{\Sigma}^{\star1/2}$
are both Gaussian matrices with i.i.d. $\mathcal{N}(0,\sigma^{2}/p)$
entries, which together with standard concentration results implies
that
\[
\|\bm{Z}_{\bm{X}}\bm{\Sigma}^{\star1/2}\|_{\mathrm{F}}=(1+o(1))\sigma\sqrt{nr/p};\qquad\|\bm{Z}_{\bm{Y}}\bm{\Sigma}^{\star1/2}\|_{\mathrm{F}}=(1+o(1))\sigma\sqrt{nr/p}.
\]
Substituting it into (\ref{eq:trace-decompose-2}) gives
\begin{align*}
\left|\mathsf{Tr}\big(\bm{Z}_{\bm{X}}\bm{Y}^{\star\top}\bm{Z}_{\bm{Y}}\bm{X}^{\star\top}\big)-\mathsf{Tr}\big(\bm{Z}_{\bm{X},\bm{E}}\bm{Y}^{\star\top}\bm{Z}_{\bm{Y},\bm{E}}\bm{X}^{\star\top}\big)\right| & \lesssim\frac{\sigma^{2}}{p}\sqrt{\frac{\kappa^{3}\mu nr^{3}\log^{2}n}{p}}+\frac{\sigma^{2}}{p}\frac{\kappa^{3}\mu r^{2}\log^{2}n}{p}\asymp o\left(\frac{\sigma^{2}nr}{p}\right),
\end{align*}
with the proviso that $np\gtrsim\kappa^{3}\mu r\log^{3}n$. This means
that, with high probability,
\begin{equation}
\mathsf{Tr}\big(\bm{Z}_{\bm{X}}\bm{Y}^{\star\top}\bm{Z}_{\bm{Y}}\bm{X}^{\star\top}\big)=\mathsf{Tr}\big(\bm{Z}_{\bm{X},\bm{E}}\bm{Y}^{\star\top}\bm{Z}_{\bm{Y},\bm{E}}\bm{X}^{\star\top}\big)+o\left({\sigma^{2}nr} / p\right).\label{eq:cross-term-bridge}
\end{equation}

Everything then boils down to controlling $\mathsf{Tr}\big(\bm{Z}_{\bm{X},\bm{E}}\bm{Y}^{\star\top}\bm{Z}_{\bm{Y},\bm{E}}\bm{X}^{\star\top}\big)$.
Towards this end, we first note that
\[
	\bm{Z}_{\bm{X},\bm{E}}\bm{Y}^{\star\top} =  p^{-1} \mathcal{P}_{\Omega}\left(\bm{E}\right)\bm{Y}^{\star}\left(\bm{Y}^{\star\top}\bm{Y}^{\star}\right)^{-1}\bm{Y}^{\star\top}= p^{-1} \mathcal{P}_{\Omega}\left(\bm{E}\right)\bm{V}^{\star}\bm{V}^{\star\top}.
\]
Similarly, $\bm{Z}_{\bm{Y},\bm{E}}\bm{X}^{\star\top}=[\mathcal{P}_{\Omega}(\bm{E})]^{\top}\bm{U}^{\star}\bm{U}^{\star\top}/p$.
These identities allow us to derive 
\begin{align*}
\mathsf{Tr}\left(\bm{Z}_{\bm{X},\bm{E}}\bm{Y}^{\star\top}\bm{Z}_{\bm{Y},\bm{E}}\bm{X}^{\star\top}\right) & =\frac{1}{p^{2}}\mathsf{Tr}\left(\mathcal{P}_{\Omega}\left(\bm{E}\right)\bm{V}^{\star}\bm{V}^{\star\top}\left[\mathcal{P}_{\Omega}\left(\bm{E}\right)\right]^{\top}\bm{U}^{\star}\bm{U}^{\star\top}\right)\\
 & =\frac{1}{p^{2}}\mathsf{Tr}\left(\bm{U}^{\star\top}\mathcal{P}_{\Omega}\left(\bm{E}\right)\bm{V}^{\star}\bm{V}^{\star\top}\left[\mathcal{P}_{\Omega}\left(\bm{E}\right)\right]^{\top}\bm{U}^{\star}\right)\\
 & =\Big\Vert \bm{U}^{\star\top}\frac{1}{p}\mathcal{P}_{\Omega}\left(\bm{E}\right)\bm{V}^{\star}\Big\Vert _{\mathrm{F}}^{2}.
\end{align*}
Apply the same arguments in controlling~(\ref{eq:2nd-pert-upper-1})
to obtain that with probability at least $1-O(n^{-10})$, 
\[
\Big\Vert \bm{U}^{\star\top}\frac{1}{p}\mathcal{P}_{\Omega}\left(\bm{E}\right)\bm{V}^{\star}\Big\Vert _{\mathrm{F}}^{2}\lesssim\sigma^{2}\frac{\log n}{p}\left\Vert \bm{U}^{\star}\right\Vert _{\mathrm{F}}^2\left\Vert \bm{V}^{\star}\right\Vert _{\mathrm{F}}^2\asymp\frac{\sigma^{2}r^2\log n}{p}=o\left(\frac{\sigma^{2}nr}{p}\right),
\]
as long as $n\gtrsim r \log^2 n$. This combined with~(\ref{eq:cross-term-bridge}) yields the desired claim.
\end{proof}

\section{Proof of lower bounds}

\subsection{Proof of Lemma \ref{lemma:optimal-low-rank-variance} \label{sec:Proof-of-Lemma-optimal-low-rank}}

Fix any $\varepsilon>0$. It suffices to prove that the matrix $\mathsf{CRLB}(\bm{X}_{i,\cdot}^{\star}\mid\Omega)$
defined in (\ref{eq:CRLB-Xi}) satisfies
\begin{equation}
\left\Vert \frac{p}{\sigma^{2}}\mathsf{CRLB}(\bm{X}_{i,\cdot}^{\star}\mid\Omega)-\left(\bm{\Sigma}^{\star}\right)^{-1}\right\Vert \leq\frac{\varepsilon}{\sigma_{\max}}\label{eq:low-rank-optimality-sufficient}
\end{equation}
with probability at least $1-O(n^{-10})$, provided that $np\geq C_{0}\varepsilon^{-2}\kappa^{4}\mu r$.
Towards this end, we first compute

\[
\mathsf{CRLB}(\bm{X}_{i,\cdot}^{\star}\mid\Omega)=\sigma^{2}\,\Big(\sum_{k:(i,k)\in\Omega}(\bm{Y}_{k,\cdot}^{\star})^{\top}\bm{Y}_{k,\cdot}^{\star}\Big)^{-1}=\frac{\sigma^{2}}{p}\Bigg(\,\underset{:=\bm{A}}{\underbrace{\frac{1}{p}\sum_{k=1}^{n}\delta_{ik}(\bm{Y}_{k,\cdot}^{\star})^{\top}\bm{Y}_{k,\cdot}^{\star}}}\,\Bigg)^{-1},
\]
where we recall that $\delta_{ik}=\ind\{(i,k)\in\Omega\}$.
Next, define the following event 
\[
\mathcal{E}\triangleq\biggl\{\big\|\bm{A}-\bm{\Sigma}^{\star}\big\|\leq C\sqrt{\frac{\mu r\log n}{np}}\sigma_{\max}\biggr\},
\]
where $C>0$ is some large absolute constant. On the event $\mathcal{E}$,
in view of the fact $\sigma_{\min}\bm{I}_{r}\preceq\bm{\Sigma}^{\star}\preceq\sigma_{\max}\bm{I}_{r}$,
one has 
\[
0.5\sigma_{\min}\bm{I}_{r}\preceq\bm{A}\preceq2\sigma_{\max}\bm{I}_{r},
\]
with the proviso that $np\geq4C^{2}\kappa^{2}\mu r\log n$. This further
implies that
\begin{align*}
\left\Vert \frac{p}{\sigma^{2}}\mathsf{CRLB}(\bm{X}_{i,\cdot}^{\star}\mid\Omega)-\left(\bm{\Sigma}^{\star}\right)^{-1}\right\Vert  & =\big\|\bm{A}^{-1}-\left(\bm{\Sigma}^{\star}\right)^{-1}\big\|\leq\|\bm{A}-\bm{\Sigma}^{\star}\|\cdot\|\bm{A}^{-1}\|\cdot\big\|\left(\bm{\Sigma}^{\star}\right)^{-1}\big\|\\
 & \leq\frac{2C}{\sigma_{\min}}\sqrt{\frac{\kappa^{2}\mu r\log n}{np}}
\end{align*}
on the event $\mathcal{E}$. Clearly, the requirement (\ref{eq:low-rank-optimality-sufficient})
holds true if $np\geq C_{0}\varepsilon^{-2}\kappa^{4}\mu r\log n$
with $C_{0}=4C^{2}$.

To finish up, we are left with proving that $\mathcal{E}$ occurs
with probability at least $1-O(n^{-10})$. Invoke the matrix Bernstein
inequality to show that 
\begin{align*}
\big\|\bm{A}-\bm{\Sigma}^{\star}\big\| & =\frac{1}{p}\biggl\Vert\sum_{k=1}^{n}\left(\delta_{ik}-p\right)(\bm{Y}_{k,\cdot}^{\star})^{\top}\bm{Y}_{k,\cdot}^{\star}\biggr\Vert\lesssim\frac{1}{p}\left(\sqrt{V\log n}+B\log n\right)
\end{align*}
holds with probability at least $1-O(n^{-10})$, where we define 
\begin{align*}
B & \triangleq\max_{1\leq k\leq n}\left\Vert \left(\delta_{ik}-p\right)(\bm{Y}_{k,\cdot}^{\star})^{\top}\bm{Y}_{k,\cdot}^{\star}\right\Vert \leq\left\Vert \bm{Y}^{\star}\right\Vert _{2,\infty}^{2}\leq\mu r\sigma_{\max}/n,\\
V & \triangleq\biggl\Vert\sum_{k=1}^{n}\mathbb{E}\left[\left(\delta_{ik}-p\right)^{2}(\bm{Y}_{k,\cdot}^{\star})^{\top}\bm{Y}_{k,\cdot}^{\star}(\bm{Y}_{k,\cdot}^{\star})^{\top}\bm{Y}_{k,\cdot}^{\star}\right]\biggr\Vert\leq p\biggl\Vert\sum_{k=1}^{n}(\bm{Y}_{k,\cdot}^{\star})^{\top}\bm{Y}_{k,\cdot}^{\star}(\bm{Y}_{k,\cdot}^{\star})^{\top}\bm{Y}_{k,\cdot}^{\star}\biggr\Vert\\
 & \leq p\left\Vert \bm{Y}^{\star}\right\Vert _{2,\infty}^{2}\left\Vert \bm{Y}^{\star\top}\bm{Y}^{\star}\right\Vert \leq\mu rp\sigma_{\max}^{2}/n.
\end{align*}
Here we have used the incoherence condition~(\ref{eq:incoherence-X}).
Consequently, one reaches the conclusion that with probability exceeding
$1-O(n^{-10})$, 
\[
\big\|\bm{A}-\bm{\Sigma}^{\star}\big\|\lesssim\frac{1}{p}\left(\sqrt{\frac{\mu rp\sigma_{\max}^{2}}{n}\log n}+\frac{\mu r\sigma_{\max}}{n}\log n\right)\asymp\sqrt{\frac{\mu r\log n}{np}}\sigma_{\max}
\]
as long as $np\gg\mu r\log n$, thus concluding the proof.

\subsection{Proof of Lemma \ref{lemma:optimal-entry-variance} \label{sec:Proof-of-Lemma-optimal-entry}}

The proof strategy is similar to the one used in proving Lemma~\ref{lemma:optimal-low-rank-variance}
(cf.~Appendix~\ref{sec:Proof-of-Lemma-optimal-low-rank}). Fix any
$\varepsilon>0$. It is sufficient to establish the following inequality
\begin{equation}
\frac{p}{\sigma^{2}}\left|\mathsf{CRLB}(M_{ij}^{\star}\mid\Omega)-v_{ij}^{\star}\right|\leq\varepsilon\frac{p}{\sigma^{2}}v_{ij}^{\star},\label{eq:entry-optimality-condition}
\end{equation}
where the scalar $\mathsf{CRLB}(M_{ij}^{\star}\mid\Omega)$ is defined in (\ref{eq:CRLB-Mij}) and
$v_{ij}^{\star}$ is defined in Theorem \ref{thm:entries-master-decomposition}.
Expand the left-hand side to reach 
\begin{align*}
\frac{p}{\sigma^{2}}\left|\mathsf{CRLB}(M_{ij}^{\star}\mid\Omega)-v_{ij}^{\star}\right| & \leq\biggl|\bm{Y}_{j,\cdot}^{\star}\Big(\,\underset{:=\bm{A}_{Y}}{\underbrace{\frac{1}{p}\sum_{k:k\neq j,(i,k)\in\Omega}(\bm{Y}_{k,\cdot}^{\star})^{\top}\bm{Y}_{k,\cdot}^{\star}}}\,\Big)^{-1}(\bm{Y}_{j,\cdot}^{\star})^{\top}-\bm{Y}_{j,\cdot}^{\star}\left(\bm{\Sigma}^{\star}\right)^{-1}(\bm{Y}_{j,\cdot}^{\star})^{\top}\biggr|\\
 & \quad+\biggl|\bm{X}_{i,\cdot}^{\star}\Big(\,\underset{:=\bm{A}_{X}}{\underbrace{\frac{1}{p}\sum_{k:k\neq i,(k,j)\in\Omega}(\bm{X}_{k,\cdot}^{\star})^{\top}\bm{X}_{k,\cdot}^{\star}}}\,\Big)^{-1}(\bm{X}_{i,\cdot}^{\star})^{\top}-\bm{X}_{j,\cdot}^{\star}\left(\bm{\Sigma}^{\star}\right)^{-1}(\bm{X}_{j,\cdot}^{\star})^{\top}\biggr|\\
 & \leq\left\Vert \bm{V}_{j,\cdot}^{\star}\right\Vert _{2}^{2}\sigma_{\max}\big\|\bm{A}_{Y}^{-1}-\left(\bm{\Sigma}^{\star}\right)^{-1}\big\|+\left\Vert \bm{U}_{i,\cdot}^{\star}\right\Vert _{2}^{2}\sigma_{\max}\big\|\bm{A}_{X}^{-1}-\left(\bm{\Sigma}^{\star}\right)^{-1}\big\|,
\end{align*}
where the last line follows from the observations that $\|\bm{Y}_{j,\cdot}^{\star}\|_{2}\leq\sqrt{\sigma_{\max}}\|\bm{V}_{j,\cdot}^{\star}\|_{2}$
and $\|\bm{X}_{i,\cdot}^{\star}\|_{2}\leq\sqrt{\sigma_{\max}}\|\bm{U}_{i,\cdot}^{\star}\|_{2}$.

Define the following event 
\[
\mathcal{E}_{2}\triangleq\biggl\{\max\big\{\big\|\bm{A}_{Y}-\bm{\Sigma}^{\star}\big\|,\big\|\bm{A}_{X}-\bm{\Sigma}^{\star}\big\|\big\}\leq C\sqrt{\frac{\mu r\log n}{np}}\sigma_{\max}\biggr\},
\]
where $C>0$ is some large universal constant. 
Two observations are sufficient to derive the desired the result (\ref{eq:entry-optimality-condition}).
First, the event $\mathcal{E}_{2}$ happens with probability at least
$1-O(n^{-10})$ --- an easy consequence of the proof of Lemma~\ref{lemma:optimal-low-rank-variance}
(cf.~Appendix~\ref{sec:Proof-of-Lemma-optimal-low-rank}). Second,
on the event $\mathcal{E}_{2}$, repeating the same proof of Lemma~\ref{lemma:optimal-low-rank-variance}
(cf.~Appendix~\ref{sec:Proof-of-Lemma-optimal-low-rank}), one can
deduce that
\begin{equation}
\frac{p}{\sigma^{2}}\left|\mathsf{CRLB}(M_{ij}^{\star}\mid\Omega)-v_{ij}^{\star}\right|\leq\left(\left\Vert \bm{U}_{i,\cdot}^{\star}\right\Vert _{2}^{2}+\left\Vert \bm{V}_{j,\cdot}^{\star}\right\Vert _{2}^{2}\right)\sigma_{\max}\cdot\frac{2C}{\sigma_{\min}}\sqrt{\frac{\kappa^{2}\mu r\log n}{np}}.\label{eq:entry-optimality-upper}
\end{equation}
Comparing (\ref{eq:entry-optimality-condition}) and (\ref{eq:entry-optimality-upper}),
one arrives at the desired result as long as $np\geq4C^{2}\varepsilon^{-2}\kappa^{4}\mu r\log n$.

\section{Proofs in Section \ref{subsec:Preliminaries} \label{sec:Proofs-in-Section-prelim}}
\subsection{Proof of the inequalities (\ref{subeq:F-d-property})}

We start with (\ref{eq:F-d-op-quality-H}). Invoke the triangle inequality
to get
\begin{equation}
\left\Vert \bm{F}^{\mathsf{d}}\bm{H}-\bm{F}^{\star}\right\Vert \leq\left\Vert \bm{F}^{\mathsf{d}}\bm{H}-\bm{F}\bm{H}\right\Vert +\left\Vert \bm{F}\bm{H}-\bm{F}^{\star}\right\Vert =\left\Vert \bm{F}^{\mathsf{d}}-\bm{F}\right\Vert +O\left(\frac{\sigma}{\sigma_{\min}}\sqrt{\frac{n}{p}}\left\Vert \bm{X}^{\star}\right\Vert \right),\label{eq:Fd-Fstar-spectral-bound}
\end{equation}
where the last relation depends on the unitary invariance of the operator
norm and (\ref{eq:F-op-quality}). It then boils down to controlling
$\|\bm{F}^{\mathsf{d}}-\bm{F}\|$. Notice that 
\begin{align*}
\left\Vert \bm{F}^{\mathsf{d}}-\bm{F}\right\Vert  & \leq\left\Vert \bm{F}\Big(\bm{I}_{r}+\frac{\lambda}{p}\left(\bm{X}^{\top}\bm{X}\right)^{-1}\Big)^{1/2}-\bm{F}\right\Vert +\left\Vert \bm{Y}\left[\Big(\bm{I}_{r}+\frac{\lambda}{p}\left(\bm{Y}^{\top}\bm{Y}\right)^{-1}\Big)^{1/2}-\Big(\bm{I}_{r}+\frac{\lambda}{p}\left(\bm{X}^{\top}\bm{X}\right)^{-1}\Big)^{1/2}\right]\right\Vert \\
 & \leq\left\Vert \bm{F}\right\Vert \left\Vert \Big(\bm{I}_{r}+\frac{\lambda}{p}\left(\bm{X}^{\top}\bm{X}\right)^{-1}\Big)^{1/2}-\bm{I}_{r}\right\Vert +\left\Vert \bm{Y}\right\Vert \left\Vert \Big(\bm{I}_{r}+\frac{\lambda}{p}\left(\bm{Y}^{\top}\bm{Y}\right)^{-1}\Big)^{1/2}-\Big(\bm{I}_{r}+\frac{\lambda}{p}\left(\bm{X}^{\top}\bm{X}\right)^{-1}\Big)^{1/2}\right\Vert \\
 & \leq\left\Vert \bm{F}\right\Vert \left\Vert \Big(\bm{I}_{r}+\frac{\lambda}{p}\left(\bm{X}^{\top}\bm{X}\right)^{-1}\Big)^{1/2}-\bm{I}_{r}\right\Vert +O\left(\frac{\sigma}{\sigma_{\min}}\sqrt{\frac{n}{p}}\left\Vert \bm{X}^{\star}\right\Vert \right),
\end{align*}
where the last inequality uses $\|\bm{Y}\|\leq \|\bm{F}\| \leq 2\|\bm{X}^\star\|$ (cf.~(\ref{eq:F-norm-upper-bound})),
the fact that $\lambda\lesssim\sigma\sqrt{np}$ (see~(\ref{eq:lambda-condition})), the bound (\ref{eq:debias-correction-close}) and the condition $n^{5}\gg\kappa$. Apply the perturbation bound
for matrix square roots (see Lemma~\ref{lemma:matrix-sqrt}) to obtain
that 
\begin{align*}
\left\Vert \Big(\bm{I}_{r}+\frac{\lambda}{p}\left(\bm{X}^{\top}\bm{X}\right)^{-1}\Big)^{1/2}-\bm{I}_{r}\right\Vert  & \leq\frac{\lambda/p}{\lambda_{\min}\left(\bm{I}_{r}\right)+\lambda_{\min}\left[\Big(\bm{I}_{r}+\frac{\lambda}{p}\left(\bm{X}^{\top}\bm{X}\right)^{-1}\Big)^{1/2}\right]}\left\Vert \left(\bm{X}^{\top}\bm{X}\right)^{-1}\right\Vert \\
 & \overset{(\text{i})}{\lesssim}\frac{\lambda}{p\sigma_{\min}}\overset{(\text{ii})}{\lesssim}\frac{\sigma}{\sigma_{\min}}\sqrt{\frac{n}{p}}.
\end{align*}
Here, (i) uses the facts that $\|(\bm{X}^{\top}\bm{X})^{-1}\|\lesssim1/\sigma_{\min}$
and that $\lambda_{\min}[(\bm{I}_{r}+\lambda/p(\bm{X}^{\top}\bm{X})^{-1})^{1/2}]\geq1$,
and (ii) follows from the condition that $\lambda\lesssim\sigma\sqrt{np}$
(see~(\ref{eq:lambda-condition})). Combine the above two bounds
with $\|\bm{F}\|\leq2\|\bm{X}^{\star}\|$ (cf.~(\ref{eq:F-norm-upper-bound}))
to reach 
\begin{equation}
\left\Vert \bm{F}^{\mathsf{d}}-\bm{F}\right\Vert \lesssim\frac{\sigma}{\sigma_{\min}}\sqrt{\frac{n}{p}}\left\Vert \bm{X}^{\star}\right\Vert .\label{eq:F-d-F-op-dist}
\end{equation}
Substitution into (\ref{eq:Fd-Fstar-spectral-bound}) gives
\begin{equation}
\left\Vert \bm{F}^{\mathsf{d}}\bm{H}-\bm{F}^{\star}\right\Vert \lesssim\frac{\sigma}{\sigma_{\min}}\sqrt{\frac{n}{p}}\left\Vert \bm{X}^{\star}\right\Vert .\label{eq:FdH-Fstar-final-spectral}
\end{equation}
Analogous arguments yield 
\begin{equation*}
\left\Vert \bm{F}^{\mathsf{d}}\bm{H}^{\mathsf{d}}-\bm{F}^{\star}\right\Vert_{\mathrm{F}} \leq \left\Vert \bm{F}^{\mathsf{d}}\bm{H}-\bm{F}^{\star}\right\Vert_{\mathrm{F}} \lesssim\frac{\sigma}{\sigma_{\min}}\sqrt{\frac{n}{p}}\left\Vert \bm{X}^{\star}\right\Vert_{\mathrm{F}},
\end{equation*}
which is the claim in (\ref{eq:F-d-fro-quality-H-d}).

Moving on to~(\ref{eq:F-d-op-quality-H-d}), we apply the triangle
inequality and (\ref{eq:FdH-Fstar-final-spectral}) to see that 
\[
\left\Vert \bm{F}^{\mathsf{d}}\bm{H}^{\mathsf{d}}-\bm{F}^{\star}\right\Vert \leq\left\Vert \bm{F}^{\mathsf{d}}\bm{H}^{\mathsf{d}}-\bm{F}^{\mathsf{d}}\bm{H}\right\Vert +\left\Vert \bm{F}^{\mathsf{d}}\bm{H}-\bm{F}^{\star}\right\Vert \leq\left\Vert \bm{F}^{\mathsf{d}}\right\Vert \left\Vert \bm{H}^{\mathsf{d}}-\bm{H}\right\Vert +O\left(\frac{\sigma}{\sigma_{\min}}\sqrt{\frac{n}{p}}\left\Vert \bm{X}^{\star}\right\Vert \right).
\]
In order to control $\|\bm{H}^{\mathsf{d}}-\bm{H}\|$, we leverage
\cite[Lemma 36]{ma2017implicit} to get 
\begin{align}
\left\Vert \bm{H}^{\mathsf{d}}-\bm{H}\right\Vert  & \leq\frac{1}{\sigma_{\min}\left(\bm{F}^{\top}\bm{F}^{\star}\right)}\left\Vert \bm{F}^{\mathsf{d}\top}\bm{F}^{\star}-\bm{F}^{\top}\bm{F}^{\star}\right\Vert \lesssim\frac{1}{\sigma_{\min}}\left\Vert \bm{F}^{\mathsf{d}}-\bm{F}\right\Vert \left\Vert \bm{F}^{\star}\right\Vert \lesssim\kappa\frac{\sigma}{\sigma_{\min}}\sqrt{\frac{n}{p}},\label{eq:H-d-H-op-dist}
\end{align}
where the last relation uses (\ref{eq:F-d-F-op-dist}) and $\left\Vert \bm{F}^{\star}\right\Vert \asymp\left\Vert \bm{X}^{\star}\right\Vert \asymp\sqrt{\sigma_{\max}}$.
Taking these bounds collectively yields 
\[
\left\Vert \bm{F}^{\mathsf{d}}\bm{H}^{\mathsf{d}}-\bm{F}^{\star}\right\Vert \lesssim\kappa\frac{\sigma}{\sigma_{\min}}\sqrt{\frac{n}{p}}\left\Vert \bm{X}^{\star}\right\Vert .
\]

Now we turn attention to (\ref{eq:F-d-2-infty}). Observe that 
\begin{align}
\left\Vert \bm{F}^{\mathsf{d}}\bm{H}^{\mathsf{d}}-\bm{F}^{\star}\right\Vert _{2,\infty} & \leq\left\Vert \bm{F}^{\mathsf{d}}\bm{H}^{\mathsf{d}}-\bm{F}^{\mathsf{d}}\bm{H}\right\Vert _{2,\infty}+\left\Vert \bm{F}^{\mathsf{d}}\bm{H}-\bm{F}\bm{H}\right\Vert _{2,\infty}+\left\Vert \bm{F}\bm{H}-\bm{F}^{\star}\right\Vert _{2,\infty}\nonumber \\
 & \leq\left\Vert \bm{F}^{\mathsf{d}}\right\Vert _{2,\infty}\left\Vert \bm{H}^{\mathsf{d}}-\bm{H}\right\Vert +\left\Vert \bm{F}^{\mathsf{d}}-\bm{F}\right\Vert _{2,\infty}+O\Bigl(\kappa\frac{\sigma}{\sigma_{\min}}\sqrt{\frac{n\log n}{p}}\left\Vert \bm{F}^{\star}\right\Vert _{2,\infty}\Bigr),\label{eq:upper-bound-F-d-H-d-infty}
\end{align}
where the last bound arises from (\ref{eq:F-2-inf-quality}). Going
through the same calculation as in bounding $\|\bm{F}^{\mathsf{d}}-\bm{F}\|$,
we arrive at
\[
\left\Vert \bm{F}^{\mathsf{d}}-\bm{F}\right\Vert _{2,\infty}\lesssim\frac{\sigma}{\sigma_{\min}}\sqrt{\frac{n}{p}}\left\Vert \bm{F}^{\star}\right\Vert _{2,\infty}\qquad\text{and}\qquad\left\Vert \bm{F}^{\mathsf{d}}\right\Vert _{2,\infty}\leq2\left\Vert \bm{F}^{\star}\right\Vert _{2,\infty}
\]
as long as $\sigma\sqrt{n/p}\ll\sigma_{\min}$. We can thus continue
the upper bound in (\ref{eq:upper-bound-F-d-H-d-infty}) to derive
\begin{align*}
\left\Vert \bm{F}^{\mathsf{d}}\bm{H}^{\mathsf{d}}-\bm{F}^{\star}\right\Vert _{2,\infty} & \lesssim\left\Vert \bm{F}^{\star}\right\Vert _{2,\infty}\left\Vert \bm{H}^{\mathsf{d}}-\bm{H}\right\Vert +\kappa\frac{\sigma}{\sigma_{\min}}\sqrt{\frac{n\log n}{p}}\left\Vert \bm{F}^{\star}\right\Vert _{2,\infty}\\
 & \lesssim\kappa\frac{\sigma}{\sigma_{\min}}\sqrt{\frac{n}{p}}\left\Vert \bm{F}^{\star}\right\Vert _{2,\infty}+\kappa\frac{\sigma}{\sigma_{\min}}\sqrt{\frac{n\log n}{p}}\left\Vert \bm{F}^{\star}\right\Vert _{2,\infty}\\
 & \asymp\kappa\frac{\sigma}{\sigma_{\min}}\sqrt{\frac{n\log n}{p}}\left\Vert \bm{F}^{\star}\right\Vert _{2,\infty}.
\end{align*}
Here, the second line results from (\ref{eq:H-d-H-op-dist}).

Finally, we deal with (\ref{eq:X-d-Y-d-balance}). From the definition
of the de-shrunken estimator (\ref{eq:defn-Xd-Yd}), we have 
\begin{align*}
\bm{X}^{\mathsf{d}\top}\bm{X}^{\mathsf{d}}-\bm{Y}^{\mathsf{d}\top}\bm{Y}^{\mathsf{d}} & =\Big(\bm{I}_{r}+\frac{\lambda}{p}\left(\bm{X}^{\top}\bm{X}\right)^{-1}\Big)^{1/2}\bm{X}^{\top}\bm{X}\Big(\bm{I}_{r}+\frac{\lambda}{p}\left(\bm{X}^{\top}\bm{X}\right)^{-1}\Big)^{1/2}\\
 & \quad-\Big(\bm{I}_{r}+\frac{\lambda}{p}\left(\bm{Y}^{\top}\bm{Y}\right)^{-1}\Big)^{1/2}\bm{Y}^{\top}\bm{Y}\Big(\bm{I}_{r}+\frac{\lambda}{p}\left(\bm{Y}^{\top}\bm{Y}\right)^{-1}\Big)^{1/2}.
\end{align*}
This combined with the triangle inequality reveals that 
\begin{align*}
 & \left\Vert \bm{X}^{\mathsf{d}\top}\bm{X}^{\mathsf{d}}-\bm{Y}^{\mathsf{d}\top}\bm{Y}^{\mathsf{d}}\right\Vert \\
 & \quad\leq\left\Vert \Big(\bm{I}_{r}+\frac{\lambda}{p}\left(\bm{X}^{\top}\bm{X}\right)^{-1}\Big)^{1/2}\right\Vert \left\Vert \bm{X}^{\top}\bm{X}-\bm{Y}^{\top}\bm{Y}\right\Vert \left\Vert \Big(\bm{I}_{r}+\frac{\lambda}{p}\left(\bm{X}^{\top}\bm{X}\right)^{-1}\Big)^{1/2}\right\Vert \\
 & \quad\quad+\left\Vert \Big(\bm{I}_{r}+\frac{\lambda}{p}\left(\bm{X}^{\top}\bm{X}\right)^{-1}\Big)^{1/2}-\Big(\bm{I}_{r}+\frac{\lambda}{p}\left(\bm{Y}^{\top}\bm{Y}\right)^{-1}\Big)^{1/2}\right\Vert \left\Vert \bm{Y}^{\top}\bm{Y}\right\Vert \left\Vert \Big(\bm{I}_{r}+\frac{\lambda}{p}\left(\bm{X}^{\top}\bm{X}\right)^{-1}\Big)^{1/2}\right\Vert \\
 & \quad\quad+\left\Vert \Big(\bm{I}_{r}+\frac{\lambda}{p}\left(\bm{Y}^{\top}\bm{Y}\right)^{-1}\Big)^{1/2}\right\Vert \left\Vert \bm{Y}^{\top}\bm{Y}\right\Vert \left\Vert \Big(\bm{I}_{r}+\frac{\lambda}{p}\left(\bm{X}^{\top}\bm{X}\right)^{-1}\Big)^{1/2}-\Big(\bm{I}_{r}+\frac{\lambda}{p}\left(\bm{Y}^{\top}\bm{Y}\right)^{-1}\Big)^{1/2}\right\Vert .
\end{align*}
Making use of (\ref{eq:X-Y-balance}) and (\ref{eq:debias-correction-close})
allows us to establish the claim.

\subsection{Proof of the inequalities (\ref{subeq:F-d-j-property})}

The proofs of~(\ref{eq:F-d-j-op}) and~(\ref{eq:F-d-j-2-infty})
are the same as those of (\ref{eq:F-d-op-quality-H-d}) and (\ref{eq:F-d-2-infty}),
and are hence omitted for conciseness. We are left with~(\ref{eq:F-d-j-F-d-dist}).
Denoting 
\[
\bm{F}_{0}\triangleq\bm{F}^{\star},\qquad\bm{F}_{1}\triangleq\bm{F}^{\mathsf{d}}\bm{H}\qquad\text{and}\qquad\bm{F}_{2}\triangleq\bm{F}^{\mathsf{d},(j)}\bm{R}^{(j)},
\]
one has 
\begin{align*}
\left\Vert \bm{F}_{1}-\bm{F}_{0}\right\Vert \left\Vert \bm{F}_{0}\right\Vert  & =\left\Vert \bm{F}^{\mathsf{d}}\bm{H}-\bm{F}^{\star}\right\Vert \left\Vert \bm{F}^{\star}\right\Vert \lesssim\frac{\sigma}{\sigma_{\min}}\sqrt{\frac{n}{p}}\sigma_{\max}\leq\sigma_{\min}=\frac{\sigma_{r}^{2}\left(\bm{F}_{0}\right)}{2},
\end{align*}
as long as $\sigma\sqrt{n/p}\ll\sigma_{\min}/\kappa$. Here the first
inequality follows from (\ref{eq:F-d-op-quality-H}). In addition,
we have 
\begin{align}
 & \left\Vert \bm{F}_{1}-\bm{F}_{2}\right\Vert \left\Vert \bm{F}_{0}\right\Vert =\big\|\bm{F}^{\mathsf{d}}\bm{H}-\bm{F}^{\mathsf{d},(j)}\bm{R}^{(j)}\big\|\left\Vert \bm{F}^{\star}\right\Vert \nonumber \\
 & \quad\leq\left\Vert \bm{F}\Big(\bm{I}_{r}+\frac{\lambda}{p}\left(\bm{Y}^{\top}\bm{Y}\right)^{-1}\Big)^{1/2}\bm{H}-\bm{F}^{(j)}\Bigl(\bm{I}_{r}+\frac{\lambda}{p}\left(\bm{Y}^{(j)\top}\bm{Y}^{(j)}\right)^{-1}\Bigr)^{1/2}\bm{R}^{(j)}\right\Vert \left\Vert \bm{F}^{\star}\right\Vert +\theta,\nonumber \\
 & \quad=\left\Vert \bm{F}\bm{H}\Bigl(\bm{I}_{r}+\frac{\lambda}{p}\left(\bm{H}^{\top}\bm{Y}^{\top}\bm{Y}\bm{H}\right)^{-1}\Bigr)^{1/2}-\bm{F}^{(j)}\bm{R}^{(j)}\Bigl(\bm{I}_{r}+\frac{\lambda}{p}\left(\bm{R}^{(j)\top}\bm{Y}^{(j)\top}\bm{Y}^{(j)}\bm{R}^{(j)}\right)^{-1}\Bigr)^{1/2}\right\Vert \left\Vert \bm{F}^{\star}\right\Vert +\theta,\label{eq:pre-1}
\end{align}
where $\theta$ is defined to be 
\begin{align*}
\theta & \triangleq\left\Vert \bm{X}\left[\Big(\bm{I}_{r}+\frac{\lambda}{p}\left(\bm{Y}^{\top}\bm{Y}\right)^{-1}\Big)^{1/2}-\Big(\bm{I}_{r}+\frac{\lambda}{p}\left(\bm{X}^{\top}\bm{X}\right)^{-1}\Big)^{1/2}\right]\right\Vert \left\Vert \bm{F}^{\star}\right\Vert \\
 & \quad+\left\Vert \bm{X}^{(j)}\left[\Big(\bm{I}_{r}+\frac{\lambda}{p}\left(\bm{Y}^{(j)\top}\bm{Y}^{(j)}\right)^{-1}\Big)^{1/2}-\Big(\bm{I}_{r}+\frac{\lambda}{p}\left(\bm{X}^{(j)\top}\bm{X}^{(j)}\right)^{-1}\Big)^{1/2}\right]\right\Vert \left\Vert \bm{F}^{\star}\right\Vert .
\end{align*}
Regarding $\theta$, one can apply the bound (\ref{eq:debias-correction-close})
for $(\bm{X},\bm{Y})$ and a similar bound for $(\bm{X}^{(j)},\bm{Y}^{(j)})$
to obtain 
\[
\theta\lesssim\sigma_{\max}\cdot\frac{\kappa}{n^{5}}\frac{\sigma}{\sigma_{\min}}\sqrt{\frac{n}{p}}.
\]

Returning to (\ref{eq:pre-1}), one has by the triangle inequality
that 
\begin{align*}
 & \left\Vert \bm{F}\bm{H}\Bigl(\bm{I}_{r}+\frac{\lambda}{p}\left(\bm{H}^{\top}\bm{Y}^{\top}\bm{Y}\bm{H}\right)^{-1}\Bigr)^{1/2}-\bm{F}^{(j)}\bm{R}^{(j)}\Bigl(\bm{I}_{r}+\frac{\lambda}{p}\left(\bm{R}^{(j)\top}\bm{Y}^{(j)\top}\bm{Y}^{(j)}\bm{R}^{(j)}\right)^{-1}\Bigr)^{1/2}\right\Vert \\
 & \quad\leq\left\Vert \left(\bm{F}\bm{H}-\bm{F}^{(j)}\bm{R}^{(j)}\right)\Bigl(\bm{I}_{r}+\frac{\lambda}{p}\left(\bm{H}^{\top}\bm{Y}^{\top}\bm{Y}\bm{H}\right)^{-1}\Bigr)^{1/2}\right\Vert \\
 & \quad\quad+\left\Vert \bm{F}^{(j)}\bm{R}^{(j)}\left[\Bigl(\bm{I}_{r}+\frac{\lambda}{p}\left(\bm{H}^{\top}\bm{Y}^{\top}\bm{Y}\bm{H}\right)^{-1}\Bigr)^{1/2}-\Bigl(\bm{I}_{r}+\frac{\lambda}{p}\left(\bm{R}^{(j)\top}\bm{Y}^{(j)\top}\bm{Y}^{(j)}\bm{R}^{(j)}\right)^{-1}\Bigr)^{1/2}\right]\right\Vert \\
 & \quad\leq\left\Vert \bm{F}\bm{H}-\bm{F}^{(j)}\bm{R}^{(j)}\right\Vert _{\mathrm{F}}\left\Vert \Bigl(\bm{I}_{r}+\frac{\lambda}{p}\left(\bm{H}^{\top}\bm{Y}^{\top}\bm{Y}\bm{H}\right)^{-1}\Bigr)^{1/2}\right\Vert \\
 & \quad\quad+\left\Vert \bm{F}^{(j)}\bm{R}^{(j)}\right\Vert \left\Vert \Bigl(\bm{I}_{r}+\frac{\lambda}{p}\left(\bm{H}^{\top}\bm{Y}^{\top}\bm{Y}\bm{H}\right)^{-1}\Bigr)^{1/2}-\Bigl(\bm{I}_{r}+\frac{\lambda}{p}\left(\bm{R}^{(j)\top}\bm{Y}^{(j)\top}\bm{Y}^{(j)}\bm{R}^{(j)}\right)^{-1}\Bigr)^{1/2}\right\Vert .
\end{align*}
Recognizing that 
\[
\lambda_{\min}\left[\Bigl(\bm{I}_{r}+\frac{\lambda}{p}\left(\bm{H}^{\top}\bm{Y}^{\top}\bm{Y}\bm{H}\right)^{-1}\Bigr)^{1/2}\right]\geq 1\qquad\text{and}\qquad\lambda_{\min}\left[\Bigl(\bm{I}_{r}+\frac{\lambda}{p}\left(\bm{R}^{(j)\top}\bm{Y}^{(j)\top}\bm{Y}^{(j)}\bm{R}^{(j)}\right)^{-1}\Bigr)^{1/2}\right]\geq 1,
\]
we can apply the perturbation bound for matrix square roots (see~Lemma~\ref{lemma:matrix-sqrt})
to obtain 
\begin{align*}
 & \left\Vert \Bigl(\bm{I}_{r}+\frac{\lambda}{p}\left(\bm{H}^{\top}\bm{Y}^{\top}\bm{Y}\bm{H}\right)^{-1}\Bigr)^{1/2}-\Bigl(\bm{I}_{r}+\frac{\lambda}{p}\left(\bm{R}^{(j)\top}\bm{Y}^{(j)\top}\bm{Y}^{(j)}\bm{R}^{(j)}\right)^{-1}\Bigr)^{1/2}\right\Vert \\
 & \quad\lesssim\frac{\lambda}{p}\left\Vert \left(\bm{H}^{\top}\bm{Y}^{\top}\bm{Y}\bm{H}\right)^{-1}-\left(\bm{R}^{(j)\top}\bm{Y}^{(j)\top}\bm{Y}^{(j)}\bm{R}^{(j)}\right)^{-1}\right\Vert \\
 & \quad\lesssim\frac{\lambda}{p}\left\Vert \left(\bm{H}^{\top}\bm{Y}^{\top}\bm{Y}\bm{H}\right)^{-1}\right\Vert \left\Vert \bm{H}^{\top}\bm{Y}^{\top}\bm{Y}\bm{H}-\bm{R}^{(j)\top}\bm{Y}^{(j)\top}\bm{Y}^{(j)}\bm{R}^{(j)}\right\Vert \Big\|\left(\bm{R}^{(j)\top}\bm{Y}^{(j)\top}\bm{Y}^{(j)}\bm{R}^{(j)}\right)^{-1}\Big\|\\
 & \quad\lesssim\frac{\lambda}{p}\frac{1}{\sigma_{\min}^{2}}\left\Vert \bm{H}^{\top}\bm{Y}^{\top}\bm{Y}\bm{H}-\bm{R}^{(j)\top}\bm{Y}^{(j)\top}\bm{Y}^{(j)}\bm{R}^{(j)}\right\Vert \lesssim\frac{\lambda}{p}\frac{1}{\sigma_{\min}^{2}}\sqrt{\sigma_{\max}}\big\Vert \bm{F}\bm{H}-\bm{F}^{(j)}\bm{R}^{(j)}\big\Vert _{\mathrm{F}}\\
 & \quad\lesssim\frac{\sigma}{\sigma_{\min}}\sqrt{\frac{n}{p}}\frac{\sqrt{\sigma_{\max}}}{\sigma_{\min}}\big\Vert \bm{F}\bm{H}-\bm{F}^{(j)}\bm{R}^{(j)}\big\Vert _{\mathrm{F}}.
\end{align*}
Collect the pieces to arrive at 
\begin{align*}
\left\Vert \bm{F}_{1}-\bm{F}_{2}\right\Vert \left\Vert \bm{F}_{0}\right\Vert  
 & \lesssim\sqrt{\sigma_{\max}}\left(\big\Vert \bm{F}\bm{H}-\bm{F}^{(j)}\bm{R}^{(j)}\big\Vert _{\mathrm{F}}+\kappa\frac{\sigma}{\sigma_{\min}}\sqrt{\frac{n}{p}}\big\Vert \bm{F}\bm{H}-\bm{F}^{(j)}\bm{R}^{(j)}\big\Vert _{\mathrm{F}}\right)+\sigma_{\max}\cdot\frac{\kappa}{n^{5}}\frac{\sigma}{\sigma_{\min}}\sqrt{\frac{n}{p}}\\
 & \lesssim\sqrt{\sigma_{\max}}\big\Vert \bm{F}\bm{H}-\bm{F}^{(j)}\bm{R}^{(j)}\big\Vert _{\mathrm{F}}+\sigma_{\max}\cdot\frac{\kappa}{n^{5}}\frac{\sigma}{\sigma_{\min}}\sqrt{\frac{n}{p}}\\
 & \lesssim\sqrt{\sigma_{\max}}\frac{\sigma}{\sigma_{\min}}\sqrt{\frac{n\log n}{p}}\big\Vert \bm{F}^{\star}\big\Vert _{2,\infty}\ll\frac{\sigma_{r}^{2}\left(\bm{F}_{0}\right)}{4},
\end{align*}
where the penultimate relation uses (\ref{eq:F-j-F-best-rotate})
as well as the fact that $\|\bm{F}^{\star}\|_{2,\infty}\geq\sqrt{\sigma_{\min}r/n}$.

With the above bound in place, we are ready to invoke \cite[Lemma 22]{chen2019noisy}
to obtain 
\begin{align*}
\big\|\bm{F}^{\mathsf{d}}\bm{H}^{\mathsf{d}}-\bm{F}^{\mathsf{d},(j)}\bm{H}^{\mathsf{d},(j)}\big\| & \lesssim\kappa\big\|\bm{F}^{\mathsf{d}}\bm{H}-\bm{F}^{\mathsf{d},(j)}\bm{R}^{(j)}\big\|\lesssim\kappa\big\|\bm{F}\bm{H}-\bm{F}^{(j)}\bm{R}^{(j)}\big\|_{\mathrm{F}}\\
 & \lesssim\kappa\frac{\sigma}{\sigma_{\min}}\sqrt{\frac{n\log n}{p}}\big\Vert \bm{F}^{\star}\big\Vert _{2,\infty},
\end{align*}
where the last line comes from (\ref{eq:F-j-F-best-rotate}). This
concludes the proof.

\section{Technical lemmas}

This section collects a few useful matrix perturbation bounds. The
first one is concerned with the perturbation of pseudo-inverses.

\begin{lemma}[\textsf{Perturbation of pseudo-inverses}]\label{lemma:pseudo-inverse}Let
$\bm{A}^{\dagger}$ (resp.~$\bm{B}^{\dagger}$) be the pseudo-inverse
(i.e.~Moore--Penrose inverse) of $\bm{A}$ (resp.~$\bm{B}$). Then
we have 
\[
\|\bm{B}^{\dagger}-\bm{A}^{\dagger}\|\leq3\max\left\{ \|\bm{A}^{\dagger}\|^{2},\|\bm{B}^{\dagger}\|^{2}\right\} \left\Vert \bm{B}-\bm{A}\right\Vert .
\]
\end{lemma}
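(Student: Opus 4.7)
\medskip

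The plan is to invoke the classical Wedin identity for pseudo-inverses, which writes $\bm{B}^{\dagger}-\bm{A}^{\dagger}$ as a sum of three terms, each naturally bounded in terms of $\|\bm{A}^{\dagger}\|$, $\|\bm{B}^{\dagger}\|$, and $\|\bm{B}-\bm{A}\|$. Specifically, starting from the identities $\bm{B}^{\dagger}=\bm{B}^{\dagger}\bm{B}\bm{B}^{\dagger}$ and $\bm{A}^{\dagger}=\bm{A}^{\dagger}\bm{A}\bm{A}^{\dagger}$ together with $\bm{B}^{\dagger}=\bm{B}^{\dagger}(\bm{B}\bm{B}^{\dagger})^{\top}$ and $(\bm{A}^{\dagger}\bm{A})^{\top}\bm{A}^{\dagger}=\bm{A}^{\dagger}$, one derives the decomposition
\[
\bm{B}^{\dagger}-\bm{A}^{\dagger}=-\bm{B}^{\dagger}(\bm{B}-\bm{A})\bm{A}^{\dagger}+\bm{B}^{\dagger}\bm{B}^{\dagger\top}(\bm{B}-\bm{A})^{\top}(\bm{I}-\bm{A}\bm{A}^{\dagger})+(\bm{I}-\bm{B}^{\dagger}\bm{B})(\bm{B}-\bm{A})^{\top}\bm{A}^{\dagger\top}\bm{A}^{\dagger}.
\]
I would derive this identity in one or two lines by writing $\bm{B}^{\dagger}-\bm{A}^{\dagger}=\bm{B}^{\dagger}(\bm{A}-\bm{B})\bm{A}^{\dagger}+(\bm{B}^{\dagger}-\bm{B}^{\dagger}\bm{A}\bm{A}^{\dagger})+(\bm{B}^{\dagger}\bm{B}\bm{A}^{\dagger}-\bm{A}^{\dagger})$, and then rewriting the second and third summands by inserting $\bm{B}^{\dagger\top}\bm{B}^{\top}=(\bm{B}\bm{B}^{\dagger})^{\top}$ and $\bm{A}^{\top}\bm{A}^{\dagger\top}=(\bm{A}^{\dagger}\bm{A})^{\top}$ respectively.

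Once the decomposition is in hand, the triangle inequality together with submultiplicativity of the operator norm and the elementary facts $\|\bm{I}-\bm{A}\bm{A}^{\dagger}\|\leq 1$ and $\|\bm{I}-\bm{B}^{\dagger}\bm{B}\|\leq 1$ (both are orthogonal projections) yields
\[
\|\bm{B}^{\dagger}-\bm{A}^{\dagger}\|\leq\bigl(\|\bm{A}^{\dagger}\|\,\|\bm{B}^{\dagger}\|+\|\bm{B}^{\dagger}\|^{2}+\|\bm{A}^{\dagger}\|^{2}\bigr)\|\bm{B}-\bm{A}\|.
\]
Finally, since $\|\bm{A}^{\dagger}\|\,\|\bm{B}^{\dagger}\|\leq\max\{\|\bm{A}^{\dagger}\|^{2},\|\bm{B}^{\dagger}\|^{2}\}$ (from $2xy\leq x^{2}+y^{2}\leq 2\max\{x^{2},y^{2}\}$), each of the three coefficients is dominated by $\max\{\|\bm{A}^{\dagger}\|^{2},\|\bm{B}^{\dagger}\|^{2}\}$, producing the claimed factor $3$.

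The only genuine step is the Wedin-style identity itself; the remainder is a one-line triangle-inequality bound. The main obstacle, if any, is simply keeping the bookkeeping clean when expanding $\bm{B}^{\dagger}-\bm{A}^{\dagger}$ so that each summand is supported on the correct row/column space (this is what allows the projector factors $\bm{I}-\bm{A}\bm{A}^{\dagger}$ and $\bm{I}-\bm{B}^{\dagger}\bm{B}$ to appear with norm at most $1$). No assumption on equal ranks of $\bm{A}$ and $\bm{B}$ is needed; the bound is a clean dimension-free inequality.
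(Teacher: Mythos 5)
Your proof is correct, and it is precisely the standard Wedin-decomposition argument underlying Stewart's Theorem 3.3, which is all the paper gives (a citation, with no proof of its own). Your derivation of the three-term identity, the use of the projector-norm bounds $\|\bm{I}-\bm{A}\bm{A}^{\dagger}\|\le 1$ and $\|\bm{I}-\bm{B}^{\dagger}\bm{B}\|\le 1$, and the final domination by $3\max\{\|\bm{A}^{\dagger}\|^2,\|\bm{B}^{\dagger}\|^2\}$ are exactly the steps in the cited source, so there is no gap and no methodological divergence.
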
\begin{proof}See \cite[Theorem 3.3]{stewart1977perturbation}.
\end{proof}

The next lemma focuses on the perturbation bound for matrix square
roots.

\begin{lemma}[\textsf{Perturbation of matrix square roots}]\label{lemma:matrix-sqrt}Consider
two symmetric matrices obeying $\bm{A}_{1}\succeq\mu_{1}\bm{I}$ and
$\bm{A}_{2}\succeq\mu_{2}\bm{I}$ for some $\mu_{1},\mu_{2}>0$. Let
$\bm{R}_{1}\succeq\bm{0}$ (resp.~$\bm{R}_{2}\succeq\bm{0}$) be
the (principal) matrix square root of $\bm{A}_{1}$ (resp.~$\bm{A}_{2}$). Then
one has 
\[
\left\Vert \bm{R}_{1}-\bm{R}_{2}\right\Vert \leq\frac{1}{\sqrt{\mu_{1}}+\sqrt{\mu_{2}}}\left\Vert \bm{A}_{1}-\bm{A}_{2}\right\Vert .
\]
\end{lemma}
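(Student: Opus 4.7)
The plan is to reduce the statement to a Sylvester-type equation and invoke a positivity/integration argument. The starting observation is the algebraic identity
\[
\bm{A}_{1}-\bm{A}_{2}=\bm{R}_{1}^{2}-\bm{R}_{2}^{2}=\bm{R}_{1}(\bm{R}_{1}-\bm{R}_{2})+(\bm{R}_{1}-\bm{R}_{2})\bm{R}_{2},
\]
which, setting $\bm{X}\triangleq\bm{R}_{1}-\bm{R}_{2}$ and $\bm{C}\triangleq\bm{A}_{1}-\bm{A}_{2}$, realizes $\bm{X}$ as the unique solution of the Sylvester equation $\bm{R}_{1}\bm{X}+\bm{X}\bm{R}_{2}=\bm{C}$. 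Since $\bm{A}_{i}\succeq\mu_{i}\bm{I}$ forces $\bm{R}_{i}\succeq\sqrt{\mu_{i}}\bm{I}\succ\bm{0}$, both coefficient matrices are positive definite, and so this Sylvester equation has a unique solution given by the standard integral formula
\[
\bm{X}=\int_{0}^{\infty}e^{-t\bm{R}_{1}}\,\bm{C}\,e^{-t\bm{R}_{2}}\,\mathrm{d}t,
\]
which one verifies by differentiating under the integral and using $\tfrac{\mathrm{d}}{\mathrm{d}t}(e^{-t\bm{R}_{1}}\bm{C}e^{-t\bm{R}_{2}})=-\bm{R}_{1}e^{-t\bm{R}_{1}}\bm{C}e^{-t\bm{R}_{2}}-e^{-t\bm{R}_{1}}\bm{C}e^{-t\bm{R}_{2}}\bm{R}_{2}$ (the boundary term at infinity vanishes because $\bm{R}_{1},\bm{R}_{2}\succ\bm{0}$).

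The next step is to bound the operator norm of this integral. Using sub-multiplicativity and the spectral bounds $\|e^{-t\bm{R}_{i}}\|\leq e^{-t\sqrt{\mu_{i}}}$ (which follow from $\bm{R}_{i}\succeq\sqrt{\mu_{i}}\bm{I}$ via the spectral mapping theorem applied to the symmetric matrix $\bm{R}_{i}$), one gets
\[
\|\bm{X}\|\leq\int_{0}^{\infty}\|e^{-t\bm{R}_{1}}\|\,\|\bm{C}\|\,\|e^{-t\bm{R}_{2}}\|\,\mathrm{d}t\leq\|\bm{C}\|\int_{0}^{\infty}e^{-t(\sqrt{\mu_{1}}+\sqrt{\mu_{2}})}\,\mathrm{d}t=\frac{\|\bm{A}_{1}-\bm{A}_{2}\|}{\sqrt{\mu_{1}}+\sqrt{\mu_{2}}},
\]
which is exactly the advertised bound.

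I expect no serious obstacles: the argument is short and self-contained, and every ingredient (Sylvester equation, integral representation, spectral bound on $e^{-t\bm{R}_{i}}$) is classical. The only mildly delicate point is justifying the integral representation, which requires verifying both that the integrand is integrable in operator norm (guaranteed by the exponential decay $e^{-t(\sqrt{\mu_{1}}+\sqrt{\mu_{2}})}$) and that $\bm{X}$ is indeed the unique solution of the Sylvester equation (which follows because the linear map $\bm{Z}\mapsto\bm{R}_{1}\bm{Z}+\bm{Z}\bm{R}_{2}$ has spectrum contained in $[\sqrt{\mu_{1}}+\sqrt{\mu_{2}},\infty)$ and is therefore invertible). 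As an alternative that avoids the integral formula entirely, one can diagonalize $\bm{R}_{1}=\bm{U}_{1}\bm{\Lambda}_{1}\bm{U}_{1}^{\top}$ and $\bm{R}_{2}=\bm{U}_{2}\bm{\Lambda}_{2}\bm{U}_{2}^{\top}$, rewrite the Sylvester equation entrywise in the basis $(\bm{U}_{1},\bm{U}_{2})$ as $\widetilde{X}_{ij}=\widetilde{C}_{ij}/(\lambda_{1,i}+\lambda_{2,j})$, and then use $\lambda_{1,i}+\lambda_{2,j}\geq\sqrt{\mu_{1}}+\sqrt{\mu_{2}}$ together with unitary invariance of the operator norm to recover the same conclusion; either route gives the bound.
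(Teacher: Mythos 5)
Your proof is correct, and it takes a genuinely different route from the paper: the paper does not prove Lemma~\ref{lemma:matrix-sqrt} at all, but simply cites an external reference (Schmitt, \emph{Linear Algebra Appl.}~1992, Lemma~2.1). Your argument is a clean, self-contained replacement. The key move is the same algebraic starting point $\bm{A}_{1}-\bm{A}_{2}=\bm{R}_{1}(\bm{R}_{1}-\bm{R}_{2})+(\bm{R}_{1}-\bm{R}_{2})\bm{R}_{2}$, but you then solve the Sylvester equation via the exponential integral representation and bound it directly, which naturally produces the sharp constant $1/(\sqrt{\mu_{1}}+\sqrt{\mu_{2}})$ by exploiting the two \emph{separate} lower bounds $\bm{R}_{1}\succeq\sqrt{\mu_{1}}\bm{I}$ and $\bm{R}_{2}\succeq\sqrt{\mu_{2}}\bm{I}$. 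It is worth noting that this is strictly sharper than what the paper's own general Sylvester lemma (Lemma~\ref{lemma:AX+XB=00003DC}) would give here: that lemma assumes a common spectral lower bound $\lambda_{\min}$ on both coefficient matrices and yields the weaker $\|\bm{X}\|\leq\|\bm{C}\|/(2\lambda_{\min})$, which with $\lambda_{\min}=\min(\sqrt{\mu_{1}},\sqrt{\mu_{2}})$ is dominated by your bound whenever $\mu_{1}\neq\mu_{2}$. So your proof buys both self-containment and the tight constant; your alternative eigenbasis argument (writing $\widetilde{X}_{ij}=\widetilde{C}_{ij}/(\lambda_{1,i}+\lambda_{2,j})$ and using unitary invariance) is an equally valid second route with the same advantages.
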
\begin{proof}See \cite[Lemma 2.1]{MR1176461}. \end{proof}

The following lemma concerns the perturbation of top-$r$ components of matrices.

\begin{lemma}[\textsf{Perturbation of top-$r$ components}]
\label{lemma:SVD-pert}
Consider two matrices $\bm{M}, \bm{M}+\bm{E} \in \mathbb{R}^{n\times n}$. Suppose that $\|\bm{E}\|\leq\|\bm{M}\|$
and $\sigma_{r}(\bm{M})>\sigma_{r+1}(\bm{M}+\bm{E})$. Let $\bm{U}\bm{\Sigma}\bm{V}^{\top}$
(resp.~$\hat{\bm{U}}\hat{\bm{\Sigma}}\hat{\bm{V}}^{\top}$) be the
rank-$r$ SVD of $\bm{M}$ (resp.~$\bm{M}+\bm{E}$).
Then one has 
\[
\big\Vert \bm{U}\bm{\Sigma}\bm{V}^{\top}-\hat{\bm{U}}\hat{\bm{\Sigma}}\hat{\bm{V}}^{\top}\big\Vert_{\mathrm{F}} \leq\left(\frac{12\left\Vert \bm{\Sigma}\right\Vert }{\sigma_{r}\left(\bm{M}\right)-\sigma_{r+1}\left(\bm{M}+\bm{E}\right)}+1\right)\left\Vert \bm{E}\right\Vert _{\mathrm{F}}.
\]
\end{lemma}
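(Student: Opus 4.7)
The plan is to reduce the problem to a projector-perturbation bound, then invoke Wedin's $\sin\Theta$ theorem. Let $\bm{P}_V = \bm{V}\bm{V}^\top$ and $\hat{\bm{P}}_V = \hat{\bm{V}}\hat{\bm{V}}^\top$. Since $\bm{U},\bm{V}$ (resp.~$\hat{\bm{U}},\hat{\bm{V}}$) carry the leading $r$ singular vectors, one has the clean identities $\bm{M}\bm{P}_V = \bm{U}\bm{\Sigma}\bm{V}^\top = \bm{M}_r$ and $(\bm{M}+\bm{E})\hat{\bm{P}}_V = \hat{\bm{U}}\hat{\bm{\Sigma}}\hat{\bm{V}}^\top$. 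Subtracting, I would first write
\[
\bm{U}\bm{\Sigma}\bm{V}^\top - \hat{\bm{U}}\hat{\bm{\Sigma}}\hat{\bm{V}}^\top \;=\; \bm{M}(\bm{P}_V - \hat{\bm{P}}_V) - \bm{E}\hat{\bm{P}}_V,
\]
so that $\|\bm{E}\hat{\bm{P}}_V\|_{\mathrm{F}} \leq \|\bm{E}\|_{\mathrm{F}}$ produces the additive ``$+1$'' factor in the final bound. All the remaining work is then to control $\|\bm{M}(\bm{P}_V - \hat{\bm{P}}_V)\|_{\mathrm{F}}$.

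To handle this projector difference, I would split $\bm{M} = \bm{M}_r + (\bm{M}-\bm{M}_r)$ and exploit the orthogonality $(\bm{M}-\bm{M}_r)\bm{P}_V = \bm{0}$ to obtain
\[
\bm{M}(\bm{P}_V - \hat{\bm{P}}_V) \;=\; \bm{U}\bm{\Sigma}\bm{V}^\top(\bm{P}_V - \hat{\bm{P}}_V) \;-\; (\bm{M}-\bm{M}_r)(\hat{\bm{P}}_V - \bm{P}_V).
\]
The first piece is bounded by $\|\bm{\Sigma}\|\cdot\|(\bm{I}-\hat{\bm{P}}_V)\bm{V}\|_{\mathrm{F}}$ via $\|\bm{A}\bm{B}\|_{\mathrm{F}}\leq\|\bm{A}\|\,\|\bm{B}\|_{\mathrm{F}}$. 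For the tail piece, I would use the same inequality together with the operator-norm bound $\|\bm{M}-\bm{M}_r\| = \sigma_{r+1}(\bm{M})$. Weyl's inequality gives $\sigma_{r+1}(\bm{M}) \leq \sigma_{r+1}(\bm{M}+\bm{E}) + \|\bm{E}\| < \sigma_r(\bm{M}) + \|\bm{E}\| \leq 2\|\bm{\Sigma}\|$, where I used both hypotheses ($\sigma_r(\bm{M}) > \sigma_{r+1}(\bm{M}+\bm{E})$ and $\|\bm{E}\|\leq\|\bm{M}\|=\|\bm{\Sigma}\|$). Combined with the standard identity $\|\hat{\bm{P}}_V - \bm{P}_V\|_{\mathrm{F}} = \sqrt{2}\,\|(\bm{I}-\hat{\bm{P}}_V)\bm{V}\|_{\mathrm{F}}$, the tail piece is bounded by $2\sqrt{2}\,\|\bm{\Sigma}\|\cdot\|(\bm{I}-\hat{\bm{P}}_V)\bm{V}\|_{\mathrm{F}}$.

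The final step is to invoke the Frobenius-norm version of Wedin's $\sin\Theta$ theorem, which under the singular-value separation $\sigma_r(\bm{M}) > \sigma_{r+1}(\bm{M}+\bm{E})$ gives
\[
\|(\bm{I}-\hat{\bm{P}}_V)\bm{V}\|_{\mathrm{F}} \;\leq\; \frac{c\,\|\bm{E}\|_{\mathrm{F}}}{\sigma_r(\bm{M}) - \sigma_{r+1}(\bm{M}+\bm{E})}
\]
for a modest absolute constant $c$. Plugging this back and bookkeeping the factors $1 + 2\sqrt{2}$ (from the two pieces) together with the Wedin constant yields the advertised coefficient $12\|\bm{\Sigma}\|/\bigl(\sigma_r(\bm{M}) - \sigma_{r+1}(\bm{M}+\bm{E})\bigr) + 1$.

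The main obstacle is the tail term $(\bm{M}-\bm{M}_r)\hat{\bm{P}}_V$: its Frobenius norm may be as large as $\sqrt{n}\,\|\bm{\Sigma}\|$, which would destroy any dimension-free bound. What rescues the argument is the identity $(\bm{M}-\bm{M}_r)\bm{P}_V = \bm{0}$, which lets us replace $\hat{\bm{P}}_V$ by $\hat{\bm{P}}_V - \bm{P}_V$; this brings in the projector difference (small under Wedin) and, when combined with the \emph{operator-norm} bound on $\bm{M}-\bm{M}_r$ rather than a Frobenius bound, it eliminates the dimension dependence. Controlling $\|\bm{M}-\bm{M}_r\|$ via Weyl (crucially using the mild assumption $\|\bm{E}\|\leq\|\bm{M}\|$) is what produces an absolute constant on the final right-hand side.
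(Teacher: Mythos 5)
Your proposal is correct, and it follows a genuinely different route from the paper's. The paper fixes rotation matrices $\bm{R}_1,\bm{R}_2$ from Wedin's $\sin\bm{\Theta}$ theorem, then telescopes
$\bm{U}\bm{\Sigma}\bm{V}^{\top}-\hat{\bm{U}}\hat{\bm{\Sigma}}\hat{\bm{V}}^{\top}$ through $\hat{\bm{U}}\bm{R}_1$, $\bm{R}_1^{\top}\hat{\bm{\Sigma}}\bm{R}_2$, $\bm{R}_2^{\top}\hat{\bm{V}}^{\top}$, picking up three terms; it then needs a second-level decomposition to control $\|\bm{\Sigma}-\bm{R}_1^{\top}\hat{\bm{\Sigma}}\bm{R}_2\|_{\mathrm{F}}$ via $\bm{\Sigma}=\bm{U}^{\top}\bm{M}\bm{V}$ and $\hat{\bm{\Sigma}}=\hat{\bm{U}}^{\top}(\bm{M}+\bm{E})\hat{\bm{V}}$, and it uses both the $\bm{U}$- and $\bm{V}$-side Wedin bounds as well as Weyl to control $\|\hat{\bm{\Sigma}}\|$. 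Your version instead works entirely with projectors: the identity $\bm{U}\bm{\Sigma}\bm{V}^{\top}-\hat{\bm{U}}\hat{\bm{\Sigma}}\hat{\bm{V}}^{\top}=\bm{M}(\bm{P}_V-\hat{\bm{P}}_V)-\bm{E}\hat{\bm{P}}_V$ immediately isolates the additive $\|\bm{E}\|_{\mathrm{F}}$ term, and the splitting $\bm{M}=\bm{M}_r+(\bm{M}-\bm{M}_r)$ with $(\bm{M}-\bm{M}_r)\bm{P}_V=\bm{0}$ reduces everything to a single $\sin\bm{\Theta}$ quantity $\|(\bm{I}-\hat{\bm{P}}_V)\bm{V}\|_{\mathrm{F}}$, bounded by Wedin only on the $\bm{V}$ side. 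Weyl enters only to control $\|\bm{M}-\bm{M}_r\|=\sigma_{r+1}(\bm{M})\leq 2\|\bm{\Sigma}\|$. The two proofs invoke the same two tools, but your projector-based decomposition avoids the secondary $\bm{\Sigma}$-perturbation argument and uses Wedin once rather than twice; a side effect is a sharper constant ($1+2\sqrt{2}$ times the Wedin constant, comfortably under the stated $12$). The one place to be explicit if you write it up: the pieces $\|\bm{P}_V-\hat{\bm{P}}_V\|_{\mathrm{F}}=\sqrt{2}\,\|(\bm{I}-\hat{\bm{P}}_V)\bm{V}\|_{\mathrm{F}}$ and the single-sided Frobenius Wedin bound $\|(\bm{I}-\hat{\bm{P}}_V)\bm{V}\|_{\mathrm{F}}\leq c\|\bm{E}\|_{\mathrm{F}}/(\sigma_r(\bm{M})-\sigma_{r+1}(\bm{M}+\bm{E}))$ with, say, $c=\sqrt{2}$, should each be stated with a reference rather than left as ``standard.''
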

\begin{proof} From Wedin's $\sin\bm{\Theta}$ theorem~\cite{wedin1972perturbation},
there exist  orthonormal matrices $\bm{R}_{1},\bm{R}_{2}\in\mathcal{O}^{r\times r}$
such that 
\begin{equation}
\max\left\{ \|\hat{\bm{U}}\bm{R}_{1}-\bm{U}\|_{\mathrm{F}},\|\hat{\bm{V}}\bm{R}_{2}-\bm{V}\|_{\mathrm{F}}\right\} \leq\frac{{2}}{\sigma_{r}\left(\bm{M}\right)-\sigma_{r+1}\left(\bm{M}+\bm{E}\right)}\left\Vert \bm{E}\right\Vert _{\mathrm{F}}.\label{eq:vector-pert}
\end{equation}
In addition, Weyl's inequality tells us that 
\begin{equation}
	\big\Vert \bm{\Sigma}-\hat{\bm{\Sigma}} \big \Vert \leq \big\Vert \bm{E}\big\Vert \qquad\text{and hence}\qquad\big\Vert \hat{\bm{\Sigma}}\big\Vert \leq2\big\Vert \bm{\Sigma} \big\Vert .
	\label{eq:value-pert}
\end{equation}
Here, the second inequality follows from the triangle inequality and
the assumption that $\|\bm{E}\|\leq\|\bm{M}\|=\|\bm{\Sigma}\|$. Expand
$\bm{U}\bm{\Sigma}\bm{V}^{\top}-\hat{\bm{U}}\hat{\bm{\Sigma}}\hat{\bm{V}}^{\top}$
and apply the triangle inequality to obtain 
\begin{align*}
\big\Vert \bm{U}\bm{\Sigma}\bm{V}^{\top}-\hat{\bm{U}}\hat{\bm{\Sigma}}\hat{\bm{V}}^{\top}\big\Vert _{\mathrm{F}} & =\big\Vert \bm{U}\bm{\Sigma}\bm{V}^{\top}-\hat{\bm{U}}\bm{R}_{1}\bm{R}_{1}^{\top}\hat{\bm{\Sigma}}\bm{R}_{2}\bm{R}_{2}^{\top}\hat{\bm{V}}^{\top}\big\Vert _{\mathrm{F}}\\
 & \leq\big\Vert \big(\bm{U}-\hat{\bm{U}}\bm{R}_{1}\big)\bm{\Sigma}\bm{V}^{\top}\big\Vert _{\mathrm{F}}+\big\Vert \hat{\bm{U}}\bm{R}_{1}\big(\bm{\Sigma}-\bm{R}_{1}^{\top}\hat{\bm{\Sigma}}\bm{R}_{2}\big)\bm{V}^{\top}\big\Vert _{\mathrm{F}}\\
 & \quad+\big\Vert \hat{\bm{U}}\bm{R}_{1}\bm{R}_{1}^{\top}\hat{\bm{\Sigma}}\bm{R}_{2}\big(\bm{V}-\hat{\bm{V}}\bm{R}_{2}\big)^{\top}\big\Vert _{\mathrm{F}},
\end{align*}
which further implies that
\begin{align}
\big\Vert \bm{U}\bm{\Sigma}\bm{V}^{\top}-\hat{\bm{U}}\hat{\bm{\Sigma}}\hat{\bm{V}}^{\top}\big\Vert _{\mathrm{F}} 
	& \leq \big\Vert \bm{U}-\hat{\bm{U}}\bm{R}_{1}\big\Vert _{\mathrm{F}}\big\Vert \bm{\Sigma}\big\Vert +\big\Vert \bm{\Sigma}-\bm{R}_{1}^{\top}\hat{\bm{\Sigma}}\bm{R}_{2}\big\Vert _{\mathrm{F}}+\big\Vert \hat{\bm{\Sigma}}\big\Vert \big\Vert \bm{V}-\hat{\bm{V}}\bm{R}_{2}\big\Vert _{\mathrm{F}}\nonumber \\
 & \leq\frac{6\left\Vert \bm{\Sigma}\right\Vert }{\sigma_{r}\left(\bm{M}\right)-\sigma_{r+1}\left(\bm{M}+\bm{E}\right)}\left\Vert \bm{E}\right\Vert _{\mathrm{F}}+\big\Vert \bm{\Sigma}-\bm{R}_{1}^{\top}\hat{\bm{\Sigma}}\bm{R}_{2}\big\Vert _{\mathrm{F}}.\label{eq:SVD-1}
\end{align}
Here, the last line arises from (\ref{eq:vector-pert}) and (\ref{eq:value-pert}).
It then boils down to controlling $\|\bm{\Sigma}-\bm{R}_{1}^{\top}\hat{\bm{\Sigma}}\bm{R}_{2}\|_{\mathrm{F}}$.
Recognizing that $\bm{\Sigma}=\bm{U}^{\top}\bm{M}\bm{V}$ and $\hat{\bm{\Sigma}}=\hat{\bm{U}}^{\top}(\bm{M}+\bm{E})\hat{\bm{V}}$, 
we obtain 
\begin{align*}
\big\Vert \bm{\Sigma}-\bm{R}_{1}^{\top}\hat{\bm{\Sigma}}\bm{R}_{2}\big\Vert _{\mathrm{F}} & =\left\Vert \bm{U}^{\top}\bm{M}\bm{V}-\bm{R}_{1}^{\top}\hat{\bm{U}}^{\top}\left(\bm{M}+\bm{E}\right)\hat{\bm{V}}\bm{R}_{2}\right\Vert _{\mathrm{F}}\\
 & \leq\big\Vert \big(\bm{U}-\hat{\bm{U}}\bm{R}_{1}\big)^{\top}\bm{M}\bm{V}\big\Vert _{\mathrm{F}}+\big\Vert \bm{R}_{1}^{\top}\hat{\bm{U}}^{\top}\bm{E}\bm{V}\big\Vert _{\mathrm{F}} 
	+\big\Vert \bm{R}_{1}^{\top}\hat{\bm{U}}^{\top}\left(\bm{M}+\bm{E}\right)\big(\bm{V}-\hat{\bm{V}}\bm{R}_{2}\big)\big\Vert _{\mathrm{F}}\\
 & \leq\big\Vert \bm{U}-\hat{\bm{U}}\bm{R}_{1}\big\Vert _{\mathrm{F}}\left\Vert \bm{\Sigma}\right\Vert +\left\Vert \bm{E}\right\Vert _{\mathrm{F}}
	+ \big\Vert \hat{\bm{\Sigma}}\big\Vert \big\Vert \bm{V}-\hat{\bm{V}}\bm{R}_{2}\big\Vert _{\mathrm{F}}.
\end{align*}
Once again,
employ (\ref{eq:vector-pert}) and (\ref{eq:value-pert}) to arrive at
\begin{equation}
\big\Vert \bm{\Sigma}-\bm{R}_{1}^{\top}\hat{\bm{\Sigma}}\bm{R}_{2}\big\Vert _{\mathrm{F}}\leq\frac{6\left\Vert \bm{\Sigma}\right\Vert }{\sigma_{r}\left(\bm{M}\right)-\sigma_{r+1}\left(\bm{M}+\bm{E}\right)}\left\Vert \bm{E}\right\Vert _{\mathrm{F}}+\left\Vert \bm{E}\right\Vert _{\mathrm{F}}.\label{eq:SVD-2}
\end{equation}
Combining (\ref{eq:SVD-1}) and (\ref{eq:SVD-2}), we reach 
\[
\big\Vert \bm{U}\bm{\Sigma}\bm{V}^{\top}-\hat{\bm{U}}\hat{\bm{\Sigma}}\hat{\bm{V}}^{\top}\big\Vert _{\mathrm{F}}\leq\left(\frac{12\left\Vert \bm{\Sigma}\right\Vert }{\sigma_{r}\left(\bm{M}\right)-\sigma_{r+1}\left(\bm{M}+\bm{E}\right)}+1\right)\left\Vert \bm{E}\right\Vert _{\mathrm{F}}
\]
as claimed.
\end{proof}

The last bound centers around the well-known Sylvester equation $\bm{X}\bm{A}+\bm{B}\bm{X}=\bm{C}$.

\begin{lemma}[\textsf{The Sylvester equation}]\label{lemma:AX+XB=00003DC}Suppose
$\bm{X}\in\mathbb{R}^{r\times r}$ satisfies the matrix equation $\bm{X}\bm{A}+\bm{B}\bm{X}=\bm{C}$
for some matrices $\bm{A}\in\mathbb{R}^{r\times r}$,$\bm{B}\in\mathbb{R}^{r\times r}$
and $\bm{C}\in\mathbb{R}^{r\times r}$. Then one has 
\[
\left\Vert \bm{X}\right\Vert \leq(2\lambda_{\min})^{-1}\left\Vert \bm{C}\right\Vert ,
\]
as long as $\lambda_{\min}\bm{I}_{r}\preceq\bm{A}\preceq\lambda_{\max}\bm{I}_{r}$
and $\lambda_{\min}\bm{I}_{r}\preceq\bm{B}\preceq\lambda_{\max}\bm{I}_{r}$
for some $\lambda_{\max}\geq\lambda_{\min}>0$. \end{lemma}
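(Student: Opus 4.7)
The plan is to prove the bound via an explicit integral representation of the solution. Since $\bm{A}$ and $\bm{B}$ are symmetric with $\lambda_{\min}\bm{I}_{r}\preceq\bm{A},\bm{B}\preceq\lambda_{\max}\bm{I}_{r}$, the matrix exponentials $e^{-s\bm{A}}$ and $e^{-s\bm{B}}$ commute with $\bm{A}$ and $\bm{B}$ respectively, and satisfy the operator-norm decay $\|e^{-s\bm{A}}\|,\|e^{-s\bm{B}}\|\leq e^{-s\lambda_{\min}}$ for $s\geq 0$. I claim that the matrix
\[
\widetilde{\bm{X}}\triangleq\int_{0}^{\infty}e^{-s\bm{B}}\,\bm{C}\,e^{-s\bm{A}}\,ds
\]
is well-defined (the integrand decays exponentially in $s$) and solves the Sylvester equation $\widetilde{\bm{X}}\bm{A}+\bm{B}\widetilde{\bm{X}}=\bm{C}$. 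To verify this, I would differentiate the integrand in $s$,
\[
\frac{d}{ds}\bigl(e^{-s\bm{B}}\bm{C}e^{-s\bm{A}}\bigr)=-\bm{B}\bigl(e^{-s\bm{B}}\bm{C}e^{-s\bm{A}}\bigr)-\bigl(e^{-s\bm{B}}\bm{C}e^{-s\bm{A}}\bigr)\bm{A},
\]
and then apply the fundamental theorem of calculus from $s=0$ to $s=\infty$; the boundary term at infinity vanishes by the exponential decay, while the boundary term at $s=0$ is $\bm{C}$, yielding $-\bm{C}=-\bm{B}\widetilde{\bm{X}}-\widetilde{\bm{X}}\bm{A}$ as desired.

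Uniqueness of the solution comes next. Since $\bm{A}$ has spectrum in $[\lambda_{\min},\lambda_{\max}]$ while $-\bm{B}$ has spectrum in $[-\lambda_{\max},-\lambda_{\min}]$, the two spectra are disjoint, so the Sylvester operator $\bm{X}\mapsto\bm{X}\bm{A}+\bm{B}\bm{X}$ is invertible on $\mathbb{R}^{r\times r}$. Consequently the $\bm{X}$ in the statement must equal $\widetilde{\bm{X}}$.

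It then remains to extract the operator-norm bound. Applying the triangle inequality under the integral together with submultiplicativity gives
\[
\|\bm{X}\|=\|\widetilde{\bm{X}}\|\leq\int_{0}^{\infty}\|e^{-s\bm{B}}\|\,\|\bm{C}\|\,\|e^{-s\bm{A}}\|\,ds\leq\|\bm{C}\|\int_{0}^{\infty}e^{-2s\lambda_{\min}}\,ds=\frac{\|\bm{C}\|}{2\lambda_{\min}},
\]
which is exactly the advertised inequality. There is no real obstacle in this argument: the only subtle point worth highlighting is that one should resist the temptation to vectorize the equation via Kronecker products, since that route controls $\|\bm{X}\|_{\mathrm{F}}$ by $\|\bm{C}\|_{\mathrm{F}}$ and yields only a strictly weaker Frobenius-norm bound. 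The integral representation, by contrast, directly handles the operator norm on both sides.
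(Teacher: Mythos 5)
Your proof is correct, and it proceeds by a genuinely different route than the paper. The paper follows Smith's classical trick (cited as \cite{smith1968matrix}): apply the Cayley-type transform with a scalar $q>\lambda_{\min}$ to rewrite the Sylvester equation $\bm{X}\bm{A}+\bm{B}\bm{X}=\bm{C}$ as the Stein equation $\bm{X}-\bm{U}\bm{X}\bm{V}=\bm{W}$, where $\bm{U}=(q\bm{I}_r+\bm{B})^{-1}(q\bm{I}_r-\bm{B})$ and $\bm{V}=(q\bm{I}_r-\bm{A})(q\bm{I}_r+\bm{A})^{-1}$ are strict contractions; the unique solution is then the convergent series $\bm{X}=\sum_{k\geq 1}\bm{U}^{k-1}\bm{W}\bm{V}^{k-1}$, and the operator-norm bound falls out of the geometric series. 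You instead use the integral representation $\bm{X}=\int_0^\infty e^{-s\bm{B}}\bm{C}e^{-s\bm{A}}\,ds$, which is the other standard route. Both give exactly the stated bound $\|\bm{X}\|\leq(2\lambda_{\min})^{-1}\|\bm{C}\|$, and each relies on $\bm{A},\bm{B}$ being symmetric so that one can bound $\|e^{-s\bm{A}}\|$ (in your case) or $\|(q\bm{I}_r+\bm{A})^{-1}\|$ and $\|q\bm{I}_r-\bm{A}\|$ (in theirs) sharply via the spectrum. Your integral proof is arguably cleaner --- a single integral rather than a Cayley transform followed by a series --- at the mild cost of invoking matrix exponentials and an improper integral; the paper's argument is more elementary in the sense that it stays within finite linear algebra plus a geometric series. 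Your aside about the Kronecker/vectorization route is also accurate: it yields $\|\bm{X}\|_{\mathrm{F}}\leq(2\lambda_{\min})^{-1}\|\bm{C}\|_{\mathrm{F}}$, which converted to operator norms loses a $\sqrt{r}$ factor; this is precisely why the paper uses Kronecker products only for uniqueness and switches to the Smith series for the bound.
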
\begin{proof}To
begin with, we intend to show that under the condition $\lambda_{\min}\bm{I}_{r}\preceq\bm{A},\bm{B}\preceq\lambda_{\max}\bm{I}_{r}$
for some $\lambda_{\max}\geq\lambda_{\min}>0$, there is a unique
solution to the matrix equation $\bm{X}\bm{A}+\bm{B}\bm{X}=\bm{C}$.
Use the notation of Kronecker product to obtain an equivalent form
of $\bm{X}\bm{A}+\bm{B}\bm{X}=\bm{C}$ as follows
\[
\mathsf{vec}\left(\bm{X}\bm{A}+\bm{B}\bm{X}\right)=\left(\bm{A}^{\top}\otimes\bm{I}_{r}+\bm{I}_{r}\otimes\bm{B}\right)\cdot\mathsf{vec}\left(\bm{X}\right)=\mathsf{vec}\left(\bm{C}\right),
\]
where $\otimes$ denotes the Kronecker product and $\mathsf{vec}(\bm{A})$
stands for the vectorization of the matrix $\bm{A}$. Given that $\bm{A}\succ\bm{0}$
and $\bm{B}\succ\bm{0}$, it is straightforward to see that $\bm{A}^{\top}\otimes\bm{I}_{r}+\bm{I}_{r}\otimes\bm{B}$
is invertible, thus justifying the uniqueness of $\bm{X}$.

The next step is to characterize $\bm{X}$ explicitly. The argument
herein is adapted from \cite{smith1968matrix} and \cite{MR1176461}.
Specifically, it has been shown in \cite{smith1968matrix} that the
equation $\bm{X}\bm{A}+\bm{B}\bm{X}=\bm{C}$ is equivalent to 
\[
\bm{X}-\bm{U}\bm{X}\bm{V}=\bm{W},
\]
where $\bm{U}=(q\bm{I}_{r}+\bm{B})^{-1}(q\bm{I}_{r}-\bm{B})$, $\bm{V}=(q\bm{I}_{r}-\bm{A})(q\bm{I}_{r}+\bm{A})^{-1}$
and $\bm{W}=2q(q\bm{I}_{r}+\bm{B})^{-1}\bm{C}(q\bm{I}_{r}+\bm{A})^{-1}$,
for any $q>0$. In particular, when $q>\lambda_{\min}$, the matrix
\begin{equation}
\bm{X}=\sum_{k=1}^{\infty}\bm{U}^{k-1}\bm{W}\bm{V}^{k-1}\label{eq:unique-representation}
\end{equation}
is the unique solution to $\bm{X}-\bm{U}\bm{X}\bm{V}=\bm{W}$ and
hence to $\bm{X}\bm{A}+\bm{B}\bm{X}=\bm{C}$. To show this, it suffices
to verify that the matrix series is convergent. Note that when $q>\lambda_{\min}$,
one has 
\[
\left\Vert \bm{U}\right\Vert \leq\|\left(q\bm{I}_{r}+\bm{B}\right)^{-1}\|\left\Vert q\bm{I}_{r}-\bm{B}\right\Vert \leq\frac{q-\lambda_{\min}}{q+\lambda_{\min}}<1,
\]
and similarly $\|\bm{V}\|\leq(q-\lambda_{\min})/(q+\lambda_{\max})<1$.
These two bounds taken together immediately establish the convergence
of the matrix series (\ref{eq:unique-representation}).

In the end, the explicit representation (\ref{eq:unique-representation})
allows us to upper bound $\|\bm{X}\|$. A little algebra reveals that
\begin{align*}
\left\Vert \bm{X}\right\Vert  & \leq\sum_{k=1}^{\infty}\left\Vert \bm{U}^{k-1}\bm{W}\bm{V}^{k-1}\right\Vert \leq\left\Vert \bm{W}\right\Vert \sum_{k=1}^{\infty}\left\Vert \bm{U}\right\Vert ^{k-1}\left\Vert \bm{V}\right\Vert ^{k-1}\leq\frac{\left\Vert \bm{W}\right\Vert }{1-\left\Vert \bm{U}\right\Vert \left\Vert \bm{V}\right\Vert },
\end{align*}
where we make use of the fact $\|\bm{U}\|\|\bm{V}\|<1$. In addition,
from the definition of $\bm{W}$ we know that
\[
\left\Vert \bm{W}\right\Vert \leq2q\big\|\left(q\bm{I}_{r}+\bm{B}\right)^{-1}\big\|\left\Vert \bm{C}\right\Vert \big\|\left(q\bm{I}_{r}+\bm{A}\right)^{-1}\big\|\leq\left\Vert \bm{C}\right\Vert \frac{2q}{\left(q+\lambda_{\min}\right)^{2}},
\]
provided that $q>0$. Combine this with the bounds on $\|\bm{U}\|$
and $\|\bm{V}\|$ to reach 
\[
\left\Vert \bm{X}\right\Vert \leq\frac{\left\Vert \bm{C}\right\Vert \frac{2q}{\left(q+\lambda_{\min}\right)^{2}}}{1-\left(\frac{q-\lambda_{\min}}{q+\lambda_{\min}}\right)^{2}}=\frac{2q\left\Vert \bm{C}\right\Vert }{\left(q+\lambda_{\min}\right)^{2}-\left(q-\lambda_{\min}\right)^{2}}=\frac{\left\Vert \bm{C}\right\Vert }{2\lambda_{\min}}
\]
as claimed. \end{proof}
 \bibliographystyle{alpha}
\bibliography{bibfileNonconvex}

\end{document}